\def\1{\bm{1}}
\def\eps{{\epsilon}}
\def\vb{{\bm{b}}}
\def\vc{{\bm{c}}}
\def\ve{{\bm{e}}}
\def\vf{{\bm{f}}}
\def\vp{{\bm{p}}}
\def\vr{{\bm{r}}}
\def\vs{{\bm{s}}}
\def\vu{{\bm{u}}}
\def\vv{{\bm{v}}}
\def\vw{{\bm{w}}}
\def\vx{{\bm{x}}}
\def\vy{{\bm{y}}}
\def\vz{{\bm{z}}}
\def\vpi{{\bm{\pi}}}
\def\v0{\bm{0}}
\def\vDelta{{\bm{\Delta}}}
\def\mA{{\bm{A}}}
\def\mB{{\bm{B}}}
\def\mD{{\bm{D}}}
\def\mH{{\bm{H}}}
\def\mI{{\bm{I}}}
\def\mL{{\bm{L}}}
\def\mM{{\bm{M}}}
\def\mN{{\bm{N}}}
\def\mQ{{\bm{Q}}}
\def\mV{{\bm{V}}}
\def\mW{{\bm{W}}}
\def\mZ{{\bm{Z}}}
\def\mPhi{{\bm{\Phi}}}
\def\mLambda{{\bm{\Lambda}}}
\def\mGamma{{\bm{\Gamma}}}
\DeclareMathAlphabet{\mathsfit}{\encodingdefault}{\sfdefault}{m}{sl}
\SetMathAlphabet{\mathsfit}{bold}{\encodingdefault}{\sfdefault}{bx}{n}
\def\gA{{\mathcal{A}}}
\def\gE{{\mathcal{E}}}
\def\gG{{\mathcal{G}}}
\def\gI{{\mathcal{I}}}
\def\gN{{\mathcal{N}}}
\def\gO{{\mathcal{O}}}
\def\gQ{{\mathcal{Q}}}
\def\gS{{\mathcal{S}}}
\def\gT{{\mathcal{T}}}
\def\gV{{\mathcal{V}}}
\newcommand{\grad}{\bm \nabla}
\newcommand{\R}{\mathbb{R}}
\DeclareMathOperator*{\argmin}{arg\,min}
\newcommand{\diag}{\mathbf{diag}}
\newcommand{\blue}[1]{{\color{blue}#1}}
\newcommand{\vol}{\operatorname{vol}}
\newcommand{\supp}{\operatorname{supp}}
\newcommand{\mc}{\mathcal}
\newcommand{\N}{\mathcal N}
\newcommand*\mean[1]{\overline{#1}}
\newcommand*\comp[1]{\overline{#1}}
\newtheorem*{rep@theorem}{\rep@title}
\newcommand{\newreptheorem}[2]{%
\newenvironment{rep#1}[1]{%
 \def\rep@title{#2 \ref{##1}}%
 \begin{rep@theorem}}%
 {\end{rep@theorem}}}
\definecolor{mydarkblue}{rgb}{0,0.08,0.45}
\definecolor{LightBlue}{rgb}{0.68, 0.85, 0.9}
\theoremstyle{plain}
\newtheorem{theorem}{Theorem}[section]
\newtheorem{lemma}[theorem]{Lemma}
\newtheorem{corollary}[theorem]{Corollary}
\theoremstyle{definition}
\newtheorem{definition}[theorem]{Definition}
\theoremstyle{remark}
\newtheorem{remark}[theorem]{Remark}
\def\Hy@Warning#1{}\makeatother
\title{Iterative Methods via Locally Evolving Set Process}
\author{%
  Baojian Zhou$^{1, 2}$ \thanks{Corresponding to: Baojian Zhou, bjzhou@fudan.edu.cn}
  \And 
  Yifan Sun$^3$
  \And
  Reza Babanezhad Harikandeh$^4$
  \And
  Xingzhi Guo$^3$
  \And
  Deqing Yang$^{1, 2}$
  \And
  Yanghua Xiao$^{2}$ \And\vspace{-5mm}\\
  \textsuperscript{1} the School of Data Science, Fudan University,\\ \textsuperscript{2} Shanghai Key Laboratory of Data Science, School of Computer Science, Fudan University \\
  \textsuperscript{3} Department of Computer Science, Stony Brook University, \textsuperscript{4} Samsung SAIT AI Lab.
}
\begin{document}

\maketitle

\begin{abstract}
Given the damping factor $\alpha$ and precision tolerance $\epsilon$, \citet{andersen2006local} introduced Approximate Personalized PageRank (APPR), the \textit{de facto local method} for approximating the PPR vector, with runtime bounded by $\Theta(1/(\alpha\epsilon))$ independent of the graph size. Recently, \citet{fountoulakis2022open} asked whether faster local algorithms could be developed using $\tilde{\mathcal{O}}(1/(\sqrt{\alpha}\epsilon))$ operations. By noticing that APPR is a local variant of Gauss-Seidel, this paper explores the question of \textit{whether standard iterative solvers can be effectively localized}. We propose to use the \textit{locally evolving set process}, a novel framework to characterize the algorithm locality, and demonstrate that many standard solvers can be effectively localized. Let $\overline{\operatorname{vol}}{ (\mathcal{S}_t)}$ and $\overline{\gamma}_{t}$ be the running average of volume and the residual ratio of active nodes $\textstyle \mathcal{S}_{t}$ during the process. We show $\overline{\operatorname{vol}}{ (\mathcal{S}_t)}/\overline{\gamma}_{t} \leq 1/\epsilon$ and prove APPR admits a new runtime bound $\tilde{\mathcal{O}}(\overline{\operatorname{vol}}(\mathcal{S}_t)/(\alpha\overline{\gamma}_{t}))$ mirroring the actual performance. Furthermore, when the geometric mean of residual reduction is $\Theta(\sqrt{\alpha})$, then there exists $c \in (0,2)$ such that the local Chebyshev method has runtime $\tilde{\mathcal{O}}(\overline{\operatorname{vol}}(\mathcal{S}_{t})/(\sqrt{\alpha}(2-c)))$ without the monotonicity assumption. Numerical results confirm the efficiency of this novel framework and show up to a hundredfold speedup over corresponding standard solvers on real-world graphs. 
\end{abstract}

\section{Introduction}
\label{sect:introduction}

Personalized PageRank (PPR) vectors are key tools for graph problems such as clustering \citep{andersen2006local,andersen2016almost,kloumann2014community,mahoney2012local,yin2017local,zhou2023fast}, diffusion \citep{chung2007heat,fountoulakis2020p,gasteiger2019diffusion,kloster2014heat}, random walks \citep{kapralov2021efficient,lkacki2020walking,schaub2020random}, neural net training \citep{bojchevski2020scaling,klicpera2019predict,guo2021subset,guo2022subset}, and many others \cite{gleich2015pagerank,teng2016scalable}. The Approximate PPR (APPR) \cite{andersen2006local} and its many variants \cite{berkhin2006bookmark,chen2023accelerating,fountoulakis2019variational,martinez2023accelerated} efficiently approximate PPR vectors by exploring the neighbors of a specific node at each time, only requiring access to a tiny part of the graph -- hence the number of operations needed is independent of graph size. These local solvers are well-suited for large-scale graphs in modern graph data analysis. Specifically, let $\mA$ and $\mD$ be the adjacency and degree matrices of a graph $\gG$, respectively. Given a source node $s$ and the damping factor $\alpha \in (0,1)$, this paper studies local solvers for the linear system 
\begin{equation}
\left(\mI - (1-\alpha) \left( \mI + \mA \mD^{-1} \right)/2 \right) \vpi =\alpha \ve_s, \label{equ:ppr}
\end{equation}
where $\ve_s$ is the standard basis of $s$ and $\vpi$ is the PPR vector \cite{andersen2006local,fountoulakis2022open,martinez2023accelerated}. Given the error tolerance $\eps$, a local solver needs to find  $\hat{\vpi}$ such that $\|\mD^{-1}(\hat{\vpi} - \bm \pi)\|_\infty \leq \eps$ without accessing the entire graph $\gG$.\footnote{Local methods for solving Equ.~\eqref{equ:ppr} can be naturally extended to other linear systems defined on $\gG$.} 

\citet{andersen2006local} proposed the local APPR algorithm, which pushes large residuals to neighboring nodes until all residuals are small. Its runtime is upper bounded by $\Theta(1/(\alpha\eps))$ independent of graph size. Based on a variational characterization of Equ.~\eqref{equ:ppr}, \citet{fountoulakis2019variational} reformulated the problem as optimizing a quadratic objective plus $\ell_1$-regularization and later asked \citep{fountoulakis2022open} whether there exists a local solver with runtime $\tilde{\mc O}(1/(\sqrt{\alpha}\eps))$. This corresponds to an accelerated rate since $\alpha$ is the strongly convex parameter. Recently, \citet{martinez2023accelerated} provided a method based on a nested subspace pursuit strategy, and the corresponding iteration complexity is bounded by $\tilde{\gO}(|\gS^*|/\sqrt{\alpha})$ where $\gS^*$ is the support of the optimal solution. This bound deteriorates to $\tilde{\gO}(n/\sqrt{\alpha})$ when the solution is dense, with $n$ representing the number of nodes in $\gG$, which could be less favorable than that of standard solvers under similar conditions. Moreover, the nested computational structure provides a constant factor overhead, which could be significant in practice.

The bound analysis of the above local methods critically depends on the monotonicity properties of the designed algorithms. These requirements may hinder the development of simpler and faster local linear solvers that lack such monotonicity properties. Specifically, the runtime analysis of APPR relies on the non-negativity and decreasing monotonicity of residuals. Conversely, the runtime bounds developed in \citet{fountoulakis2019variational} and \citet{martinez2023accelerated} depend on the monotonicity of variable updates, ensuring that the sparsity of intermediate variables increases monotonically.

\textbf{Our contributions.} Based on a refined analysis of APPR, our starting point is to demonstrate that APPR is a local variant of Gauss-Seidel Successive Overrelaxation (GS-SOR) that can be treated as an evolving set process.\footnote{ The local variant of GS-SOR is defined in Appendix \ref{appendix:section-B.1} } This insight leads us to explore whether standard solvers can be effectively localized to solve Equ.~\eqref{equ:ppr}. To develop faster local methods with improved local bounds, we propose a novel \textit{locally evolving set process} framework inspired by its stochastic counterpart \cite{morris2003evolviing}. This framework enables the development of faster local methods and circumvents the monotonicity requirement barrier in runtime complexity analysis in the existing literature. For example, our analysis of the local Chebyshev method does not depend on the monotonicity of residual or the active node sets processed. Specifically, 
\begin{itemize}[leftmargin=*]
\item As a core tool, we propose an algorithm framework based on the \textit{locally evolving set process}. We show that APPR is a local variant of GS-SOR using this process. This framework is powerful enough to facilitate the development of new local solvers. Specifically, standard gradient descent (GD) can be effectively localized for solving this problem and admits $\Theta(1/(\alpha\eps))$ runtime bound. 
\item This local evolving set process provides a novel way to characterize the algorithm locality; hence, new runtime bounds can be derived. Let $\overline{\operatorname{vol}}(\mathcal{S} _{t})$ and $\overline{\gamma}_{t}$ be the running average of volume and the residual ratio of active nodes $\mathcal{S} _{t}$ during the process; we prove the ratio $\overline{\operatorname{vol}}(\mathcal{S} _{t})/\overline{\gamma}_{t}$ serving as a lower bound of $1/\eps$. We further show both APPR and local GD have $\tilde{\gO}(\mean{\vol}(\gS_t)/(\alpha \mean{\gamma}_t))$ runtime bound mirroring the actual performance of these two methods.

\item Using our framework, we show there exists $c \in (0,2)$ such that both the localized Chebyshev and Heavy-Ball methods admit runtime bound $\tilde{\gO}(\mean{\vol}(\gS_t)/(\sqrt{\alpha} (2-c)))$ with the assumption that the geometric mean of active ratio factors is $\Theta(\sqrt{\alpha})$. Importantly, our analysis does not require any monotonicity property.  The technical novelty is that we effectively characterize residuals of these two methods by using second-order difference equations with parameterized coefficients. 

\item  We demonstrate, over 17 large graphs, that these localized methods can significantly accelerate their standard counterparts by a large margin. Furthermore, our proposed \textsc{LocSOR}, \textsc{LocCH}, and \textsc{LocHB} are significantly faster than APPR and $\ell_1$-based solvers on two huge-scale graphs.
\end{itemize} 

\textbf{Paper structure.} We begin by clarifying notations and reviewing APPR in Sec.~\ref{sect:prob-formulation}. Sec.~\ref{sect:APPR-GS-local-iterative} introduces the locally evolving set process. Sec.~\ref{sect:acc-methods} presents localized Chebyshev and Heavy-Ball methods along with our novel techniques. We discuss open questions in Sec.~\ref{sect:open-questions}. Experiments and conclusions are covered in Sec.~\ref{sect:experiments} and \ref{sect:conclusion}, respectively. {\textit{Detailed related works and all missing proofs are included in the Appendix. Our code is available at \url{https://github.com/baojian/LocalCH}.}}

\section{Notations and Preliminaries}
\label{sect:prob-formulation}

\textbf{Notations.} We consider an undirected simple graph $\gG(\gV,\gE)$ where $\gV=\{1,2,\ldots,n\}$ and $\gE\subseteq \gV\times \gV $  with $|\gE|=m$ are the node and edge sets, respectively.  The set of neighbors of $v$ is denoted as $\gN(v) \subseteq \gV$. The adjacency matrix $\mA$ of $\gG$ assigns unit weight $a_{u,v} = 1$ if $(u,v)\in \gE$ and 0 otherwise.  The $v$-th entry of the degree matrix $\mD$  is $d_v = |\gN(v)|$. Given $\gS \subseteq \gV$, we define the volume of $\gS$ as $\vol(\gS) \triangleq \sum_{v \in \gS} d_v$. The support of $\vx \in \R^{n}$ is the set of nonzero indices $\supp(\vx) \triangleq \{v:x_v\ne 0,v\in\gV\}$. The eigendecomposition of $\mD^{-1/2}\mA\mD^{-1/2} = \mV\mLambda\mV^\top$ where each column of $\mV$ is an eigenvector and $\mLambda = \diag(\lambda_1,\lambda_2,\ldots,\lambda_n)$ with $1 = \lambda_1 \geq \lambda_2 \geq \cdots \geq \lambda_n \geq -1$. 

\subsection{Revisiting Anderson's APPR and its local runtime bound}
\label{sect:2.1}

We use $(\vp,\vz)$ for solving Equ. \eqref{equ:ppr} while use $(\vx,\vr)$ for solving Equ.~\eqref{equ:Qx=b} or Equ.~\eqref{equ:f(x)}. With the initial setting $\vp \leftarrow \bm 0, \vz \leftarrow \ve_s$, APPR obtains a local estimate of $\vpi$ denoted as $\vp$ by using a sequence of \textsc{Push} operations defined as
\begin{equation}
\boxed{
\textsc{APPR}(\alpha,\eps,s,\gG): \quad \textbf{ Repeat } (\vp, \vz) \leftarrow \textsc{Push}(u, \alpha, \vp, \vz) \textbf{ Until } \forall v, z_v < \eps d_v ; \  \textbf{ Return } \vp. \ 
}
\label{algo:appr}
\end{equation}

\begin{wrapfigure}{l}{0.28\textwidth}
\vspace{-8mm}
\begin{minipage}{.28\textwidth}
\begin{algorithm}[H]
\begin{algorithmic}[1]
\label{algo:push-222}
\caption{\textsc{Push}$(u,\alpha, \vp, \vz)$}
\vspace{1mm}
\STATE $\nu = z_u$
\STATE $p_u \leftarrow p_u + \alpha \cdot \nu$
\STATE $z_u \leftarrow (1-\alpha)\cdot \nu / 2$
\STATE \textbf{for} $v \in \N(u)$ \textbf{do}
\STATE \ \ \  $z_v \leftarrow z_v + \frac{(1-\alpha)\nu}{2 d_u}$
\STATE \textbf{Return} $(\vp,\vz)$
\end{algorithmic}  
\end{algorithm}
\end{minipage}
\vspace{-4mm}
\end{wrapfigure}
At each repeat step $(\vp, \vz) \leftarrow \textsc{Push}(u, \alpha, \vp, \vz)$, it synchronously updates both $\vp$ and residual $\vz$ whenever there exists an \textit{active} node $u \in \gV$ (a node with a large residual, i.e., $z_u \geq \eps d_u$). Specifically, for each active $u$, it updates $p_u$, $z_u$, and  $z_v$ for $v\in \mathcal N(u)$ by using a \textsc{Push} operator illustrated on the left. It stops when no active nodes are left.  \textsc{APPR} can be implemented locally so that the runtime is independent of $\gG$. In particular, $\vol(\supp(\vp))$ is locally bounded, demonstrating the sparsity effect. We restate the existing main results as follows.

\begin{lemma}[Runtime bound of  
 APPR \cite{andersen2006local}]
 \label{lemma:anderson-local-lemma}
Given $\alpha \in (0,1)$ and the precision $\eps \leq 1/d_s$ for node $s \in \gV$ with $\vp \leftarrow \bm 0, \vz \leftarrow \ve_s$ at the initial, $\textsc{APPR}(\alpha,\eps,s,\gG)$ defined in~\eqref{algo:appr} returns an estimate $\vp$ of $\vpi$. There exists a real implementation of~\eqref{algo:appr} (e.g., Algo.~\ref{algo:appr-queue}) such that the runtime $\gT_{\textsc{APPR}}$ satisfies
\begin{equation}
\gT_{\textsc{APPR}} \leq \Theta\left(1/(\alpha\eps)\right). \nonumber
\end{equation}
Furthermore, the estimate $\hat{\vpi}:=\vp$ satisfies $\|\mD^{-1}(\hat{\vpi} - \vpi)\|_\infty \leq \eps$ and $\vol(\supp(\hat{\vpi})) \leq 2 /((1-\alpha)\eps)$.
\end{lemma}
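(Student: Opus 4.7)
The proof assembles three ingredients: an algorithmic invariant, a termination/correctness bound, and a potential-based runtime analysis. Introduce the ``deferred PPR'' map $\vpi[\vy]$ defined as the unique solution of $(\mI-(1-\alpha)\mP)\vx = \alpha\vy$ with the lazy walk $\mP = (\mI + \mA\mD^{-1})/2$, so that by linearity $\vpi[\ve_s] = \vpi$. The invariant I plan to establish and maintain along the iterates is
\begin{equation*}
\vp_t + \vpi[\vz_t] = \vpi.
\end{equation*}
Verifying preservation under a single $\textsc{Push}(u,\alpha,\vp,\vz)$ reduces to the algebraic identity $\vpi[\nu\ve_u] = \alpha\nu\ve_u + \vpi\bigl[(1-\alpha)(\ve_u + \mA\mD^{-1}\ve_u)\nu/2\bigr]$, which matches the three update lines of Algorithm~\ref{algo:push-222} term by term. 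The base case $\vp_0 + \vpi[\ve_s] = \vpi$ is immediate from the initialization.

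For correctness, the termination condition gives $z_v < \eps d_v$ for every $v$, i.e., $\|\mD^{-1}\vz\|_\infty < \eps$. The main task is to transfer this to the deferred solution via the contraction $\|\mD^{-1}\vpi[\vz]\|_\infty \leq \|\mD^{-1}\vz\|_\infty$. I expect this to be the main obstacle, because naive $\ell_\infty$ contraction fails for $\mP$ itself; the argument has to pass through reversibility. Expanding $\vpi[\vz] = \alpha\sum_{k\geq 0}(1-\alpha)^k\mP^k\vz$ and using the detailed-balance identity $(P^k)_{vu} d_u = (P^k)_{uv} d_v$ (verifiable from $d_u P_{vu} = d_v P_{uv}$), I can rewrite $(\vpi[\vz])_v/d_v$ as a convex combination of the rescaled residuals $z_u/d_u$, since $\sum_u (P^k)_{uv} = 1$ and $\alpha\sum_{k\geq 0}(1-\alpha)^k = 1$. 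Taking entrywise maxima gives the contraction, and combining with the invariant yields $\|\mD^{-1}(\hat\vpi - \vpi)\|_\infty \leq \eps$.

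For runtime, take the potential $\Phi_t \triangleq \|\vz_t\|_1$. A direct computation on the Push rule shows $\Phi_t - \Phi_{t+1} = \alpha z_{u_t}$: the mass lost from $u$ is $(1+\alpha)z_{u_t}/2$, of which $(1-\alpha)z_{u_t}/2$ is redistributed to neighbors and only $\alpha z_{u_t}$ exits into $\vp$. The activity threshold $z_{u_t} \geq \eps d_{u_t}$ then gives $\Phi_t - \Phi_{t+1} \geq \alpha\eps d_{u_t}$, and telescoping against $\Phi_0 = 1$ produces $\sum_t d_{u_t} \leq 1/(\alpha\eps)$. Implementing $\textsc{Push}$ with a FIFO queue of active nodes (Algorithm~\ref{algo:appr-queue}) costs $O(d_{u_t})$ work per step plus $O(1)$ bookkeeping, giving the claimed $\Theta(1/(\alpha\eps))$ bound. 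For the sparsity bound, note each $u\in\supp(\hat\vpi)$ was pushed at least once and at that first push $p_u$ was incremented by $\alpha z_u \geq \alpha\eps d_u$; summing and using $\|\hat\vpi\|_1 \leq \|\vpi\|_1 = 1$ (which follows from the invariant together with non-negativity of $\vp$ and $\vpi[\vz]$) yields a volume bound of order $1/\eps$. The sharper constant $2/((1-\alpha)\eps)$ is recoverable by accounting, for every $u \neq s$ in $\supp(\hat\vpi)$, for the total inflow $(1-\alpha)z/(2d)$ that neighbors must have sent before $u$ could cross the activity threshold, and then matching it against $\sum_t (1-\alpha) z_{u_t}/2$.
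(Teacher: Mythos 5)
Your invariant, contraction, and potential argument are the paper's proof in different clothing. The invariant $\vp_t + \vpi[\vz_t] = \vpi$ is the paper's identity $\vpi - \vp^{(t)} = \bm \Pi \vz^{(t)}$; your detailed-balance contraction $\|\mD^{-1}\vpi[\vz]\|_\infty \leq \|\mD^{-1}\vz\|_\infty$ is exactly the paper's use of reversibility ($\pi_v[u] = (d_u/d_v)\pi_u[v]$); and the telescoping of $\Phi_t = \|\vz_t\|_1$ with per-step drop $\alpha z_{u_t} \geq \alpha\eps d_{u_t}$ is the same calculation. So for the $\|\mD^{-1}(\hat{\vpi}-\vpi)\|_\infty\leq\eps$ guarantee and the $\Theta(1/(\alpha\eps))$ runtime you have the right proof.

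The sparsity bound is where you diverge, and your sketch does not reach $2/((1-\alpha)\eps)$. Your first attempt (each $u\in\supp(\hat{\vpi})$ has, after its first push, $p_u \geq \alpha\eps d_u$, and $\|\hat{\vpi}\|_1 \leq 1$) gives $\vol(\supp(\hat{\vpi}))\leq 1/(\alpha\eps)$, not ``order $1/\eps$'' as you write, and this is worse than the target for small $\alpha$. Your refinement — lower-bound the lifetime inflow at each $u\neq s$ by $\eps d_u$ and cap the total outflow by $\sum_t (1-\alpha) z_{u_t}/2$ — runs into the same $1/\alpha$: from the potential telescope $\alpha \sum_t z_{u_t} \leq 1$, so $\sum_t (1-\alpha) z_{u_t}/2 \leq (1-\alpha)/(2\alpha)$, and you would conclude only $\sum_{u\neq s} d_u \leq (1-\alpha)/(2\alpha\eps)$. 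The paper's argument avoids this by looking at the \emph{final} residual rather than inflow or outflow: for any $u\in\supp(\hat{\vpi})$, its last push left $z_u = (1-\alpha)\tilde z_u/2 \geq (1-\alpha)\eps d_u/2$ (since $u$ was active at that moment), and because $\vz\geq\bm 0$ and $z_u$ can only increase afterward, $z_u^{(T)} \geq (1-\alpha)\eps d_u/2$; summing against $\|\vz^{(T)}\|_1 \leq \|\vz^{(0)}\|_1 = 1$ gives $\vol(\supp(\hat{\vpi})) \leq 2/((1-\alpha)\eps)$ directly.
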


The main argument for proving Lemma \ref{lemma:anderson-local-lemma} is critically based on: 1) $\vz \geq \bm 0$ and $\|\vz\|_1$ decreases during the updates; 2) for each active $u$,  $z_u \geq \epsilon d_u$, implying that $\|\vz\|_1$ is decreased by at least $\alpha \epsilon d_u$, consecutively leading to $\sum_u d_u \leq 1 /(\alpha \epsilon)$.

\vspace{-1mm}

\subsection{Problem reformulation}

\vspace{-1mm}

To approximate the PPR vector $\vpi$, the original linear system in Equ.~\eqref{equ:ppr} can be reformulated as an equivalent symmetric version defined as 
\begin{equation}
\mQ \vx = \vb, \quad\text{ with } \mQ\ \triangleq \mI - \tfrac{1-\alpha}{1+\alpha} \mD^{-1/2}\mA\mD^{-1/2} \text{ and } \vb\ \triangleq \tfrac{2\alpha }{(1+\alpha)}\mD^{-1/2}\ve_s, \label{equ:Qx=b}
\end{equation}
where again $\ve_s$ is the standard basis of $s$, and $\mQ$ is a symmetric positive-definite $M$-matrix with all eigenvalues in $[\tfrac{2\alpha}{1+\alpha}, \tfrac{2}{1+\alpha}]$. To solve Equ.~\eqref{equ:Qx=b} is equivalent to solving a quadratic problem 

\vspace{-3mm}
\begin{equation}
\vx^* = \argmin_{\vx \in \R^n } \Big\{ f(\vx) \triangleq \frac{1}{2} \vx^\top \mQ \vx - \vx^\top \vb \Big\}, \label{equ:f(x)}
\end{equation}
\vspace{-3mm}

where $f$ is strongly convex with condition number $ 1/\alpha$. Indeed, Equ.~\eqref{equ:Qx=b} is a symmetrized version of Equ.~\eqref{equ:ppr} and has a unique solution $\vx^* = \mQ^{-1}\vb$. The PPR vector $\vpi$ can be recovered from $\vx^*$ by $\vpi = \mD^{1/2} \vx^*$. It is convenient to denote estimate of $\vpi$ as $\vpi^{(t)} \triangleq \mD^{1/2} \vx^{(t)}$. Given $\vx^{(t)}$, we define the residual $\vr^{(t)} \triangleq \vb - \mQ\vx^{(t)}$.  If $\vx^{(t)}$ is returned by a local solver for solving either Equ.~\eqref{equ:Qx=b} or Equ.~\eqref{equ:f(x)}, we then equivalently require $\|\mD^{-1/2} ({\vx}^{(t)} - \vx^*)\|_\infty \leq \eps$. Hence, it is enough to have a stop condition $\| \mD^{-1/2} \vr^{(t)}\|_\infty \leq 2\alpha \eps/(1+\alpha)$ for local solvers of Equ.~\eqref{equ:Qx=b} and Equ.~\eqref{equ:f(x)}.\footnote{See a justification in Appendix \ref{appendix:sect:local-methods}.}

\citet{fountoulakis2019variational} demonstrated that \textsc{APPR} is equivalent to a coordinate descent solver for minimizing $f$ in Equ.~\eqref{equ:f(x)} and introduced an ISTA-style solver by minimizing $f(\vx) + \eps \alpha\| \mD^{1/2}\vx\|_1$, which provides a method with runtime bound $\tilde\gO(1/(\epsilon\alpha))$ for achieving the same estimation guarantee of \textsc{APPR}. On one hand, one may note that the runtime bound $\Theta(1/(\alpha\eps))$ provided in Lemma \ref{lemma:anderson-local-lemma} becomes less valuable when $\eps \leq 1/m$; on the other hand, all previous local variants \cite{berkhin2006bookmark,chen2023accelerating,fountoulakis2019variational,martinez2023accelerated} of APPR are critically based on some monotonicity property. This limitation could impede the development of faster local methods that might violate the monotonicity assumption. The following two sections present the techniques and tools to address these challenges. 

\section{Local Methods via Evolving Set Process}
\label{sect:APPR-GS-local-iterative}

\vspace{-3mm}

Our investigation begins with the \textit{locally evolving set process}, as inspired by the stochastic counterpart \citep{morris2003evolviing}. The process reveals that \textsc{APPR} is essentially a local variant of GS-SOR. We then show how to use this process to build faster local solvers based on GS-SOR. We further develop a \textit{local parallelizable} gradient descent with runtime $\Theta(1/(\alpha\eps))$. 

\subsection{Locally evolving set process}

Given $\alpha,\eps, s$, and $\gG$, a local solver for Equ.~\eqref{equ:Qx=b} keeps track of an \emph{active set} $\gS_t\subset \gV$ at each iteration $t$. That is, only nodes in $\gS_t$ are used to update $\vx$ or $\vr$.
The next set $\gS_{t+1}$ is determined by current $\gS_t$ and an associated local solver $\gA$.  We define this process as the following local evolving set system.
\begin{definition}[Locally evolving set process]
Given a parameter configuration $\theta \triangleq (\alpha, \eps, s, \gG)$, and a local iterative method $\gA$, the locally evolving set process generates a sequence of $\textstyle (\gS_t,\vx^{(t)}, \vr^{(t)}) $ representing as the following dynamic system
\begin{equation}
\left(\gS_{t+1},\vx^{(t+1)}, \vr^{(t+1)}\right) = \mPhi_{\theta}\left(\gS_t, \vx^{(t)}, \vr^{(t)}, \gA\right), \quad \forall t \geq 0, \label{equ:local-evolving-sets}
\end{equation}
where $\gS_{t+1} \subseteq \gS_t \cup \left(\cup_{u \in \gS_t } \N(u) \right)$ and we denote the active set $\gS_t=\{u_1,u_2,\ldots,u_{|\gS_t|}\}$. The set $\gS_t$ is maintained via a queue data structure. We say this process \blue{\textit{converges}} when the last set $\gS_{T}=\emptyset$ if there exists such $T$; the generated sequence of active nodes are
\[
(\gS_0,\vx^{(0)},\vr^{(0)})\ \to \ (\gS_1,\vx^{(1)},\vr^{(1)}) \ \to \ (\gS_2,\vx^{(2)},\vr^{(2)}) \ \to \ \ldots\ \to  \ (\blue{\gS_T = \emptyset},\vx^{(T)},\vr^{(T)}).
\]
\end{definition}
The runtime of the local solver, $\gA$ for this whole local process, is then defined as~\footnote{In practice, $\gT_{\gA} := \sum_{t=0}^{T-1} (\vol(\gS_t) + |\gS_t|)$ where we ignore $|\gS_t|$ for simplicity as $\vol(\gS_t)$ dominates $|\gS_t|$.}
\begin{equation}
\gT_{\gA} \triangleq \sum_{t=0}^{T-1} \vol(\gS_t). \nonumber
\end{equation}
The framework of this set process provides a new way to design local methods. Furthermore, it helps to analyze the convergence and runtime bound of local solvers by characterizing the sequences $\{ \vol({\gS_t}) \}$, and $\{\|\vr^{(t)}\|\}$ generated by $\mPhi_\theta$.  To analyze a new runtime bound, for $T \geq 1$, we define the average of the volume of active node sets $\{\vol(\gS_t)\}$ and active ratio sequence $\{\gamma_t\}$ as 
\begin{equation}
\mean{\vol}(\gS_T) \triangleq \frac{1}{T}\sum_{t=0}^{T-1} \vol(\gS_t), \quad \mean{\gamma}_T \triangleq \frac{1}{T} \sum_{t=0}^{T-1} \left\{\gamma_t \triangleq \frac{ \sum_{i=1}^{|\gS_t|} |\sqrt{d_{u_i}}{ r_{u_i}^{(t+\Delta_i)}}| }{ \| \mD^{1/2}\vr^{(t)}\|_1 } \right\},  \label{equ:average-vol-st-and-gamma-t}
\end{equation}
where $\Delta_i$ is a smaller time magnitude. We define $\Delta_i = (i-1)/|\gS_t|$ for the analysis of \textsc{APPR} and \textsc{LocSOR} while $\Delta_i = 0$ for \textsc{LocGD} in our later analysis. In the rest, we denote $\gI_T=\supp(\vr^{(T)})$.

\begin{wrapfigure}{r}{0.78\textwidth}
\centering
\vspace{-6mm}
\includegraphics[width=.78\textwidth]{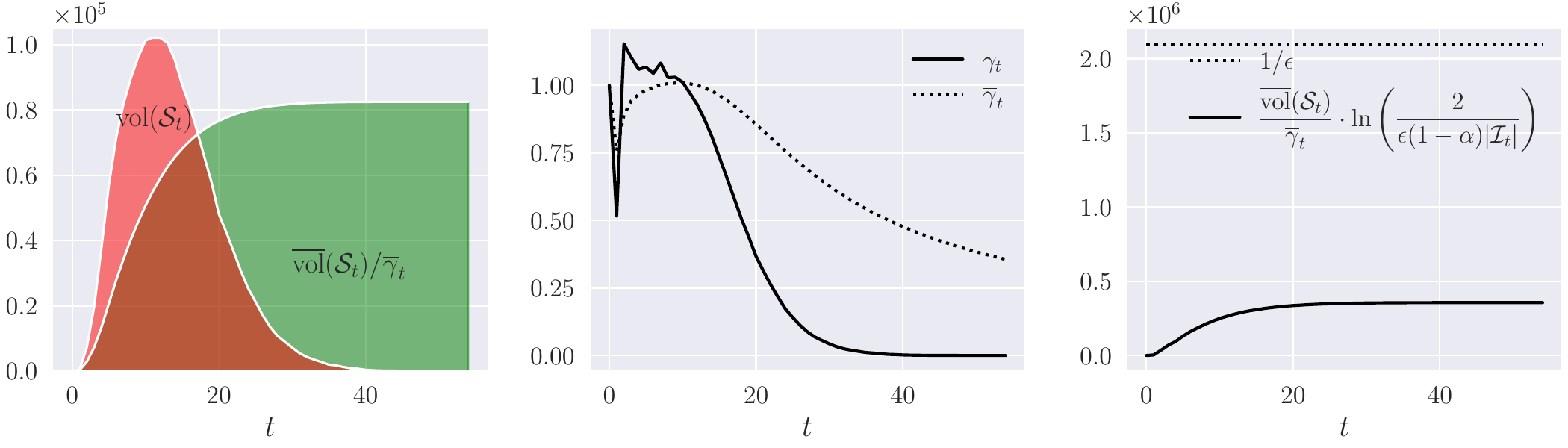}
\caption{Runtime of APPR in the locally evolving set process on the \textit{com-dblp} graph with $s=0, \alpha = 0.1$, and $\eps = 1/m$. The red region of the left figure is $\gT_{\textsc{APPR}}$. The right two figures show active ratios and $\mean{\vol}(\gS_T)/\overline{\gamma}_T \leq 1/\eps$.}
\label{fig:evolving-processing-appr-dataset-com-dblp}
\end{wrapfigure}

These two metrics $\mean{\vol}(\gS_T)$ and $\mean{\gamma}_T$ characterize the locality of local methods. To demonstrate this local process, Fig.~\ref{fig:evolving-processing-appr-dataset-com-dblp} shows $\vol(\gS_t)$ of APPR peaks at the early stage, and the active ratio decreases as the active volume diminishes. The quantity $\mean{\vol}(\gS_T)/\overline{\gamma}_T$ is strictly smaller than $1/\eps$, indicating that it could serve as a better factor in the runtime analysis. 

\vspace{-2mm}

\subsection{APPR via locally evolving set process}

We first demonstrate how this locally evolving set process can represent APPR. For solving Equ.~\eqref{equ:ppr}, the set $\gS_0 = \{s\}$ and the queue-based  of \textsc{APPR} (see Algo. \ref{algo:appr-queue} in Appendix \ref{section:appendix-A}) naturally forms a sequence of active sets from $\gS_0 = \{s\}$ to $\gS_{T} =\emptyset$, hence converging. Active nodes $u$ in queue satisfy $z_u \geq \eps d_u$. To delineate successive iterations $\gS_t$ and $\gS_{t+1}$, one can insert $*$ at the beginning of $\gS_t$. After processing $\gS_t$, it serves as an indicator for the next iteration. The star $*$ is reinserted into the queue iteratively until the queue is empty. We use a slightly different notation for presenting tuple $(\gS_t, \vp^{(t)}, \vz^{(t)})$ to consistent with Sec. \ref{sect:2.1} and write out such evolving process as follows 
\begin{align}
\boxed{\hspace{11mm}
\begin{array}{ll}
\mPhi_{\theta}\left( \gS_t, \vp^{(t)}, \vz^{(t)}, \gA = \blue{\textsc{APPR}}\right):  \textbf{ for } u_i \textbf{ in } \gS_t := \{u_1,u_2,\ldots, u_{|\gS_t|}\} \textbf{ do } \\[.3em]
\qquad \vp^{(t+\Delta_{i+1})} \leftarrow \vp^{(t+\Delta_{i})} + \alpha z_{u_i}^{(t+\Delta_{i})} \ve_{u_i}, \quad \Delta_i := (i-1)/|\gS_t| \\[.3em]
\qquad \vz^{(t+\Delta_{i+1})} \leftarrow \vz^{(t+\Delta_{i})} - \tfrac{(1+\alpha)}{2} z_{u_i}^{(t+\Delta_{i})} \ve_{u_i} + \tfrac{(1-\alpha)}{2} z_{u_i}^{(t+\Delta_{i})}  \mA\mD^{-1}\ve_{u_i}
\end{array} \label{equ:local-APPR}\hspace{10mm}}
\end{align}

The following lemma establishes the equivalence between \textsc{APPR} and the local variant of \textsc{GS-SOR} method (see Appendix~\ref{appendix:section-B.1}) and provides a new evolving-based bound.

\begin{lemma}[New local evolving-based bound for APPR]
\label{lemma:gauss-seidel}
Let $\mM = \alpha^{-1}\big(\mI - \tfrac{1-\alpha}{2} \left( \mI + \mA \mD^{-1} \right)  \big)$ and $\vs =\ve_s$. The linear system $\mM \vpi = \vs$ is equivalent to Equ.~\eqref{equ:ppr}.
Given $\vp^{(0)} = \bm 0$, $\vz^{(0)} = \ve_s$ with $\omega \in (0,2)$, the local variant of GS-SOR \eqref{equ:loc-gs-sor} for $\mM \vpi = \vs$ can be formulated as
\[
{\vp}^{(t+\Delta_{i+1})} \leftarrow \vp^{(t+\Delta_{i})} + \frac{\omega{z}_{u_i}^{(t+\Delta_i)}}{M_{u_i u_i}}\ve_{u_i},\quad {\vz}^{(t+\Delta_{i+1})} \leftarrow {\vz}^{(t+\Delta_i)} -  \frac{\omega z_{u_i}^{(t+\Delta_i)}}{M_{u_i u_i}} \mM \ve_{u_i},
\]
where $u_i$ is an active node in $\gS_t$  satisfying $z_{u_i} \geq \eps d_{u_i}$ and $\Delta_i = (i-1)/|\gS_t|$. Furthermore, when $\omega = \frac{1+\alpha}{2}$, this method reduces to APPR given in \eqref{equ:local-APPR}, and there exists a real implementation (Aglo.~\ref{algo:appr-queue}) of APPR such that the runtime $\gT_{\textsc{APPR}}$ is bounded, that is
\begin{small}
\[
\gT_{\textsc{APPR}} \leq \frac{ \mean{\vol}(\gS_T) }{ \alpha \hat{\gamma}_T } \ln \frac{C_T}{\eps}, \text { where } \frac{\mean{\vol}(\gS_T) }{\hat{\gamma}_T} \leq \frac{1}{\eps}, \  C_T = \frac{2}{(1-\alpha)|\gI_T|}, \hat{\gamma}_T \triangleq \frac{1}{T} \sum_{t=0}^{T-1} \left\{ \frac{ \sum_{i=1}^{|\gS_t|} {|z_{u_i}^{(t+\Delta_i)}}| }{ \|\vz^{(t)}\|_1 } \right\}.
\]
\end{small}
\end{lemma}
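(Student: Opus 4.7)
My plan splits the lemma into four pieces: equivalence of the two linear systems, derivation of the local GS-SOR update, the reduction to APPR at $\omega = (1+\alpha)/2$, and the local runtime bound. The first three pieces are algebraic verifications; the fourth is the substantive part and contains the main obstacle.

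For equivalence, multiplying Equ.~\eqref{equ:ppr} through by $\alpha^{-1}$ gives $\mM\vpi = \ve_s = \vs$, so the two systems share the unique solution $\vpi$. For the update, the coordinate-wise GS-SOR step with relaxation $\omega$ at $u_i$ on $\mM\vpi = \vs$ reads $p_{u_i} \pluseq \omega z_{u_i}/M_{u_i u_i}$ with companion $\vz \minuseq (\omega z_{u_i}/M_{u_iu_i})\mM\ve_{u_i}$, where $\vz \triangleq \vs - \mM\vp$; this is exactly the displayed formula. A direct calculation yields $M_{u_i u_i} = \alpha^{-1}(1 - (1-\alpha)/2) = (1+\alpha)/(2\alpha)$, so at $\omega = (1+\alpha)/2$ the factor $\omega/M_{u_iu_i}$ collapses to $\alpha$; expanding $\alpha\mM\ve_{u_i} = \tfrac{1+\alpha}{2}\ve_{u_i} - \tfrac{1-\alpha}{2}\mA\mD^{-1}\ve_{u_i}$ and substituting recovers Equ.~\eqref{equ:local-APPR} verbatim.

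For the runtime, I would first establish a residual recurrence. Every push preserves $\|\vp\|_1 + \|\vz\|_1 = 1$ and transfers exactly $\alpha z_{u_i}^{(t+\Delta_i)}$ mass from $\vz$ to $\vp$; summing the $|\gS_t|$ pushes in iteration $t$ and invoking the definition of $\gamma_t$ gives $\|\vz^{(t+1)}\|_1 = (1 - \alpha \gamma_t)\|\vz^{(t)}\|_1$. A trivial induction on the push shows $\vz \geq \vzero$ throughout, which justifies removing the absolute values in $\hat{\gamma}_T$. Telescoping from $\|\vz^{(0)}\|_1 = 1$ and the inequality $1-x \leq e^{-x}$ yield $\|\vz^{(T)}\|_1 \leq \exp(-\alpha T \hat{\gamma}_T)$, i.e., $\alpha T \hat{\gamma}_T \leq \ln(1/\|\vz^{(T)}\|_1)$. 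The ratio bound $\mean{\vol}(\gS_T)/\hat{\gamma}_T \leq 1/\eps$ follows immediately from the active condition $z_{u_i}^{(t+\Delta_i)} \geq \eps d_{u_i}$, which gives $\gamma_t \|\vz^{(t)}\|_1 \geq \eps \vol(\gS_t)$; combined with $\|\vz^{(t)}\|_1 \leq 1$ and averaging over $t$, the claim follows.

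The main obstacle is the lower bound $\|\vz^{(T)}\|_1 \geq \eps/C_T = \eps(1-\alpha)|\gI_T|/2$, which converts $\ln(1/\|\vz^{(T)}\|_1)$ into $\ln(C_T/\eps)$. My approach is a node-wise argument establishing $z_v^{(T)} \geq \eps(1-\alpha)/2$ for every $v \in \gI_T$. If $v$ has never been pushed, then $z_v^{(T)} > 0$ forces some neighbor $u$ of $v$ to have been pushed; that push delivered $(1-\alpha) z_u/(2d_u) \geq \eps(1-\alpha)/2$ to $z_v$, and this contribution is preserved since $\vz \geq \vzero$ and $v$ itself is never pushed. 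If $v$ has been pushed, let $\tau$ be its last push time; immediately after, $z_v^{(\tau^+)} = \tfrac{1-\alpha}{2} z_v^{(\tau^-)} \geq \tfrac{(1-\alpha)\eps d_v}{2} \geq \eps(1-\alpha)/2$, and only non-negative neighbor contributions can accrue on $v$ after $\tau$, so $z_v^{(T)} \geq z_v^{(\tau^+)}$. Summing over $v \in \gI_T$ gives the required inequality, and combining it with the earlier chain produces $\gT_{\textsc{APPR}} = T\mean{\vol}(\gS_T) \leq \mean{\vol}(\gS_T)\ln(C_T/\eps)/(\alpha \hat{\gamma}_T)$, completing the proof.
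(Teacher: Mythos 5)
Your proposal is correct and follows essentially the same route as the paper's proof: verify the algebraic reduction to APPR via $M_{u_iu_i}=(1+\alpha)/(2\alpha)$, derive the one-step contraction $\|\vz^{(t+1)}\|_1=(1-\alpha\beta_t)\|\vz^{(t)}\|_1$, telescope, bound $\|\vz^{(T)}\|_1$ from below by $\eps(1-\alpha)|\gI_T|/2$ via a node-wise last-event argument, and get the ratio bound from the activity threshold together with $\|\vz^{(t)}\|_1\le 1$. The only cosmetic differences are that you invoke the conservation $\|\vp\|_1+\|\vz\|_1=1$ and use $1-x\le e^{-x}$ where the paper computes the drop directly and uses $\ln(1+x)\le x$, and you case-split on whether $v$ was ever pushed where the paper case-splits on the type of the last alteration to $z_v$ -- all equivalent.
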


\subsection{Faster local variant of GS-SOR}

Lemma \ref{lemma:gauss-seidel} points to the sub-optimality of APPR, as GS-SOR allows for a larger $\omega$. For solving Equ.~\eqref{equ:Qx=b}, since \textsc{APPR} essentially serves as a local variant of \textsc{GS-SOR}, we can develop a faster local variant based SOR. To extend this method to solve Equ.~\eqref{equ:Qx=b}, we propose a local GS-SOR based on an evolving set process, namely \textsc{LocSOR}, as the following
\begin{equation}
\boxed{\hspace{9mm}
\begin{array}{ll}
\mPhi_{\theta}\left( \gS_t, \vx^{(t)}, \vr^{(t)}, \gA = \blue{\textsc{LocSOR}}\right):  \textbf{ for } u_i \textbf{ in } \gS_t := \{u_1,\ldots, u_{|\gS_t|}\} \text{ and } \textbf{ do } \\
\qquad \vx^{(t+\Delta_{i+1})} \leftarrow \vx^{(t+\Delta_{i})} + \omega r_{u_i}^{(t+\Delta_{i})} \ve_{u_i}, \quad \Delta_{i} = (i-1)/|\gS_t| \\[.3em]
\qquad \vr^{(t+\Delta_{i+1})} \leftarrow \vr^{(t+\Delta_{i})} - \omega r_{u_i}^{(t+\Delta_{i})} \ve_{u_i} + \frac{(1-\alpha)\omega}{1+\alpha} r_{u_i}^{(t+\Delta_{i})} \mD^{-1/2}\mA\mD^{-1/2}\ve_{{u_i}}
\end{array}\hspace{4mm}}
\label{equ:algo-LocSOR}
\end{equation}
When $\omega \in (0, 1]$, the residual $\vr$ is still nonnegative and monotonically decreasing, we establish the convergence of \textsc{LocSOR} stated in the following theorem.

\begin{theorem}[Runtime bound of \textsc{LocSOR} $(\omega = 1)$]
\label{thm:appr-sor-convergence}
Given the configuration $\theta=(\alpha,\eps,s,\gG)$ with $\alpha\in(0,1)$ and $\eps \leq 1/d_s$ and let $\vr^{(T)}$ and $\vx^{(T)}$ be returned by \textsc{LocSOR} defined in \eqref{equ:algo-LocSOR} for solving Equ.~\eqref{equ:Qx=b}. There exists a real implementation of \eqref{equ:algo-LocSOR} such that the runtime $\gT_{\textsc{LocSOR}}$ is bounded by 
\[
\frac{1+\alpha}{2}\cdot\frac{\mean{\vol}(\gS_T)}{\alpha \mean{\gamma}_T} \left(1 - \frac{\|\mD^{1/2} \vr^{(T)}\|_1}{\|\mD^{1/2} \vr^{(0)}\|_1}\right) \leq \gT_{\textsc{LocSOR}} \leq \frac{1+\alpha}{2} \cdot \min \left\{ \frac{1}{\alpha\eps},  \frac{\mean{\vol}(\gS_T)}{\alpha\mean{\gamma}_T} \ln\frac{C}{\eps} \right\}
\]
where $\mean{\vol}(S_T)$ and $\mean{\gamma}_T$ are defined in \eqref{equ:average-vol-st-and-gamma-t} and $C = \tfrac{1+\alpha}{(1-\alpha) |\gI_T|}$ with $\gI_T = \supp(\vr^{(T)})$. Furthermore, $\mean{\vol}(\gS_T)/\mean{\gamma}_T\leq 1/\eps$ and the local estimate $\hat{\vpi} := \mD^{1/2}\vx^{(T)}$ satisfies $\| \mD^{-1}(\hat{\vpi} - \vpi) \|_\infty \leq \eps$.
\end{theorem}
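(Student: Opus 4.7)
The plan is to adapt the locally evolving set process argument of Lemma~\ref{lemma:gauss-seidel} (APPR on Equ.~\eqref{equ:ppr}) to \textsc{LocSOR} with $\omega=1$ applied to the symmetric system Equ.~\eqref{equ:Qx=b}. The overall strategy is to (i) establish a one-step residual identity in $\|\mD^{1/2}\vr\|_1$, (ii) convert it into both the classical $1/(\alpha\eps)$ bound and the new locally-evolving bound by telescoping, and (iii) close the accuracy part via the footnoted stopping-condition lemma in the appendix.

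First I would prove invariants: $\vr^{(t)}\geq\vzero$ for all $t$ (the diagonal of $\mQ$ is $1$, so an $\omega=1$ push zeros $r_{u_i}$; neighbor updates are nonnegative because $\mD^{-1/2}\mA\mD^{-1/2}\geq 0$), and the key identity obtained by applying $\mD^{1/2}$ and $\|\cdot\|_1$ to the update:
\[
\|\mD^{1/2}\vr^{(t+1)}\|_1 = \left(1-\tfrac{2\alpha}{1+\alpha}\gamma_t\right)\|\mD^{1/2}\vr^{(t)}\|_1,
\]
since each push of $u_i$ reduces $\|\mD^{1/2}\vr\|_1$ by exactly $\tfrac{2\alpha}{1+\alpha}\sqrt{d_{u_i}}\,r_{u_i}^{(t+\Delta_i)}$ (the factor $2\alpha/(1+\alpha)$ comes from $1-(1-\alpha)/(1+\alpha)$). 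Combined with the active condition $r_{u_i}^{(t+\Delta_i)}/\sqrt{d_{u_i}}\geq 2\alpha\eps/(1+\alpha)$, each push drops the norm by at least $\tfrac{4\alpha^2\eps}{(1+\alpha)^2}d_{u_i}$; summing over all pushes and using $\|\mD^{1/2}\vr^{(0)}\|_1=\tfrac{2\alpha}{1+\alpha}$ yields the worst-case bound $\gT_{\textsc{LocSOR}}\leq \tfrac{1+\alpha}{2\alpha\eps}$. The same active-condition inequality gives, pointwise, $\vol(\gS_t)/\gamma_t\leq \tfrac{1+\alpha}{2\alpha\eps}\|\mD^{1/2}\vr^{(t)}\|_1\leq 1/\eps$, which averages to $\mean{\vol}(\gS_T)/\mean{\gamma}_T\leq 1/\eps$.

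Next, for the two-sided evolving bound I would telescope the identity. Taking logs and using $\ln(1-x)\leq -x$ gives $T\leq \tfrac{1+\alpha}{2\alpha\mean{\gamma}_T}\ln\big(\|\mD^{1/2}\vr^{(0)}\|_1/\|\mD^{1/2}\vr^{(T)}\|_1\big)$; multiplying by $\mean{\vol}(\gS_T)$ produces the upper evolving bound once $\|\mD^{1/2}\vr^{(T)}\|_1$ is lower-bounded to obtain the constant $C$. For the lower bound, I would instead use the exact per-iteration decrease together with $\|\mD^{1/2}\vr^{(t)}\|_1\leq\|\mD^{1/2}\vr^{(0)}\|_1$ to get $\|\mD^{1/2}\vr^{(0)}\|_1-\|\mD^{1/2}\vr^{(T)}\|_1\leq \tfrac{2\alpha T\mean{\gamma}_T}{1+\alpha}\|\mD^{1/2}\vr^{(0)}\|_1$, rearranging into $T\geq \tfrac{1+\alpha}{2\alpha\mean{\gamma}_T}(1-\|\mD^{1/2}\vr^{(T)}\|_1/\|\mD^{1/2}\vr^{(0)}\|_1)$ and multiplying by $\mean{\vol}(\gS_T)$.

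The main obstacle is the lower bound on $\|\mD^{1/2}\vr^{(T)}\|_1$ that yields $C=\tfrac{1+\alpha}{(1-\alpha)|\gI_T|}$. My plan is: for every $v\in\gI_T$, $\omega=1$ implies $v$'s own push zeros $r_v$, so $r_v^{(T)}>0$ forces some neighbor $w$ to have been pushed after $v$'s last action; such a push contributes at least $\tfrac{(1-\alpha)}{1+\alpha}\cdot\tfrac{2\alpha\eps}{1+\alpha}\sqrt{d_w}\cdot\tfrac{1}{\sqrt{d_vd_w}}$ to $r_v$, so $\sqrt{d_v}\,r_v^{(T)}\geq \tfrac{2\alpha\eps(1-\alpha)}{(1+\alpha)^2}$. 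Summing over $v\in\gI_T$ gives $\|\mD^{1/2}\vr^{(T)}\|_1\geq |\gI_T|\cdot\tfrac{2\alpha\eps(1-\alpha)}{(1+\alpha)^2}$, hence $\ln\big(\|\mD^{1/2}\vr^{(0)}\|_1/\|\mD^{1/2}\vr^{(T)}\|_1\big)\leq \ln(C/\eps)$ with the stated $C$. The edge case $v=s$ is handled by the hypothesis $\eps\leq 1/d_s$, which forces $s\in\gS_0$ and thus $r_s$ follows the same "pushed then possibly replenished by a neighbor" pattern. The accuracy claim $\|\mD^{-1}(\hat{\vpi}-\vpi)\|_\infty\leq\eps$ follows from the stopping condition $\|\mD^{-1/2}\vr^{(T)}\|_\infty\leq 2\alpha\eps/(1+\alpha)$ together with the $\mQ^{-1}$-to-$\mD^{-1/2}$ conversion established in the footnoted Appendix~\ref{appendix:sect:local-methods}. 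Finally, I would exhibit a queue-based realization (analogous to Algo.~\ref{algo:appr-queue}) confirming that the $\Theta(\vol(\gS_t))$-per-iteration accounting implicit in the definition of $\gT_{\textsc{LocSOR}}$ is attainable.
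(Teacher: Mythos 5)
Your proposal is correct and follows essentially the same route as the paper: it reproduces the one-step identity $\|\mD^{1/2}\vr^{(t+1)}\|_1 = (1-\tfrac{2\alpha}{1+\alpha}\gamma_t)\|\mD^{1/2}\vr^{(t)}\|_1$, derives the $T$-bounds by a Weierstrass-product lower estimate and a $\ln(1-x)\le -x$ upper estimate, obtains the constant $C$ via the same lower bound on $\|\mD^{1/2}\vr^{(T)}\|_1$ from the last neighbor push, and closes the accuracy claim through the appendix conversion, exactly mirroring Lemma~\ref{lemma:local-appr-sor-iterations} and the paper's argument for Theorem~\ref{thm:appr-sor-convergence}.

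One minor stylistic caution: when passing from the pointwise bound to $\mean{\vol}(\gS_T)/\mean{\gamma}_T \leq 1/\eps$, state it as $\eps\,\vol(\gS_t)\le\gamma_t$ and sum both sides over $t$ (as the paper does), rather than ``averaging'' the ratio $\vol(\gS_t)/\gamma_t$; the conclusion is the same, but that phrasing is cleaner and unambiguously valid.
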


\clearpage

\begin{wrapfigure}{r}{0.45\textwidth}
\centering
\includegraphics[width=.45\textwidth]{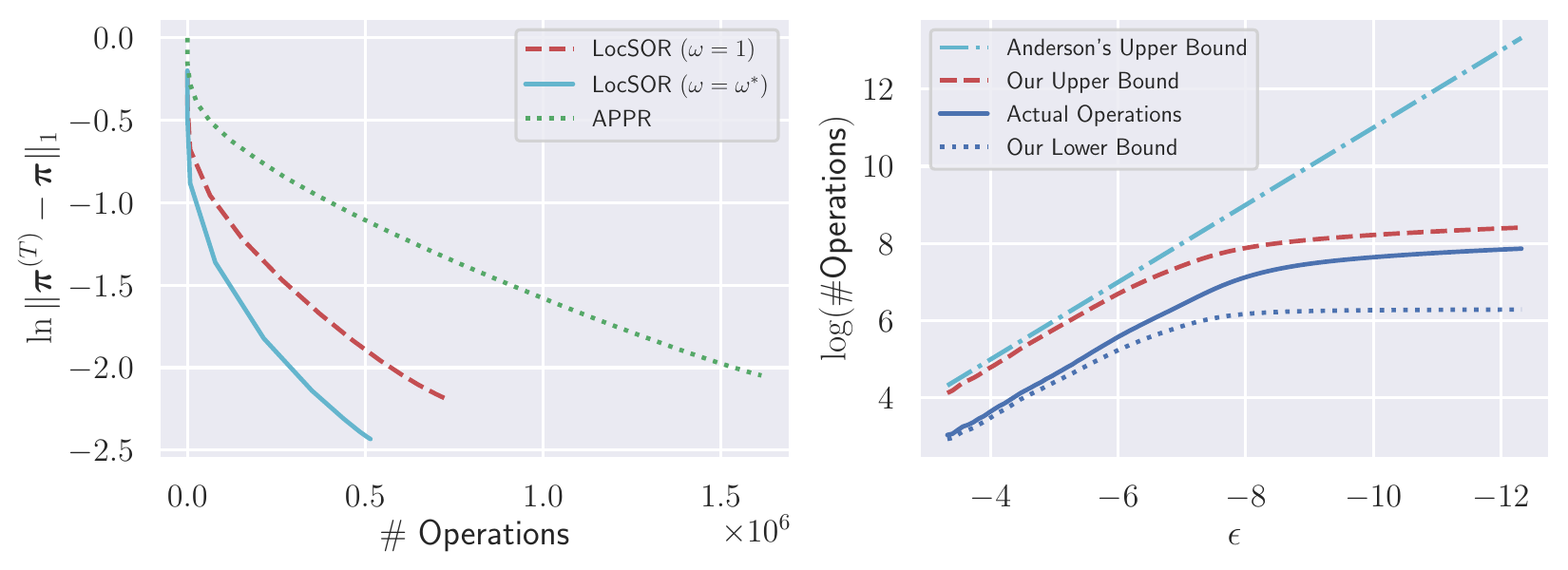}
\vspace{-5mm}
\caption{Comparison of runtime between APPR and \textsc{LocSOR} (left) and runtime bounds (right) as a function of $\eps$. We used the same setting as in Fig. \ref{fig:evolving-processing-appr-dataset-com-dblp}.}
\vspace{-5mm}
\label{fig:app-sor-lower-upper-bounds}
\end{wrapfigure}

Our new evolving bound $\tilde{\gO}(\mean{\vol}(\gS_T)/(\alpha\mean{\gamma}_T))$ mirroring the actual performance of APPR and empirically much smaller than $\Theta(1/(\alpha\eps))$ as illustrated in Fig.~\ref{fig:app-sor-lower-upper-bounds}. Our lower bounds are quite effective when $\eps$ is relatively large, while our upper bound is better than Anderson's when $\eps$ is small. When $\eps \ll \Theta(1/m)$, this new bound is superior to both $\gO(1/(\alpha\eps))$ and $\tilde{\gO}(1/(\sqrt{\alpha}\eps))$. This superiority is evident when compared to algorithms like ISTA or FISTA \cite{beck2009fast} to minimize the $\ell_1$-regularization of $f$ for obtaining an approximate solution of Equ.~\eqref{equ:Qx=b}. Additionally, when $\omega \in (1, 2)$ and recalling that $\mQ$ is an $M$-matrix, the standard analysis of SOR shows that the spectral norm of the iteration matrix must be larger than $|\omega-1|$. Hence, $0<\omega<2$ if and only if global \textsc{SOR} converges \citep{young2014iterative}. When $\omega^*$ is optimal (the point that the spectral radius of the iteration matrix is minimized), we have the following result.

\begin{corollary}
\label{lemma:standard-optimal-sor}
Let $\omega = \omega^* \triangleq 2/(1 + \sqrt{1- (1-\alpha)^2/(1+\alpha)^2 } )$ and $\gS_t = \gV, \forall t \geq 0$ during the updates, the global version of \textsc{LocSOR} has the following convergence bound
\begin{equation}
\| \vr^{(t)}\|_2 \leq \frac{2}{(1+\alpha)\sqrt{d_s}} \left( \frac{1-\sqrt{\alpha}}{1 + \sqrt{\alpha}} +\eps_t\right)^{t}, \nonumber
\end{equation}
where $\eps_t$ are small positive numbers with $\lim_{t \rightarrow \infty } \eps_t = 0$.    
\end{corollary}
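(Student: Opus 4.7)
The plan is to recognize that when $\gS_t = \gV$ for all $t$, a full outer iteration of~\eqref{equ:algo-LocSOR} collapses to a single sweep of the classical Gauss--Seidel SOR method applied to the SPD system $\mQ\vx = \vb$ from~\eqref{equ:Qx=b}; the corollary then reduces to Young's optimal-SOR theory. Splitting $\mQ = \mI - \mL - \mU$ where $\mL,\mU$ are the strict lower/upper triangular parts of $\tfrac{1-\alpha}{1+\alpha}\mD^{-1/2}\mA\mD^{-1/2}$, the error iteration reads $\vx^*-\vx^{(t+1)} = T_\omega(\vx^*-\vx^{(t)})$ with $T_\omega = (\mI-\omega\mL)^{-1}((1-\omega)\mI + \omega\mU)$.

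Next, I would compute the Jacobi spectrum. Since the Jacobi matrix $J = \tfrac{1-\alpha}{1+\alpha}\mD^{-1/2}\mA\mD^{-1/2}$ is symmetric and $\mD^{-1/2}\mA\mD^{-1/2}$ has spectrum in $[-1,1]$, one has $\rho(J) = \tfrac{1-\alpha}{1+\alpha}$. Young's theorem then furnishes the optimal parameter $\omega^* = 2/(1+\sqrt{1-\rho(J)^2})$, which after substituting $1-\rho(J)^2 = 4\alpha/(1+\alpha)^2$ matches the stated formula exactly, and provides $\rho(T_{\omega^*}) = \omega^*-1$. A short algebraic simplification gives $\omega^*-1 = \bigl((1-\sqrt{\alpha})/(1+\sqrt{\alpha})\bigr)^2 \leq (1-\sqrt{\alpha})/(1+\sqrt{\alpha})$, so the spectral radius is comfortably bounded by the rate appearing in the corollary.

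To pass from spectral radius to a finite-$t$ norm bound, I would invoke Gelfand's formula $\lim_{t\to\infty}\|T_{\omega^*}^t\|_2^{1/t} = \rho(T_{\omega^*})$ and choose $\eps_t$ as the gap between $(1-\sqrt{\alpha})/(1+\sqrt{\alpha})$ and the spectral radius plus a vanishing Gelfand slack, so that $\eps_t \downarrow 0$ and $\|T_{\omega^*}^t\|_2 \leq ((1-\sqrt{\alpha})/(1+\sqrt{\alpha}) + \eps_t)^t$. Finally, with $\vx^{(0)} = \bm 0$ we have $\vr^{(t)} = \mQ T_{\omega^*}^t \mQ^{-1}\vb$, so $\|\vr^{(t)}\|_2 \leq \|\mQ\|_2\|\mQ^{-1}\|_2\|\vb\|_2\cdot\|T_{\omega^*}^t\|_2$. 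Plugging in the eigenvalue range $[\tfrac{2\alpha}{1+\alpha}, \tfrac{2}{1+\alpha}]$ of $\mQ$ gives $\|\mQ\|_2\|\mQ^{-1}\|_2 = 1/\alpha$, and $\|\vb\|_2 = \tfrac{2\alpha}{(1+\alpha)\sqrt{d_s}}$ from the definition of $\vb$, so the constant consolidates to $\tfrac{2}{(1+\alpha)\sqrt{d_s}}$ as claimed.

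The main obstacle is the applicability of Young's clean identity $\rho(T_{\omega^*}) = \omega^*-1$, which classically requires $\mQ$ to be consistently ordered (2-cyclic)---a property that holds for bipartite graphs via red-black coloring but not for general $\gG$. Two ways around this: (i) restrict to the bipartite case where Young's formula applies verbatim; or (ii) combine the Ostrowski--Reich theorem with an SPD-specific bound on $\rho(T_\omega)$ at $\omega = \omega^*$, which still yields $\rho(T_{\omega^*}) \leq (1-\sqrt{\alpha})/(1+\sqrt{\alpha})$ and hence the same rate. The non-normality of $T_{\omega^*}$ is what forces the slack $\eps_t$ via Gelfand's formula, and it cannot be removed without further structure; this is precisely why the statement allows an $o(1)$ perturbation of the leading rate.
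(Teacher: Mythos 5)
Your proposal is correct and follows essentially the same route as the paper: compute $\rho(J)=(1-\alpha)/(1+\alpha)$, apply Young's optimal-$\omega$ theory to bound $\rho(T_{\omega^*})$ by $(1-\sqrt{\alpha})/(1+\sqrt{\alpha})$, invoke Gelfand's formula for the $\eps_t$ slack, and close via $\vr^{(t)}=\mQ T_{\omega^*}^t\mQ^{-1}\vr^{(0)}$ with $\|\mQ\|_2\|\mQ^{-1}\|_2=1/\alpha$ and $\|\vr^{(0)}\|_2=2\alpha/((1+\alpha)\sqrt{d_s})$. The ``main obstacle'' you flag is resolved exactly as your option (ii) suggests: the paper cites Young (2014), Section~12.2, Theorem~2.1 for Stieltjes matrices, which gives the two-sided bound $\omega^*-1\leq\rho(\mL_{\omega^*})\leq\sqrt{\omega^*-1}=(1-\sqrt{\alpha})/(1+\sqrt{\alpha})$ without requiring consistent ordering, so the upper bound (not the clean 2-cyclic equality) is what drives the stated rate.
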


Asymptotically, when $\eps_t = o(\sqrt{\alpha})$, then the runtime of global \textsc{LocSOR} is $\tilde{\gO}(m/\sqrt{\alpha})$ where $\tilde{\gO}$ hides $\log1/\eps$. The main difficulty of analyzing the \textit{optimal} local \textsc{LocSOR} is that the nonnegativity and monotonicity of $\vr^{(t)}$ do not hold. Instead, by using a parameterized second-order difference equation, we develop new techniques based on the Chebyshev method detailed in Sec.~\ref{sect:acc-methods}.

\subsection{Parallelizable local gradient descent}

One disadvantage of \textsc{LocSOR} is its limited potential for parallelization. The standard GD $\vx^{(t+1)} = \vx^{(t)} - \grad f(\vx^{(t)})$ (step size = 1), in contrast, is easy to parallelize across the coordinates of the update. Instead of updating $\vr$ and $\vx$ synchronously per-coordinate, we propose the following
\begin{align}
\boxed{\hspace{5mm}
\begin{array}{ll}
\mPhi_{\theta}\left( \gS_t, \vx^{(t)}, \vr^{(t)}, \gA = \blue{\textsc{LocGD}}\right): \ \vx^{(t+1)} \leftarrow \vx^{(t)} + \vr_{\gS_t}^{(t)}, \ \vr^{(t+1)} \leftarrow \vr^{(t)} - \mQ \vr_{\gS_t}^{(t)} 
\end{array}\hspace{4mm}}
\label{equ:algo-LocGD} 
\end{align}
Every coordinate in $\gS_t$ is updated in parallel at iteration $t$. Interestingly, \textsc{LocGD} exhibits nonnegativity and monotonicity properties, and its runtime complexity is similar to that of~\textsc{LocSOR}, as stated in the following theorem (To remind, $\Delta_i = 0$ for $\mean{\gamma}_T$ of \textsc{LocGD} in Equ. \eqref{equ:average-vol-st-and-gamma-t} ).

\begin{theorem}[Runtime bound of \textsc{LocGD}]
\label{thm:local-gd-convergence}
Given the configuration $\theta=(\alpha,\eps,s,\gG)$ with $\alpha\in(0,1)$ and $\eps \leq 1/d_s$ and let $\vr^{(T)}$ and $\vx^{(T)}$ be returned by \textsc{LocGD} defined in~\eqref{equ:algo-LocGD} for solving Equ.~\eqref{equ:f(x)}. There exists a real implementation of \eqref{equ:algo-LocGD} such that the runtime $\gT_{\textsc{LocGD}}$ is bounded by 
\begin{equation}
\frac{1+\alpha}{2}\cdot \frac{\mean{\vol}(\gS_T)}{\alpha \mean{\gamma}_T}\left(1-\frac{\|\mD^{1/2}\vr^{(T)}\|_1}{\|\mD^{1/2}\vr^{(0)}\|_1}\right) \leq \gT_{\textsc{LocGD}} \leq \frac{1+\alpha}{2} \cdot \min\left\{\frac{1}{\alpha\eps}, \frac{\mean{\vol}(\gS_T)}{\alpha \mean{\gamma}_T} \ln \frac{C}{\eps} \right\}, \nonumber
\end{equation}
where $C =  (1+\alpha)/ ((1-\alpha)|\gI_T|), \gI_T = \supp(\vr^{(T)})$. Furthermore, $\mean{\vol}(\gS_T)/\mean{\gamma}_T\leq 1/\eps$ and the estimate $\hat{\vpi} := \mD^{1/2}\vx^{(T)}$ satisfies $\| \mD^{-1}(\hat{\vpi} - \vpi) \|_\infty \leq \eps$.
\end{theorem}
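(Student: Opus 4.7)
The plan is to mirror the proof template of Theorem~\ref{thm:appr-sor-convergence} for LocSOR, exploiting the fact that LocGD shares an analogous mass-decay recursion but is simpler because the active set $\gS_t$ is updated in parallel, so the intra-iteration offsets $\Delta_i$ vanish. The first step is to establish $\vr^{(t)} \geq \vzero$ by induction from $\vr^{(0)} = \vb \geq \vzero$, using that $\mQ$ is an M-matrix: coordinate-wise, for $u \in \gS_t$ the diagonal term cancels so $r_u^{(t+1)} = \tfrac{1-\alpha}{1+\alpha}\sum_{v \sim u,\, v \in \gS_t} r_v^{(t)}/\sqrt{d_u d_v} \geq 0$, and for $u \notin \gS_t$ this non-negative sum is added to $r_u^{(t)} \geq 0$. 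The second step is to derive the central recursion by computing $\vone^\top \mD^{1/2}\mQ = \tfrac{2\alpha}{1+\alpha}(\mD^{1/2}\vone)^\top$ (using $\mA\vone = \mD\vone$), which together with non-negativity yields
\[
\|\mD^{1/2}\vr^{(t+1)}\|_1 = \|\mD^{1/2}\vr^{(t)}\|_1\Big(1 - \tfrac{2\alpha\gamma_t}{1+\alpha}\Big), \quad \gamma_t = \tfrac{\|\mD^{1/2}\vr_{\gS_t}^{(t)}\|_1}{\|\mD^{1/2}\vr^{(t)}\|_1},
\]
simultaneously establishing geometric-style decay and monotonicity of $\|\mD^{1/2}\vr^{(t)}\|_1$.

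Next, I would obtain both branches of the upper bound from this recursion. For the $1/(\alpha\eps)$ branch, the threshold $r_u^{(t)}/\sqrt{d_u} \geq 2\alpha\eps/(1+\alpha)$ for each active $u$ gives $\|\mD^{1/2}\vr_{\gS_t}^{(t)}\|_1 \geq \tfrac{2\alpha\eps}{1+\alpha}\vol(\gS_t)$, and telescoping against $\|\mD^{1/2}\vr^{(0)}\|_1 = \tfrac{2\alpha}{1+\alpha}$ yields $\sum_t \vol(\gS_t) \leq \tfrac{1+\alpha}{2\alpha\eps}$. For the $\mean{\vol}/\mean{\gamma}_T$ branch, applying $-\ln(1-x) \geq x$ to the telescoped log-recursion gives $\tfrac{2\alpha T \mean{\gamma}_T}{1+\alpha} \leq \ln\big(\|\mD^{1/2}\vr^{(0)}\|_1/\|\mD^{1/2}\vr^{(T)}\|_1\big)$, after which I multiply by $\mean{\vol}(\gS_T)$ to obtain $\gT_{\textsc{LocGD}} = T \mean{\vol}(\gS_T)$. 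The symmetric lower bound follows from $1-\prod_t(1-a_t) \leq \sum_t a_t$ with $a_t = 2\alpha\gamma_t/(1+\alpha)$. The auxiliary inequality $\mean{\vol}(\gS_T)/\mean{\gamma}_T \leq 1/\eps$ is obtained by substituting the threshold bound into the definition of $\gamma_t$ and using monotonicity $\|\mD^{1/2}\vr^{(t)}\|_1 \leq \tfrac{2\alpha}{1+\alpha}$, while the estimation guarantee $\|\mD^{-1}(\hat{\vpi}-\vpi)\|_\infty \leq \eps$ follows from the stopping condition on $\|\mD^{-1/2}\vr^{(T)}\|_\infty$ via the change of variables $\vpi = \mD^{1/2}\vx^*$ recalled after Equ.~\eqref{equ:f(x)}.

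The main obstacle is pinning down the constant $C = (1+\alpha)/((1-\alpha)|\gI_T|)$ inside $\ln(C/\eps)$, which reduces to a terminal lower bound $\|\mD^{1/2}\vr^{(T)}\|_1 \geq \tfrac{2\alpha(1-\alpha)|\gI_T|\eps}{(1+\alpha)^2}$. My plan is to combine $\|\mD^{1/2}\vr^{(T)}\|_1 \geq \tfrac{1-\alpha}{1+\alpha}\|\mD^{1/2}\vr^{(T-1)}\|_1$ (immediate from $\gamma_{T-1} \leq 1$ in the recursion) with $\|\mD^{1/2}\vr^{(T-1)}\|_1 \geq \tfrac{2\alpha\eps}{1+\alpha}\vol(\gS_{T-1})$ (non-emptiness of $\gS_{T-1}$ plus the threshold), and then leverage the locally-evolving-set inclusion $\gI_T \subseteq \gI_{T-1} \cup \gS_{T-1} \cup \N(\gS_{T-1})$ together with the explicit diffusion pattern of the final update to relate $\vol(\gS_{T-1})$ to $|\gI_T|$ by a volume-counting/induction argument. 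This step is delicate because it must track which coordinates of $\vr^{(T-1)}$ are annihilated versus newly created by the parallel step, and it is the point where the analysis departs most from the sequential per-node bookkeeping available for LocSOR/APPR.
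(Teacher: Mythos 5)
Most of your proposal tracks the paper's proof closely: the nonnegativity induction, the mass-conservation recursion
\[
\|\mD^{1/2}\vr^{(t+1)}\|_1 = \Big(1-\tfrac{2\alpha\gamma_t}{1+\alpha}\Big)\|\mD^{1/2}\vr^{(t)}\|_1,
\]
the $1/(\alpha\eps)$ branch via the active-node threshold and telescoping, the Weierstrass/logarithm bounds on $T$, the $\mean{\vol}(\gS_T)/\mean{\gamma}_T \leq 1/\eps$ bound, and the accuracy guarantee are all correct and are essentially how the paper proceeds. Your use of $\vone^\top\mD^{1/2}\mQ = \tfrac{2\alpha}{1+\alpha}(\mD^{1/2}\vone)^\top$ to derive the recursion is a tidy variant of the paper's $\|\mA\mD^{-1}\cdot\|_1$-preservation argument, but it leads to the same identity.

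The genuine gap is in your plan for the constant $C=(1+\alpha)/((1-\alpha)|\gI_T|)$. The route you sketch — lower-bounding $\|\mD^{1/2}\vr^{(T)}\|_1$ via $\vol(\gS_{T-1})$ and then trying to relate $\vol(\gS_{T-1})$ to $|\gI_T|$ through the inclusion $\gI_T \subseteq \gI_{T-1}\cup\gS_{T-1}\cup\N(\gS_{T-1})$ — does not close, because the term $\gI_{T-1}$ in that inclusion is not controlled by $\vol(\gS_{T-1})$. Nodes can carry small-but-nonzero residuals that were deposited many iterations earlier, remain inactive, receive no further pushes, and persist unchanged into $\vr^{(T)}$; none of them need be adjacent to $\gS_{T-1}$, so $|\gI_T|$ can exceed $\vol(\gS_{T-1})$ arbitrarily. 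The paper avoids any such volume comparison: for each $u\in\gI_T$, it looks at the \emph{last time $t'<T$ at which $r_u$ was altered}. Because the FIFO implementation zeros $r_u$ whenever $u$ is processed and the residuals stay nonnegative, having $r_u^{(T)}>0$ forces that last alteration to include at least one push from an active neighbor $v_u\in\N(u)$, giving the pointwise bound $\tilde r_u^{(T)} \geq \tfrac{(1-\alpha)\tilde r_{v_u}^{(t')}}{(1+\alpha)d_{v_u}} \geq \tfrac{2\alpha(1-\alpha)\eps}{(1+\alpha)^2}$. Summing over $u\in\gI_T$ yields $\|\mD^{1/2}\vr^{(T)}\|_1 \geq \tfrac{2\alpha(1-\alpha)\eps|\gI_T|}{(1+\alpha)^2}$ directly, without any reference to $\vol(\gS_{T-1})$. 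You should replace your ``volume-counting/induction'' step with this last-push argument; once that is in place the rest of your outline goes through.
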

Note that \(\mean{\gamma}_T\) of \textsc{LocGD} is empirically smaller than that of \textsc{LocSOR}. Hence, \textsc{LocGD} is empirically slower than \textsc{LocSOR} by only a small constant factor (e.g., twice as slow), a finding consistent with observations of their standard counterparts \cite{golub2013matrix}. Nonetheless, \textsc{LocGD} is much simpler and more amenable to parallelization on platforms such as GPUs compared to APPR.

\section{Accelerated Local Iterative Methods}
\label{sect:acc-methods}

This section presents our key contributions where we propose faster local methods based on the Chebyshev method for solving Equ.~\eqref{equ:Qx=b} and the Heavy-Ball (HB) method for Equ.~\eqref{equ:f(x)}.

\subsection{Local Chebyshev method}

Compared with GS and GD, the standard Chebyshev method offers optimal acceleration in solving Equ.~\eqref{equ:Qx=b}. Following existing techniques (e.g., see \citet{d2021acceleration}), we show there exists an upper runtime bound $\tilde\gO(m/\sqrt{\alpha})$ to meeting the stopping condition where $\tilde{\gO}$ hides $\log 1/\eps$ (we presented it in Theorem \ref{thm:standard-cheby}). Hence, the Chebyshev method is one of the optimal first-order linear solvers for solving Equ.~\eqref{equ:Qx=b}. However, localizing Chebyshev poses greater challenges due to the additional momentum vector involved in updating $\vx^{(t)}$. Our key observation is that \textit{if a substantial reduction in the magnitudes of $\vr^{(t)}$ is required within a subset of $\gS_t$, then the corresponding momentum coordinates are likely to possess significant acceleration energy}. Intuitively, a viable strategy involves localizing both the residual and momentum vectors. For $t \geq 1$, denote the ``momentum'' vector as $\vDelta^{(t)} := \vx^{(t)} - \vx^{(t-1)}$ and $\delta_{t:t+1} = \delta_t \delta_{t+1}$, we propose the localized Chebyshev as the following
\begin{align}
\boxed{
\hspace{13mm}
\begin{array}{ll}
\mPhi_{\theta}\left( \gS_t, \vx^{(t)}, \vr^{(t)}, \gA = \blue{\textsc{LocCH}} \right): \\[.3em]
\qquad \hat{\vx}^{(t)}\leftarrow (1 + \delta_{t:t+1} ) \vr_{\gS_t}^{(t)} + \delta_{t:t+1} \vDelta_{\gS_{t}}^{(t)},\quad \delta_{t+1} = \big(2 \tfrac{1+\alpha}{1-\alpha} -\delta_t\big)^{-1} \\[.3em]
\qquad \vx^{(t+1)} \leftarrow \vx^{(t)} +  \hat{\vx}^{(t)}, \quad \vr^{(t+1)} \leftarrow \vr^{(t)} - \hat{\vx}^{(t)} + \frac{1-\alpha}{1+\alpha} \mW \hat{\vx}^{(t)},
\end{array} \label{equ:algo-LocCH} \hspace{12mm}}
\end{align}
where $t \geq 1$ with the initials $\vx^{(0)} = \bm 0, \vx^{(1)} = \vr^{(0)}$, $\delta_{0} = 0, \delta_1 = (1-\alpha)/(1+\alpha)$, and $\mW = \mD^{-1/2}\mA\mD^{-1/2}$ is normalized adjacency matrix. Our key strategy for analyzing~\eqref{equ:algo-LocCH} is to rewrite the updates of $\vr^{(t)}$ as a nonhomogeneous second-order difference equation (see details in Lemma \ref{lemma:second-order-nonhomogenous-cheby})
\begin{align}
\vr^{(t+1)} - 2\delta_{t+1} \mW \vr^{(t)}  + \delta_{t:t+1} \vr^{(t-1)} = \sum_{j=0}^{t} \left( (1+\delta_{j:j+1}) \prod_{r=j+1}^{t} \delta_{r:r+1} \mQ \vr_{ \gS_{j,t} }^{(j)} \right), \label{equ:11}
\end{align}
where we denote $\gS_{j,t} = \gS_{j} \cap  \cdots \cap \gS_{t-1} \cap \comp{\gS}_t$ given $t \geq j \geq 0$ where $\comp\gS_t=\gV\backslash\gS_t$. In the rest, we define $\tilde{\alpha}=(1-\sqrt{\alpha})/(1+\sqrt{\alpha})$ and recall the eigendecomposition of $\mD^{-1/2}\mA\mD^{-1/2} = \mV\mLambda\mV^\top$. Based on the above Equ.~\eqref{equ:11}, we have the following key lemma.

\begin{lemma}
Given $ t \geq 1, \vx^{(0)} = \bm 0$, $\vx^{(1)} = \vr^{(0)}$. The residual $\vr^{(t)}$ of \textsc{LocCH} defined in \eqref{equ:algo-LocCH} can be expressed as the following
\vspace{-3mm}
\begin{equation}
\vspace{-2mm}
\mV^\top \vr^{(t)} = \delta_{1:t} \mZ_t \mV^\top\vr^{(0)} +  \delta_{1:t} t \vu_{0,t} + 2 \sum_{k=1}^{t-1} \delta_{k+1:t} (t-k) \vu_{k,t}, \nonumber
\vspace{-1mm}
\end{equation}
where $\mZ_t$ is a diagonal matrix such that $\|\mZ_t\|_2 \leq 1$ and

\begin{equation*}
\vu_{k,t} =
\begin{cases}
\sum_{j=1}^{t-1} \frac{\delta_{2:j}}{t} \mH_{j,t} \left(\mI - \frac{1-\alpha}{1+\alpha} \mLambda \right) \mV^\top \vr_{ \gS_{0,j}}^{(0)} \hspace{-2mm} &\text{ if } k = 0 \\[.8em]
\sum_{j=k}^{t-1}   \frac{\delta_{k+1:j}}{(t - k)} \mH_{j,t}  \left( \frac{1+\alpha}{1-\alpha} \mI - \mLambda \right) \mV^\top \vr_{ \gS_{k,j}}^{(k)}  &\text{ if } k \geq 1,
\end{cases} 
\end{equation*}
where $\mH_{k,t}$ is a diagonal matrix such that $\|\mH_{k,t}\|_2 \leq t - k$.
\label{lem:locch-residual}
\end{lemma}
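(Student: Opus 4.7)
The plan is to diagonalize the nonhomogeneous three-term recurrence in Equ.~\eqref{equ:11} in the eigenbasis of $\mW=\mD^{-1/2}\mA\mD^{-1/2}$, normalize away the variable coefficients by dividing out $\delta_{1:t}$, and then solve the resulting constant-coefficient Chebyshev recurrence coordinate-by-coordinate using first- and second-kind Chebyshev polynomials as the discrete Green's function. Applying $\mV^\top$ to Equ.~\eqref{equ:11} and using $\mW=\mV\mLambda\mV^\top$ together with $\mQ=\mI-\tfrac{1-\alpha}{1+\alpha}\mW$ gives, at each eigenvalue $\lambda_i$, the scalar equation
\begin{equation*}
\tilde r_i^{(t+1)} - 2\delta_{t+1}\lambda_i\,\tilde r_i^{(t)} + \delta_{t:t+1}\,\tilde r_i^{(t-1)} = g_i^{(t)},
\end{equation*}
where $\tilde r_i^{(t)}=(\mV^\top\vr^{(t)})_i$ and $g_i^{(t)}$ is the diagonalized RHS. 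Substituting $\tilde q_i^{(t)}:=\tilde r_i^{(t)}/\delta_{1:t}$ and using $\delta_{1:t+1}=\delta_{t+1}\delta_{1:t}$ and $\delta_{t:t+1}\delta_{1:t-1}/\delta_{1:t+1}=1$ collapses the homogeneous part to the canonical Chebyshev recurrence $\tilde q_i^{(t+1)}-2\lambda_i\tilde q_i^{(t)}+\tilde q_i^{(t-1)}=g_i^{(t)}/\delta_{1:t+1}$.

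With the initial settings $\vx^{(0)}=\vzero$, $\vx^{(1)}=\vr^{(0)}$, and $\delta_1=\tfrac{1-\alpha}{1+\alpha}$, I compute $\tilde q_i^{(0)}=\tilde r_i^{(0)}$ and $\tilde q_i^{(1)}=\lambda_i\tilde r_i^{(0)}$, which are precisely the defining initial conditions of the first-kind Chebyshev polynomial. Hence the homogeneous evolution of $\vr^{(0)}$ is $\delta_{1:t}T_t(\lambda_i)\tilde r_i^{(0)}$, producing the block $\delta_{1:t}\mZ_t\mV^\top\vr^{(0)}$ with $[\mZ_t]_{ii}=T_t(\lambda_i)$. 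For the particular solution I would apply discrete variation of parameters: the Green's function of the normalized recurrence with unit impulse $G_{k,k}=0$, $G_{k+1,k}=1$ is the second-kind Chebyshev polynomial $U_{t-k-1}(\lambda_i)$. Convolving against the explicit sources in Equ.~\eqref{equ:11}---each of which is a telescoping sum over injection times $j\leq k$ of $\mQ\vr_{\gS_{j,k}}^{(j)}$ weighted by $\delta$-products---and multiplying back by $\delta_{1:t}$, the double sum reorganizes into $\delta_{1:t}\,t\,\vu_{0,t}+2\sum_{k=1}^{t-1}\delta_{k+1:t}(t-k)\vu_{k,t}$. The factors $t$ and $t-k$ are extracted explicitly so that $\mH_{j,t}$ carries the unnormalized Chebyshev value; the two distinct $\mQ$-blocks $\bigl(\mI-\tfrac{1-\alpha}{1+\alpha}\mLambda\bigr)$ (for $k=0$) and $\bigl(\tfrac{1+\alpha}{1-\alpha}\mI-\mLambda\bigr)$ (for $k\geq 1$) appear because the $k=0$ branch absorbs one additional copy of $\delta_1=\tfrac{1-\alpha}{1+\alpha}$ inherited from the $\vr^{(0)}$ initial contribution.

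The main obstacle is the two spectral bounds $\|\mZ_t\|_2\leq 1$ and $\|\mH_{k,t}\|_2\leq t-k$, which I would deduce from the classical extremal estimates $|T_t(x)|\leq 1$ and $|U_{t-k-1}(x)|\leq t-k$ for $x\in[-1,1]$. Since every eigenvalue of $\mD^{-1/2}\mA\mD^{-1/2}$ lies in $[-1,1]$, these inequalities pass through the diagonal form to give the operator-norm bounds immediately. The delicate step is the careful bookkeeping of the nested products $\delta_{a:b}$ and of which subsets $\gS_{j,k}$ end up pairing with each $\vu_{k,t}$; I would handle this by induction on $t$, checking the base case $t=1$ by direct substitution (where $\vr^{(1)}=\delta_1\mW\vr^{(0)}$ and the local corrections reduce trivially) and advancing the identity using the parameter recursion $\delta_{t+1}=(2\tfrac{1+\alpha}{1-\alpha}-\delta_t)^{-1}$ to propagate the telescoping products on both sides.
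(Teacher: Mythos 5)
Your proposal is correct and follows essentially the same route as the paper: diagonalize Equ.~\eqref{equ:11} in the eigenbasis of $\mW$, divide out $\delta_{1:t}$ to reduce to the constant-coefficient Chebyshev recurrence with initial data $\tilde q_i^{(0)}=\tilde r_i^{(0)}$, $\tilde q_i^{(1)}=\lambda_i\tilde r_i^{(0)}$, solve by discrete variation of parameters, and reorganize the resulting double sum into the stated $\vu_{k,t}$ blocks. The only cosmetic difference is that you phrase the homogeneous and particular solutions as Chebyshev polynomials $T_t(\lambda_i)$ and $U_{t-k-1}(\lambda_i)$, while the paper writes the same objects as $\cos(\theta_i t)$ and $\sin(\theta_i(t-k))/\sin\theta_i$ (plus the $\lambda_i=\pm 1$ degenerate cases) via its Lemma~\ref{lemma:second-order-equ-nonhomo}, and the paper performs the double-sum reindexing by direct expansion rather than by the induction you sketch.
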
 
This key lemma essentially captures the process of residual reduction $\vr^{(t)}$ of \textsc{LocCH}. Specifically, given current iteration $t$, we define the \textit{running residual reduction rate} for $\vr^{(k)}$ with $k=0,1,2,\ldots,t-1$ of step $t$ as $\beta_{k,t}$, that is,\vspace{-1ex}
\begin{equation}
\beta_{k, t} \triangleq  \frac{\| \vu_{k,t} \|_2}{\| \vr^{(k)}\|_2}, \qquad  \beta_k \triangleq  \max_t \beta_{k,t}. \vspace{-1ex}
\end{equation}
Note that
\[
\beta_{k,t} \leq \underbrace{\frac{2 \| \vr_{\comp{\gS}_{k}}^{(k)}\|_2 }{(1-\alpha) \| \vr^{(k)}\|_2}}_{\approx \gO(\eps)} + \underbrace{\sum_{j=k+1}^{t-1} \frac{4\tilde{\alpha}^{j-k}(t-j)}{(1-\alpha)(t-k)} \frac{\| \vr_{{\gS}_{k,j}}^{(k)}\|_2}{\| \vr^{(k)}\|_2}}_{\leq \frac{4\tilde{\alpha}}{(1-\alpha)(1-\tilde{\alpha})}}, 
\]
where whether the last term can be even smaller depends on  $\| \vr_{{\gS}_{k,j}}^{(k)}\|_2$ for $\gS_{k,j} = \gS_{k} \cap  \cdots \cap \gS_{j-1} \cap \comp{\gS}_j$. However, we notice that the running geometric mean $\mean\beta_{t} \triangleq (\prod_{j=0}^{t-1} (1+\beta_j))^{1/t}$ is even smaller in practice. Based on these observations and the assumption on $\mean\beta_{t}$, we establish the following theorem.

\begin{theorem}[Runtime bound of \textsc{LocCH}]
\label{th:bound_chebychev}
Given the configuration $\theta=(\alpha,\eps,s,\gG)$ with $\alpha\in(0,1)$ and $\eps \leq 1/d_s$ and let $\vr^{(T)}$ and $\vx^{(T)}$ be returned by \textsc{LocCH} defined in~\eqref{equ:algo-LocCH} for solving Equ.~\eqref{equ:Qx=b}. For $t\geq 1$, the residual magnitude $\|\vr^{(t)}\|_2$ has the following convergence bound
\vspace{-1ex}
\begin{equation}
\|\vr^{(t)}\|_2 \leq \delta_{1:t} \prod_{j=0}^{t-1} (1+\beta_{j}) y_t, \nonumber
\end{equation}
where $y_t$ is a sequence of positive numbers solving $y_{t+1} - 2 y_t + y_{t-1} / ((1+ \beta_{t-1})(1+\beta_{t})) = 0$ with  $y_0 =  y_1 = \|\vr^{(0)}\|_2$. Suppose the geometric mean $\mean\beta_{t} \triangleq (\prod_{j=0}^{t-1} (1+\beta_j))^{1/t}$ of $\beta_t$ be such that $\mean\beta_{t} = 1 + \frac{c \sqrt{\alpha} }{1-\sqrt{\alpha}}$ where $c \in [0,2)$. There exists a real implementation of \eqref{equ:algo-LocCH} such that the runtime $\gT_{\textsc{LocCH}}$ is bounded by 
\[
\gT_\textsc{LocCH} \leq \Theta \left( \frac{ (1+\sqrt{\alpha})\mean{\vol}(\gS_T) }{\sqrt{\alpha}(2-c)} \ln \frac{2 y_T}{\eps} \right).
\]
\end{theorem}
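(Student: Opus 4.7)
The plan is to (i) turn the vector decomposition of Lemma~\ref{lem:locch-residual} into a scalar recurrence for $\|\vr^{(t)}\|_2$, (ii) dominate that recurrence by the clean Chebyshev-like recurrence defining $y_t$ to obtain the residual envelope, and (iii) combine the envelope with the geometric-mean hypothesis on $\mean\beta_{t}$ and the per-iteration cost $\vol(\gS_t)$ to extract the runtime bound.

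For step (i), I would take Euclidean norms in Lemma~\ref{lem:locch-residual}, use $\|\mV^\top\cdot\|_2=\|\cdot\|_2$, $\|\mZ_t\|_2\leq 1$, and $\|\vu_{k,t}\|_2\leq\beta_{k,t}\|\vr^{(k)}\|_2\leq\beta_k\|\vr^{(k)}\|_2$ to obtain
\begin{equation}
\|\vr^{(t)}\|_2 \;\leq\; \delta_{1:t}\|\vr^{(0)}\|_2 + \delta_{1:t}\,t\,\beta_0\|\vr^{(0)}\|_2 + 2\sum_{k=1}^{t-1}\delta_{k+1:t}(t-k)\,\beta_k\,\|\vr^{(k)}\|_2 \;=:\; R_t.
\end{equation}
The key algebraic fact is that the recursion $\delta_{t+1}=(2(1+\alpha)/(1-\alpha)-\delta_t)^{-1}$ forces $\delta_{k+1:t+1}(t+1-k)=2\delta_{t+1}\delta_{k+1:t}(t-k)-\delta_{t:t+1}\delta_{k+1:t-1}(t-1-k)$ for every $k\geq 0$. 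Applying this identity to both the leading $\delta_{1:t}t\beta_0\|\vr^{(0)}\|_2$ term and each summand collapses the right-hand side into a single inhomogeneous relation $R_{t+1}=2\delta_{t+1}R_t-\delta_{t:t+1}R_{t-1}+2\beta_t\|\vr^{(t)}\|_2$, where the $k=t$ boundary contribution supplies the source term.

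For step (ii), I would introduce the change of variables $R_t=\delta_{1:t}\prod_{j=0}^{t-1}(1+\beta_j)\tilde y_t$ and, after substitution into the recurrence above, show by induction that $\tilde y_t\leq y_t$ where $y_t$ is the sequence defined in the theorem. The point is that the rescaling by $\prod(1+\beta_j)$ is precisely what absorbs the inhomogeneous $2\beta_t\|\vr^{(t)}\|_2$ term so that the residual relation reduces to the homogeneous $y_{t+1}-2y_t+y_{t-1}/((1+\beta_{t-1})(1+\beta_t))=0$; the matching of initial data $y_0=y_1=\|\vr^{(0)}\|_2$ follows from $\vx^{(0)}=\bm 0,\vx^{(1)}=\vr^{(0)},\delta_1=(1-\alpha)/(1+\alpha)$. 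This yields the claimed envelope $\|\vr^{(t)}\|_2\leq\delta_{1:t}\prod_{j=0}^{t-1}(1+\beta_j)\,y_t$. For step (iii), I would use the closed form $\delta_{1:t}=2\tilde\alpha^t/(1+\tilde\alpha^{2t})\leq 2\tilde\alpha^t$ arising from the same Chebyshev recursion, together with $\prod_{j=0}^{t-1}(1+\beta_j)=\mean\beta_t^{\,t}=(1+c\sqrt{\alpha}/(1-\sqrt{\alpha}))^t$, to get
\begin{equation}
\|\vr^{(t)}\|_2 \;\leq\; 2\bigl(\tilde{\alpha}\,\mean\beta_t\bigr)^{t}y_t \;=\; 2\Bigl(1-\tfrac{(2-c)\sqrt{\alpha}}{1+\sqrt{\alpha}}\Bigr)^{t}y_t,
\end{equation}
via the identity $\tilde{\alpha}\bigl(1+c\sqrt{\alpha}/(1-\sqrt{\alpha})\bigr)=1-(2-c)\sqrt{\alpha}/(1+\sqrt{\alpha})$. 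Forcing the right-hand side below the stopping tolerance $\Theta(\alpha\eps)$ (see the footnote after Equ.~\eqref{equ:f(x)}) and using $\ln(1/(1-x))\geq x$ gives $T\leq\Theta((1+\sqrt{\alpha})/(\sqrt{\alpha}(2-c))\cdot\ln(2y_T/\eps))$; since each iteration touches only coordinates in $\gS_t\cup(\cup_{u\in\gS_t}\gN(u))$ at cost $\Theta(\vol(\gS_t))$, summing and factoring out $\mean{\vol}(\gS_T)$ gives exactly the claimed runtime $\gT_{\textsc{LocCH}}\leq\Theta((1+\sqrt{\alpha})\mean{\vol}(\gS_T)/(\sqrt{\alpha}(2-c))\cdot\ln(2y_T/\eps))$.

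The hardest step will be (ii): the naive comparison produces a recurrence whose coefficient on $R_t$ is $2(\delta_{t+1}+\beta_t)$, while $y_t$'s recurrence carries $2(1+\beta_t)$, and since $\delta_{t+1}\leq 1$ these do not match term-by-term in the obvious way. The fix will be to bootstrap by substituting the inductive hypothesis $\|\vr^{(t)}\|_2\leq\delta_{1:t}\prod_{j<t}(1+\beta_j)y_t$ directly into the source term $2\beta_t\|\vr^{(t)}\|_2$ (rather than using the cruder $\|\vr^{(t)}\|_2\leq R_t$), so that the rescaling by $\prod(1+\beta_j)$ exactly pays for the missing $\delta_{t+1}$ factor. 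A secondary technical point is ensuring that $y_t$ remains positive throughout, which follows from $y_0=y_1>0$ together with $(1+\beta_{t-1})(1+\beta_t)\geq 1$ making the effective coefficient on $y_{t-1}$ of magnitude at most one, so a standard monotone-sequence argument closes the induction.
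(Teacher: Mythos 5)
Your overall strategy mirrors the paper's: pass from Lemma~\ref{lem:locch-residual} to a scalar norm inequality, dominate it by the clean recurrence for $y_t$, and then read off the runtime from the geometric decay and $\vol(\gS_t)$. Steps (i) and (iii) are essentially right. Step (ii), as written, has a genuine gap, and it is not the one you flag.

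First a small arithmetic point: the source term in your $R_{t+1}$ relation should be $2\delta_{t+1}\beta_t\|\vr^{(t)}\|_2$, not $2\beta_t\|\vr^{(t)}\|_2$ (the boundary contribution at $k=t$ carries $\delta_{t+1:t+1}=\delta_{t+1}$). Once you restore it, the coefficient on $R_t$ becomes $2\delta_{t+1}(1+\beta_t)$, and after dividing through by $\delta_{1:t+1}$ the $\delta_{t+1}$ cancels — so the ``$2(\delta_{t+1}+\beta_t)$ vs.\ $2(1+\beta_t)$'' mismatch you worry about never materializes. The real obstruction is different: your comparison $\tilde y_t\leq y_t$ is attempted through a \emph{two-term} recurrence $y_{t+1}-2y_t+y_{t-1}/((1+\beta_{t-1})(1+\beta_t))=0$, whose coefficient on the $(t-1)$-term is \emph{negative}. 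Having $\tilde y_{t-1}\leq y_{t-1}$ makes the quantity $2\tilde y_t-\tilde y_{t-1}/(\cdot)$ \emph{larger}, not smaller, so the two-term induction cannot close. Bootstrapping the inductive hypothesis into the source term alone does not fix this; the negative term still points the wrong way.

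The paper resolves exactly this by \emph{not} comparing through the two-term recursion. In Corollary~\ref{corollary:d-10} it first introduces an auxiliary sequence $z_t$ satisfying the \emph{full-history equality} $z_t = \delta_{1:t}(1+t\beta_0)z_0 + 2\sum_{k=1}^{t-1}\delta_{k+1:t}(t-k)\beta_k z_k$, whose coefficients $(\mZ_L)_{tk}=2\delta_{k+1:t}(t-k)\beta_k$ are all nonnegative. Because $\mZ_L$ is nonnegative and strictly lower triangular, $(\mI-\mZ_L)^{-1}=\mI+\sum_k\mZ_L^k\geq \bm 0$, and this (equivalently, a straightforward induction over all $k<t$ using only nonnegativity of $(\mZ_L)_{tk}$) gives $\|\vr^{(t)}\|_2\leq z_t$ — with no sign headache, because every coefficient in the comparison is nonnegative. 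Only \emph{after} this monotone comparison is in place does the paper take forward differences of the $z_t$ equality (the Golub--Overton trick) to collapse it into $z_{t+1}-2(1+\beta_t)\delta_{t+1}z_t+\delta_{t:t+1}z_{t-1}=0$, rescale by $\delta_{1:t}$ and then by $\prod_{j<t}(1+\beta_j)$, and arrive at the $y_t$ recurrence. If you amend step (ii) to compare against a sequence defined by the full-history equality (using nonnegativity of all the lower-triangular weights) rather than the second-order recursion directly, your plan matches the paper's proof and closes correctly.
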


\begin{wrapfigure}{r}{.45\textwidth}
\vspace{-3mm}
\centering
\includegraphics[width=.42\textwidth]{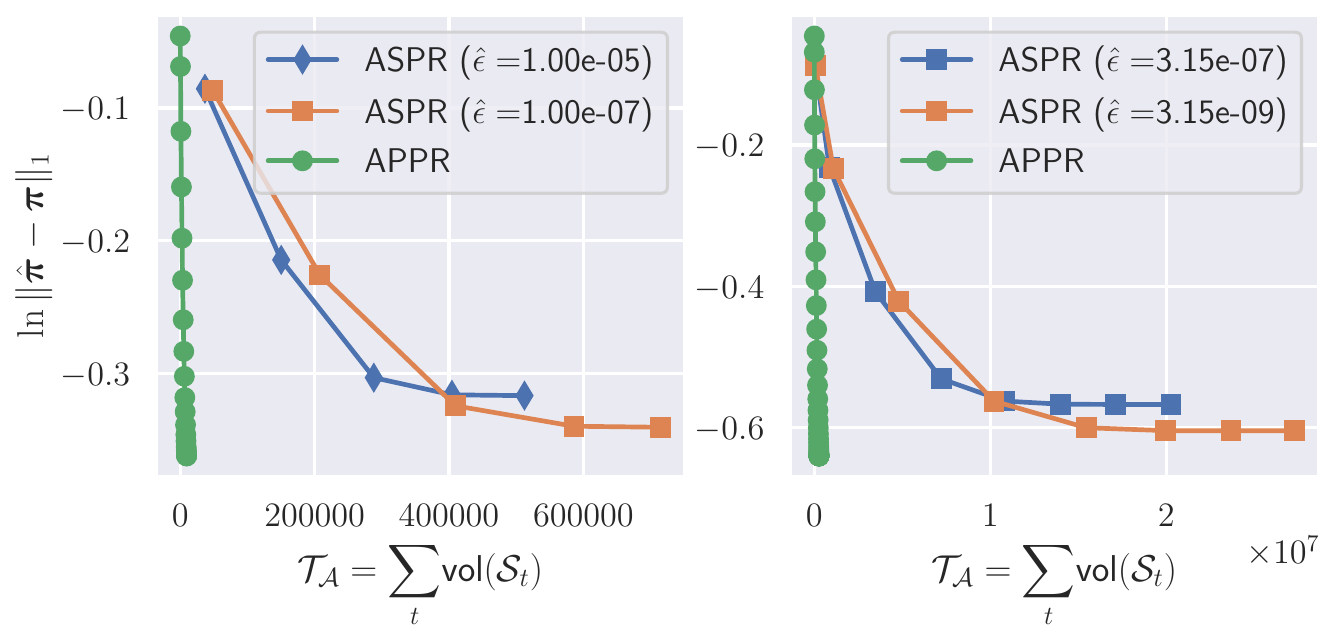}
\vspace{-3mm}
\caption{Comparison of runtime between APPR and \textsc{ASPR}. The setting is the same as in Fig.~\ref{fig:evolving-processing-appr-dataset-com-dblp}. Left $\eps = 10^{-4}$ while $\tfrac{1}{n}$ for right.}
\vspace{-5mm}
\label{fig:loc-ch-lower-upper-bounds}
\end{wrapfigure}
 
Golub \& Overton \cite{golub1988convergence} considered the approximate Chebyshev method by assuming that the inexact residual is sufficiently smaller than $\eps\|\vr^{(t)}\|_2$, where $\eps$ must be small enough to ensure convergence. However, this assumption is overly stringent for our case. The novelty of our analysis lies in a more elegant treatment of a parameterized second-order difference equation, allowing us to circumvent this assumption. The nested APGD($\hat{\eps}$), namely ASPR proposed in \citet{martinez2023accelerated} has runtime complexity $\tilde\gO(\left|{\gS}^*\right| \widetilde{\vol}\left({\gS}^*\right) /\sqrt{\alpha} + \left|{\gS}^*\right| \operatorname{vol}\left(\mathcal{S}^*\right))$ where $\gS^*$ is the optimal support of $\argmin_{\vx} \left\{f(\vx) + \eps\alpha \|\mD^{1/2}\vx\|_1\right\}$ and $\widetilde{\vol}(\gS^*)=\operatorname{nnz}\left(\mQ_{\gS^*, \gS^*}\right)$. Although it is difficult to compare our bound to this, one limitation of ASPR is that it assumes to call APGD($\hat{\eps}$) $\gO(|\gS^*|)$ times to finish in the worst case. However, our iteration complexity is $\tilde{\gO}(1/(\sqrt{\alpha}(2-c)))$. Asymptotically, $c = o(\sqrt{\alpha})$ ($\eps\rightarrow 0$), our complexity is $\tilde{\gO}(1/\sqrt{\alpha})$ could be better than $\tilde{\gO}(|\gS^*|/\sqrt{\alpha})$. Fig.~\ref{fig:loc-ch-lower-upper-bounds} presents a preliminary study on ASPR, indicating that it requires more operations than APPR.

We conclude our analysis by presenting a similar result for the local Heavy Ball (HB). Note the HB method is the one when $\delta_t \delta_{t+1} \rightarrow \tilde{\alpha}^2$ where $\tilde{\alpha} = (1-\sqrt{\alpha}) / (1+\sqrt{\alpha})$. Hence, it has similar convergence analyses as to \textsc{LocCH} shown in Theorem \ref{thm:loc-hb}. The \textsc{LocHB} has the following updates
\begin{align}
\boxed{\hspace{15mm}
\begin{array}{ll}
\mPhi\left( \gS_t; s,\eps, \alpha, \gG, \gA = \blue{\textsc{LocHB}} \right): \\[.15em]
\qquad \hat{\vx}^{(t)}\leftarrow (1 + \tilde{\alpha}^2 ) \vr_{\gS_t}^{(t)} + \tilde{\alpha}^2 \vDelta_{\gS_{t}}^{(t)} \\[.3em]
\qquad  \vx^{(t+1)} \leftarrow \vx^{(t)} +  \hat{\vx}^{(t)}, \quad \vr^{(t+1)} \leftarrow \vr^{(t)} - \hat{\vx}^{(t)} + \frac{1-\alpha}{1+\alpha} \mW \hat{\vx}^{(t)}.
\end{array} \label{equ:algo:local-hb} \hspace{14mm}}
\end{align}

\section{Generalization and Open Problems}
\label{sect:open-questions}
Our framework can be applied to various local methods for large-scale linear systems.  Extensions of this framework to other linear systems are detailed in Tab.~\ref{table-1:sparse-linear-system} of Appendix \ref{appx:A:linear-system}. More broadly, we consider the feasibility of local methods for solving $\mQ\vx = \vb$, where $\vb$ is a sparse vector ($|\supp(\vb)| \ll n$) and $\mQ$ is a positive definite, graph-induced matrix with bounded eigenvalues. This leads us to question whether all standard iterative methods can be effectively localized, raising two key questions
\begin{enumerate}[leftmargin=*,itemsep=0pt]
\item Given a graph-induced matrix $\mQ$ and its spectral radius $\rho(\mQ) < 1$, a standard solver $\gA$, and the corresponding local evolving process $\mPhi_{\theta}(\gS_t, \vx^{(t)}, \vr^{(t)}, \blue{\textsc{Loc}\gA})$, does a localized version of $\gA$  (over $\gS_t$) converge and have local runtime bounds?
\item Based on current analysis, Theorem \ref{th:bound_chebychev} relies on the geometric mean of residual reduction on $\|\vr^{(k)}\|_2$ being small. How feasible is acceleration within locality constraints? Specifically, a stronger bound could be established for solving Equ.~\eqref{equ:Qx=b} via \textsc{LocHB} and \textsc{LocCH}, with a graph-independent bound of
\[
\gT_{\blue{\textsc{Loc}\gA}} = \Theta\left(\frac{\mean{\vol}(\gS_T)}{\sqrt{\alpha} \mean{\gamma}_T}\ln\frac{C}{\eps}\right), \text{ where } C \text{ a graph-independent constant. }
\]
\end{enumerate}

Additionally, this work primarily focuses on using first-order neighbors at each iteration. An area for future exploration is generalizing to higher-order neighbors to determine if this leads to faster or more efficient methodologies, which remains an open question. \vspace{-2mm}

\section{Experiments}
\label{sect:experiments}

We conduct experiments over 17 graphs to solve~\eqref{equ:Qx=b} and explore the local clustering task. We address the following questions: 1) Can iterative solvers be effectively localized? 2) How does the performance of accelerated local methods compare to non-accelerated ones? 3) Can our proposed methods reduce the number of operations required for local clustering?~\footnote{Additional experimental results, setups, and algorithm parameters are provided in Appendix \ref{appd:sect:experiments}.}

\textbf{Baselines.} We consider four baselines: 1) Conjugate Gradient Method (CGM) as a benchmark to compare local and non-local methods; 2) ISTA, the local method proposed by \citet{fountoulakis2019variational}; 3) FISTA, the momentum-based local algorithm proposed by \citet{hu2020local}; and 4) APPR, the classic local method proposed by \citet{andersen2006local}. All methods are implemented in Python 3.10 with the numba library \cite{lam2015numba}.

\textbf{Efficiency of localized algorithms.} To compare local solvers to their standard counterparts, we set $\alpha=0.1$, randomly select 50 nodes from each graph to serve as $\ve_s$ in \eqref{equ:Qx=b}, and run standard GD, SOR, HB, and CH solvers along with their local counterparts: \textsc{LocGD}, \textsc{LocSOR}, \textsc{LocHB}, and \textsc{LocCH}. We measure the efficiency by the \textit{speedup}, defined as the ratio between the runtime of the standard and local solver. The range of $\eps$ is  $\eps \in [\frac{\alpha}{2(1+\alpha)d_s}, 10^{-4}/n]$. The results, presented in Fig.~\ref{fig:run-time-speedup}, clearly indicate that our design demonstrates significant speedup, especially around $\epsilon = 1/n$. Remarkably, they still show better performance even when $\eps \approx 10^{-4}/n$ (Fig. \ref{fig:l1-err-7-methods}). These results suggest that local solvers are preferred over non-local ones when the precision requirement is in this range.

\begin{figure}[H]
\centering
\vspace{-2mm}
\includegraphics[width=1.\textwidth]{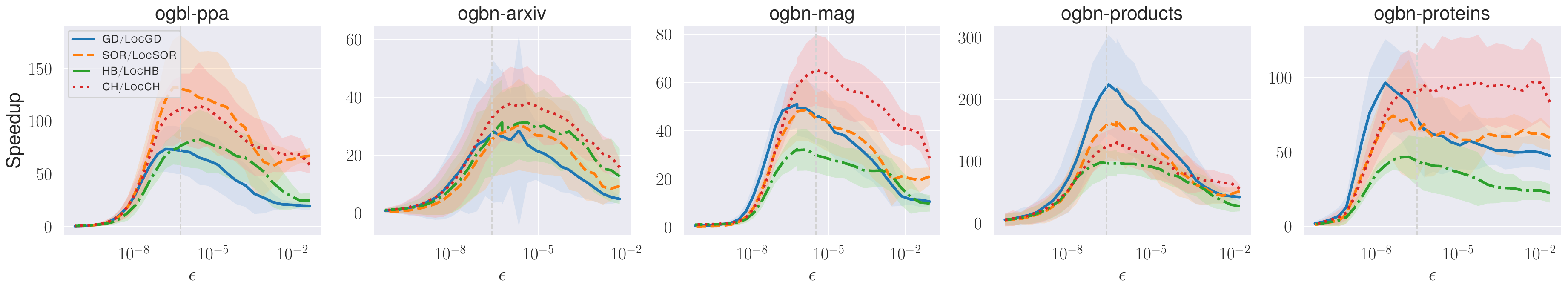}\vspace{-4mm}
\caption{ The speedup of local solvers as a function of $\eps$. The vertical line is $\eps =1/n$.}
\vspace{-5mm}
\label{fig:run-time-speedup}
\end{figure}

\begin{wrapfigure}{r}{.45\textwidth}
\vspace{-5mm}
\centering
\includegraphics[width=.45\textwidth]{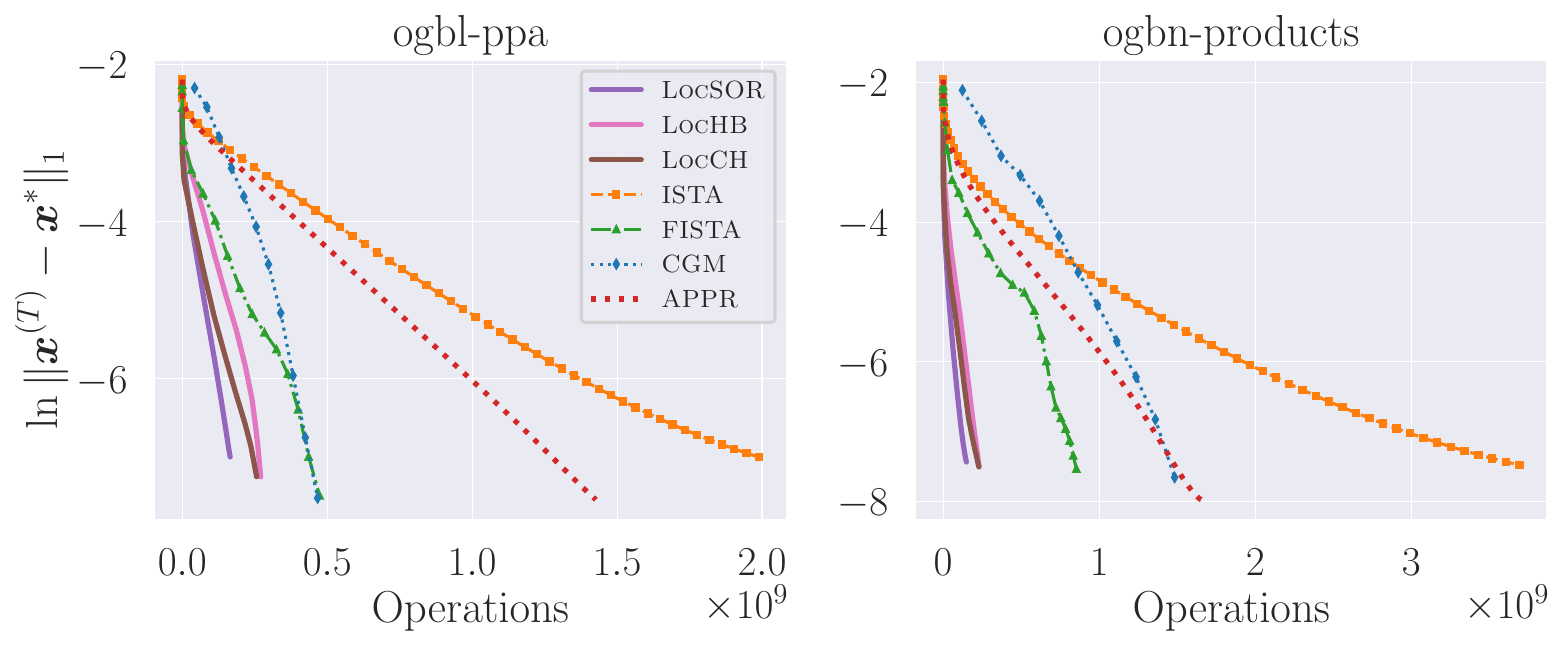}\vspace{-3mm}
\caption{Estimation error as a function of operations required. ($\eps = 10^{-4}/n$)}
\vspace{-5mm}
\label{fig:l1-err-7-methods}
\end{wrapfigure}

\textbf{Comparison with local baselines and CGM.} We next compare our three accelerated methods with four baselines. Fig.~\ref{fig:l1-err-7-methods} presents the $\ell_1$-estimation error in terms of the number of operations (quantified as $t \cdot \mean\vol(\gS_t)$) executed. It is evident that our three solvers use significantly fewer operations compared to CGM and the other three local methods. Again, due to maintaining a nondecreasing set of active nodes, ISTA and FISTA require more operations than the locally evolving set process. Ours are more efficient than APPR, where $\vr^{(0)} = \ve_s$ is used.

\begin{wrapfigure}{l}{0.45\textwidth}
\vspace{-5mm}
\centering
\includegraphics[width=.45\textwidth]{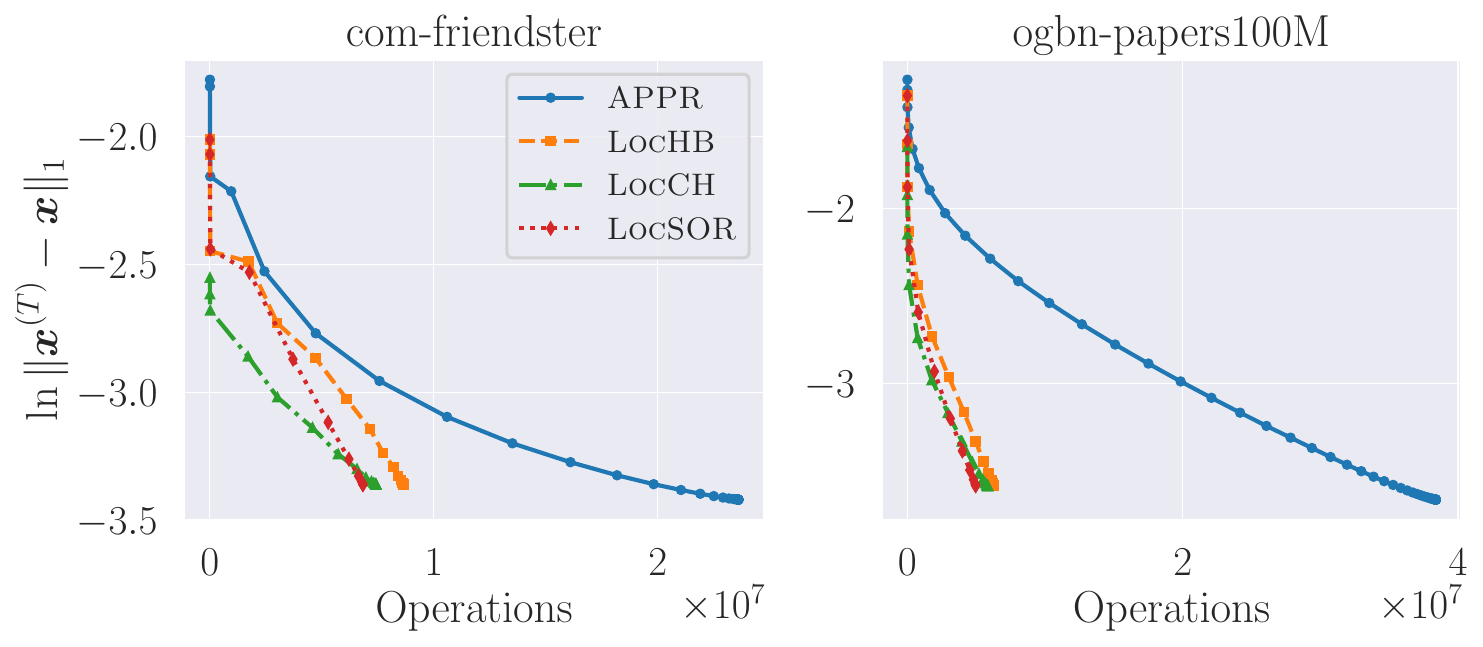}\vspace{-3mm}
\caption{Performance on large-scale graphs.}
\label{fig:large-scale-graphs} \vspace{-2mm}
\end{wrapfigure}

\textbf{Efficiency in terms of $\alpha$ and huge-graph tests.} We demonstrate the performance of local solvers in terms of different $\alpha$ ranging from $0.005$ to $0.25$. Interestingly, in Fig.~\ref{fig:different-alphas-ogbl-ppa}, \textsc{LocGD} show faster convergence when $\alpha$ is small; this may be because of the advantages of monotonicity properties, which is not present in the accelerated methods. However, in other regions of $\alpha$, accelerated methods are faster. We also tested local solvers on two large-scale graphs where papers100M has 111M nodes and 1.6B edges while com-friendster has 65M nodes with 1.8B edges. Results are shown in Fig.~\ref{fig:large-scale-graphs}; compared with current default local methods, it is several times faster, especially on ogbn-papers100M.

\noindent 
\begin{minipage}[t]{0.35\textwidth}
\textbf{Case study on local clustering.} Following the experimental setup in \citet{fountoulakis2019variational}, we consider the task of local clustering on 15 graphs.  As partially demonstrated in Tab. \ref{tab:local-clustering-opers-runtime}, compared with APPR and FISTA, \textsc{LocSOR} uses the least operations and is the fastest, demonstrating the advantages of our proposed local solvers.
\end{minipage}%
\hfill
\begin{minipage}[t]{0.62\textwidth}
\vspace{-8mm}
\begin{table}[H]
\centering
\setlength{\tabcolsep}{2pt}
\caption{Operations/runtime comparison on local clustering.}
\label{tab:local-clustering-opers-runtime}
\begin{tabular}{l|c|c|c|c|c|c}
\toprule
$\gG$ & \multicolumn{3}{c}{Operations} & \multicolumn{3}{|c}{Run Time (Seconds)} \\
\midrule
 & APPR & LocSOR & FISTA & APPR & LocSOR & FISTA \\
\midrule
$\gG_1$ & 6.9e+05 & \textbf{6.5e+04} & 5.7e+05 & 0.127 & \textbf{0.043} & 0.093 \\
$\gG_2$ & 6.7e+05 & \textbf{8.9e+04} & 4.4e+05 & 0.362 & \textbf{0.125} & 0.308 \\
$\gG_3$ & 4.3e+05 & \textbf{3.5e+04} & 2.9e+05 & 0.069 & \textbf{0.014} & 0.042 \\
$\gG_4$ & 5.7e+05 & \textbf{7.6e+04} & 4.4e+05 & 0.357 & \textbf{0.175} & 0.229 \\
$\gG_5$ & 5.4e+05 & \textbf{9.0e+04} & 5.0e+05 & 0.072 & \textbf{0.055} & 0.084 \\\bottomrule
\end{tabular}
\end{table}
\end{minipage}

\section{Limitations and Conclusion}
\label{sect:conclusion}

Our proposed algorithms may have the following limitations: 1) When $\alpha$ is small, the acceleration effect partially disappears, as observed in Fig.~\ref{fig:different-alphas-ogbl-ppa}. This may be due to the limitations of global counterparts, where the residual may not decrease early; 2) Our new accelerated bound for LocCH depends on an empirically reasonable assumption of residual reduction but lacks theoretical justification.

We propose using a new locally evolving set process framework to characterize algorithm locality and demonstrate that several standard iterative solvers can be effectively localized, significantly speeding up current local solvers. Our local methods could be efficiently implemented into GPU architecture to accelerate the training of GNNs such as APPNP~\cite{klicpera2019predict} and PPRGo~\cite{bojchevski2020scaling}. We also offer open problems in developing faster local methods. It is worth exploring whether subsampling active nodes stochastically or using different queue strategies (priority rather than FIFO) could help speed up the framework further. It also remains interesting to see how to design local algorithms for conjugate direction-based methods such as CGM. 

\begin{ack}
The authors would like to thank the anonymous reviewers for
their helpful comments. The work of Baojian Zhou is sponsored
by Shanghai Pujiang Program (No. 22PJ1401300) and the National Natural Science Foundation of China (No. KRH2305047). The work of Deqing Yang is supported by Chinese NSF Major Research Plan No.92270121.
\end{ack}

\bibliography{references}
\bibliographystyle{neurips2024}

\setcounter{tocdepth}{1}
\newpage
\onecolumn
\appendix
\clearpage

\startcontents[appendices]
\section*{Appendix / supplemental material}

\printcontents[appendices]{l}{1}{\setcounter{tocdepth}{2}}

\newpage
\section{Notations and Proof of Lemma~\ref{lemma:anderson-local-lemma}}
\label{section:appendix-A}

\subsection{List of Notations}
In the rest of the appendix, we use the following notations:

{\renewcommand{\arraystretch}{1.4}

\begin{table}[ht!]
\begin{tabular}{p{1cm} | p{12cm}}
\toprule
 & \textbf{Description} \\
\hline
$\gG$ & An undirected connected simple graph with unit weights. \\
$\mA$ & The adjacency matrix of $\gG$. \\
$\mD$ & The diagonal degree matrix of $\gG$. \\
$\mW$ & The normalized Laplacian matrix $\mW \triangleq \mD^{-1/2}\mA\mD^{-1/2}$. \\
$\mV\mLambda\mV^\top$ & The eigendecomposition of $\mW$ is given by $\mW = \mD^{-1/2}\mA\mD^{-1/2} = \mV\mLambda\mV^\top$, where $\mLambda = \diag(\lambda_1,\lambda_2,\ldots,\lambda_n)$, with $1 = \lambda_1 \geq \lambda_2 \geq \cdots \geq \lambda_n \geq -1$. When $\lambda_n = -1$, $\gG$ is a bipartite graph. \\
$\mQ$ & The underlying matrix of Equ.~\eqref{equ:Qx=b} is $\mQ = \mI - \frac{1-\alpha}{1+\alpha} \mD^{-1/2}\mA\mD^{-1/2}$. \\
$\vr^{(t)}$ & Given an estimate $\vx^{(t)}$, the residual is defined as $\vr^{(t)} \triangleq \vb - \mQ \vx^{(t)}$. \\
$\tilde \vr^{(t)}$ & The $\mD^{1/2}$-shifted residual is defined as $\tilde \vr^{(t)} = \mD^{1/2} \vr^{(t)}$. \\
$\alpha$ & The damping factor $\alpha$, which lies in the interval (0,1). \\
$\tilde{\alpha}$ & A pre-defined constant $\tilde{\alpha} = (1-\sqrt{\alpha}) / (1+\sqrt{\alpha})$. \\
$\gT_{\gA}$ & The total runtime of a local algorithm $\gA$. \\
$T_t(x)$ & For $t \geq 1$, $T_t$ denotes the Chebyshev polynomial of the first kind, defined as $T_{t+1}(x) = 2x T_t(x) - T_{t-1}(x)$, with $T_0(x) = 1$, and $T_1(x) = x$. \\
$\delta_t$ & The ratio of $T_{t-1}$ and $T_t$, i.e., $\delta_t = T_{t-1}(\frac{1+\alpha}{1-\alpha})/T_{t}(\frac{1+\alpha}{1-\alpha})$, and $\delta_{t+1}= (2\frac{1+\alpha}{1-\alpha} - \delta_t )^{-1}$, with $\delta_1 = \frac{1-\alpha}{1+\alpha}$. \\
$\delta_{1:t}$ & The product of all $\delta_t$, i.e., $\delta_{1:t} \triangleq \prod_{i=1}^{t}\delta_{i}$. By default, we set $\delta_{0:1} = 0$. \\
$\gS_{1:t}$ & The intersection of all $t$-ordered sets, i.e., $\gS_{1:t} = \gS_1 \cap \gS_2 \cap \cdots \cap \gS_t$. By default, we set $\gS_{t:t-1} = \gV$. \\
$\comp\gS_t$ & The complement of $\gS_t$, i.e., $\comp \gS_t = \gV \backslash \gS_t$. \\
$\gS_{j,t}$ & $\gS_{j,t} = \gS_j \cap \gS_{j+1} \cap \cdots \cap \gS_{t-1} \cap \comp \gS_t = \gS_{j:t-1} \cap \comp \gS_t$. By default, we set $\gS_{t,t} = \comp \gS_t$. \\\bottomrule
\end{tabular}
\end{table}

\subsection{Proof of Lemma~\ref{lemma:anderson-local-lemma}}

\begin{replemma}{lemma:anderson-local-lemma}[Runtime bound of  
 APPR \cite{andersen2006local}]
Given $\alpha \in (0,1)$ and the precision $\eps \leq 1/d_s$ for node $s \in \gV$ with $\vp \leftarrow \bm 0, \vz \leftarrow \ve_s$ at the initial, $\textsc{APPR}(\gG,\eps,\alpha,s)$ defined in~\eqref{algo:appr} returns an estimate $\vp$ of $\vpi$. There exists a real implementation of~\eqref{algo:appr} (e.g., Algo.~\ref{algo:appr}) such that the runtime $\gT_{\textsc{APPR}}$ satisfies
\begin{equation}
\gT_{\textsc{APPR}} \leq \Theta\left(1/(\alpha\eps)\right). \nonumber
\end{equation}
Furthermore, the estimate $\hat{\vpi}:=\vp$ satisfies $\|\mD^{-1}(\hat{\vpi} - \vpi)\|_\infty \leq \eps$ and $\vol(\supp(\hat{\vpi})) \leq 2 /((1-\alpha)\eps)$.
\end{replemma}

\begin{proof}
To find an upper bound of $\gT_{\textsc{APPR}}$, we add a time index for all active nodes $u_1, u_2, \ldots, u_t$ processed in APPR of Algo.~\ref{algo:appr-queue}, a real implementation of~\eqref{algo:appr}. The parameter $t$ is the number of active nodes processed, and $u_i$ is the node dequeued at time $i$. So, updates of $\vp$ and $\vz$ are from $(\bm 0,\ve_s) = (\vp^{(0)},\vz^{(0)})$ to $(\vp^{(t)},\vz^{(t)})$ as follows:
\[
(\vp^{(0)},\vz^{(0)})\quad \xrightarrow{u_1}  \quad (\vp^{(1)},\vz^{(1)}) \quad \xrightarrow{u_2}  \quad (\vp^{(2)},\vz^{(2)})  \quad  \cdots \quad \xrightarrow{u_t} \quad
 (\vz^{(t)},\vp^{(t)}).
\]
For each active $u_i$, the updates of $(\vp, \vz)$ by definition can be represented as
\begin{align*}
\vp^{(i)} = \vp^{(i-1)} + \alpha z_{u_i}^{(i-1)} \cdot \ve_{u_i}, \quad \vz^{(i)} = \vz^{(i-1)} - \frac{(1+\alpha)z_{u_i}^{(i-1)}}2 \ve_{u_i} + \frac{(1-\alpha)z_{u_i}^{(i-1)}}{2}  \mA\mD^{-1}\ve_{u_i}.
\end{align*}
Since $\vz^{(0)} = \ve_s \geq \bm 0$, then $\vz^{(i)}\geq 0$ and $\vp^{(i)}\geq 0$ for all $i$ by induction. Note that $\|\mA\mD^{-1}\ve_{u_i}\|_1 = 1$, we have the following relation from the updates of $\vz$
\[
\|\vz^{(i)}\|_1 = \| \vz^{(i-1)}\|_1 -\alpha  z_{u_i}^{(i-1)}  \iff z_{u_i}^{(i-1)} =  \frac{\|\vz^{(i-1)}\|_1 - \|\vz^{(i)}\|_1}{\alpha} .
\]
Note $\eps d_{u_i} \leq z_{u_i}^{(i-1)}$ for each active $u_i$. Summing the above equation over $i=1,2,\ldots,t$, we have
\begin{equation}
\sum_{i=1}^{t} \eps d_{u_i} \leq  \sum_{i=1}^{t} z_{u_i}^{(i-1)} = \sum_{i=1}^{t} \left( \frac{\|\vz^{(i-1)}\|_1 - \|\vz^{(i)}\|_1}{\alpha} \right)  = \frac{ \| \vz^{(0)}\|_1 - \|\vz^{(t)}\|_1}{\alpha} \leq \frac{\| \vz^{(0)}\|_1}{\alpha} = \frac{1}{\alpha}, \nonumber
\end{equation}
where note $\|\vz^{(0)}\|_1 = \| \ve_s\|_1 = 1$ by the initial condition. Since $\sum_{i=1}^t d_{u_i}$ exactly captures the number of operations needed, the runtime of Algo.~\ref{algo:appr} is then bounded as
\[
\gT_{\textsc{APPR}} = \Theta\left(\sum_{i=1}^{t} d_{u_i}\right) \leq \Theta\left(\frac{1}{\alpha\eps}\right).
\]
To check the quality of estimate $\vp$, using the updates of $\vp^{(i)}$ and summing over all $i$, we have
\begin{align*}
\vp^{(t)} &= \alpha \sum_{i=1}^{t} z_{u_i}^{(i-1)} \ve_{u_i} = \underbrace{ \alpha \left( \frac{(1+\alpha) }{2} \mI - \frac{(1-\alpha)}{2} \mA \mD^{-1} \right )^{-1}}_{\bm \Pi} \sum_{i=1}^{t} \left(\vz^{(i-1)} - \vz^{(i)}\right) = \vpi - \bm \Pi \vz^{(t)},
\end{align*}
where $\bm \Pi$ is the PPR matrix. The above gives us $\vpi - \vp^{(t)} = \bm \Pi \cdot \vz^{(t)}$. Since $\gG$ is undirected, the $\bm \Pi$ matrix satisfies $\pi_v[u] = (d_u /d_v) \pi_u[v]$ where $\pi_v[u]$ is the $u$-th element of PPR vector of sourcing node $v$. Consider each $u$-th element of $\bm \Pi \cdot \vz^{(t)}$
$$
(\bm \Pi \cdot \vz^{(t)})_u = \sum _{v \in \mathcal V} z_v^{(t)} \cdot \pi_v [u] = \sum _{v \in \mathcal V} z_v^{(t)} \cdot \frac{d_u}{d_v} \pi_u[v] \leq \epsilon d_u \sum _{v \in \mathcal V} \pi_u[v] = \epsilon d_u,
$$
where the last equality is due to $ \sum _{v \in \mathcal V} \pi_u[v] = 1$.  Hence, $ (\vpi - \vp^{(t)})_u = (\bm \Pi \cdot \vz^{(t)})_u \leq \epsilon d_u$, which indicates $\|\mD^{-1}(\vp^{(t)} - \vpi)\|_\infty \leq \eps$. To see the bound of $\vol(\supp(\vp^{(t)}))$, note for any $u \in \supp(\vp^{(t)})$, it was an active node and there was at least $\tilde{z}_u(1-\alpha)/2 $ remain in $u$-th entry of $\vz^{(t)}$ where we denote $\tilde{z}_u$ as the residual before the last push operation of node $u$; hence
\[
\sum_{u \in \supp(\vp)} d_u \leq \sum_{u \in \supp(\vp)} \frac{\tilde{z}_u}{\eps} = \sum_{u \in \supp(\vp)} \frac{\tilde{z}_u (1-\alpha)/2}{\eps (1-\alpha)/2} \leq \frac{\sum_{u \in \supp(\vp)} z_u^{(t)} }{\eps (1-\alpha)/2} \leq \frac{2}{(1-\alpha)\eps}.
\]
\end{proof}

\begin{minipage}{.53\textwidth}
\vspace{-5mm}
\begin{algorithm}[H]
\caption{ $\blue{\textsc{APPR}}(\alpha,\eps, s, \gG)$ via FIFO Queue}
\begin{algorithmic}[1]
\STATE Initialize: $\vp \leftarrow \bm 0, \vz \leftarrow \ve_s, \gQ \leftarrow \{*, s\}, t = -1$
\WHILE{\textbf{true}}
\STATE $u \leftarrow \gQ$.dequeue()
\IF{u == *}
\IF{$\gQ = \emptyset$}
\STATE \textbf{break}
\ENDIF 
\STATE $t \leftarrow t + 1$  \qquad \ \ // Starting time of $\gS_t$
\STATE $\gQ$.enqueue(*) \quad// Marker for next $\gS_{t+1}$
\STATE \textbf{continue}
\ENDIF
\STATE $\tilde{z} \leftarrow z_u$
\STATE $p_u \leftarrow p_u + \alpha \cdot \tilde{z}$
\STATE $z_u \leftarrow \tilde{z}\cdot (1-\alpha)/2$
\FOR{$v \in \N(u)$}
\STATE $z_v \leftarrow z_v + \frac{(1-\alpha)}{2}\cdot \frac{\tilde{z}}{d_u}$
\IF{$z_v \geq \eps d_v \textbf{ and } v \notin \gQ $}
\STATE $\gQ$.enqueue($v$)
\ENDIF
\ENDFOR
\IF{$z_u \geq \eps d_u \textbf{ and } u \notin \gQ $}
\STATE $\gQ$.enqueue($u$)
\ENDIF
\ENDWHILE
\STATE \textbf{return} $\vp$
\end{algorithmic}
\label{algo:appr-queue}
\end{algorithm} 
\end{minipage}\quad
\begin{minipage}{.45\textwidth}
Indeed, Lemma \ref{lemma:anderson-local-lemma} is a special case of Theorem 1 in \citep{andersen2006local}. The proof outlined above adheres to the key strategy demonstrated in that theorem, which involves exploring the monotonicity and nonnegativity of $\vz$. 

The real implementation of APPR, as shown in Algo.~\ref{algo:appr-queue}, presents a typical queue-based method. It has monotonic properties during the updates of ${\vp}$ and ${\vz}$ (Lines 10-16 of Algo.~\ref{algo:appr-queue}). It also holds element-wise that $\bm{p} \geq 0$ and $\vz \geq 0$. The operations of $\gQ$.enqueue($u$), $\gQ$.dequeue(), and $v\notin \gQ$ are all in $\gO(1)$. Line 4 to Line 9 is to design the marker for distinguishing between $\gS_t$ and $\gS_{t+1}$. If all active nodes are processed and no more active nodes are added into $\gQ$, then $\gQ$ will be empty, and finally, the algorithm returns an estimate $\vp$ of $\vpi$.
\end{minipage}

\section{Local Iterative Methods via Evolving Set Process}
\label{appendix:sect:local-methods}

\textbf{Justification of an equivalent condition.} We make the justification of an equivalent stop condition for solving \eqref{equ:Qx=b}. Note we require a local solver to return an estimate $\hat{\vpi}$ satisfies
\begin{equation}
\|\mD^{-1} \left(\hat{\vpi} - \vpi\right)\|_\infty\leq \eps \label{equ:stop-conditon}
\end{equation}
Since we define $\mQ\vx = \vb$ as $\left(\mI - \frac{1-\alpha}{1+\alpha} \mD^{-1/2}\mA\mD^{-1/2}\right) \vx = \tfrac{2\alpha}{1+\alpha} \mD^{-1/2}\ve_s$. With $\vr^{(t)} = \vb - \mQ \vx^{(t)}$ and the stop condition
\begin{equation}
\| \mD^{-1/2} \vr^{(t)}\|_\infty \leq \frac{2\alpha\eps}{1+\alpha} \nonumber
\end{equation}
ensures the estimate $\hat{\vpi} = \mD^{1/2} \vx^{(t)}$ satisfies \eqref{equ:stop-conditon}. To see this, since $\vpi = \mD^{1/2} \vx^*$, we have
\begin{align*}
\|\mD^{-1} \left(\hat{\vpi} - \vpi\right)\|_\infty &= \|\mD^{-1/2} (\vx^{(t)} - \vx^*)\|_\infty \\
&= \|\mD^{-1/2} (\mQ^{-1}\vb - \mQ^{-1} \vr^{(t)} - \mQ^{-1}\vb)\|_\infty \\
&= \|\mD^{-1/2} \mQ^{-1}\mD^{1/2} \mD^{-1/2} \vr^{(t)}\|_\infty \\
&\leq \|\mD^{-1/2} \mQ^{-1}\mD^{1/2}\|_\infty \cdot \| \mD^{-1/2} \vr^{(t)}\|_\infty \\
&\leq \|\mD^{-1/2} \mQ^{-1}\mD^{1/2}\|_\infty \cdot \frac{2\alpha\eps}{1+\alpha}
\end{align*}
where $\mD^{-1/2} \mQ^{-1}\mD^{1/2} = (\mI - \frac{1-\alpha}{1+\alpha} \mD^{-1}\mA)^{-1} = \sum_{i=0}^\infty (\tfrac{1-\alpha}{1+\alpha} \mD^{-1}\mA)^i$. This leads to
\begin{align*}
\|\mD^{-1} \left(\hat{\vpi} - \vpi\right)\|_\infty &\leq \| \sum_{i=0}^\infty (\tfrac{1-\alpha}{1+\alpha} \mD^{-1}\mA)^i \|_\infty \cdot \frac{2\alpha\eps}{1+\alpha}  \\
&\leq \frac{1+\alpha}{2\alpha} \cdot \frac{2\alpha\eps}{1+\alpha} \\
&= \eps.
\end{align*}

\subsection{Local Variant of GS-SOR and Proof of Lemma~\ref{lemma:gauss-seidel}}
\label{appendix:section-B.1}

The Gauss-Seidel Successive Over-Relaxation (GS-SOR) solver (see Section 11.2.7 of \citet{golub2013matrix}) for the linear system $\mM \vpi = \vs$ via the following forward substitution
\begin{align*}
\textbf{ for } i &\textbf{ in } \gV := \{1, 2,\ldots, n\} \textbf{ do}: \\
&p_i^{(t+1)}=\omega\left(s_i-\sum_{j=1}^{i-1} M_{i j} p_j^{(t+1)}-\sum_{j=i+1}^n M_{i j} p_j^{(t)}\right) / M_{i i} + (1-\omega) p_i^{(t)},
\end{align*}
where $\vp$ is updated from $\vp^{(t)}$ to $\vp^{(t+1)}$. When the relaxation parameter $\omega = 1$, GS-SOR reduces to the standard GS method. Equivalently, let $\Delta_i = (i-1)/n$ for $i=1,2,\ldots,n$, then GS-SOR updates can be sequentially represented as
\begin{align*}
\textbf{ for } i &\textbf{ in } \gV := \{1, 2,\ldots, n\} \textbf{ do}: \\
&\vp^{(t + \Delta_{i+1})} \leftarrow \vp^{(t + \Delta_{i})} + {\frac {\omega }{M_{ii}}} \left(s_{i}-\sum _{j=1}^{i-1} M_{ij} p_{j}^{(t+ \Delta_{i})} - \sum_{j=i}^n M_{ij} p_{j}^{(t + \Delta_{i})}\right) \cdot \ve_i.
\end{align*}
Therefore, it is natural to define the following local variant of GS-SOR.
\begin{definition}[Local variant of GS-SOR]
Consider the linear system $\mM \vpi = \vs$. For $t \geq 0$, we are given an active node set $\gS_t = \{u_1,u_2,\ldots, u_{|\gS_t|}\}$ and let $\Delta_i = (i-1)/|\gS_t|$ for $i=1,2,\ldots,|\gS_t|$, providing $\omega \in (0,2)$, it is natural to define the \textit{local variant} of GS-SOR as follows:
\begin{align}
\textbf{for } &u_i \textbf{ in } \gS_t := \{u_1,u_2,\ldots, u_{|\gS_t|}\} \textbf{ do}: \nonumber\\
&\vp^{(t + \Delta_{i+1})} \leftarrow \vp^{(t + \Delta_{i})} + {\frac {\omega }{M_{u_i u_i}}} \left(s_{u_i} - \sum _{j=1}^{i-1} M_{u_i u_j} p_{u_j}^{(t+ \Delta_{i})} - \sum_{j=i}^n M_{u_i u_j} p_{u_j}^{(t + \Delta_{i})}\right) \cdot \ve_{u_i}, \label{equ:loc-gs-sor}
\end{align}
where $\gS_t \subseteq \gV$. When $\omega = 1$ and $\gS_t = \gV = \{1,2,\ldots,n\}$, it reduces to the standard GS. 
\label{def:loc-gs-sor}
\end{definition}
We use the above definition to show \textsc{APPR} is a local variant of GS-SOR as the following.
\begin{replemma}{lemma:gauss-seidel}[New local evolving-based bound for APPR]
Let $\mM = \alpha^{-1}\big(\mI - \tfrac{1-\alpha}{2} \left( \mI + \mA \mD^{-1} \right)  \big)$ and $\vs =\ve_s$. The linear system $\mM \vpi = \vs$ is equivalent to Equ.~\eqref{equ:ppr}.
Given $\vp^{(0)} = \bm 0$, $\vz^{(0)} = \ve_s$ with $\omega \in (0,2)$, the local variant of GS-SOR \eqref{equ:loc-gs-sor} for $\mM \vpi = \vs$ can be formulated as
\[
{\vp}^{(t+\Delta_{i+1})} \leftarrow \vp^{(t+\Delta_{i})} + \frac{\omega{z}_{u_i}^{(t+\Delta_i)}}{M_{u_i u_i}}\ve_{u_i},\quad {\vz}^{(t+\Delta_{i+1})} \leftarrow {\vz}^{(t+\Delta_i)} -  \frac{\omega z_{u_i}^{(t+\Delta_i)}}{M_{u_i u_i}} \mM \ve_{u_i},
\]
where $u_i$ is an active node in $\gS_t$  satisfying $z_{u_i} \geq \eps d_{u_i}$ and $\Delta_i = (i-1)/|\gS_t|$. Furthermore, when $\omega = \frac{1+\alpha}{2}$, this method reduces to APPR given in \eqref{equ:local-APPR}, and there exists a real implementation (Aglo.~\ref{algo:appr-queue}) of APPR such that the runtime $\gT_{\textsc{APPR}}$ is bounded by
\begin{small}
\[
\gT_{\textsc{APPR}} \leq \frac{ \mean{\vol}(\gS_T) }{ \alpha \hat{\gamma}_T } \ln \frac{C_T}{\eps}, \text { where } \frac{\mean{\vol}(\gS_T) }{\hat{\gamma}_T} \leq \frac{1}{\eps}, \  C_T = \frac{2}{(1-\alpha)|\gI_T|}, \hat{\gamma}_T \triangleq \frac{1}{T} \sum_{t=0}^{T-1} \left\{ \frac{ \sum_{i=1}^{|\gS_t|} {|z_{u_i}^{(t+\Delta_i)}}| }{ \|\vz^{(t)}\|_1 } \right\}.
\]
\end{small}
\end{replemma}

\begin{proof}
Note that $\mM \vpi = \vs$ is equivalent to Equ.~\eqref{equ:ppr}. We first rewrite $\vp^{(t+\Delta_{i+1})}$ in terms of the residual $\vz$. The residual $\vz$ at time $t + \Delta_i$ can be written as $\vz^{(t+\Delta_{i})} = \vs - \mM \vp^{(t+\Delta_i)}$. Note $s_{u_i} - \sum _{j=1}^{i-1} M_{u_i u_j} p_{u_j}^{(t+ \Delta_{i})} - \sum_{j=i}^n M_{u_i u_j} p_{u_j}^{(t + \Delta_{i})} = (\vs - \mM \vp^{(t+\Delta_i)})_{u_i} = z_{u_i}^{(t+\Delta_i)}$. Then, the updates of local GS-SOR defined in~\eqref{equ:loc-gs-sor} can be rewritten as $\vp^{(t+\Delta_{i+1})} = \vp^{(t+\Delta_{i})} + \tfrac{\omega}{M_{u_i u_i}} z_{u_i}^{(t+\Delta_i)} \cdot \ve_{u_i}$.  Hence, the updates of $\vz^{(t+\Delta_{i})}$ can be written as 
\begin{align*}
\vz^{(t+\Delta_{i+1})} &= \vs - \mM \vp^{(t+\Delta_{i+1})} = \vs - \mM \left(\vp^{(t+\Delta_{i})} + \frac{\omega z_{u_i}^{(t+\Delta_i)}}{M_{u_i u_i}}  \cdot \ve_{u_i} \right) = \vz^{(t+\Delta_{i})} - \frac{\omega z_{u_i}^{(t+\Delta_i)}}{M_{u_i u_i}} \mM \ve_{u_i}.
\end{align*}
Note the diagonal element $M_{u_i u_i} = (1+\alpha)/(2\alpha)$. Hence, when $\omega = \tfrac{1+\alpha}{2}$, we have 
\begin{align*}
\vp^{(t+\Delta_{i+1})} &= \vp^{(t+\Delta_{i})} + \alpha z_{u_i}^{(t+\Delta_i)} \cdot \ve_{u_i} \\
\vz^{(t+\Delta_{i+1})} &= \vz^{(t+\Delta_{i})} - \alpha z_{u_i}^{(t+\Delta_i)} \mM \ve_{u_i} = \vz^{(t+\Delta_{i})} -  z_{u_i}^{(t+\Delta_i)} (\frac{1+\alpha}{2}\mI - \frac{1-\alpha}{2} \mA\mD^{-1} ) \ve_{u_i}.
\end{align*}
The above updates match APPR's evolving set process formulation in~\eqref{equ:local-APPR}. The rest is to show a new runtime bound. Adding $\ell_1$-norm on both sides of the above equation, then note $\|\vz^{(t+\Delta_{i+1})}\|_1 = \|\vz^{(t+\Delta_{i})}\|_1 - \alpha z_{u_i}^{(t+\Delta_i)} $ for $i=1,2,\ldots,|\gS_t|$. We have
\begin{align*}
\|\vz^{(t+1)}\|_1 = \left( 1 - \frac{ \alpha \sum_{i=1}^{|\gS_t|}  z_{u_i}^{(t+\Delta_i)} }{ \|\vz^{(t)}\|_1} \right) \|\vz^{(t)}\|_1 = \left( 1 -  \alpha \beta_t \right) \|\vz^{(t)}\|_1 =  \prod_{i=0}^t \left( 1 -  \alpha \beta_t \right) \|\vz^{(0)}\|_1,
\end{align*}
where we define $\beta_t:=  \frac{ \sum_{i=1}^{|\gS_t|} {|z_{u_i}^{(t+\Delta_i)}}| }{ \|\vz^{(t)}\|_1 }$. Let $t= T - 1$, we have
\begin{align*}
\ln \frac{\| \vz^{(T)} \|_1}{\| \vz^{(0)} \|_1}  = \sum_{t=0}^{T-1} \ln \left(1- \alpha \beta_t \right) \leq - \sum_{t=0}^{T-1} \alpha \beta_t \quad \Rightarrow \quad T \leq \frac{1}{ \alpha \hat{\gamma}_T} \ln \frac{\| \vz^{(0)} \|_1}{\|\vz^{(T)} \|_1},
\end{align*}
where the first inequality is due to $\ln(1 + x) \leq x $ for $x > - 1$. For each nonzero node $u \in \gI_T = \{z_u^{(T)} : z_u^{(T)} \neq 0, u \in \gV \}$, consider the last time $t'$ that it was altered. Then, either the alteration came from $u$ being an active node, with $z_u^{(t')} \geq d_u \epsilon$, and after the \textsc{PUSH} operation it became $z_u^{(t'')} \geq \frac{(1-\alpha) d_u}{2} \epsilon$; or the alteration came from a neighboring node $v_u \in \gN(u)$ pushing its mass onto $u$, which ensures that $z_u^{(t'')} \geq \frac{(1-\alpha)}{2 d_{v_u} }z_{v_u}^{(t')} \geq \frac{(1-\alpha)}{2}\epsilon$. These two cases provide a lower bound of $\frac{1-\alpha}{2} \epsilon$. Hence, $\|\vz^{(T)}\|_1 \geq \frac{\epsilon (1-\alpha) |\gI_T|}{2}$, which leads to the corresponding constant $C_T$.

To see the lower bound of $1/\eps$, note $\eps d_{u_i} \leq z_{u_i}^{(t+\Delta_i)}$ for all $i=1,2,\ldots,|\gS_t|$. Then we have 
\begin{align*}
\eps \vol(\gS_t) &\leq \sum_{i=1}^{|\gS_t|} z_{u_i}^{(t+\Delta_i)} \\
&= \beta_t \|\vz^{(t)}\|_1 \\
&\leq \beta_t,
\end{align*}
where we defined $\beta_t:=  \frac{ \sum_{i=1}^{|\gS_t|} {|z_{u_i}^{(t+\Delta_i)}}| }{ \|\vz^{(t)}\|_1 }$ and the last inequality is due to the monotonic decreasing of $\|\vz^{(t)}\|_1$, i.e., $1 \geq \| \vz^{(0)}\|_1 \geq \cdots \geq \| \vz^{(T)}\|_1$. Applying the above inequality for all $t=0,1,2\ldots,T-1$, it leads to
\begin{align*}
\eps \vol(\gS_t) &\leq \beta_t \\
\quad \Rightarrow \quad \eps \sum_{t=0}^{T-1}\vol(\gS_t) &\leq \sum_{t=0}^{T-1} \beta_t \\
\quad \Rightarrow \quad \frac{\mean{\vol}(\gS_T)}{\hat{\gamma}_T} &\leq \frac{1}{\eps},
\end{align*}
where the last derivation is from the fact that $\hat{\gamma}_T = \tfrac{1}{T}\left\{\sum_{t=0}^{T-1}\beta_t:=  \frac{ \sum_{i=1}^{|\gS_t|} {|z_{u_i}^{(t+\Delta_i)}}| }{ \|\vz^{(t)}\|_1 }\right\}$.
\end{proof}

\begin{remark}
The connection between \textsc{APPR} and the Gauss-Seidel is not new \cite{kloster2013nearly,kloster2014heat,gleich2014anti,chen2023accelerating}. Our work is the first work that has linked APPR and the Gauss-Seidel with a locally evolving set process.
\end{remark}

\subsection{ \textsc{LocSOR} and Proof of Theorem~\ref{thm:appr-sor-convergence}}

In this subsection, recall we defined $\tilde{\vr}^{(t)} = \mD^{1/2}\vr^{(t)}$.

\begin{lemma}[Local iteration complexity of \textsc{LocSOR} ($\omega \leq 1$)]
Denote $\gS_t = \{u_1,u_2,\ldots, u_{|\gS_t|}\}$ as the active node set at the $t$-th iteration. When $\omega \in (0,1]$, all vectors $\tilde \vr^{(t)} \geq 0$ are nonnegative and magnitudes are decreasing $\|\tilde \vr^{(t+1)}\|_1 < \|\tilde \vr^{(t)}\|_1$. Let $T$ be the total number of iterations needed. Then, at iteration $T$, we have
\begin{equation}
T \in \frac{(1+\alpha)}{2\alpha\omega \mean{\gamma}_T } \left[ 1 - \frac{\| \tilde\vr^{(T)} \|_1}{\| \tilde\vr^{(0)} \|_1} , \ln \frac{\| \tilde \vr^{(0)} \|_1}{\| \tilde \vr^{(T)} \|_1} \right], \quad \mean{\gamma}_T \triangleq \frac{1}{T}\sum_{t=0}^{T-1} \left\{\gamma_t \triangleq \sum_{i=1}^{|\gS_t|} \frac{\tilde r_{u_i}^{(t+ \Delta_i)}}{ \|\tilde\vr^{(t)} \|_1} \right\}, \label{equ:lemma-appr-sor-beta-t}
\end{equation}
where $\mean{\gamma}_t =t^{-1}\sum_{\tau=0}^{t-1} \gamma_\tau$ is the mean of active ratio factors defined in Equ.~\eqref{equ:average-vol-st-and-gamma-t}.
\label{lemma:local-appr-sor-iterations}
\end{lemma}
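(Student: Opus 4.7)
The plan is to mirror the argument used for APPR in Lemma~\ref{lemma:gauss-seidel}, but track the $\mD^{1/2}$-shifted residual $\tilde\vr^{(t)} = \mD^{1/2}\vr^{(t)}$ so that the per-push decrease in $\ell_1$ norm is clean, and to handle the extra slack from $\omega \leq 1$.

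First I would establish nonnegativity by induction on the per-coordinate push. Multiplying the LocSOR update in \eqref{equ:algo-LocSOR} by $\mD^{1/2}$ and using $r_{u_i}^{(t+\Delta_i)} = \tilde r_{u_i}^{(t+\Delta_i)}/\sqrt{d_{u_i}}$, one push becomes
\[
\tilde\vr^{(t+\Delta_{i+1})} = \tilde\vr^{(t+\Delta_i)} - \omega \tilde r_{u_i}^{(t+\Delta_i)}\ve_{u_i} + \tfrac{(1-\alpha)\omega}{1+\alpha}\tilde r_{u_i}^{(t+\Delta_i)} d_{u_i}^{-1}\mA\ve_{u_i}.
\]
Initially $\tilde\vr^{(0)} = \tfrac{2\alpha}{1+\alpha}\ve_s \geq \bm 0$. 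For the induction step, the $u_i$-coordinate becomes $(1-\omega)\tilde r_{u_i}^{(t+\Delta_i)} \geq 0$ since $\omega \in (0,1]$; neighbor coordinates only increase; all other coordinates are unchanged. So $\tilde\vr^{(t)} \geq \bm 0$ throughout.

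Next, taking the $\ell_1$ norm of the push update and using $\|\mA\ve_{u_i}/d_{u_i}\|_1 = 1$ together with nonnegativity gives the per-push identity
\[
\|\tilde\vr^{(t+\Delta_{i+1})}\|_1 = \|\tilde\vr^{(t+\Delta_i)}\|_1 - \tfrac{2\alpha\omega}{1+\alpha}\tilde r_{u_i}^{(t+\Delta_i)}.
\]
Telescoping over $i = 1,\ldots,|\gS_t|$ and dividing by $\|\tilde\vr^{(t)}\|_1$ yields the multiplicative decrease
\[
\|\tilde\vr^{(t+1)}\|_1 = \Big(1 - \tfrac{2\alpha\omega}{1+\alpha}\gamma_t\Big)\|\tilde\vr^{(t)}\|_1,
\]
with $\gamma_t$ as defined in \eqref{equ:lemma-appr-sor-beta-t}, which proves strict monotonicity whenever $\gS_t \neq \emptyset$.

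Finally, for the two-sided bound on $T$: taking logs, applying $\ln(1-x) \leq -x$, and summing over $t = 0,\ldots,T-1$ gives
\[
\ln\tfrac{\|\tilde\vr^{(T)}\|_1}{\|\tilde\vr^{(0)}\|_1} \leq -\tfrac{2\alpha\omega}{1+\alpha}\sum_{t=0}^{T-1}\gamma_t = -\tfrac{2\alpha\omega}{1+\alpha}T\mean\gamma_T,
\]
which rearranges to the upper bound on $T$. For the lower bound, I would instead sum the additive form $\|\tilde\vr^{(t)}\|_1 - \|\tilde\vr^{(t+1)}\|_1 = \tfrac{2\alpha\omega}{1+\alpha}\gamma_t\|\tilde\vr^{(t)}\|_1$ and bound $\|\tilde\vr^{(t)}\|_1 \leq \|\tilde\vr^{(0)}\|_1$ by the already-established monotonicity, yielding $\|\tilde\vr^{(0)}\|_1 - \|\tilde\vr^{(T)}\|_1 \leq \tfrac{2\alpha\omega}{1+\alpha}\|\tilde\vr^{(0)}\|_1 T\mean\gamma_T$, which rearranges to the claimed lower bound. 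The main obstacle, and really the only bookkeeping to do carefully, is the $\mD^{1/2}$ rescaling that converts the symmetrized push into a proper $\ell_1$-contraction; once that is in hand, both bounds follow by the same telescoping argument used for APPR.
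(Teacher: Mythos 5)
Your proposal is correct and follows essentially the same route as the paper: establish nonnegativity of $\tilde\vr^{(t)}$, derive the per-push $\ell_1$ identity $\|\tilde\vr^{(t+\Delta_{i+1})}\|_1 = \|\tilde\vr^{(t+\Delta_i)}\|_1 - \tfrac{2\alpha\omega}{1+\alpha}\tilde r_{u_i}^{(t+\Delta_i)}$, telescope to the multiplicative decrease $\|\tilde\vr^{(t+1)}\|_1 = (1-\tfrac{2\alpha\omega\gamma_t}{1+\alpha})\|\tilde\vr^{(t)}\|_1$, and then extract the two-sided bound on $T$. The only cosmetic divergence is in the lower bound: the paper invokes the Weierstrass product inequality $1-\sum x_i \le \prod(1-x_i)$ directly, while you reprove its content on the spot by summing the additive form $\|\tilde\vr^{(t)}\|_1 - \|\tilde\vr^{(t+1)}\|_1 = \tfrac{2\alpha\omega}{1+\alpha}\gamma_t\|\tilde\vr^{(t)}\|_1$ and using monotonicity $\|\tilde\vr^{(t)}\|_1\le\|\tilde\vr^{(0)}\|_1$ — these are logically equivalent, and both land on the same inequality. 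You also fill in the nonnegativity induction slightly more explicitly than the paper does (it simply asserts $\vr\geq\bm 0$ during updates when $\omega\in(0,1]$), which is a welcome clarification but not a different idea.
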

\begin{proof}
Recall $u_i \in \gS_t=\{u_1,\ldots,u_{|\gS_t|}\}$ and $\Delta_i = \tfrac{i-1}{|\gS_t|}$, \textsc{LocSOR} in Algo.~\ref{algo:local-sor}  updates
\begin{align*}
\vx^{\left(t + \Delta_{i+1}\right)} &= \vx^{\left(t+ \Delta_i\right)} + \omega r_{u_i}^{\left(t+ \Delta_i\right)} \cdot \ve_{u_i} \\
\vr^{\left(t + \Delta_{i+1} \right)} &= \vr^{(t+ \Delta_i)} - \omega r_{u_i}^{(t+ \Delta_i)} \cdot  \ve_{u_i} + \tfrac{(1-\alpha)\omega}{1+\alpha} r_{u_i}^{(t+ \Delta_i)} \cdot \mD^{-1/2} \mA  \mD^{-1/2}\ve_{u_i}
\end{align*}
Note $\vr \geq \bm 0$ during updates when $\omega \in (0,1]$ and recall $\tilde{\vr}^{(t)} = \mD^{1/2} \vr^{(t)}$, we have
\begin{align}
\|\tilde\vr^{\left(t + \Delta_{i+1} \right)} + \omega \tilde r_{u_i}^{(t+ \Delta_i)} \cdot  \ve_{u}\|_1 &= \|\tilde\vr^{(t+ \Delta_i)}  + \tfrac{(1-\alpha)\omega}{1+\alpha} \tilde r_{u_i}^{(t+ \Delta_i)} \cdot \mA  \mD^{-1}\ve_{u_i} \|_1 \nonumber\\
\|\tilde\vr^{\left(t + \Delta_{i+1} \right)}\|_1 + \omega\tilde r_{u_i}^{(t+ \Delta_i)} &= \|\tilde\vr^{(t+ \Delta_i)} \|_1  + \tfrac{(1-\alpha)\omega}{1+\alpha} \tilde r_{u_i}^{(t+ \Delta_i)} . \nonumber
\end{align}
Summing over the above equations over $u_i$, we have
\begin{equation}
\|\tilde \vr^{\left(t + 1\right)}\|_1 = \|\tilde \vr^{(t)} \|_1  -  \frac{2\alpha\omega}{1+\alpha} \sum_{i=1}^{|\gS_t|} \tilde r_{u_i}^{(t+ \Delta_i)} = \Bigg(1  -  \frac{2\alpha\omega}{1+\alpha}\underbrace{ \sum_{i=1}^{|\gS_t|} \frac{\tilde r_{u_i}^{(t+ \Delta_i)}}{\|\tilde \vr^{(t)} \|_1} }_{\gamma_t} \Bigg) \|\tilde \vr^{(t)} \|_1. \label{equ:appr-sor-rt}
\end{equation}
Given $\{x_i\}_{i=0}^{T-1}$ and $x_i \in (0,1)$, the Weierstrass product inequality provides $1-\sum_{i=0}^{T-1} x_i \leq \prod_{i=0}^{T-1} (1-x_i)$. By using this inequality, we continue to have a lower bound of $T$ as the following
\begin{align*}
1 - \sum_{t=0}^{T-1} \frac{2\alpha\omega \gamma_t}{1+\alpha}  \leq \prod_{t=0}^{T-1} \left(1- \frac{2\alpha\omega \gamma_t}{1+\alpha} \right) = \frac{\big\| \tilde\vr^{(T)} \big\|_1}{\big\| \tilde\vr^{(0)} \big\|_1} \quad \Rightarrow \quad  \frac{(1+\alpha)\big( 1 - \| \tilde\vr^{(T)} \|_1 / \| \tilde\vr^{(0)} \|_1 \big)}{2\alpha\omega \mean{\gamma}_T } \leq T.
\end{align*}
To get upper bound of $\gamma_t$, note each active residual $\tilde r_{u_i}^{(t+ \Delta_i)}$ pushes at most $\frac{(1-\alpha)\omega}{(1+\alpha)}$ times magnitude to $\tilde \vr_{u_{i+1}}$, $\tilde \vr_{u_{i+2}}$, and $ \tilde \vr_{u_{|\gS_t|}}$; hence, $\sum_{j=i}^{|\gS_t|} \tilde r_{u_j}^{(t+ \Delta_j)}$ will increase by at most $\tilde r_{u_i}^{(t+ \Delta_i)}\cdot \frac{(1-\alpha)\omega}{(1+\alpha)} \leq \tilde r_{u_i}^{(t+ \Delta_i)}$ in total. Hence, overall $u_i$, we have
\[
\big\|\tilde\vr_{\gS_t}^{(t)} \big\|_1 = \sum_{i=1}^{|\gS_t|} \tilde r_{u_i}^{(t)} \leq \sum_{i=1}^{|\gS_t|} \tilde r_{u_i}^{(t + \Delta_i )} \leq 2 \big\|\tilde\vr_{\gS_t}^{(t)} \big\|_1.
\]
We reach the following lower and upper bounds of $\gamma_t$, $\frac{\|\tilde\vr_{\gS_t}^{(t)} \|_1 }{\|\tilde\vr^{(t)} \|_1 } \leq \gamma_t := \sum_{i=1}^{|\gS_t|} \frac{\tilde r_{u_i}^{(t+ \Delta_i)}}{\|\tilde\vr^{(t)} \|_1} \leq \frac{2\|\tilde\vr_{\gS_t}^{(t)} \|_1 }{\|\tilde\vr^{(t)} \|_1 }$. To check the upper bound of $T$, from Equ.~\eqref{equ:appr-sor-rt}, $\| \tilde\vr^{(T)} \|_1  = \prod_{t=0}^{T-1} \left(1- \frac{2\alpha\omega \gamma_t}{1+\alpha} \right) \| \tilde\vr^{(0)}\|_1$ and
\begin{align*}
\ln \frac{\| \tilde\vr^{(T)} \|_1}{\| \tilde\vr^{(0)} \|_1}  = \sum_{t=0}^{T-1} \ln \left(1- \frac{2\alpha\omega \gamma_t}{1+\alpha} \right) \leq - \sum_{t=0}^{T-1} \frac{2\alpha\omega \gamma_t}{1+\alpha} \quad \Rightarrow \quad T \leq \frac{ (1+\alpha) }{ 2\alpha \omega \mean{\gamma}_T} \ln \frac{\| \tilde\vr^{(0)} \|_1}{\| \tilde\vr^{(T)} \|_1},
\end{align*}
where the first inequality is due to $\ln(1 + x) \leq x $ for $x > - 1$.
\end{proof}

\begin{reptheorem}{thm:appr-sor-convergence}[Runtime bound of \textsc{LocSOR} $(\omega = 1)$]
Given the configuration $\theta=(\alpha,\eps,s,\gG)$ with $\alpha\in(0,1)$ and $\eps \leq 1/d_s$ and let $\vr^{(T)}$ and $\vx^{(T)}$ be returned by \textsc{LocSOR} defined in \eqref{equ:algo-LocSOR} for solving Equ.~\eqref{equ:Qx=b}. There exists a real implementation of \eqref{equ:algo-LocSOR} such that the runtime $\gT_{\textsc{LocSOR}}$ is bounded by 
\[
\frac{1+\alpha}{2}\cdot\frac{\mean{\vol}(\gS_T)}{\alpha \mean{\gamma}_T} \left(1 - \frac{\|\mD^{1/2} \vr^{(T)}\|_1}{\|\mD^{1/2} \vr^{(0)}\|_1}\right) \leq \gT_{\textsc{LocSOR}} \leq \frac{1+\alpha}{2} \cdot \min \left\{ \frac{1}{\alpha\eps},  \frac{\mean{\vol}(\gS_T)}{\alpha\mean{\gamma}_T} \ln\frac{C}{\eps} \right\}
\]
where $\mean{\vol}(S_T)$ and $\mean{\gamma}_T$ are defined in \eqref{equ:average-vol-st-and-gamma-t} and $C = \tfrac{1+\alpha}{(1-\alpha) |\gI_T|}$ with $\gI_T = \supp(\vr^{(T)})$. Furthermore, $\mean{\vol}(\gS_T)/\mean{\gamma}_T\leq 1/\eps$ and the local estimate $\hat{\vpi} := \mD^{1/2}\vx^{(T)}$ satisfies $\| \mD^{-1}(\hat{\vpi} - \vpi) \|_\infty \leq \eps$.
\end{reptheorem}
\begin{proof}
After the last iteration $T$, for each nonzero residual $\tilde r_u^{(T)} \ne 0, u \in \gI_T$, there must be at least one update that happened at node $u$: Node $u$ has a neighbor $v_u \in \N(u)$, which was active. This neighbor $v_u$ pushed some residual $\tfrac{(1-\alpha) \tilde r_{v_u}^{(t^\prime)}}{(1+\alpha)d_{v_u}}$ to $u$ where $t^\prime < T$. Hence, for all $u\in \gI_T$, we have
\begin{align*}
\|\tilde\vr^{(T)}\|_1 = \sum_{u \in \gI_T} \tilde r_u^{(T)} &\geq \sum_{u \in \gI_T} \frac{(1-\alpha)\tilde r_{v_u}^{(t^\prime)} }{(1+\alpha)d_{v_u}}  \geq \sum_{u \in \gI_T}  \frac{(1-\alpha) 2 \alpha \eps d_{v_u} / (1+\alpha) }{(1+\alpha)d_{v_u}} = \eps |\gI_T| \frac{2\alpha(1-\alpha)}{(1+\alpha)^2},
\end{align*}
where the second inequality is because $\tilde r_{v_u}^{(t^\prime)}$ was active before the push operation. Applying the above lower bound of $\|\tilde \vr^{(T)}\|_1$ to Equ.~\eqref{equ:lemma-appr-sor-beta-t} of Lemma \ref{lemma:local-appr-sor-iterations} and note $\|\tilde\vr^{(0)}\|_1 = 2\alpha/(1+\alpha)$, we obtain
\[
\frac{\|\tilde\vr^{(0)}\|_1}{\|\tilde \vr^{(T)} \|_1} \leq \frac{\|\tilde\vr^{(0)}\|_1}{\eps |\gI_T| \cdot \frac{2\alpha(1-\alpha) }{(1+\alpha)^2} } = \frac{1+\alpha}{\eps (1-\alpha) |\gI_T|} := \frac{C_1}{\eps}.
\]
The rest is to prove an upper bound $1/(\alpha \eps)$ of $\gT_{\textsc{LocSOR}}$. Recall that for any active node $u$, we have residual updates from Algo. \ref{algo:local-sor} as the following 
\[
\mD^{1/2} \vr^{(t+1)} = \mD^{1/2} \vr^{(t)}- \omega \vr_u^{(t)} \mD^{1/2} \ve_u+\frac{(1-\alpha) \omega r_u^{(t)}}{1+\alpha} \mA \mD^{-1} \mD^{1/2} \ve_u.
\]
Move $-\omega r_u^{(t)} \mD^{1/2} \ve_u$ to the left and note $\| \mA \mD^{-1} \mD^{1/2} \ve_u\|_1 = \sqrt{d_u}$, we then obtain
\[
\| \mD^{1/2} \vr^{(t+1)}\|_1 + \omega \sqrt{d_u} r_u^{(t)} = \| \mD^{1/2} \vr^{(t)} \|_1 + \frac{(1-\alpha)\omega}{1+\alpha} \sqrt{d_u} r_u^{(t)}.
\]
Hence, for each active $u$, we have $\frac{2\alpha\omega \sqrt{d_u} r_u^{(t)}}{1+\alpha} = \| \mD^{1/2} \vr^{(t)} \|_1 - \| \mD^{1/2} \vr^{(t+1)}\|_1$. Summing them over all active nodes $u$ and noticing $r_u^{(t)} \geq 2\alpha\epsilon \sqrt{d_u}/(1+\alpha)$ by the active condition. Note $\omega = 1$ and  $||\mD^{1/2} \vr^{(0)} ||_1 = \frac{2\alpha}{1+\alpha}$, we have run time bounded by
\[
\gT_{\textsc{LocSOR}} = \sum_{u} d_{u} \leq \left( \frac{1+\alpha}{2\alpha} \right)^2 \frac{\sum_t (\| \mD^{1/2} \vr^{(t)} \|_1 - \| \mD^{1/2} \vr^{(t+1)} \|_1) }{\omega\epsilon} \leq \frac{(1+\alpha)}{2\alpha\eps}.
\]
Combining the above bound and the bound $T$ shown in Lemma \ref{lemma:local-appr-sor-iterations}, we prove the lower and upper bound of $\gT_{LocSOR}$. To check the lower bound of $1/\eps$,i.e., $\mean{\vol}({\gS_T}) / \mean{\gamma}_T \leq 1/\eps$, note
$\frac{2\alpha\eps d_{u_i}}{1+\alpha} \leq \tilde r_{u_i}^{(t+\Delta_i)}$ for all $i=1,2,\ldots,|\gS_t|$. Then we have 
\begin{align*}
\frac{2\alpha\eps }{1+\alpha}\vol(\gS_t) &\leq \sum_{i=1}^{|\gS_t|} \tilde r_{u_i}^{(t+\Delta_i)} \\
&= \gamma_t \|\mD^{1/2}\vr^{(t)}\|_1 \\
&\leq \gamma_t \|\mD^{1/2}\vr^{(0)}\|_1 = \frac{2\alpha \gamma_t }{1+\alpha},
\end{align*}
where the last inequality is due to the monotonic decreasing of $\|\mD^{1/2}\vr^{(t)}\|_1$, i.e., $\frac{2\alpha}{1+\alpha} \geq \|\mD^{1/2}\vr^{(0)}\|_1 \geq \cdots \geq \| \vr^{(T)}\|_1$. Applying the above inequality over all $t=0,1,2\ldots,T-1$, it leads to
\begin{align*}
\eps \vol(\gS_t) &\leq \gamma_t \\
\quad \Rightarrow \quad \eps \sum_{t=0}^{T-1}\vol(\gS_t) &\leq \sum_{t=0}^{T-1} \gamma_t \\
\quad \Rightarrow \quad \frac{\mean{\vol}(\gS_T)}{\mean{\gamma}_T} &\leq \frac{1}{\eps}.
\end{align*}
\end{proof}

\begin{minipage}{.51\textwidth}
\vspace{-5mm}
\begin{algorithm}[H]
\caption{$\blue{\textsc{LocSOR}}(\alpha,\eps,s,\mc{G}, {\omega})$ via FIFO Queue}
\begin{algorithmic}[1]
\STATE Initialize: {$\vr \leftarrow  c \ve_s $},\  $\vx \leftarrow \bm 0$,\  $c = \tfrac{2\alpha}{1+\alpha}$, \ $t=-1$
\STATE $\mc{Q} \leftarrow \{*, s\}$ // As we assume $\eps \leq 1/d_s$
\WHILE{\textbf{true}}
\STATE $u \leftarrow \gQ$.dequeue()
\IF{u == *}
\IF{$\gQ = \emptyset$}
\STATE \textbf{break}
\ENDIF 
\STATE $t \leftarrow t + 1$  \qquad \ \ // Starting time of $\gS_t$
\STATE $\gQ$.enqueue(*) \quad// Marker for next $\gS_{t+1}$
\STATE \textbf{continue}
\ENDIF
\STATE $\tilde{r} \leftarrow r_u$
\IF{${|}r_u{|} < c\cdot\eps d_u$}
\STATE \textbf{continue}
\ENDIF
\STATE $x_u \leftarrow x_u + {\omega}\cdot \tilde{r}$
\STATE $r_u \leftarrow r_u - {\omega} \cdot \tilde{r}$
\FOR{$v \in \N(u)$}
\STATE $r_v \leftarrow r_v + {\tfrac{(1-\alpha)\omega}{(1+\alpha)} }\cdot \frac{\tilde{r}}{d_u} $
\IF{${|}r_v{|} \geq c\cdot\eps d_v$ \textbf{ and } $v\notin\gQ$}
\STATE $\gQ$.enqueue($v$)
\ENDIF
\ENDFOR
\IF{${|}r_u{|} \geq c \cdot \eps d_u$ \textbf{ and } $u\notin\gQ$ }
\STATE $\gQ$.enqueue($u$)
\ENDIF
\ENDWHILE
\STATE \textbf{return} $\vx$
\end{algorithmic}
\label{algo:local-sor}
\end{algorithm}
\end{minipage}\quad
\begin{minipage}{.48\textwidth}
The real queue-based implementation of \textsc{LocSOR} is presented in Algo.~\ref{algo:local-sor}. It has monotonic and nonnegative properties during the updates of ${\vr} \geq \bm 0$ and ${\vx} \geq \bm 0$ when $\omega \in (0,1]$. Same as \textsc{APPR}, the operations of $\gQ$.enqueue($u$), $\gQ$.dequeue(), and $v\notin \gQ$ are all in $\gO(1)$. 

During the updates, one should note that the real vector $\vr$ presents $\mD^{1/2}\vr^{(t)}$ while the vector $\vx$ is $\mD^{1/2}\vx^{(t)}$. In this case, the original active node condition is implicitly shifting from $|r_u| \geq \frac{2\alpha \eps \sqrt{d_u} }{1+\alpha} $ to $\sqrt{d_u}|r_u| \geq \frac{2\alpha \eps d_u }{1+\alpha}$. We use this shifted active condition in Lines 11 and 13 and inactive condition in Line 5.  When $\omega \in (1,2)$, it is possible $|r_u| < c \cdot \eps d_u$ and \textsc{LocSOR} will ignore this inactive node $u$ during the updates. This step makes sure $\gS_t = \{u_i: |{\scriptstyle r_{u_i}^{(t+\Delta_i)}}| \geq \tfrac{2\alpha\eps \sqrt{d_u}}{1+\alpha} \}$ during the updates.
\end{minipage}

\subsection{Optimal GS-SOR and Proof of Corollary~\ref{lemma:standard-optimal-sor}}

We introduce the following standard result.
 
\begin{lemma}[\citet{young2014iterative}, Section 12.2, Theorem 2.1 ]
Given the GS-SOR method for solving $\mQ \vx = \vb$,  if the underlying matrix $\mQ$ is a Stieltjes matrix and set relaxation parameter $\omega$ as 
\begin{equation}
\omega^* = \frac{2}{1+\sqrt{1-\rho(\mB)^2}} = 1 + \left(\frac{\rho(\mB)}{1+\sqrt{1-\rho(\mB)^2}}\right)^2,
\label{equ:2.1}
\end{equation} 
where $\rho(\mB)$ is the largest eigenvalue (in magnitude) of $\mB = \mI - \diag(\mQ)^{-1}\mQ$, then
\begin{equation}
\omega^* - 1 \leq \rho(\mL_{\omega^*}) \leq \sqrt{\omega^* - 1}, \label{convergence-sor}
\end{equation}
where $\mL_\omega := (\diag(\mQ) - \omega \mQ_L)^{-1} (\omega \mQ_U - (\omega-1) \diag(\mQ))$ with $\mQ = \diag(\mQ) - \mQ_U - \mQ_L$.
\label{lemma:optimal-omega-lemma}
\end{lemma}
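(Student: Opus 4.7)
The plan is to establish this as an instance of Young's classical SOR theorem, invoking the structure of Stieltjes matrices to verify the hypotheses needed for the functional equation relating eigenvalues of the Jacobi iteration matrix $\mB$ and the SOR iteration matrix $\mL_\omega$. First, I would observe that since $\mQ$ is Stieltjes (symmetric positive-definite with nonpositive off-diagonals), its Jacobi matrix $\mB = \diag(\mQ)^{-1}(\mQ_L + \mQ_U)$ is entrywise nonnegative, and (assuming the underlying graph is connected, which is the setting of Equ.~\eqref{equ:Qx=b}) irreducible, so Perron–Frobenius gives $\rho(\mB) \in [0,1)$ attained as a simple real eigenvalue. Furthermore, the natural ordering of the underlying graph makes $\mQ$ consistently ordered in Young's sense (or equivalently, $\mB$ is weakly 2-cyclic after a symmetric permutation), which is the structural hypothesis that unlocks the eigenvalue functional equation.

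Next I would derive (or cite) Young's relation: for any eigenvalue $\lambda$ of $\mL_\omega$ and its associated eigenvalue $\mu$ of $\mB$,
\begin{equation}
(\lambda + \omega - 1)^2 = \lambda\, \omega^2 \mu^2. \nonumber
\end{equation}
This identity is obtained by writing $\mL_\omega = (\mI - \omega \mD_Q^{-1}\mQ_L)^{-1}\bigl(\omega \mD_Q^{-1}\mQ_U - (\omega-1)\mI\bigr)$ with $\mD_Q = \diag(\mQ)$, computing the characteristic polynomial of $\mL_\omega$, and using the consistent ordering to factor out an appropriate diagonal scaling (Young's ``property A'' argument). Solving the quadratic in $\sqrt{\lambda}$ then gives, for each $\mu$,
\begin{equation}
\sqrt{\lambda} = \tfrac{1}{2}\Bigl(\omega\mu \pm \sqrt{\omega^2\mu^2 - 4(\omega-1)}\Bigr). \nonumber
\end{equation}

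From here, the optimization step is to minimize over $\omega \in (0,2)$ the maximum of $|\lambda(\omega,\mu)|$ as $\mu$ ranges over the eigenvalues of $\mB$. A direct calculation shows $|\lambda(\omega,\mu)|$ is a decreasing function of $\omega$ as long as the discriminant $\omega^2\mu^2 - 4(\omega-1)$ remains nonnegative, and equals $\omega - 1$ (in magnitude) once the discriminant turns negative (complex conjugate roots of common modulus). Since the Perron eigenvalue $\mu = \rho(\mB)$ is the largest in magnitude, the crossover happens first at $\mu = \rho(\mB)$, and setting the discriminant to zero at that value yields exactly
\begin{equation}
\omega^* = \frac{2}{1+\sqrt{1-\rho(\mB)^2}}, \nonumber
\end{equation}
whose algebraic rearrangement gives the second displayed form in Equ.~\eqref{equ:2.1}. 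At $\omega = \omega^*$, every nonzero eigenvalue of $\mL_{\omega^*}$ has $|\lambda| = \omega^* - 1$, giving $\rho(\mL_{\omega^*}) \geq \omega^* - 1$; meanwhile the relation $|\sqrt{\lambda}| \leq \tfrac{1}{2}\omega^*\rho(\mB) = \sqrt{\omega^* - 1}$ (using $\omega^*\rho(\mB) = 2\sqrt{\omega^*-1}$ from the vanishing discriminant) yields the upper bound $\rho(\mL_{\omega^*}) \leq \sqrt{\omega^*-1}$.

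The main obstacle is really a bookkeeping one rather than a conceptual one: verifying that our $\mQ$ genuinely satisfies Young's consistent-ordering / property A hypothesis. For the specific $\mQ = \mI - \tfrac{1-\alpha}{1+\alpha}\mD^{-1/2}\mA\mD^{-1/2}$ this is not automatic from general graphs -- it holds exactly when the graph is bipartite (so that the underlying Jacobi matrix is 2-cyclic) or when a consistent ordering exists. For general graphs one has to appeal to a broader variant of Young's theorem (e.g., for $H$-matrices or the regular-splitting version of Varga), and the two-sided bound $\omega^*-1 \leq \rho(\mL_{\omega^*}) \leq \sqrt{\omega^*-1}$ may weaken in the non-consistently-ordered case. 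I would therefore either explicitly restrict attention to the consistently ordered regime (sufficient for the bipartite examples used to motivate Corollary~\ref{lemma:standard-optimal-sor}) or cite the generalization and note that it is precisely the invocation of the Stieltjes hypothesis -- ensuring $\mB$ is a nonnegative $M$-splitting Jacobi matrix -- that makes the argument go through.
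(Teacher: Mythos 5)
First, note that the paper itself offers no proof of this lemma: it is imported verbatim as a citation to Young's book, so there is no in-paper argument to compare yours against. Your reconstruction is the standard proof of Young's theorem for \emph{consistently ordered} matrices, built on the functional equation $(\lambda+\omega-1)^2=\lambda\,\omega^2\mu^2$. That equation is exactly where the gap lies. It is valid only when $\mQ$ has Young's property A / a consistent ordering (e.g., block-tridiagonal or 2-cyclic structure), and the matrix actually used in the paper, $\mQ=\mI-\tfrac{1-\alpha}{1+\alpha}\mD^{-1/2}\mA\mD^{-1/2}$ for a general (non-bipartite) graph under the natural node ordering, does not satisfy it. Moreover, in the consistently ordered regime your argument delivers the \emph{equality} $\rho(\mL_{\omega^*})=\omega^*-1$; the lemma as stated is precisely the weaker, Stieltjes-matrix version in which only the two-sided bound $\omega^*-1\leq\rho(\mL_{\omega^*})\leq\sqrt{\omega^*-1}$ survives, and that version is proved in Young by an entirely different mechanism (regular-splitting/comparison arguments for $M$-matrices, plus Kahan's determinant bound $|\det(\mL_\omega)|=|1-\omega|^n$ for the lower inequality), not by the eigenvalue functional equation. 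So the theorem you sketch and the theorem stated are genuinely different results with different hypotheses and different conclusions.

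You do identify this obstacle candidly in your final paragraph, but you leave it unresolved: the two escape routes you offer are (i) restrict to consistently ordered matrices, which would not cover the graphs the paper applies Corollary~\ref{lemma:standard-optimal-sor} to, or (ii) cite the generalization, which is not a proof but is exactly what the paper does. As a self-contained proof of the stated lemma the proposal therefore has a real gap; the honest fix is either to reproduce the Stieltjes-matrix argument from Young (Section 12.2) or to leave the lemma as a black-box citation, as the paper has chosen to do. Your algebra for the two displayed forms of $\omega^*$ and for the identity $\omega^*\rho(\mB)=2\sqrt{\omega^*-1}$ is correct and would be reusable in either route.
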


\begin{repcorollary}{lemma:standard-optimal-sor}
Let $\omega = \omega^* \triangleq 2/(1 + \sqrt{1- (1-\alpha)^2/(1+\alpha)^2 } )$ and $\gS_t = \gV, \forall t \geq 0$, the global version of \textsc{LocSOR} has the following convergence bound
\begin{equation}
\| \vr^{(t)}\|_2 \leq \frac{2}{(1+\alpha)\sqrt{d_s}} \left( \frac{1-\sqrt{\alpha}}{1 + \sqrt{\alpha}} +\eps_t\right)^{t},
\end{equation}
where $\eps_t$ are small positive numbers with $\lim_{t \rightarrow \infty } \eps_t = 0$.    
\end{repcorollary}
\begin{proof}
Recall $\mQ = \mI - \frac{1-\alpha}{1+\alpha} \mD^{-1/2} \mA \mD^{-1/2}$ and we consider the underlying graph as simple which means $\mA$ has 0 diagonal. Hence, $\diag(\mQ) = \mI$ and $\mB$ is defined as
\[
\mB = \frac{1-\alpha}{1+\alpha} \mD^{-1/2} \mA \mD^{-1/2}, \quad \rho(\mB) = \frac{1-\alpha}{1+\alpha}.
\]
Since $\mQ$
is a Stieltjes matrix, then Lemma \ref{lemma:optimal-omega-lemma}  gives a bound on the spectral radius of $\mL_\omega$ as
\begin{align*}
\rho(\mL_{\omega^*}) &\leq \left(\frac{2}{1+\sqrt{1-\rho(\mB)^2}} - 1\right)^{1/2}  = \left(\frac{2(1+\alpha)}{1+\alpha +2\sqrt{\alpha}} - 1\right)^{1/2} = \frac{1-\sqrt{\alpha}}{1 + \sqrt{\alpha}}.
\end{align*}
Recall that Gelfand's formula states \citep{wang2021modular}: Given spectral radius $\rho(\mL_{\omega^*}):=\max _{i \in[n]}\left|\lambda_i(\mL_{\omega^*})\right|$, where $\lambda_i(\cdot)$ is the $i$-th eigenvalue, there exists a sequence $\left\{\eps_t\right\}_{\geq 0}$ such that $\left\|\mL_{\omega^*}^t\right\|_2=\left(\rho(\mL_{\omega^*}) + \eps_t\right)^t$ and $\lim_{t \rightarrow \infty} \eps_t=0$. The standard SOR method is defined as the following
\begin{align*}
{\vx}^{(t+1)} &=(\diag(\mQ) - \omega^* \mQ_L)^{-1}\left(\omega^* {\vb} + \left(\omega^* \mQ_U - (\omega^* -1) \diag(\mQ) \right) {\vx}^{(t)}\right)\\
&= \mL_{\omega^*} {\vx}^{(t)}+ \omega^* (\diag(\mQ) - \omega^* \mQ_L)^{-1} \vb,
\end{align*}
Note $\vr^{(t)} = \mQ \ve^{(t)} = \mQ \mL_{\omega^*}^{t} \mQ^{-1}\mQ{\ve}^{(0)} = \mQ \mL_{\omega^*}^{t} \mQ^{-1} {\vr}^{(0)}$. We have
\begin{align*}
\|\vr^{(t)}\|_2 &= \| \mQ \mL_{\omega^*}^{t} \mQ^{-1} {\vr}^{(0)}\|_2  \leq \| \mQ\|_2 \| \mL_{\omega^*}^{t}\|_2 \|\mQ^{-1}\|_2 \|{\vr}^{(0)}\|_2 \\
&\leq \frac{2}{1+\alpha} \cdot \| \mL_{\omega^*}^{t}\|_2 \cdot \frac{1+\alpha}{2\alpha} \cdot \frac{2\alpha }{(1+\alpha)\sqrt{d_s}} = \frac{2 \| \mL_{\omega^*}^{t}\|_2}{(1+\alpha)\sqrt{d_s}}.
\end{align*}
\end{proof}
To meet the stop condition, we require $|r_u^{(t)}| \leq \frac{2\alpha\eps \sqrt{d_u}  }{1+\alpha}$. It is enough to make sure $\|\vr^{(t)}\|_2 \leq \frac{2\alpha\eps }{(1+\alpha)\sqrt{d_s}}$. This leads to find $t$ such that
\[
\|\vr^{(t)}\|_2 \leq \frac{2 \| \mL_{\omega^*}^{t}\|_2}{(1+\alpha)\sqrt{d_s}} \leq  \frac{2\alpha\eps   }{(1+\alpha)\sqrt{d_s}} \Leftrightarrow \left( \frac{1-\sqrt{\alpha}}{1 + \sqrt{\alpha}} +\eps_t\right)^{t} \leq \alpha \eps.
\]
When $\eps_t = o(\sqrt{\alpha})$, then the runtime of global \textsc{LocSOR} is $\tilde{\gO}(m/\sqrt{\alpha})$ where $\tilde{\gO}$ hides $\log\tfrac{1}{\eps }$.

\subsection{\textsc{LocGD} and Proof of Theorem~\ref{thm:local-gd-convergence} }

The \emph{local} gradient descent, namely \textsc{LocGD} is to use $\vx^{(t+1)} = \vx^{(t)} + \vr_{\gS_t}^{(t)}$ and $\vr^{(t+1)} = \vr^{(t)} - \mQ \vr_{\gS_t}^{(t)} $, where $\gS_t = \{u_i: | r_{u_i}^{(t+\Delta_i)} | \geq 2\alpha\eps \sqrt{d_u}/(1+\alpha)\}$ where $\Delta_i = 0$. Algo.~\ref{algo:local-gd} presents our actual implementation of \textsc{LocGD} via FIFO Queue. 

\begin{minipage}{.5\textwidth}
\vspace{-5mm}
\begin{algorithm}[H]
\caption{\blue{\textsc{LocGD}}$(\alpha, \eps, s, \gG)$ via FIFO Queue}
\begin{algorithmic}[1]
\STATE Initialize: $\vr \leftarrow c \ve_s,\ \vx \leftarrow \bm 0,\ c = \tfrac{2\alpha}{1+\alpha}$
\STATE $\gQ \leftarrow \{s\}$ // Assume $\eps \leq \tfrac{1}{d_s}$
\STATE $ t = 0$
\WHILE{$\gQ \ne \emptyset$}
\STATE $\gS_t \leftarrow []$
\WHILE{$\gQ \ne \emptyset$}
\STATE $u \leftarrow \gQ$.dequeue() \STATE $\gS_t$.append(($u,r_u$))
\STATE $x_u \leftarrow x_u + r_u$
\STATE $r_u \leftarrow 0$
\ENDWHILE
\FOR{$ (u,\tilde{r}) \in \gS_t$}
\FOR{$v \in \N(u)$}
\STATE $r_v \leftarrow r_v + \tfrac{(1-\alpha)\tilde{r}}{(1+\alpha) d_u}$
\IF{${|}r_v{|} \geq c\cdot\eps d_v$ and $v\notin\gQ$}
\STATE $\gQ$.enqueue($v$)
\ENDIF
\ENDFOR
\ENDFOR
\STATE $t \leftarrow t + 1$
\ENDWHILE
\STATE \textbf{return} $\vx, \vr$
\end{algorithmic}
\label{algo:local-gd}
\end{algorithm}
\end{minipage}\quad
\begin{minipage}{.48\textwidth}
Algo.~\ref{algo:local-gd} presents \textsc{LocGD} similar to the real queue-based implementation of \textsc{LocSOR}. It has monotonic and nonnegative properties during the updates of ${\vr} \geq \bm 0$ and ${\vx} \geq \bm 0$. Again, the operations of $\gQ$.enqueue($u$), $\gQ$.dequeue(), and $v\notin \gQ$ are all in $\gO(1)$. During the updates, one should note that $\vr$ presents $\mD^{1/2}\vr^{(t)}$ while $\vx$ is $\mD^{1/2}\vx^{(t)}$. All shifted conditions are the same as of \textsc{LocSOR}. The key advantage of \textsc{LocGD} is that it is highly parallelizable, while \textsc{LocSOR} is truly an online update, so it is hard to parallelize.
\end{minipage}

\begin{lemma}[Iterations of \textsc{LocGD}]
 With the initial $\vx^{(0)} = \bm 0, \vr^{(0)} = \vb, \gS_0 = \supp(\vr^{(0)})$, denote $\tilde{\vr}^{(t)} = \mD^{1/2} \vr^{(t)}$. \textsc{LocGD} defined in~\eqref{equ:algo-LocGD} has the following properties: 1) $\vx^{(t)} \geq 0$, $\vr^{(t)} \geq 0$ and $\|\vr^{(t)}\| \geq \|\vr^{(t+1)}\|_1$; 2) The residual and estimation error satisfies
\begin{equation}
\|\tilde\vr^{(t+1)} \|_1 = \left(1 - \frac{2\alpha\gamma_t}{1+\alpha}\right) \| \tilde\vr^{(t)} \|_1,\quad \gamma_t = \sum_{i=1}^{|\gS_t|} \frac{\tilde{r}_{u_i}^{(t+\Delta_i)}}{\|\tilde{\vr}^{(t)}\|_1}, \text{ where } \Delta_i = 0. \nonumber
\end{equation}
\label{thm:local-gd:convergence-lemma}
\end{lemma}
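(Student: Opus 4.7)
The plan is to handle the two claims in the stated order, first establishing the sign and monotonicity properties by a direct induction on $t$, and then deriving the exact geometric-style recurrence on $\|\tilde{\vr}^{(t)}\|_1$ by taking an inner product with $\bm 1$.

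For the first claim, I would proceed by induction. Rewrite the residual update using $\mQ = \mI - \tfrac{1-\alpha}{1+\alpha}\mW$ with $\mW = \mD^{-1/2}\mA\mD^{-1/2}$:
\begin{equation*}
\vr^{(t+1)} \;=\; \vr^{(t)} - \vr_{\gS_t}^{(t)} + \tfrac{1-\alpha}{1+\alpha}\, \mW\, \vr_{\gS_t}^{(t)} \;=\; \vr_{\comp{\gS}_t}^{(t)} + \tfrac{1-\alpha}{1+\alpha}\,\mW\, \vr_{\gS_t}^{(t)} .
\end{equation*}
Since $\mW$ has nonnegative entries, the inductive hypothesis $\vr^{(t)} \geq \bm 0$ forces both terms on the right to be nonnegative, giving $\vr^{(t+1)} \geq \bm 0$. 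The initial case $\vr^{(0)} = \vb = \tfrac{2\alpha}{1+\alpha}\mD^{-1/2}\ve_s \geq \bm 0$ closes the induction, and then $\vx^{(t+1)} = \vx^{(t)} + \vr_{\gS_t}^{(t)}$ is manifestly nonnegative and nondecreasing because the increment $\vr_{\gS_t}^{(t)} \geq 0$.

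For the second claim, observe that since $\tilde{\vr}^{(t+1)} \geq \bm 0$ (as multiplication by $\mD^{1/2}$ preserves nonnegativity), we have $\|\tilde{\vr}^{(t+1)}\|_1 = \bm 1^\top \tilde{\vr}^{(t+1)}$. Applying $\bm 1^\top \mD^{1/2}$ to the update $\vr^{(t+1)} = \vr^{(t)} - \mQ\vr_{\gS_t}^{(t)}$ and using the key identity $\bm 1^\top \mD^{1/2}\mQ = \bm 1^\top \mD^{1/2} - \tfrac{1-\alpha}{1+\alpha}\,\bm 1^\top \mA\mD^{-1/2} = \tfrac{2\alpha}{1+\alpha}\,\bm 1^\top \mD^{1/2}$ (because $\bm 1^\top \mA = \vd^\top$ and hence $\bm 1^\top \mA\mD^{-1/2} = \bm 1^\top \mD^{1/2}$), I obtain
\begin{equation*}
\|\tilde{\vr}^{(t+1)}\|_1 \;=\; \|\tilde{\vr}^{(t)}\|_1 - \tfrac{2\alpha}{1+\alpha}\,\bm 1^\top \mD^{1/2} \vr_{\gS_t}^{(t)} \;=\; \Bigl(1 - \tfrac{2\alpha}{1+\alpha}\,\gamma_t\Bigr)\|\tilde{\vr}^{(t)}\|_1,
\end{equation*}
with $\gamma_t = \|\tilde{\vr}_{\gS_t}^{(t)}\|_1/\|\tilde{\vr}^{(t)}\|_1 = \sum_{i=1}^{|\gS_t|} \tilde r_{u_i}^{(t)}/\|\tilde{\vr}^{(t)}\|_1$ (which matches $\Delta_i = 0$ because in \textsc{LocGD} the whole set $\gS_t$ is updated synchronously rather than coordinate-by-coordinate as in \textsc{LocSOR}). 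The monotonicity $\|\tilde{\vr}^{(t)}\|_1 \geq \|\tilde{\vr}^{(t+1)}\|_1$ follows since $\gamma_t \in [0,1]$ and $\alpha \in (0,1)$ make the shrinkage factor lie in $[0,1]$.

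The only subtlety worth flagging is the verification of $\bm 1^\top \mD^{1/2}\mQ = \tfrac{2\alpha}{1+\alpha}\,\bm 1^\top \mD^{1/2}$; this is the analog of the mass-conservation identity that drives the classical APPR runtime proof, and once it is in hand the whole argument is a one-line consequence of the nonnegativity established in the first part. No separate obstacle arises beyond this clean algebraic identity.
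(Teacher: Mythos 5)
Your proof is correct and takes essentially the same route as the paper: induction for nonnegativity, then exploiting the column-stochasticity of $\mA\mD^{-1}$ to reduce $\|\tilde\vr^{(t+1)}\|_1$ by exactly a $\tfrac{2\alpha}{1+\alpha}$-fraction of the active mass. The paper reaches the recurrence by splitting $\tilde\vr^{(t+1)}$ into two nonnegative pieces and summing their $\ell_1$-norms, while you phrase the same computation as the left-eigenvector identity $\bm 1^\top\mD^{1/2}\mQ = \tfrac{2\alpha}{1+\alpha}\bm 1^\top\mD^{1/2}$; this is a cosmetic repackaging of the same mass-conservation step, not a different argument.
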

\begin{proof}
We first show $\vx^{(t)} \geq \bm 0, \vr^{(t)} \geq \bm 0$ are all nonnegative vectors during the updates when $\vb \geq 0$. This can be seen from the induction. 
At the initial stage, $\vx^{(0)} \geq \bm 0$ and $\vr^{(0)} = \vb \geq 0$. Now assume that for any $t\geq 0$,  $\vx^{(t)} \geq 0$ and $\vr^{(t)} \geq 0$. Then $\vx^{(t+1)} = \vx^{(t)} + \vr_{\gS_k}^{(t)} \geq 0$, and $\vr^{(t+1)} = \vr_{\overline \gS_t}^{(t)} + \tfrac{1-\alpha}{1+\alpha} \mD^{-1/2}\mA\mD^{-1/2} \vr_{\gS_t}^{(t)} \geq \bm 0$. Therefore, $\vx^{(t)}\geq 0$ and $\vr^{(t)}\geq 0$ for all $t$. Note $\tilde\vr^{(t+1)} = \tilde\vr_{\overline{\gS}_t}^{(t)} + \frac{1-\alpha}{1+\alpha}\mA\mD^{-1} \cdot \tilde\vr_{\gS_t}^{(t)}$ and since $\|\mA\mD^{-1}\tilde \vr_{\gS_t}^{(t)}\|_1 = \|\tilde \vr_{\gS_t}^{(t)}\|_1$, we will have
\[
\|\tilde{\vr}^{(t+1)}\|_1  = \left(1 - \frac{2\alpha}{1+\alpha} \frac{\|\tilde{\vr}_{\gS_t}^{(t)}\|_1}{\|\tilde{\vr}^{(t)}\|_1} \right) \|\tilde{\vr}^{(t)}\|_1, \text{ where } \gamma_t:= \|\tilde{\vr}_{\gS_t}^{(t)}\|_1 / \|\tilde{\vr}^{(t)}\|_1.
\] 
\end{proof}
Then, we can bound the total residual as the following theorem.

\begin{reptheorem}{thm:local-gd-convergence}[Runtime bound of \textsc{LocGD}]
Given the configuration $\theta=(\alpha,\eps,s,\gG)$ with $\alpha\in(0,1)$ and $\eps \leq 1/d_s$ and let $\vr^{(T)}$ and $\vx^{(T)}$ be returned by \textsc{LocGD} defined in~\eqref{equ:algo-LocGD} for solving Equ.~\eqref{equ:f(x)}. There exists a real implementation of \eqref{equ:algo-LocGD} such that the runtime $\gT_{\textsc{LocGD}}$ is bounded by 
\begin{equation}
\frac{1+\alpha}{2}\cdot \frac{\mean{\vol}(\gS_T)}{\alpha \mean{\gamma}_T}\left(1-\frac{\|\tilde\vr^{(T)}\|_1}{\|\tilde\vr^{(0)}\|_1}\right) \leq \gT_{\textsc{LocGD}} \leq \frac{1+\alpha}{2} \cdot \min\left\{\frac{1}{\alpha\eps}, \frac{\mean{\vol}(\gS_T)}{\alpha \mean{\gamma}_T} \ln \frac{C}{\eps} \right\}, \nonumber
\end{equation}
where $C =  (1+\alpha)/ ((1-\alpha)|\gI_T|), \gI_T = \supp(\vr^{(T)})$. Furthermore, $\mean{\vol}(\gS_T)/\mean{\gamma}_T\leq 1/\eps$ and the estimate $\hat{\vpi} := \mD^{1/2}\vx^{(T)}$ satisfies $\| \mD^{-1}(\hat{\vpi} - \vpi) \|_\infty \leq \eps$.
\end{reptheorem}
\begin{proof}
We first show bound $1/(\alpha\eps)$. We first rearrange $\vr^{(t+1)} = \vr^{(t)} -  \mQ \vr_{\gS_t}^{(t)}$ into
\[
\mD^{1/2}\vr^{(t+1)} + \mD^{1/2}\vr_{\gS_t}^{(t)} = \mD^{1/2}\vr^{(t)} + \frac{1-\alpha}{1+\alpha}\mA \mD^{-1} \mD^{1/2}\vr_{\gS_t}^{(t)}.
\]
Note $\vr^{(t)} \geq \bm 0$ and $\|\mA\mD^{-1}\mD^{1/2}\vr_{\gS_t}^{(t)}\|_1 = \|\mD^{1/2}\vr_{\gS_t}^{(t)}\|_1$. Hence, it leads to  
\begin{align*}
\|\mD^{1/2}\vr_{\gS_t}^{(t)}\|_1 &= \frac{1+\alpha}{2\alpha}\left( \|\mD^{1/2}\vr^{(t)}\|_1 - \|\mD^{1/2}\vr^{(t+1)}\|_1 \right).
\end{align*}
At each local iterative $t$, by the active node condition $2\alpha\eps \sqrt{d_u}/(1+\alpha) \leq r_u^{(t)}$, we have 
\begin{align*}
\eps \vol(\gS_t) = \sum_{u\in \gS_t} \eps d_u &\leq \sum_{u\in \gS_t} \frac{(1+\alpha) \sqrt{d_u} r_u^{(t)}}{2\alpha} = \frac{1+\alpha}{2\alpha} \left\|\mD^{1/2} \vr_{\gS_t}^{(t)}\right\|_1.
\end{align*}
Then the total run time of \textsc{LocGD} presented in Algo.~\ref{algo:local-gd} is
\begin{align*}
\sum_{t=0}^{T-1} \vol(\gS_t) \leq \frac{1}{\eps} \left(\frac{1+\alpha}{2\alpha}\right)^2 \left( \|\mD^{1/2}\vr^{(0)}\|_1 - \|\mD^{1/2}\vr^{(T)}\|_1 \right)  \leq  \frac{1+\alpha}{2\alpha\eps}.
\end{align*}
Therefore, the total run time is at most $\gT_{ \textsc{LocGD} } := \sum_{t=0}^{T-1} \vol(\gS_t) \leq \frac{1+\alpha}{2\alpha\eps}$. For estimating the bounds of $T$, by the Weierstrass product inequality \citep{klamkin1970extensions} and Lemma \ref{thm:local-gd:convergence-lemma}, we use the similar argument made in Lemma~\ref{lemma:local-appr-sor-iterations} and continue to have
\[
\frac{(1+\alpha)}{2\alpha \overline{\gamma}_T} \left( 1 - \frac{\|\tilde \vr^{(T)}\|_1}{\|\tilde\vr^{(0)}\|_1} \right)  \leq T \leq \frac{ (1+\alpha) }{ 2\alpha \mean{\gamma}_T} \ln \frac{\| \tilde\vr^{(0)} \|_1}{\| \tilde\vr^{(T)} \|_1},
\]
Note that each nonzero $\tilde r_{u}^{(T)}$ has at least part of the magnitude from the push operation of an active node, say $v_u$ at time $t^\prime < T$. This means each nonzero of $\mD^{1/2}\vr^{(T)}$ satisfies
\[
\tilde r_u^{(T)} \geq \frac{(1-\alpha) \tilde{r}_{v_u}^{(t^\prime)}}{(1+\alpha) d_{v_u}} \geq \frac{(1-\alpha) \cdot 2\alpha \eps d_{v_u}/(1+\alpha)  }{(1+\alpha) d_{v_u}} = \frac{2\alpha(1-\alpha) \eps  }{(1+\alpha)^2}, \text{ for } u \in \gI_T.
\]
Hence, we have $\| \tilde\vr^{(T)}\|_1 \geq \tfrac{2\alpha(1-\alpha)\eps |\gI_T| }{(1+\alpha)^2}$ and $T$ is further bounded as
\[
T \leq \frac{ (1+\alpha) }{ 2\alpha \mean{\gamma}_T} \ln \frac{\| \tilde\vr^{(0)} \|_1}{ \frac{2\alpha(1-\alpha)\eps|\gI_T|}{(1+\alpha)^2} } := \frac{ (1+\alpha) }{ 2\alpha \mean{\gamma}_T} \ln \frac{C_T}{\eps}, \text{ where } C_T =  \frac{(1+\alpha)}{(1-\alpha)|\gI_T|}.
\]
The lower bound of $1/\eps$, i.e., $\mean{\vol}(\gS_t)/\mean{\gamma}_T \leq 1/\eps$, directly follows a similar strategy of previous proof by noticing that $\|\mD^{1/2}\vr^{(t)}\|_1$ is monotonically decreasing.
\end{proof} 

\begin{remark}
One may consider designing local methods based on Jacobi and Richardson's iterations. Indeed, these two methods have the same updates as standard GD. Recall the standard GD method to solve \eqref{equ:f(x)} is $\vx^{(t+1)} = \vx^{(t)} + \vr^{(t)}, \vr^{(t+1)} = \vr^{(t)} - \mQ \vr^{(t)}$. The Richardson's iteration is  $\vx^{(t+1)} = (\mI-\omega\mW)\vx^{(t)} + \omega\vb$, i.e., $\vx^{(t+1)}= \vx^{(t)} -\omega (\mW \vx^{(t)} - \vb)$. The optimal $\omega^* = 2/(\lambda_{\min}+\lambda_{\max})$ where $\lambda_{\min} = 2\alpha/(1+\alpha)$ and $\lambda_{\max} \leq 2/(1+\alpha)$. Hence one can choose $\omega = 1 \leq \omega^*$ \citep{golub2013matrix}. It leads to $\vx^{(t+1)} = \vx^{(t)} + \vr^{(t)}$. One can get the same result for the Jacobi method.  
\end{remark}

\section{Local Chebyshev Method - \textsc{LocCH}}
\label{appx:D:cheby}

\subsection{Nonhomogeneous of Second-order Difference Equation}

We begin by providing the solutions of the second-order nonhomogeneous equation as the following

\begin{lemma}[\citet{stevic2017bounded}]
The solution of the second-order nonhomogeneous difference equation
\begin{equation}
x_{t+1} + p x_t + q x_{t-1} = f_t, \quad t = 1,2,\ldots \label{equ:2-order-nonhomo}
\end{equation}
is characterized by the following two cases
\begin{equation}
x_t = \begin{cases}
\frac{1}{\hat{\lambda}_2-\hat{\lambda}_1}\left(\hat{\lambda}_1^t\left(\hat{\lambda}_2 x_0-x_1-\sum_{k=1}^{t} \frac{f_k}{\hat{\lambda}_1^{k}}\right)+\hat{\lambda}_2^t\left(x_1-\hat{\lambda}_1 x_0+\sum_{k=1}^{t} \frac{f_k}{\hat{\lambda}_2^{k}}\right)\right) & p^2 \ne 4 q \\[.7em]
(-\frac{p}{2})^t \left(x_0-\sum_{k=1}^{t} \frac{ k f_k}{(-p/2)^{k+1}}\right) + t (-\frac{p}{2})^{t-1}\left(x_1- \left(-\frac{p}{2}\right) x_0+\sum_{k=1}^{t} \frac{f_k}{(-p/2)^{k}}\right) & p^2 = 4 q \\[.7em]
\end{cases}, \nonumber
\end{equation}
where $\hat{\lambda}_1, \hat{\lambda}_2$ are two roots of $\lambda^2 + p \lambda + q = 0$, and the summation follows convention $\sum_{k=1}^0 \cdot = 0$.
\label{lemma:second-order-equ}
\end{lemma}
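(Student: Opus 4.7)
The plan is to solve this via generating functions, which cleanly handles both the homogeneous and inhomogeneous parts simultaneously. Define $X(z) = \sum_{t \geq 0} x_t z^t$ and $F(z) = \sum_{t \geq 1} f_t z^t$. Multiplying the recurrence \eqref{equ:2-order-nonhomo} by $z^t$, summing over $t \geq 1$, and clearing the $1/z$ terms gives
$$X(z)(1 + pz + qz^2) = x_0 + (x_1 + p x_0) z + z F(z).$$
Since the characteristic polynomial $\lambda^2 + p\lambda + q$ has roots $\hat{\lambda}_1, \hat{\lambda}_2$ satisfying $\hat{\lambda}_1 + \hat{\lambda}_2 = -p$ and $\hat{\lambda}_1 \hat{\lambda}_2 = q$, the denominator factors as $(1 - \hat{\lambda}_1 z)(1 - \hat{\lambda}_2 z)$, and the whole problem reduces to extracting $[z^t]$ from $X(z)$ via partial fractions.

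In the distinct-root case ($p^2 \neq 4q$), partial fractions yield
$$\frac{1}{(1 - \hat{\lambda}_1 z)(1 - \hat{\lambda}_2 z)} = \frac{1}{\hat{\lambda}_2 - \hat{\lambda}_1}\left( \frac{\hat{\lambda}_2}{1 - \hat{\lambda}_2 z} - \frac{\hat{\lambda}_1}{1 - \hat{\lambda}_1 z}\right),$$
whose $t$-th coefficient is $(\hat{\lambda}_2^{t+1} - \hat{\lambda}_1^{t+1})/(\hat{\lambda}_2 - \hat{\lambda}_1)$. Convolving this kernel with the three pieces of the numerator and using $p = -(\hat{\lambda}_1 + \hat{\lambda}_2)$ to combine the $x_0$ and $(x_1 + p x_0) z$ contributions should collapse everything to the stated formula, with the inhomogeneous convolution producing the $\sum_{k=1}^{t} f_k / \hat{\lambda}_i^k$ sums. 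For the degenerate case ($p^2 = 4q$), the repeated root $\hat{\lambda} = -p/2$ gives $(1 - \hat{\lambda} z)^{-2} = \sum_{t \geq 0}(t+1)\hat{\lambda}^t z^t$; convolving this with each numerator term yields the second branch, and the explicit $t$ multiplier in the statement originates directly from this $(t+1)$ kernel.

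The main obstacle is bookkeeping of the inhomogeneous convolution: the recurrence holds only for $t \geq 1$, so $F(z)$ starts at $z^1$, and the extra factor of $z$ in the numerator shifts the convolution kernel by one, producing $\hat{\lambda}_i^{t-k}$ rather than $\hat{\lambda}_i^{t-k+1}$; getting the summation range and the factor $\hat{\lambda}_i^{-k}$ right is where slips typically occur. Rather than relying solely on the generating-function derivation, I would also verify the formula independently by direct induction on $t$: check $t = 0, 1$ (where both sums are empty and the formula must reproduce the prescribed initial values $x_0, x_1$), then substitute the claimed $x_{t+1}, x_t, x_{t-1}$ into the left side of \eqref{equ:2-order-nonhomo} and use $\hat{\lambda}_i^2 = -p \hat{\lambda}_i - q$ (respectively, repeated-root identities $\hat{\lambda}^2 = q$, $2\hat{\lambda} = -p$) to show the homogeneous portions cancel, while the telescoping of the partial sums $\sum_{k=1}^{t+1} - (-p)\sum_{k=1}^{t} - q \sum_{k=1}^{t-1}$ leaves exactly $f_t$. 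This induction both corroborates the generating-function computation and serves as the clean proof to present.
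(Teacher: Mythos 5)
The paper does not prove this lemma itself — it simply cites \citet{stevic2017bounded} and uses the result as a black box, so there is no paper-internal proof to compare against. Your proposal is therefore a self-contained derivation, and as a plan it is sound. The generating-function setup is correct: from $X(z)(1+pz+qz^2) = x_0 + (x_1+px_0)z + zF(z)$, the factorization $1+pz+qz^2 = (1-\hat{\lambda}_1 z)(1-\hat{\lambda}_2 z)$ follows from $\hat{\lambda}_1+\hat{\lambda}_2 = -p$, $\hat{\lambda}_1\hat{\lambda}_2 = q$, and the partial-fraction kernel $[z^j]\tfrac{1}{(1-\hat{\lambda}_1 z)(1-\hat{\lambda}_2 z)} = \tfrac{\hat{\lambda}_2^{j+1}-\hat{\lambda}_1^{j+1}}{\hat{\lambda}_2-\hat{\lambda}_1}$, together with the substitution $p = -(\hat{\lambda}_1+\hat{\lambda}_2)$, does collapse the three numerator pieces into the claimed combination. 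The repeated-root branch follows the same way from the kernel $(1-\hat{\lambda}z)^{-2} = \sum_{n\geq 0}(n+1)\hat{\lambda}^n z^n$. Your parallel inductive verification is also correct: writing $S_i(t) = \sum_{k=1}^{t} f_k \hat{\lambda}_i^{t-k}$, one computes $S_i(t+1) + pS_i(t) + qS_i(t-1) = f_t(\hat{\lambda}_i+p) + f_{t+1}$ after telescoping against $\hat{\lambda}_i^2 + p\hat{\lambda}_i + q = 0$, and the difference over $\hat{\lambda}_2-\hat{\lambda}_1$ yields exactly $f_t$ (the $f_{t+1}$ terms cancel between the two roots).

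Two small points to tighten when writing this up. First, the convolution from $zF(z)$ naturally produces sums running to $t-1$, whereas the lemma states $\sum_{k=1}^{t}$; you flag this as a bookkeeping hazard but should state explicitly that the $k=t$ term vanishes identically ($-\hat{\lambda}_1^t\cdot f_t/\hat{\lambda}_1^t + \hat{\lambda}_2^t\cdot f_t/\hat{\lambda}_2^t = 0$ in the distinct case, and $(t-t)f_t/\hat{\lambda} = 0$ in the repeated case), so the two ranges give the same value and the stated formula is merely a harmless over-extension. Second, for the base case $t=1$ the sum $\sum_{k=1}^{1}$ is \emph{not} empty as you claim — it contains the $f_1$ term — but that term's contributions cancel for the same reason, so the formula still returns $x_1$; worth saying precisely rather than asserting emptiness.
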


Based on the above lemma, we have the following corollary
\begin{corollary}[Second-order nonhomogeneous equation]
\label{corr:secondorder-nonhomogeq}
Given $|a|\leq 1$, the second-order nonhomogeneous equation
\[
x_{t+1} - 2 a x_t + x_{t-1} = f_t
\]
has the following solution
\begin{flalign}
& x_t = 
\begin{cases}
x_0 + t (x_1 - x_0) + \sum_{k=1}^t (t-k) f_k &\text{ if } a = 1 \\[.5em]
\frac{ \sin(\theta t) x_1 - \sin (\theta (t-1)) x_0 }{ \sin \theta} + \frac{\sum_{k=1}^{t} \sin(\theta (t-k)) f_k}{ \sin \theta} &\text{ if } |a| < 1 \text{ where } \theta = \arccos(a)  \\[.5em]
(-1)^t (x_0 - t(x_0 + x_1)) + (-1)^t \left( \sum_{k=1}^t (-1)^{-k-1} (t-k) f_k \right) &\text{ if } a = - 1.
\end{cases} &&
\end{flalign}
\end{corollary}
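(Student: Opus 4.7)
The plan is to apply Lemma~\ref{lemma:second-order-equ} directly with the identification $p = -2a$ and $q = 1$. Under this substitution, the auxiliary polynomial becomes $\lambda^2 - 2a\lambda + 1 = 0$, and the dichotomy $p^2 = 4q$ versus $p^2 \ne 4q$ in the lemma reduces to $a^2 = 1$ versus $|a| < 1$, which is exactly the three-case split in the corollary statement. So the proof is pure casework.

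For $a = 1$ the double root is $-p/2 = 1$. Substituting into the second branch of Lemma~\ref{lemma:second-order-equ}, one gets $x_t = x_0 - \sum_{k=1}^t k f_k + t(x_1 - x_0) + t\sum_{k=1}^t f_k$, and rewriting $t\sum_k f_k - \sum_k k f_k = \sum_k (t-k) f_k$ gives the stated formula. For $a = -1$ the double root is $-1$; the same substitution produces $(-1)^t(x_0 - \sum_{k=1}^t k f_k / (-1)^{k+1})$ plus $t(-1)^{t-1}$ times the second bracket, and again the two sums combine to $\sum_{k=1}^t (-1)^{-k-1}(t-k) f_k$ once $(-1)^t$ is factored out. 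These are bookkeeping steps with no real content.

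The main work is the case $|a| < 1$. Writing $a = \cos\theta$, the two distinct roots of $\lambda^2 - 2a\lambda + 1 = 0$ are the complex conjugates $\hat\lambda_1 = e^{-i\theta}$ and $\hat\lambda_2 = e^{i\theta}$, so $\hat\lambda_2 - \hat\lambda_1 = 2i\sin\theta$. Plugging into the first branch of Lemma~\ref{lemma:second-order-equ}, the homogeneous part becomes
\[
\hat\lambda_1^t(\hat\lambda_2 x_0 - x_1) + \hat\lambda_2^t(x_1 - \hat\lambda_1 x_0) = x_0\bigl(e^{-i\theta(t-1)} - e^{i\theta(t-1)}\bigr) + x_1\bigl(e^{i\theta t} - e^{-i\theta t}\bigr),
\]
which by Euler's formula equals $2i\bigl(\sin(\theta t) x_1 - \sin(\theta(t-1)) x_0\bigr)$. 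Dividing by $\hat\lambda_2 - \hat\lambda_1 = 2i\sin\theta$ yields the first term in the stated expression. The inhomogeneous part telescopes analogously: the inner sums combine into $\sum_{k=1}^t (\hat\lambda_2^{t-k} - \hat\lambda_1^{t-k}) f_k = 2i\sum_{k=1}^t \sin(\theta(t-k)) f_k$, and dividing by $2i\sin\theta$ gives the second term.

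The only real obstacle is cosmetic, namely keeping track of signs and of the $t-1$ appearing inside one sine; beyond Euler's formula nothing new is needed. I would present the three cases in the order $a=1$, $a=-1$, $|a|<1$, deriving the complex form first in the last case and then converting to the real trigonometric form at the very end.
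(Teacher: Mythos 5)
Your proposal is correct and follows the same route as the paper: specialize Lemma~\ref{lemma:second-order-equ} with $p=-2a$, $q=1$, handle the double-root branches $a=\pm1$ by direct substitution, and convert the complex-root case $|a|<1$ to trigonometric form via Euler's formula. The only cosmetic difference is that you take $\hat\lambda_1=e^{-i\theta}$, $\hat\lambda_2=e^{i\theta}$ while the paper takes the opposite assignment, but the signs cancel and the final expressions agree.
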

\begin{proof}
The first and last two cases can be directly followed from Lemma \ref{lemma:second-order-equ}. Since $\hat{\lambda}_1$ and $\hat{\lambda}_2$ are the two complex roots of $\lambda^2 - 2 a\lambda + 1 = 0$, we write $\hat{\lambda}_1 = r e^{i\theta} = 
r(\cos \theta + i \sin \theta)$ and $\hat{\lambda}_2 = r e^{-i\theta} = 
r(\cos \theta - i \sin \theta)$. It indicates
\[
\lambda^2- 2a \lambda+ 1 =\left(\lambda-r e^{i \theta}\right)\left(\lambda-r e^{-i \theta}\right) = \lambda^2 - r (e^{i\theta} + e^{-i\theta}) \lambda + r^2 = 0.
\]
Since $1 > a^2$, $\hat{\lambda}_1 = a - i\sqrt{1-a^2}$, and $\mathbf{Re}(\hat{\lambda}_1)^2 + \mathbf{Im}(\hat{\lambda}_1)^2 = a^2 + (1-a^2) = 1$, then $r = 1$. Then $\theta = \arccos (\mathbf{Re}(\hat{\lambda}_1)) = \arccos(a)$, and $\sin(\theta) = \mathbf{Im}(\hat{\lambda}_1) =  \sqrt{1 - a^2}$. Finally, $\hat{\lambda}_1^t = e^{it\theta} = \cos (t\theta) + i \sin (t\theta)$, and
\begin{align*}
\hat{\lambda}_1 &= \cos\theta + i \sin\theta, \quad \hat{\lambda}_2 = \cos\theta - i \sin\theta \\
\hat{\lambda}_1 \hat{\lambda}_2 &= e^{it\theta} e^{-it\theta}  = 1 \\
\hat{\lambda}_2 - \hat{\lambda}_1 &= - 2 i\sin \theta \\
\hat{\lambda}_1^t &= \cos (\theta t) + i \sin (\theta t), \hat{\lambda}_2^t = \cos (\theta t) - i \sin (\theta t) \\
\hat{\lambda}_2^t - \hat{\lambda}_1^t &= e^{-it\theta}-e^{it\theta} = -2 i \sin (\theta t).\\
\hat{\lambda}_1^{t-1} - \hat{\lambda}_2^{t-1} &= 2 i \sin (\theta (t-1)) \overset{\hat{\lambda}_1\hat{\lambda}_2=1}{= }\hat{\lambda}_1^t\hat{\lambda}_2-\hat{\lambda}_2^t\hat{\lambda}_1
\end{align*}
Based on these, we have
\begin{align*}
x_t &= \frac{1}{\hat{\lambda}_2-\hat{\lambda}_1}\left(\hat{\lambda}_1^t\left(\hat{\lambda}_2 x_0-x_1-\sum_{k=1}^{t} \frac{f_k}{\hat{\lambda}_1^{k}}\right)+\hat{\lambda}_2^t\left(x_1-\hat{\lambda}_1 x_0+\sum_{k=1}^{t} \frac{f_k}{\hat{\lambda}_2^{k}}\right)\right) \\
&= \frac{\hat{\lambda}_1^t(\hat{\lambda}_2 x_0-x_1) + \hat{\lambda}_2^t(x_1-\hat{\lambda}_1 x_0)}{\hat{\lambda}_2-\hat{\lambda}_1} + \frac{1}{\hat{\lambda}_2 -\hat{\lambda}_1} \left(- \hat{\lambda}_1^t \sum_{k=1}^{t} \frac{f_k}{\hat{\lambda}_1^{k}} +\hat{\lambda}_2^t \sum_{k=1}^{t} \frac{f_k}{\hat{\lambda}_2^{k}}\right) \\
&= \frac{\left( 2 i \sin (\theta (t-1)) x_0 - 2 i \sin(\theta t) x_1 \right)}{- 2i \sin \theta} + \frac{1}{- 2i \sin \theta}\left(- \hat{\lambda}_1^t \sum_{k=1}^{t} \frac{f_k}{\hat{\lambda}_1^{k}} +\hat{\lambda}_2^t \sum_{k=1}^{t} \frac{f_k}{\hat{\lambda}_2^{k}}\right) \\
&= \frac{ \sin(\theta t) x_1 - \sin (\theta (t-1)) x_0 }{ \sin \theta} + \frac{1}{- 2i \sin \theta} \sum_{k=1}^{t} \left( \hat{\lambda}_2^{t-k} - \hat{\lambda}_1^{t-k} \right)f_k \\
&= \frac{ \sin(\theta t) x_1 - \sin (\theta (t-1)) x_0 }{ \sin \theta} + \frac{1}{- 2i \sin \theta} \sum_{k=1}^{t} - 2 i \sin(\theta (t-k)) f_k \\
&= \frac{ \sin(\theta t) x_1 - \sin (\theta (t-1)) x_0 }{ \sin \theta} + \frac{\sum_{k=1}^{t} \sin(\theta (t-k)) f_k}{ \sin \theta}.
\end{align*}
\end{proof}

\begin{lemma}
For $|\lambda_i| \leq 1, i=1,2,\ldots,n$, equations $y_{i}^{(t+1)}  - 2 \lambda_i y_{i}^{(t)}  + y_{i}^{(t-1)} = 0$ have the following solutions
\begin{flalign}
&y_{i}^{(t)} = \begin{cases}
y_{i}^{(0)} + t ( y_{i}^{(1)} - y_{i}^{(0)}) &\text{ if }\quad \lambda_i = 1 \\[.5em] 
\frac{ \sin(\theta_i t) y_{i}^{(1)} - \sin(\theta_i (t-1)) y_{i}^{(0)} }{\sin(\theta_i )} &\text{ if }\quad |\lambda_i| < 1 \text{ where } \theta_i = \arccos(\lambda_i)\\[.5em]
(-1)^t (z_{i, 0} - t ( y_{i}^{(0)} + y_{i}^{(1)} )) &\text{ if }\quad \lambda_i = -1. \\[.5em] 
\end{cases} && \label{equ:second-order-diff-homo}
\end{flalign}
Furthermore, when $y_{i}^{(1)} = \lambda_i y_{i}^{(0)}$ for $i=1,2,\ldots,n$, then solutions can be simplified as
\begin{flalign}
& y_{i}^{(t)} = \begin{cases}
y_{i}^{(0)} &\text{ if }\quad \lambda_i = 1 \\[.5em] 
y_{i}^{(0)} \cos (\theta_i t)  &\text{ if }\quad |\lambda_i| < 1 \text{ where } \theta_i = \arccos(\lambda_i)\\[.5em]
z_{i, 0} (-1)^t  &\text{ if }\quad \lambda_i = -1. \\[.5em] 
\end{cases} && \label{equ:second-order-diff-homo-simplified}
\end{flalign}
\label{lemma:second-order-equ-homo}
\end{lemma}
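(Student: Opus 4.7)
The plan is to derive both displays as immediate consequences of Corollary~\ref{corr:secondorder-nonhomogeq}. Each equation $y_i^{(t+1)}-2\lambda_i y_i^{(t)}+y_i^{(t-1)}=0$ is precisely the recurrence treated there with $a=\lambda_i$ and vanishing forcing term $f_k\equiv 0$. Applying the corollary coordinate-wise to each index $i$ and observing that every sum of the form $\sum_{k=1}^{t}(\cdots)f_k$ vanishes yields the three branches in Equ.~\eqref{equ:second-order-diff-homo} directly: the $\lambda_i=1$ branch gives $y_i^{(0)}+t(y_i^{(1)}-y_i^{(0)})$; the $|\lambda_i|<1$ branch gives the sine ratio with $\theta_i=\arccos(\lambda_i)$; and the $\lambda_i=-1$ branch gives the $(-1)^t$ expression.

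Next, for the simplified formulas under $y_i^{(1)}=\lambda_i y_i^{(0)}$, I would substitute this relation into each of the three branches and simplify. The $\lambda_i=1$ case is immediate since then $y_i^{(1)}-y_i^{(0)}=0$, leaving $y_i^{(t)}=y_i^{(0)}$. The $\lambda_i=-1$ case is equally direct since $y_i^{(0)}+y_i^{(1)}=0$, so the linear-in-$t$ term disappears and only $(-1)^t y_i^{(0)}$ remains.

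The only place a computation is required is the middle branch. There I would write $\lambda_i=\cos\theta_i$ so that $y_i^{(1)}=\cos(\theta_i)\,y_i^{(0)}$, substitute into
\[
y_i^{(t)}=\frac{\sin(\theta_i t)\,y_i^{(1)}-\sin(\theta_i(t-1))\,y_i^{(0)}}{\sin\theta_i},
\]
and apply the subtraction identity $\sin(\theta_i(t-1))=\sin(\theta_i t)\cos\theta_i-\cos(\theta_i t)\sin\theta_i$. The $\sin(\theta_i t)\cos\theta_i$ terms cancel and what remains is $\cos(\theta_i t)\sin\theta_i$ in the numerator, which cancels against the denominator to give $y_i^{(t)}=\cos(\theta_i t)\,y_i^{(0)}$, as claimed.

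I do not anticipate any real obstacle: once Corollary~\ref{corr:secondorder-nonhomogeq} is in hand, the first part is a pure specialization with $f_k=0$, and the second part is a one-line trigonometric identity in the only nontrivial branch. The only thing to flag is the apparent typo in the $\lambda_i=-1$ case of Equ.~\eqref{equ:second-order-diff-homo-simplified}, where $z_{i,0}$ should be read as $y_i^{(0)}$ so that the simplified formula is consistent with evaluating Equ.~\eqref{equ:second-order-diff-homo} at $y_i^{(1)}=-y_i^{(0)}$.
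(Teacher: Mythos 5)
Your proposal is correct and follows essentially the same route as the paper: invoke Corollary~\ref{corr:secondorder-nonhomogeq} with $f_k\equiv 0$ for the general form, then substitute $y_i^{(1)}=\lambda_i y_i^{(0)}$ and use the angle-subtraction identity for the $|\lambda_i|<1$ branch. You are in fact slightly more complete than the paper (which writes out only the $|\lambda_i|<1$ simplification explicitly), and you correctly flag that $z_{i,0}$ in the $\lambda_i=-1$ branch is a typo for $y_i^{(0)}$, consistent with the $x_0$ appearing in Corollary~\ref{corr:secondorder-nonhomogeq}.
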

\begin{proof}
The first part is a consequence of  Corollary \ref{corr:secondorder-nonhomogeq} by letting $f_t = 0 $.  To see the second identity of this lemma, note that
\begin{equation}
\frac{\left( \lambda_i\sin(\theta_i t) - \sin (\theta_i (t-1))\right)}{\sin\theta_i} = \cos (\theta_i t). \nonumber
\end{equation}
Indeed, by expanding $\sin (\theta_i (t-1))$, we have
\begin{align*}
\sin (\theta_i (t-1)) &= \sin (\theta_i t - \theta_i) \\
&= \sin (\theta_i t) \cos(-\theta_i) + \cos(\theta_i t) \sin(-\theta_i)  \\
&= \sin (\theta_i t) \cos(\theta_i) - \cos(\theta_i t) \sin(\theta_i) \\
&= \lambda_i\sin (\theta_i t) - \cos(\theta_i t) \sin(\theta_i) 
\end{align*}
Hence, when $y_{i}^{(1)} = \lambda_i y_{i}^{(0)}$, we have
\begin{align*}
\frac{ \sin(\theta_i t) y_{i}^{(1)} - \sin(\theta_i (t-1)) y_{i}^{(0)} }{\sin(\theta_i )} = \frac{\left( \lambda_i\sin(\theta_i t) - \sin (\theta_i (t-1))\right) y_{i}^{(0)} }{\sin\theta_i} = \cos (\theta_i t) y_{i}^{(0)}.
\end{align*}
\end{proof}

\begin{lemma}
Given $t \geq 1, |\lambda_i| \leq 1$ , the $n$ second-order difference equations
\[
y_{i}^{(t+1)}  - 2 \lambda_i y_{i}^{(t)}  + y_{i}^{(t-1)} = h_{i,t}, \qquad i = 1,2,\ldots,n.
\]
have the following solutions
\begin{flalign}
& y_{i}^{(t)} = \begin{cases}
y_{i}^{(0)} + t ( y_{i}^{(1)} - y_{i}^{(0)})  + \sum_{k=1}^{t-1} (t-k) h_{i,k} & \text{ if }\quad \lambda_i = 1 \\[.6em]
\frac{ \sin(\theta_i t) y_{i}^{(1)} - \sin(\theta_i (t-1)) y_{i}^{(0)} }{\sin(\theta_i )} + \sum_{k=1}^{t-1} \frac{\sin(\theta_i (t-k)) }{ \sin \theta_i} h_{i,k} & \text{ if }\quad |\lambda_i| < 1 \text{ where } \theta_i = \arccos(\lambda_i)  \\[.6em]
(-1)^t (z_{i, 0} - t ( y_{i}^{(0)} + y_{i}^{(1)} )) + \sum_{k=1}^{t-1} (-1)^{t-k-1}(t-k) h_{i,k} & \text{ if }\quad \lambda_i = -1.
\end{cases} &&
\end{flalign}
Furthermore, with initial conditions $y_{i}^{(1)} = \lambda_i y_{i}^{(0)}$, $y_{i}^{(t)}$ can be simplified as
\begin{flalign}
& y_{i}^{(t)} = \begin{cases}
y_{i}^{(0)}  + \sum_{k=1}^{t-1} (t-k) h_{i,k} & \text{ if }\quad \lambda_i = 1 \\[.5em]
\cos(\theta_i t) y_{i}^{(0)} + \sum_{k=1}^{t-1} \frac{\sin(\theta_i (t-k)) }{ \sin \theta_i} h_{i,k} & \text{ if }\quad |\lambda_i| < 1 \text{ where } \theta_i = \arccos(\lambda_i)  \\[.5em]
(-1)^t y_{i}^{(0)} + \sum_{k=1}^{t-1} (-1)^{t-k-1}(t-k) h_{i,k} & \text{ if }\quad \lambda_i = -1.
\end{cases} &&
\end{flalign}
\label{lemma:second-order-equ-nonhomo}
\end{lemma}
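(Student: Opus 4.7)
The plan is to observe that the $n$ equations in the system are completely decoupled: each index $i$ gives an independent scalar second-order nonhomogeneous recurrence of exactly the form already handled by Corollary~\ref{corr:secondorder-nonhomogeq}. So the result will follow by a direct specialization, plus a short simplification under the stated initial condition.

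First, I would fix an index $i$ and identify $a = \lambda_i$, $f_k = h_{i,k}$, $x_k = y_i^{(k)}$. Since $|\lambda_i|\leq 1$, exactly one of the three cases $\lambda_i = 1$, $|\lambda_i|<1$, $\lambda_i=-1$ applies, so plugging into Corollary~\ref{corr:secondorder-nonhomogeq} immediately yields the first three-case closed form. The only cosmetic mismatch is that the corollary writes $\sum_{k=1}^{t}$ while the lemma writes $\sum_{k=1}^{t-1}$; these agree because the $k=t$ term vanishes in every case: $(t-k)=0$ when $\lambda_i = \pm 1$, and $\sin(\theta_i(t-k))=0$ when $k=t$ in the oscillatory case. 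This step is entirely mechanical.

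Next, to derive the simplified form under $y_i^{(1)}=\lambda_i y_i^{(0)}$, I would substitute this condition into each of the three cases. For $\lambda_i = 1$ we get $y_i^{(1)}-y_i^{(0)} = 0$, so the linear term drops out, leaving $y_i^{(0)} + \sum_{k=1}^{t-1}(t-k)h_{i,k}$. For $\lambda_i = -1$, a parallel computation collapses the homogeneous part to $(-1)^t y_i^{(0)}$. The only case requiring genuine work is $|\lambda_i|<1$, where one must show the homogeneous part
\[
\frac{\sin(\theta_i t)\,y_i^{(1)} - \sin(\theta_i(t-1))\,y_i^{(0)}}{\sin\theta_i}
\]
reduces to $\cos(\theta_i t)\,y_i^{(0)}$. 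This uses only the angle-subtraction identity
\[
\sin(\theta_i(t-1)) = \sin(\theta_i t)\cos\theta_i - \cos(\theta_i t)\sin\theta_i = \lambda_i \sin(\theta_i t) - \sin(\theta_i)\cos(\theta_i t),
\]
which is exactly the identity already exploited in the simplification in Lemma~\ref{lemma:second-order-equ-homo}, equation~\eqref{equ:second-order-diff-homo-simplified}.

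There is no substantive obstacle: the statement is essentially a packaging of Corollary~\ref{corr:secondorder-nonhomogeq} together with the homogeneous simplification of Lemma~\ref{lemma:second-order-equ-homo}, combined via the linearity of the recurrence (the homogeneous solution plus the particular solution driven by $\{h_{i,k}\}$). The only care needed is bookkeeping the two summation-index conventions and invoking the one trigonometric identity above, both of which are routine.
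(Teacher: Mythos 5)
Your proposal is correct and takes essentially the same approach as the paper: both treat the system as $n$ decoupled scalar recurrences, invoke Corollary~\ref{corr:secondorder-nonhomogeq} with $a=\lambda_i$, $f_k=h_{i,k}$, and then substitute $y_i^{(1)}=\lambda_i y_i^{(0)}$ case by case, using the same angle-subtraction identity from Lemma~\ref{lemma:second-order-equ-homo} to collapse the homogeneous part to $\cos(\theta_i t)\,y_i^{(0)}$ when $|\lambda_i|<1$. Your remark reconciling the summation limits (the $k=t$ summand vanishes in all three cases) is a bookkeeping detail the paper leaves implicit, but otherwise the arguments coincide.
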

\begin{proof}
The first part is a consequence of  Corollary \ref{corr:secondorder-nonhomogeq} by letting $f_t = h_{i,t}$. We prove the second part by considering three cases of $\lambda_i$ as
\begin{itemize}
\item \textbf{Case 1.} When $\lambda_i =1$, we have $y_{i}^{(t+1)}  - 2 y_{i}^{(t)}  + y_{i}^{(t-1)} =  h_{i,t}$. For $t \geq 1$, the solution the above is
\begin{align*}
y_{i}^{(t)} &= y_{i}^{(0)} + t ( y_{i}^{(1)} - y_{i}^{(0)}) + \sum_{k=1}^{t-1} (t-k) h_{i,k} = y_{i}^{(0)}  + \sum_{k=1}^{t-1} (t-k) h_{i,k},
\end{align*}
where the second equation is due to $y_{i}^{(1)} = \lambda_1 y_{i}^{(0)} =  y_{i}^{(0)} $.
\item  \textbf{Case 2.} When $\lambda_n = -1$ ($\gG$ is a bipartite graph), we have $y_{i}^{(t+1)}  + 2 y_{i}^{(t)}  + y_{i}^{(t-1)} =  h_{i,t}$. For $t \geq 1$, the solution is
\begin{align*}
y_{i}^{(t)} &= (-1)^t y_{i}^{(0)} + (-1)^t\sum_{k=1}^{t-1}\frac{(t-k) h_{i,k}}{(-1)^{k+1}} = (-1)^t y_{i}^{(0)} + \sum_{k=1}^{t-1} (-1)^{t-k-1}(t-k) h_{i,k}.
\end{align*}
\item  \textbf{Case 3.} When $|\lambda_i| < 1$, and define $\theta_i = \arccos(\lambda_i)$. We use a similar argument in Lemma \ref{lemma:second-order-equ-homo} and continue to have
\begin{align*}
y_{i}^{(t)} &= \frac{ \sin(\theta_i t) y_{i}^{(1)} - \sin (\theta_i (t-1)) y_{i}^{(0)} }{ \sin \theta_i} + \frac{\sum_{k=1}^{t-1} \sin(\theta_i (t-k)) h_{i,k}}{ \sin \theta_i}  \\
&= \frac{ \left( \lambda_i\sin(\theta_i t) - \sin (\theta_i (t-1)) \right) y_{i}^{(0)} }{ \sin \theta_i} + \frac{\sum_{k=1}^{t-1} \sin(\theta_i (t-k)) h_{i,k}}{ \sin \theta_i} \\
&= \cos(\theta_i t) y_{i}^{(0)} + \sum_{k=1}^{t-1} \frac{\sin(\theta_i (t-k)) }{ \sin \theta_i} h_{i,k}.
\end{align*}
\end{itemize}
\end{proof}

\subsection{Properties on Ratio of Chebyshev Polynomials}

The next lemma presents the properties of Chebyshev polynomials.

\begin{lemma}[Chebyshev polynomial bound]
For $t \geq 1$, the Chebyshev polynomial of the first kind is defined recursively as
\[
\quad T_{t+1}(x) = 2 x T_t(x) - T_{t-1}(x) \quad \text{ with } \quad T_0(x) = 1, \quad T_1(x) = x.
\]
For $t \geq 1$, define $\delta_t = T_{t-1}(\frac{1+\alpha}{1-\alpha})/T_{t}(\frac{1+\alpha}{1-\alpha})$, then 
\begin{enumerate}
\item $T_t(x= \frac{1+\alpha}{1-\alpha} )$ and $\delta_t$ defines the following sequence
\[
\delta_{t+1}= \left(2\frac{1+\alpha}{1-\alpha} - \delta_t\right)^{-1}, \text{ where } \delta_1 = \frac{1-\alpha}{1+\alpha}.
\]
\item The closed-form $\delta_{1:t}$  can be upper bounded as 
\[
\delta_{1:t} = \frac{1}{T_t( \frac{1+\alpha}{1-\alpha} )} = \frac{2}{ \tilde{\alpha}^t + \tilde{\alpha}^{-t}} \leq 2 \left( \frac{1-\sqrt{\alpha}}{1+\sqrt{\alpha}} \right)^{t}.
\]
\item Note $\delta_1 = T_0/T_1=1/x = \frac{1-\alpha}{1+\alpha}$, the sequence $\{\delta_t\}$ satisfies $\delta_t < 1, \forall t \geq 1$ and
\[
1 = 2 \delta_{t+1} x - \delta_t \delta_{t+1}, \quad t = 1,2,\ldots.
\]
\item Denote $\delta_{j:t} = \prod_{i=j}^t\delta_{i}$ for $t\geq j \geq 0$ and set the default value $\delta_{j:j-1}=1$ for $j\geq 0$, then
\[
\delta_{1:t} / (\delta_{1:k}) = \delta_{k+1:t}  \leq 2 \tilde{\alpha}^{t-k}, \text{ for } t \geq k \geq 0.
\]
\end{enumerate}

\label{lemma:chebyshev-poly-bound}
\end{lemma}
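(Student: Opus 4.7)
\textbf{Proof plan for Lemma~\ref{lemma:chebyshev-poly-bound}.} The four claims are all direct consequences of the three-term recursion $T_{t+1}(x)=2xT_t(x)-T_{t-1}(x)$ together with the hyperbolic closed form for $T_t$, so the plan is to set up one algebraic identity and then harvest all four statements from it.

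For claim (1), the plan is to divide the recursion $T_{t+1}(x)=2x T_t(x)-T_{t-1}(x)$ by $T_t(x)$ at $x=\tfrac{1+\alpha}{1-\alpha}$ and read off $1/\delta_{t+1}=2x-\delta_t$, which after inversion yields $\delta_{t+1}=(2\tfrac{1+\alpha}{1-\alpha}-\delta_t)^{-1}$; the initial value $\delta_1=T_0/T_1=1/x=(1-\alpha)/(1+\alpha)$ follows from $T_0=1$, $T_1(x)=x$. For claim (2), the product $\delta_{1:t}=\prod_{i=1}^t T_{i-1}/T_i$ telescopes to $T_0/T_t=1/T_t(x)$. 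To get the closed form I would substitute $x=\cosh\theta$ so that $T_t(x)=\cosh(t\theta)$; choosing $e^\theta=1/\tilde\alpha$ with $\tilde\alpha=(1-\sqrt\alpha)/(1+\sqrt\alpha)$ and checking $\tilde\alpha+\tilde\alpha^{-1}=\tfrac{(1-\sqrt\alpha)^2+(1+\sqrt\alpha)^2}{1-\alpha}=\tfrac{2(1+\alpha)}{1-\alpha}=2x$ confirms the right value of $\cosh\theta$, giving $T_t(x)=(\tilde\alpha^t+\tilde\alpha^{-t})/2$ and hence $\delta_{1:t}=2/(\tilde\alpha^t+\tilde\alpha^{-t})$. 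Dropping the $\tilde\alpha^t$ term in the denominator yields the bound $2\tilde\alpha^t$.

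Claim (3) is immediate: multiplying the recursion in (1) by $\delta_{t+1}$ rearranges to $1=2\delta_{t+1}x-\delta_t\delta_{t+1}$. The bound $\delta_t<1$ then follows by induction on $t$: $\delta_1=(1-\alpha)/(1+\alpha)<1$, and if $\delta_t<1$ then $2x-\delta_t>2x-1=(1+3\alpha)/(1-\alpha)>1$, so $\delta_{t+1}=1/(2x-\delta_t)<1$. For claim (4), the telescoping identity $\delta_{1:t}/\delta_{1:k}=\delta_{k+1:t}$ is immediate from the definition of the product. Substituting the closed form from (2) gives
\[
\delta_{k+1:t}=\frac{\tilde\alpha^k+\tilde\alpha^{-k}}{\tilde\alpha^t+\tilde\alpha^{-t}},
\]
and to derive the stated bound I multiply both numerator and denominator by $\tilde\alpha^{t-k}$ to obtain $(\tilde\alpha^{2k-t}\tilde\alpha^{t-k}+\tilde\alpha^{-k}\tilde\alpha^{t-k})/(\tilde\alpha^t+\tilde\alpha^{-t})$; rewriting, it suffices to show $\tilde\alpha^{2k-t}+\tilde\alpha^{-t}\leq 2(\tilde\alpha^t+\tilde\alpha^{-t})\tilde\alpha^{t-k}/\tilde\alpha^{t-k}$. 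Cleaner: directly bound $(\tilde\alpha^k+\tilde\alpha^{-k})\leq 2\tilde\alpha^{-k}$ and $(\tilde\alpha^t+\tilde\alpha^{-t})\geq \tilde\alpha^{-t}$ (both using $\tilde\alpha\in(0,1)$), giving $\delta_{k+1:t}\leq 2\tilde\alpha^{-k}/\tilde\alpha^{-t}=2\tilde\alpha^{t-k}$.

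No step is genuinely hard, but the one requiring the most care is identifying the right hyperbolic substitution in claim (2); once the matching of $\tilde\alpha+\tilde\alpha^{-1}$ with $2x$ is verified, claims (2) and (4) reduce to algebraic simplifications, and claims (1) and (3) are just the three-term recursion rewritten.
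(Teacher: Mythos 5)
Your proof is correct and takes essentially the same route as the paper's. All four items are handled by the same core ideas: reading item (1) off the three-term recursion, using the closed form $T_t\big((u+u^{-1})/2\big)=(u^t+u^{-t})/2$ with $u=\tilde\alpha$ (your $\cosh$-substitution is just this identity rewritten) and the telescoping product for item (2), rearranging the recursion for item (3), and simple bounds on the closed form for item (4). The only small variations from the paper are cosmetic: the paper proves the slightly stronger statement $\delta_t\leq 1/x$ in item (3) where you argue $\delta_t<1$ directly, and in item (4) the paper factors out $\tilde\alpha^{t-k}$ and bounds the leftover ratio $\frac{\tilde\alpha^{2k}+1}{\tilde\alpha^{2t}+1}\in[1,2]$ where you bound numerator and denominator separately — both yield $2\tilde\alpha^{t-k}$.
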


\begin{proof}
For the first item, let $x = \frac{1+\alpha}{1-\alpha}$, use the Chebyshev equation, we have
\[
1 = 2\left(\frac{1+\alpha}{1-\alpha}\right) \frac{T_t}{T_{t+1}} - \frac{T_{t-1}}{T_{t+1}}  = 2\left(\frac{1+\alpha}{1-\alpha}\right) \delta_{t+1} - \delta_t \delta_{t+1} \quad \Rightarrow \quad \delta_{t+1}^{-1} = 2\left(\frac{1+\alpha}{1-\alpha}\right) - \delta_t.
\]
For the second item, for all $t\geq 0$, if $\xi = \frac{x + x^{-1}}{2} \ne 0$, it is well known that the $T_t$ can be rewritten as
\[T_t\left(\xi = \frac{x + x^{-1}}{2}\right) = \frac{x^t + x^{-t}}{2}.
\]
For our problem, recall we defined $\tilde \alpha = \frac{1-\sqrt{\alpha}}{1+\sqrt{\alpha}}$ and using $x =  \frac{1+\alpha}{1-\alpha} \ne 0$, one can verify that
 \[
x = \frac{1+\alpha}{1-\alpha} = \frac{\tilde \alpha + \tilde \alpha^{-1}}{2} \iff T_t\left(x = \frac{1+\alpha}{1-\alpha}\right) = T_t\left(\frac{\tilde \alpha + \tilde \alpha^{-1}}{2}\right) = \frac{\tilde \alpha^t + \tilde \alpha^{-t}}{2}  
 \]
and 
\[
\prod_{j=1}^{t}\delta_{j} =  \frac{T_0}{T_1} \cdot \frac{T_1}{T_2} \cdot \frac{T_2}{T_3} \cdots \frac{T_{t-1}}{T_{t}} = \frac{1}{T_t} =  \frac{2}{\tilde \alpha^t + \tilde \alpha^{-t}} = \frac{2\tilde \alpha^t}{\tilde \alpha^{2t} + 1}\leq 2\tilde \alpha^t = 2 \left(\frac{1-\sqrt{\alpha}}{1+\sqrt{\alpha}}\right)^t.
\]
For the third item, it is sufficient to show that $\delta_t \leq \frac{1}{x}$ for all $t\geq 1$, for $x = \frac{1+\alpha}{1-\alpha}$. This can be done recursively, since $\delta_1 = x$ and 
\[
\delta_{t+1} = \frac{1}{2x-\delta_t} \leq \frac{1}{2x-\frac{1}{x}} \overset{x \geq \frac{1}{x} }{\leq} \frac{1}{x}.
\]  
For the last item, note when $t \geq 1$ and $k \geq 1$, we have the following inequalities
\begin{align*}
\prod_{j=1}^{t}\delta_{j} \cdot \prod_{j=1}^{k}\delta_{j}^{-1} &= \frac{2 }{ \tilde{\alpha}^t + \tilde{\alpha}^{-t} } \cdot \frac{ \tilde{\alpha}^k + \tilde{\alpha}^{-k} }{2 } = \frac{\tilde{\alpha}^k + \tilde{\alpha}^{-k}}{\tilde{\alpha}^t + \tilde{\alpha}^{-t}} = \tilde{\alpha}^{t-k} \frac{\tilde{\alpha}^{2k} + 1}{ \tilde{\alpha}^{2t} + 1 } \leq 2 \tilde{\alpha}^{t-k},
\end{align*}
where note $\frac{\tilde{\alpha}^{2k} + 1}{ \tilde{\alpha}^{2t} + 1 } \in [1,2]$ for $t \geq k$.
\end{proof}

\subsection{Standard Chebyshev (CH) Method and Proof of Theorem~\ref{thm:standard-cheby}}
This subsection introduces the standard Chebyshev algorithm. Our following theorem is to prove the runtime complexity of the Chebyshev polynomial iteration for solving Equ.~\eqref{equ:Qx=b}.

\begin{theorem}[Standard CH] 
\label{thm:standard-cheby}
For $t \geq 1$, consider the Chebyshev polynomials to solve Equ.~\eqref{equ:Qx=b} as 
\begin{align*}
\vx^{(t+1)} = \vx^{(t)} + (1+\delta_{t:t+1}) \vr^{(t)} + \delta_{t:t+1} \big(\vx^{(t)} -\vx^{(t-1)} \big), \ \ \vr^{(t+1)} = 2\delta_{t+1} \mW \vr^{(t)} - \delta_{t:t+1} \vr^{(t-1)},
\end{align*}
where $\vx^{(0)} = \bm 0, \vx^{(1)}=\vx^{(0)} + \vr^{(0)}$ and  $\delta_{t+1} = \big(2 \tfrac{1+\alpha}{1-\alpha} -\delta_t\big)^{-1}$ with $\delta_1 = \tfrac{1-\alpha}{1+\alpha}$. Assume $\eps < 1/d_s$, then the residual has the following convergence bound
\[
\left\|\vr^{(t)}\right\|_2 \leq 2 \left( \frac{1-\sqrt{\alpha}}{1+\sqrt{\alpha}} \right)^t \| \vb\|_2.
\]
Let the estimate be $\hat{\vpi} = \mD^{1/2} \vx^{(t)}$, the the runtime of CH for reaching $\|\mD^{-1/2}\vr^{(t)}\|_\infty \leq \frac{2\alpha\eps}{1+\alpha}$ with $\|\mD^{-1}(\hat{\vpi} - \vpi)\|_\infty \leq \eps$ guarantee is at most
\begin{align*}
\gT_{\textsc{CH}} \leq \Theta \left( m  \left\lceil \frac{1+\sqrt{\alpha}}{2\sqrt{\alpha}} \ln \frac{2}{\eps} \right\rceil \right) = \tilde{\Theta}\left( \frac{m}{\sqrt{\alpha}}\right).
\end{align*}
\end{theorem}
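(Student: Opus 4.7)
The plan is to recognize the iteration as generating polynomials in $\mW$ applied to $\vr^{(0)}$, and then to identify those polynomials as scaled Chebyshev polynomials. Define $P_t$ by $\vr^{(t)} = P_t(\mW)\vr^{(0)}$. Since $\vx^{(0)} = \bm 0$, we have $\vr^{(0)} = \vb$ and $P_0(x)=1$; a direct calculation from $\vx^{(1)} = \vx^{(0)} + \vr^{(0)}$ gives $\vr^{(1)} = (\mI - \mQ)\vr^{(0)} = \tfrac{1-\alpha}{1+\alpha}\mW\vr^{(0)}$, so $P_1(x) = \delta_1 x$. The residual recurrence then translates into $P_{t+1}(x) = 2\delta_{t+1} x P_t(x) - \delta_t\delta_{t+1} P_{t-1}(x)$.

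Next, I would substitute $Q_t = P_t / \delta_{1:t}$ and verify by a one-line telescoping that the $\delta$-factors cancel exactly, yielding $Q_{t+1}(x) = 2x Q_t(x) - Q_{t-1}(x)$ with $Q_0 = 1, Q_1 = x$. The cancellation uses precisely the identity $1 = 2\delta_{t+1}\tfrac{1+\alpha}{1-\alpha} - \delta_t\delta_{t+1}$ from Lemma~\ref{lemma:chebyshev-poly-bound}, which in turn comes from the definition of the $\delta_t$ sequence. Hence $Q_t = T_t$ and $\vr^{(t)} = \delta_{1:t}\, T_t(\mW)\,\vr^{(0)}$.

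With that identity in hand, the bound is immediate. The spectrum of $\mW$ lies in $[-1,1]$, so by the standard minimax property $\|T_t(\mW)\|_2 \leq 1$. Combining with Lemma~\ref{lemma:chebyshev-poly-bound} item 2, which gives $\delta_{1:t} \leq 2\tilde{\alpha}^t$, and with $\|\vr^{(0)}\|_2 = \|\vb\|_2$, I obtain
\[
\|\vr^{(t)}\|_2 \leq \delta_{1:t}\|\vr^{(0)}\|_2 \leq 2\tilde{\alpha}^t\|\vb\|_2.
\]
For the runtime part, note $\|\mD^{-1/2}\vr^{(t)}\|_\infty \leq \|\vr^{(t)}\|_2$ (since entries of $\mD^{-1/2}$ are at most $1$) and $\|\vb\|_2 \leq \tfrac{2\alpha}{1+\alpha}$. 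Imposing the stop condition $\|\mD^{-1/2}\vr^{(t)}\|_\infty \leq \tfrac{2\alpha\eps}{1+\alpha}$ reduces to $\tilde{\alpha}^t \leq \eps/2$, and the elementary estimate $\ln(1/\tilde{\alpha}) = \ln\tfrac{1+\sqrt{\alpha}}{1-\sqrt{\alpha}} \geq \tfrac{2\sqrt{\alpha}}{1+\sqrt{\alpha}}$ yields the iteration count $t = \Theta\!\left(\tfrac{1+\sqrt{\alpha}}{2\sqrt{\alpha}}\ln\tfrac{2}{\eps}\right)$. Each iteration is dominated by one sparse matrix-vector product with $\mW$ at cost $\Theta(m)$, producing the stated $\tilde\Theta(m/\sqrt{\alpha})$ runtime.

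The only nontrivial step is the polynomial identification in the first two paragraphs; after that everything is routine. The main care is in matching the initial conditions (in particular checking that $\vr^{(1)} = \delta_1\mW\vr^{(0)}$ is consistent with $\vx^{(1)} = \vx^{(0)} + \vr^{(0)}$, not with the general update formula, since the recurrence only starts from $t\geq 1$) and in using the precise algebraic identity relating $\delta_{t+1}$ to $\delta_t$ to get the clean Chebyshev three-term recurrence for $Q_t$. The spectral reduction relies on $\mW$ being symmetric with spectrum in $[-1,1]$, which is given by the eigendecomposition stated in the notations section.
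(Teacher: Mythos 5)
Your proof is correct and lands in essentially the same place as the paper's, but by a cleaner route. The paper works in the eigenbasis and explicitly solves the resulting second-order difference equations via Lemma~\ref{lemma:second-order-equ-homo}, obtaining $y_i^{(t)} = \cos(\theta_i t)y_i^{(0)}$ and then observing that the diagonal matrix $\mZ_t$ has $\|\mZ_t\|_2\le 1$; you instead observe directly that $\vr^{(t)} = P_t(\mW)\vr^{(0)}$ for a polynomial $P_t$, normalize $Q_t = P_t/\delta_{1:t}$, verify that the $\delta$-identity $1 = 2\delta_{t+1}\tfrac{1+\alpha}{1-\alpha} - \delta_t\delta_{t+1}$ collapses the recurrence to the standard Chebyshev three-term recursion with $Q_0=1, Q_1=x$, and then invoke $\sup_{|x|\le 1}|T_t(x)| = 1$. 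These are the same fact — the paper's $\mZ_t$ \emph{is} $T_t(\mLambda)$ and the difference-equation solution $\cos(\theta_i t)$ is just the identity $T_t(\cos\theta)=\cos(t\theta)$ rederived by hand — but your phrasing is the standard textbook shortcut and is noticeably shorter. What the paper's more laborious route buys is that the second-order-difference-equation machinery (Lemmas~\ref{lemma:second-order-equ} through~\ref{lemma:second-order-equ-nonhomo}) is exactly what is later needed to handle the \emph{nonhomogeneous} equation that arises in the local \textsc{LocCH} analysis (Theorem~\ref{th:bound_chebychev}), where the residual is no longer a pure polynomial in $\mW$; your clean polynomial-identification argument does not extend directly to that case, so the paper deliberately sets up the heavier tools already at the global level. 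One small note on the runtime step: your reduction to $\tilde\alpha^t\le\eps/2$ implicitly drops a harmless factor of $\sqrt{d_s}$ (the paper's sufficient condition is $2\tilde\alpha^t\|\vb\|_2 < \tfrac{2\alpha\eps}{(1+\alpha)\sqrt{d_s}}$, which simplifies to $\tilde\alpha^t < \eps/2$ once $\|\vb\|_2 = \tfrac{2\alpha}{(1+\alpha)\sqrt{d_s}}$ is substituted), and both versions also need $d_u\ge 1$ to pass from $\|\vr^{(t)}\|_2$ to the weighted $\ell_\infty$ stop condition; you have this implicitly via "entries of $\mD^{-1/2}$ are at most $1$," which is fine for a simple connected graph.
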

\begin{proof}
Recall eigendecomposition of $\mW = \mV\mLambda\mV^\top$ where $\mV =\left[\vv_1,\vv_2,\ldots,\vv_n\right]$ and each $\vv_i$ is the eigenvector. For $t \geq 1$, the residual $\vr^{(t)}$ can be written as $n$ second-order difference equations as 
\begin{align*}
\mV^\top\vr^{(t+1)} - 2\delta_{t+1} \mLambda\mV^\top \vr^{(t)} + \delta_{t:t+1} \mV^\top\vr^{(t-1)} = \bm 0,
\end{align*}
where each $i$-th element-wise equation of the above can be written as the following
\begin{align*}
\vv_i^\top \vr^{(t+1)} - 2\delta_{t+1} \lambda_i \vv_i^\top \vr^{(t)} + \delta_{t+1} \delta_t \vv_i^\top \vr^{(t-1)} = 0, \qquad i = 1,2,\ldots, n.
\end{align*}
Define $\mV^\top \vr^{(t)} = \delta_{1:t} \vy^{(t)}$. Each component $\vv_i^\top \vr^{(t)}$ is $\vv_i^\top \vr^{(t)} = \delta_{1:t} y_{i}^{(t)}$ where $\vv_i^\top \vr^{(0)} := y_{i}^{(0)}$ by default. The above can be rewritten
\begin{align}
\delta_{1:t+1} y_{i}^{(t+1)}  - 2 \delta_{t+1} \delta_{1:t} \lambda_i y_{i}^{(t)}  + \delta_{t+1} \delta_t  \delta_{1:t-1} y_{i}^{(t-1)}  = 0  \nonumber\\
\iff y_{i}^{(t+1)}  - 2 \lambda_i y_{i}^{(t)}  + y_{i}^{(t-1)} = 0, \label{equ:z-it}
\end{align}
where $\iff$ follows from $\delta_{1:t+1} \ne 0$. Note $\mV^\top \vr^{(1)} = \frac{1-\alpha}{1+\alpha} \mV^\top \mV\mLambda\mV^\top \vr^{(0)} = \delta_1 \mLambda\mV^\top \vr^{(0)}$, we have 
\begin{align*}
\mV^\top \vr^{(0)} &= \vy^{(0)} \qquad \Rightarrow \qquad
\mV^\top \vr^{(1)} = \delta_1 \vy^{(1)} = \delta_1 \mLambda \mV^\top \vr^{(0)} = \delta_1 \mLambda \vy^{(0)},
\end{align*}
where it follows from $\delta_1 \ne 0$. As $\vy^{(1)} = \mLambda \vy^{(0)}$, follow Equ.~\eqref{equ:second-order-diff-homo-simplified} of Lemma \ref{lemma:second-order-equ-homo}, $y_i^{(t)}$ has the solution 
\begin{align*}
y_{i}^{(t)} &= \begin{cases}
y_{i}^{(0)} \cos (\theta_i t) & |\lambda_i| < 1 \\
 y_{i}^{(0)}\lambda_i^t  & |\lambda_i| = 1
 \end{cases} \leq \begin{cases}
 |y_{i}^{(0)}| & |\lambda_i| < 1 \\
 |y_{i}^{(0)}| & |\lambda_i| = 1
 \end{cases},
\end{align*}
where $\theta_i = \arccos(\lambda_i)$. We can write down $\vr^{(t)}$ in terms of $\vy^{(t)}$
\begin{align*}
\mV^\top \vr^{(t)} = \delta_{1:t} \vy^{(t)} = \delta_{1:t} \mZ_t \vy^{(0)},
\end{align*}
where $\mZ_t$ has two possible forms
\begin{align*}
\mZ_t &= \begin{cases}
\diag\left( 1, \ldots, \cos(\theta_i t), \ldots, (-1)^t \right) \qquad \text{ for bipartite graphs } \\[7pt]
\diag\left( 1, \ldots, \cos(\theta_i t), \ldots, \cos(\theta_n t) \right) \qquad \text{ for non-bipartite graphs}.
\end{cases}
\end{align*}
Hence, $\|\mZ_t\|_2 \leq 1$ and  $\|\mZ_t\vy^{(0)} \|_2 \leq \|\vy^{(0)} \|_2$. We have
\begin{align*}
\|\vr^{(t)}\|_2 &= \|\mV^\top \vr^{(t)}\|_2 \leq\delta_{1:t} \| \vy^{(0)} \|_2 \leq 2 \left( \frac{1-\sqrt{\alpha}}{1+\sqrt{\alpha}} \right)^t \| \vb\|_2,
\end{align*}
where the last inequality follows Lemma \ref{lemma:chebyshev-poly-bound} and note $\vz^{(0)} = \mV^\top \vr^{(0)}$. To meet the stop condition of $\big\{|{r}_u^{(t)}| < 2\alpha\eps \sqrt{d_u} / (1+\alpha), u \in \gV\big\} = \emptyset$, it is sufficient to choose a minimal integer $t$ such that
\begin{equation}
2 \left( \frac{1-\sqrt{\alpha}}{1+\sqrt{\alpha}} \right)^t \| \vb\|_2 < \frac{2\alpha\eps}{(1+\alpha) \sqrt{d_s}} . \nonumber
\end{equation}
To see this, if the above inequality is satisfied, then note for any node $u$, we have 
\[
|r_u^{(t)}| \leq \| \vr^{(t)}\|_\infty \leq \| \vr^{(t)}\|_2 \leq 2 \left( \frac{1-\sqrt{\alpha}}{1+\sqrt{\alpha}} \right)^t \| \vb \|_2 < \frac{2\alpha\eps}{(1+\alpha)\sqrt{d_s}} \leq \frac{2\alpha\eps\sqrt{d_u} }{(1+\alpha)},
\]
where all nodes are inactive. So, it gives $t \ln\left(\frac{1-\sqrt{\alpha}}{1+\sqrt{\alpha}}\right) \leq 
\ln \frac{\epsilon}{2}$ by noticing $\|\vb\|_2 = 2\alpha/((1+\alpha) \sqrt{d_s})$. It indicates $t \geq  \left\lceil\ln \frac{2}{\eps} \bigg/\ln\left(\frac{1+ \sqrt{\alpha} }{1 - \sqrt{\alpha} }\right)\right\rceil$. As $\frac{1}{\ln(1+x)} \leq \frac{1+x}{x}$ for $x>0$, it is sufficient to choose $t$
\begin{align*}
t = \left\lceil \frac{1+\sqrt{\alpha}}{2\sqrt{\alpha}}\ln \frac{2}{\eps} \right\rceil.
\end{align*}
\end{proof}

\subsection{Residual Updates of \textsc{LocCH} and Proof of Lemma~\ref{lem:locch-residual}}
We propose the following local Chebyshev iteration procedure
\begin{align*}
\vx^{(t+1)} &= \vx^{(t)} + (1+\delta_{t:t+1}) \vr_{\gS_t}^{(t)} + \delta_{t:t+1}\big( \vx^{(t)} - \vx^{(t-1)} \big)_{\gS_{t}}.
\end{align*}
Our next Lemma is to expanding $\big( \vx^{(t)} - \vx^{(t-1)} \big)_{\gS_{t}}$
\begin{lemma}
\label{lemma:local-cheby-shev-iterative}
For $t\geq 1$ with initials $\vx^{(0)} = \bm 0$ and $\vx^{(1)} = \vx^{(0)} + \vr_{\gS_0}^{(0)}$, the local Chebyshev iterative is the following
\[
\vx^{(t+1)} = \vx^{(t)} + (1+\delta_{t:t+1}) \vr_{\gS_t}^{(t)} + \delta_{t:t+1}\big( \vx^{(t)} - \vx^{(t-1)} \big)_{\gS_{t}}.
\]
Denote $\vDelta^{(t)} = \vx^{(t)} - \vx^{(t-1)}$, then
$\vDelta^{(t)} = \sum_{j=0}^{t-1} \left( (1+\delta_{j:j+1}) \prod_{r=j+1}^{t-1} \delta_{r:r+1} \vr_{ \gS_{j:t-1} }^{(j)} \right)$, where $\delta_{0:1} = 0$, $\gS_{0:t} = \gS_0 \cap \gS_1\cap \cdots \cap \gS_t$ and $\delta_{j:j+1} = \delta_j \delta_{j+1}$. We have the following 
\begin{align*}
(1+\delta_{t:t+1})\vr_{\overline {\gS}_t }^{(t)} + \delta_{t:t+1}\big( \vx^{(t)} - \vx^{(t-1)} \big)_{\comp\gS_{t}} = \sum_{j=0}^{t} \left( (1+\delta_{j:j+1}) \prod_{r=j+1}^{t} \delta_{r:r+1} \vr_{\gS_{j,t}}^{(j)} \right),
\end{align*}
where $\gS_{j,t} \triangleq \gS_{j:t-1} \cap \overline \gS_t$.
\end{lemma}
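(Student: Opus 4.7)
The plan is to prove the two displayed identities in sequence: first the closed form for the momentum $\vDelta^{(t)}$, then the decomposition of $(1+\delta_{t:t+1})\vr_{\comp\gS_t}^{(t)}+\delta_{t:t+1}\vDelta^{(t)}_{\comp\gS_t}$, which falls out as a direct consequence of the first by intersecting every active set in the formula with $\comp\gS_t$.

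For the first identity, I would proceed by induction on $t\geq 1$, starting from the one-step recursion
\[
\vDelta^{(t+1)} = \vx^{(t+1)}-\vx^{(t)} = (1+\delta_{t:t+1})\,\vr_{\gS_t}^{(t)} + \delta_{t:t+1}\,\vDelta^{(t)}_{\gS_t},
\]
which is just the defining update of \textsc{LocCH} rewritten as a recursion on $\vDelta^{(t)}$. The base case $t=1$ matches $\vDelta^{(1)}=\vr_{\gS_0}^{(0)}$ under the conventions $\delta_{0:1}=0$ and empty product $\prod_{r=1}^{0}(\cdot)=1$. For the inductive step I would substitute the hypothesis into the recursion and observe two bookkeeping facts: the support restriction in $\vDelta^{(t)}_{\gS_t}$ refines each $\gS_{j:t-1}$ into $\gS_{j:t-1}\cap\gS_t=\gS_{j:t}$, and multiplication by $\delta_{t:t+1}$ extends the running coefficient $\prod_{r=j+1}^{t-1}\delta_{r:r+1}$ into $\prod_{r=j+1}^{t}\delta_{r:r+1}$. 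The extra term $(1+\delta_{t:t+1})\vr_{\gS_t}^{(t)}$ is then precisely the $j=t$ summand under $\gS_{t:t}=\gS_t$ and $\prod_{r=t+1}^{t}(\cdot)=1$.

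For the second identity, I would apply part one after intersecting every active set with $\comp\gS_t$, yielding
\[
\delta_{t:t+1}\vDelta^{(t)}_{\comp\gS_t} = \sum_{j=0}^{t-1}(1+\delta_{j:j+1})\prod_{r=j+1}^{t-1}\delta_{r:r+1}\,\vr_{\gS_{j:t-1}\cap\comp\gS_t}^{(j)},
\]
then absorb the leading $\delta_{t:t+1}$ into the running product to obtain $\prod_{r=j+1}^{t}\delta_{r:r+1}$ and recognize $\gS_{j:t-1}\cap\comp\gS_t=\gS_{j,t}$ by the definition in the statement. The remaining summand $(1+\delta_{t:t+1})\vr_{\comp\gS_t}^{(t)}$ is exactly the $j=t$ term on the right-hand side, using $\gS_{t,t}=\comp\gS_t$ and $\prod_{r=t+1}^{t}\delta_{r:r+1}=1$.

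The proof requires no analytic estimate or spectral argument; the main obstacle is purely bookkeeping. The one place a mistake is easy to make is tracking the endpoint conventions ($\delta_{0:1}=0$, empty products equal $1$, $\gS_{t:t-1}=\gV$, $\gS_{t,t}=\comp\gS_t$) in a way consistent with the base case and with the reindexing of the product when the factor $\delta_{t:t+1}$ is absorbed. Once those conventions are fixed, both identities follow by a single pass of induction and a restriction to $\comp\gS_t$.
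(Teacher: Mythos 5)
Your proposal is correct and mirrors the paper's own argument: induction on the closed form for $\vDelta^{(t)}$ (using the recursion $\vDelta^{(t+1)}=(1+\delta_{t:t+1})\vr_{\gS_t}^{(t)}+\delta_{t:t+1}\vDelta^{(t)}_{\gS_t}$, the refinement $\gS_{j:t-1}\cap\gS_t=\gS_{j:t}$, and absorption of $\delta_{t:t+1}$ into the product), followed by restricting the result to $\comp\gS_t$ for the second identity. The paper additionally writes out the $t=1,\dots,4$ cases explicitly before stating the inductive step, but the underlying argument and the handling of the endpoint conventions are the same.
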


\begin{proof}
Recall   we defined $\delta_0 =0$ so that $\delta_{0:1} = 0$. We prove this lemma by induction.

For $t=1$, note $\delta_{0:1} = 0$ and the support of $\vr^{(0)}$ is $\gS_0 = \supp(\vr^{(0)})$, then
\begin{align*}
\vx^{(1)} - \vx^{(0)} &= (1+\delta_{0:1}) \vr^{(0)}.
\end{align*}
For $t= 2$, note $\gS_{1:1} = \gS_1$ and $\gS_{0:1} = \gS_0 \cap \gS_1$ by our notation, then 
\begin{align*}
\vx^{(2)} - \vx^{(1)} &= (1+\delta_{1:2}) \vr_{\gS_{1:1}}^{(1)} + \delta_{1:2} (\vx^{(1)} - \vx^{(0)} )_{\gS_1} \\
&= (1+\delta_{1:2}) \vr_{\gS_{1:1}}^{(1)} + (1 + \delta_{0:1})\delta_{1:2} \vr^{(0)}_{\gS_{0:1}}.
\end{align*}

For $t=3$, one can build $\vx^{(3)} - \vx^{(2)}$ based on $\vx^{(2)} - \vx^{(1)}$ and recall $\gS_{2:2} = \gS_2$, $\gS_{1:2} = \gS_1 \cap \gS_2$
\begin{align*}
\vx^{(3)} - \vx^{(2)} &= (1+\delta_{2:3})\vr_{\gS_{2:2}}^{(2)} + \delta_{2:3}(\vx^{(2)} - \vx^{(1)} )_{\gS_2} \\
&= (1+\delta_{2:3}) \vr_{\gS_{2:2}}^{(2)} + \delta_{2:3} ((1+\delta_{1:2}) \vr_{\gS_{1:1}}^{(1)} + (1+\delta_{0:1})\delta_{1:2}\vr^{(0)}_{\gS_{0:1}})_{\gS_2} \\
&= (1+\delta_{2:3})\vr_{\gS_{2:2}}^{(2)} + (1+\delta_{1:2}) \delta_{2:3} \vr_{\gS_{1:2}}^{(1)} + (1+\delta_{0:1}) \delta_{1:2}\delta_{2:3} \vr^{(0)}_{\gS_{0:2}}.
\end{align*}

For $t=4$, we continue to have
\begin{align*}
\vx^{(4)} - \vx^{(3)} &= (1+\delta_{3:4}) \vr_{\gS_{3:3}}^{(3)} + \delta_{3:4} (\vx^{(3)} - \vx^{(2)} )_{\gS_{3:3}} \\
&= (1+\delta_{3:4}) \vr_{\gS_{3:3}}^{(3)} + \delta_{3:4} ((1+\delta_{2:3})\vr_{\gS_{2:2}}^{(2)} + (1+\delta_{1:2}) \delta_{2:3} \vr_{\gS_{1:2}}^{(1)} \\
&+ (1+\delta_{0:1}) \delta_{1:2}\delta_{2:3} \vr^{(0)}_{\gS_{0:2}})_{\gS_{3:3}} \\
&= (1+\delta_{3:4}) \vr_{\gS_{3:3}}^{(3)} +  (1+\delta_{2:3})\delta_{3:4}\vr_{\gS_{2:3}}^{(2)} + (1+\delta_{1:2}) \delta_{3:4}\delta_{2:3} \vr_{\gS_{1:3}}^{(1)} \\
&+ (1+\delta_{0:1}) \delta_{1:2}\delta_{2:3}\delta_{3:4}\vr^{(0)}_{\gS_{0:3}}\\
(\vx^{(4)} - \vx^{(3)})_{\overline \gS_{4} } &= \big((1+\delta_{3:4}) \vr_{\gS_{3:3}}^{(3)} + \delta_{3:4} (\vx^{(3)} - \vx^{(2)} )_{\gS_{3:3}}\big)_{\overline \gS_4}.
\end{align*}
By induction $t \geq 1$, 
\begin{align*}
\vx^{(t)} - \vx^{(t-1)} &= \sum_{j=0}^{t-1} \Bigg( (1+\delta_{j:j+1}) \prod_{r=j+1}^{t-1} \delta_{r:r+1} \vr_{ \gS_{j:t-1} }^{(j)} \Bigg),
\end{align*}
where the convention notation $\sum_{i=1}^0 \cdot = 0$ and $\prod_{j=i+1}^i \cdot = 1$. To verify the inductive step, consider for $t+1$, we have
\begin{align*}
\vx^{(t+1)} - \vx^{(t)} &= (1+\delta_{t:t+1})\vr^{(t)}_{\gS_{t}} + \delta_{t:t+1} (\vx^{(t)}-\vx^{(t-1)})_{\gS_{t}} \\
&= (1+\delta_{t:t+1}) \vr^{(t)}_{\gS_{t}} + \delta_{t:t+1}(\vx^{(t)}-\vx^{(t-1)})_{\gS_{t}} \\
&= (1+\delta_{t:t+1}) \vr^{(t)}_{\gS_{t}} + \delta_{t:t+1}\left(\sum_{j=0}^{t-1} \left( (1+\delta_{j:j+1}) \prod_{r=j+1}^{t-1} \delta_{r:r+1} \vr_{ \gS_{j:t-1} }^{(j)} \right) \right)_{\gS_{t}}\\
&= \sum_{j=0}^{t} \left( (1+\delta_{j:j+1}) \prod_{r=j+1}^{t} \delta_{r:r+1} \vr_{ \gS_{j:t} }^{(j)} \right).
\end{align*}
To see the second equation, note
\begin{align*}
&(1+\delta_{t:t+1})\vr_{\overline {\gS}_t }^{(t)} + \delta_{t:t+1}\big( \vx^{(t)} - \vx^{(t-1)} \big)_{\comp\gS_{t}} \\
&= (1+\delta_{t:t+1})\vr_{\overline {\gS}_t }^{(t)} + \sum_{j=0}^{t-1} \left( (1+\delta_{j:j+1}) \prod_{r=j+1}^{t} \delta_{r:r+1} \vr_{ \gS_{j:t-1} \cap \overline \gS_t }^{(j)} \right) \\
&= \sum_{j=0}^{t} \left( (1+\delta_{j:j+1}) \prod_{r=j+1}^{t} \delta_{r:r+1} \vr_{ \gS_{j,t} }^{(j)} \right),
\end{align*}
where recall we denote $\gS_{j,t} \triangleq \gS_{j:t-1} \cap \overline \gS_t$.
\end{proof}

\begin{lemma}[Local Chebyshev updates]
Given the updates of $\vx^{(t+1)}$ as defined by \textsc{LocCH} in \eqref{equ:algo-LocCH}, we have the following local updates
\begin{align}
\begin{cases}
\vx^{(t+1)} = \vx^{(t)} + (1+\delta_{t:t+1}) \vr_{\gS_t}^{(t)} + \delta_{t:t+1}\big( \vx^{(t)} - \vx^{(t-1)} \big)_{\gS_{t}} \\[.6em]
\vr^{(t+1)} - 2\delta_{t+1} \mW \vr^{(t)}  + \delta_{t:t+1} \vr^{(t-1)} = \sum_{j=0}^{t} \left( (1+\delta_{j:j+1}) \prod_{r=j+1}^{t} \delta_{r:r+1} \mQ \vr_{ \gS_{j,t} }^{(j)} \right)
\end{cases}\label{equ:local-cheby-shev}
\end{align}
\label{lemma:second-order-nonhomogenous-cheby}
\end{lemma}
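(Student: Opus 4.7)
The first identity is essentially a restatement of the definition of \textsc{LocCH} in~\eqref{equ:algo-LocCH}: substituting $\vDelta^{(t)} = \vx^{(t)} - \vx^{(t-1)}$ into $\hat{\vx}^{(t)} = (1+\delta_{t:t+1})\vr_{\gS_t}^{(t)} + \delta_{t:t+1}\vDelta_{\gS_t}^{(t)}$ and adding to $\vx^{(t)}$ yields the claimed update. So the work lies in the second identity. My plan is to derive a global second-order recurrence for $\vr^{(t)}$ first, then isolate the ``leftover'' contributed by the locality restriction to $\gS_t$, and finally rewrite those leftovers using Lemma~\ref{lemma:local-cheby-shev-iterative}.

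The first step is to verify that $\vr^{(t)} = \vb - \mQ\vx^{(t)}$ is preserved by the update rule. Since $\vx^{(t+1)} = \vx^{(t)} + \hat{\vx}^{(t)}$ and $-\hat{\vx}^{(t)} + \tfrac{1-\alpha}{1+\alpha}\mW\hat{\vx}^{(t)} = -\mQ\hat{\vx}^{(t)}$, the residual update simplifies to $\vr^{(t+1)} = \vr^{(t)} - \mQ\hat{\vx}^{(t)}$, and the invariant follows inductively. As a direct consequence, the full momentum $\vDelta^{(t)} = \vx^{(t)}-\vx^{(t-1)}$ satisfies $\mQ\vDelta^{(t)} = \vr^{(t-1)} - \vr^{(t)}$.

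Next I split the restricted vectors into their full and complementary pieces:
\[
\vr_{\gS_t}^{(t)} = \vr^{(t)} - \vr_{\comp{\gS}_t}^{(t)}, \qquad \vDelta_{\gS_t}^{(t)} = \vDelta^{(t)} - \vDelta_{\comp{\gS}_t}^{(t)}.
\]
Substituting into $\vr^{(t+1)} = \vr^{(t)} - \mQ\hat{\vx}^{(t)}$ and grouping, the ``global'' part becomes
\[
\vr^{(t)} - (1+\delta_{t:t+1})\mQ\vr^{(t)} - \delta_{t:t+1}\mQ\vDelta^{(t)},
\]
while the remainder equals $\mQ\big[(1+\delta_{t:t+1})\vr_{\comp{\gS}_t}^{(t)} + \delta_{t:t+1}\vDelta_{\comp{\gS}_t}^{(t)}\big]$. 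Using $\mQ = \mI - \tfrac{1-\alpha}{1+\alpha}\mW$ and the Chebyshev identity $2\delta_{t+1}\tfrac{1+\alpha}{1-\alpha} = 1 + \delta_{t:t+1}$ from Lemma~\ref{lemma:chebyshev-poly-bound}, together with $\mQ\vDelta^{(t)} = \vr^{(t-1)} - \vr^{(t)}$, the global part collapses to $2\delta_{t+1}\mW\vr^{(t)} - \delta_{t:t+1}\vr^{(t-1)}$, mirroring the standard Chebyshev recurrence in Theorem~\ref{thm:standard-cheby}.

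The final step is to rewrite the residual leftover. Lemma~\ref{lemma:local-cheby-shev-iterative} supplies exactly the needed expansion,
\[
(1+\delta_{t:t+1})\vr_{\comp{\gS}_t}^{(t)} + \delta_{t:t+1}\vDelta_{\comp{\gS}_t}^{(t)} = \sum_{j=0}^{t}\Big((1+\delta_{j:j+1})\prod_{r=j+1}^{t}\delta_{r:r+1}\vr_{\gS_{j,t}}^{(j)}\Big),
\]
after which applying $\mQ$ term-by-term yields the claimed identity. The main bookkeeping obstacle is making sure the ``reduce to global Chebyshev and peel off the complement'' decomposition interacts correctly with the telescoping of $\vDelta_{\comp{\gS}_t}^{(t)}$ across all prior iterates; the algebraic content beyond that is just the Chebyshev coefficient identity and a straightforward induction on $t$ (with careful handling of the $t=1$ base case, where $\delta_{0:1} = 0$ and $\vx^{(1)} = \vr^{(0)}$ make the sum over $j$ reduce to the $j=0$ term).
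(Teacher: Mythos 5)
Your proposal is correct and follows essentially the same route as the paper's proof: rewrite $\vr^{(t+1)} = \vr^{(t)} - \mQ\hat{\vx}^{(t)}$, split the restricted vectors via $\vr_{\gS_t}^{(t)} = \vr^{(t)} - \vr_{\comp{\gS}_t}^{(t)}$ and $\vDelta_{\gS_t}^{(t)} = \vDelta^{(t)} - \vDelta_{\comp{\gS}_t}^{(t)}$, collapse the global part to $2\delta_{t+1}\mW\vr^{(t)} - \delta_{t:t+1}\vr^{(t-1)}$ using $\mQ\vDelta^{(t)} = \vr^{(t-1)} - \vr^{(t)}$ together with the Chebyshev coefficient identity $(1+\delta_{t:t+1})\tfrac{1-\alpha}{1+\alpha} = 2\delta_{t+1}$, and then apply Lemma~\ref{lemma:local-cheby-shev-iterative} (followed by $\mQ$) to the leftover supported on $\comp{\gS}_t$. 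The only cosmetic remark is that no fresh induction on $t$ is needed in this lemma itself --- the induction is entirely encapsulated in Lemma~\ref{lemma:local-cheby-shev-iterative}, which you invoke correctly.
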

\begin{proof}
We only need to show the second equation of \eqref{equ:local-cheby-shev}. The residual of \textsc{LocCH} updates is
\begin{align*}
\vr^{(t+1)} &= \vb - \mQ \vx^{(t+1)} \\
\vr^{(t+1)} &= \vb - \mQ\big(\vx^{(t)} + (1+\delta_{t:t+1}) \vr_{\gS_t}^{(t)} + \delta_{t:t+1}( \vx^{(t)} - \vx^{(t-1)} )_{\gS_{t}}\big) \\
&= \vr^{(t)} - (1+\delta_{t:t+1}) \mQ \vr_{\gS_t}^{(t)} - \delta_{t:t+1}\mQ( \vx^{(t)} - \vx^{(t-1)} )_{\gS_{t}} \\
&\underbrace{\vr^{(t+1)} + (1+\delta_{t:t+1}) \mQ \vr^{(t)} + \delta_{t:t+1}\mQ( \vx^{(t)} - \vx^{(t-1)} ) - \vr^{(t)}}_{\vu} \\
&= \underbrace{ (1+\delta_{t:t+1})\mQ\vr_{\overline {\gS}_t }^{(t)} +\delta_{t:t+1}\mQ\big( \vx^{(t)} - \vx^{(t-1)} \big)_{\overline {\gS}_t} }_{\text{small noisy part}}.
\end{align*}
Note $\mQ(\vx^{(t)} - \vx^{(t-1)}) = \vb - \mQ\vx^{(t-1)} - (\vb -  \mQ(\vx^{(t)} ) = \vr^{(t-1)} - \vr^{(t)}$ and then $\vu$ becomes
\begin{align*}
\vu &= \vr^{(t+1)} + (1+\delta_{t:t+1}) \mQ \vr^{(t)} + \delta_{t:t+1}(\vr^{(t-1)} - \vr^{(t)}) - \vr^{(t)} \\
&=\vr^{(t+1)} - 2\delta_{t+1} \mW \vr^{(t)}  + \delta_{t:t+1} \vr^{(t-1)},
\end{align*}
where the last equality is due to $(1+\delta_{t:t+1}) \mQ \vr^{(t)} = (1+\delta_{t:t+1}) \vr^{(t)} - 2\delta_{t:t+1} \mW \vr^{(t)}$ by noticing $(1 + \delta_t \delta_{t+1} )\frac{1-\alpha}{1+\alpha} = 2 \delta_{t+1}$ in Lemma \ref{lemma:chebyshev-poly-bound}. Hence, we have the second equation. To see the noisy part, note by Lemma \ref{lemma:local-cheby-shev-iterative}
\begin{align*}
(1+\delta_{t:t+1})\vr_{\overline {\gS}_t }^{(t)} + \delta_{t:t+1}\big( \vx^{(t)} - \vx^{(t-1)} \big)_{\comp\gS_{t}} &= (1+\delta_{t:t+1})\vr_{\overline {\gS}_t }^{(t)} + \sum_{j=0}^{t-1} \Big( (1+\delta_{j:j+1}) \prod_{r=j+1}^{t} \delta_{r:r+1} \vr_{ \gS_{j,t}}^{(j)} \Big) \\
(1+\delta_{t:t+1})\mQ\vr_{\overline {\gS}_t }^{(t)} + \delta_{t:t+1}\mQ\big( \vx^{(t)} - \vx^{(t-1)} \big)_{\comp\gS_{t}} &= \sum_{j=0}^{t} \Big( (1+\delta_{j:j+1}) \prod_{r=j+1}^{t} \delta_{r:r+1} \mQ\vr_{ \gS_{j,t}}^{(j)} \Big).
\end{align*}
\end{proof}

\begin{replemma}{lem:locch-residual}[Residual updates of \textsc{LocCH}]
Given $ t \geq 1, \vx^{(0)} = \bm 0$, $\vx^{(1)} = \vx^{0} + \vr_{\gS_0}^{(0)}$. The residual $\vr^{(t)}$ of \textsc{LocCH} defined in Equ.~\eqref{equ:local-cheby-shev} satisfies
\begin{equation}
\mV^\top \vr^{(t)} = \delta_{1:t} \mZ_t \mV^\top\vr^{(0)} +  \delta_{1:t} t \vu_{0,t} + 2 \sum_{k=1}^{t-1} \delta_{k+1:t} (t-k) \vu_{k,t}, \label{equ:local-Chebyshev-updates-rt}
\end{equation}
where 
\begin{align*}
\vu_{k,t} &=
\begin{cases}
\sum_{j=1}^{t-1} \delta_{2:j} \mH_{j,t} \left(\mI - \frac{1-\alpha}{1+\alpha} \mLambda \right) \mV^\top \vr_{ \gS_{0,j}}^{(0)} / t &\text{ if } k = 0 \\[.8em]
\sum_{j=k}^{t-1} \Big(\delta_{k+1:j} \mH_{j,t}  \left( \frac{1+\alpha}{1-\alpha} \mI - \mLambda \right) \mV^\top \vr_{ \gS_{k,j}}^{(k)} \Big) / (t - k) &\text{ if } k \geq 1, \\
\end{cases} \\[.5em]
\mZ_t &= \begin{cases}
\diag\left( 1, \ldots, \cos(\theta_i t), \ldots, (-1)^t \right) \quad \text{ for bipartite graphs } \\[7pt]
\diag\left( 1, \ldots, \cos(\theta_i t), \ldots, \cos(\theta_n t) \right) \quad \text{ for non-bipartite graphs},
\end{cases}\\[.5em]
\mH_{k,t} & = \begin{cases}
\diag\left( t-k, \ldots, \frac{\sin (\theta_i (t-k))}{\sin \theta_i}, \ldots, (-1)^{t-k-1}(t-k) \right) \quad \text{ for bipartite graphs } \\[10pt]
\diag\left( t-k, \ldots, \frac{\sin (\theta_i (t-k))}{\sin \theta_i}, \ldots, \frac{\sin (\theta_n (t-k))}{\sin \theta_n} \right) \quad \text{ for non-bipartite graphs}. \\[10pt]
\end{cases}
\end{align*}
\end{replemma}
\begin{proof}
We first decompose the residual equation in \eqref{equ:local-cheby-shev} as 
\begin{align*}
\mV^\top\vr^{(t+1)} - 2\delta_{t+1} \mLambda\mV^\top \vr^{(t)} &+ \delta_{t:t+1} \mV^\top\vr^{(t-1)}  \\
&= \underbrace{\sum_{j=0}^{t} \left( (1+\delta_{j:j+1}) \prod_{r=j+1}^{t} \left(\delta_{r:r+1}\right) \left(\mI - \frac{1-\alpha}{1+\alpha} \mLambda \right) \mV^\top \vr_{ \gS_{j,t}}^{(j)} \right)}_{\vf^{(t)}}. \nonumber    
\end{align*}
Define $\mV^\top \vr^{(t)} = \delta_{1:t} \vy^{(t)} $ and $\mV^\top \vr^{(0)} = \delta_{1:0}\vy^{(0)} = \vy^{(0)}$ by default. Then we have 
\begin{align*}
\delta_{1:t+1}\vy^{(t+1)} - 2 \delta_{1:t+1} \mLambda\vy^{(t)} + \delta_{1:t+1} \vy^{(t-1)}  = \vf^{(t)}  \ \Rightarrow\  \vy^{(t+1)} - 2 \mLambda\vy^{(t)} + \vy^{(t-1)}  = \tfrac{\vf^{(t)}}{\delta_{1:t+1}}.
\end{align*}
Note $\mV^\top \vr^{(1)} = \delta_1 \vy^{(1)} = \mV^\top \delta_1 \mW \vr^{(0)} = \delta_1 \mLambda \vy^{(0)}$, which indicates $\vy^{(1)} = \mLambda \vy^{(0)}$. Then, by the Lemma \ref{lemma:second-order-equ-nonhomo},  each $y_{i}^{(t)}$ has the solution
\begin{equation}
y_{i}^{(t)} = \begin{cases}
y_{i}^{(0)}  + \sum_{k=1}^{t-1} (t-k) f_{i}^{(k)} / (\delta_{1:k+1}) & \text{ if }\quad \lambda_i = 1 \\[.6em]
\cos(\theta_i t) y_{i}^{(0)} + \sum_{k=1}^{t-1} \frac{\sin(\theta_i (t-k))}{\sin \theta_i} f_{i}^{(k)} / (\delta_{1:k+1}) & \text{ if }\quad |\lambda_i| < 1 \\[.6em]
(-1)^t y_{i}^{(0)} + \sum_{k=1}^{t-1} (-1)^{t-k-1}(t-k) f_{i}^{(k)} / (\delta_{1:k+1}) & \text{ if }\quad \lambda_i = -1.
\end{cases}   \nonumber
\end{equation}
Use $\mZ_t$ and $\mH_{k,t}$, we write the solution of the second-order difference equation as
\begin{align*}
\vy^{(t)} &= \mZ_t \vy^{(0)} + \sum_{k=1}^{t-1} \mH_{k,t} \vf^{(k)} / (\delta_{1:k+1}) \\
\mV^\top \vr^{(t)} &= \delta_{1:t} \vy^{(t)} = \delta_{1:t} \mZ_t \mV^\top\vr^{(0)} + \delta_{1:t} \sum_{k=1}^{t-1} \mH_{k,t} \vf^{(k)} / (\delta_{1:k+1}) \\
\mV^\top \vr^{(t)} &= \delta_{1:t} \mZ_t \mV^\top\vr^{(0)} +  \sum_{k=1}^{t-1} \delta_{k+2:t}\mH_{k,t} \sum_{j=0}^{k} \left( (1+\delta_{j:j+1}) \prod_{r=j+1}^{k} \left(\delta_{r:r+1}\right) \left(\mI - \frac{1-\alpha}{1+\alpha} \mLambda \right) \mV^\top \vr_{ \gS_{j,k}}^{(j)} \right).
\end{align*}
Note $1+\delta_{j:j+1} = 2 \delta_{j+1} \frac{1+\alpha}{1-\alpha}, j \geq 1$, then $(1+\delta_{j:j+1}) \prod_{r=j+1}^{k} \left(\delta_{r:r+1}\right) = 2 \frac{1+\alpha}{1-\alpha} \delta_{j+1:k} \delta_{j+1:k+1}$. Then, we have
\begin{align*}
\mV^\top \vr^{(t)} = \delta_{1:t} \mZ_t \mV^\top\vr^{(0)} &+  \sum_{k=1}^{t-1} \delta_{k+2:t}\mH_{k,t} \left( \delta_{2:k} \delta_{1:k+1} \left(\mI - \frac{1-\alpha}{1+\alpha} \mLambda \right) \mV^\top \vr_{ \gS_{0,k}}^{(0)} \right) \\
&+ 2 \sum_{k=1}^{t-1} \delta_{k+2:t}\mH_{k,t} \sum_{j=1}^{k} \left( \delta_{j+1:k} \delta_{j+1:k+1} \left(\frac{1+\alpha}{1-\alpha}\mI - \mLambda \right) \mV^\top \vr_{ \gS_{j,k}}^{(j)} \right) \\
= \delta_{1:t} \mZ_t \mV^\top\vr^{(0)} &+  \delta_{1:t} \sum_{k=1}^{t-1} \delta_{2:k}\mH_{k,t} \left(\mI - \frac{1-\alpha}{1+\alpha} \mLambda \right) \mV^\top \vr_{ \gS_{0,k}}^{(0)} \\
&+ 2 \sum_{k=1}^{t-1} \delta_{k+2:t}\mH_{k,t} \sum_{j=1}^{k} \left( \delta_{j+1:k} \delta_{j+1:k+1} \left(\frac{1+\alpha}{1-\alpha} \mI - \mLambda \right) \mV^\top \vr_{ \gS_{j,k}}^{(j)} \right),
\end{align*}
where the last term can be expanded as
\begin{align*}
&\sum_{k=1}^{t-1} \delta_{k+2:t}\mH_{k,t} \sum_{j=1}^{k} \Bigg( \delta_{j+1:k} \delta_{j+1:k+1} \underbrace{\left(\frac{1+\alpha}{1-\alpha}\mI -  \mLambda \right) \mV^\top \vr_{ \gS_{j,k}}^{(j)}}_{ \vw_{ \gS_{j,k}}^{(j)} } \Bigg) = \\
&\delta_{3:t}\mH_{1,t} \delta_{2:1} \delta_{2:2} \vw_{\gS_{1,1}}^{(1)} + \\
&\delta_{4:t}\mH_{2,t} \delta_{2:2} \delta_{2:3} \vw_{\gS_{1,2}}^{(1)} + \delta_{4:t} \mH_{2,t} \delta_{3:2} \delta_{3:3} \vu_{\gS_{2,2}}^{(2)} + \\
&\delta_{5:t}\mH_{3,t} \delta_{2:3} \delta_{2:4} \vw_{\gS_{1,3}}^{(1)} + \delta_{5:t} \mH_{3,t} \delta_{3:3} \delta_{3:4} \vw_{\gS_{2,3}}^{(2)} + \delta_{5:t} \mH_{3,t} \delta_{4:3} \delta_{4:4} \vw_{\gS_{3,3}}^{(3)} +\\
&\delta_{6:t}\mH_{4,t} \delta_{2:4} \delta_{2:5} \vw_{\gS_{1,4}}^{(1)} + \delta_{6:t} \mH_{4,t} \delta_{3:4} \delta_{3:5} \vw_{\gS_{2,4}}^{(2)} + \delta_{6:t} \mH_{4,t} \delta_{4:4} \delta_{4:5} \vw_{\gS_{3,4}}^{(3)} + \delta_{6:t} \mH_{4,t} \delta_{5:4} \delta_{5:5} \vw_{\gS_{4,4}}^{(4)} + \\
&\delta_{7:t}\mH_{5,t} \delta_{2:5} \delta_{2:6} \vw_{\gS_{1,5}}^{(1)} + \delta_{7:t} \mH_{5,t} \delta_{3:5} \delta_{3:6} \vw_{\gS_{2,5}}^{(2)} + \delta_{7:t} \mH_{5,t} \delta_{4:5} \delta_{4:6} \vw_{\gS_{3,5}}^{(3)} + \delta_{7:t} \mH_{5,t} \delta_{5:5} \delta_{5:6} \vw_{\gS_{4,5}}^{(4)} \\
&\qquad + \delta_{7:t} \mH_{5,t} \delta_{6:5} \delta_{6:6} \vw_{\gS_{5,5}}^{(5)} + \\
&\vdots \\
&= \sum_{k=1}^{t-1} \delta_{k+1:t} \sum_{j=k}^{t-1} \Bigg(\delta_{k+1:j} \mH_{j,t}  \left( \frac{1+\alpha}{1-\alpha} \mI -  \mLambda \right) \mV^\top \vr_{ \gS_{k,j}}^{(k)} \Bigg).
\end{align*}
Here, we denote $\delta_{k+1:k} = 1$.
The final iterative update is 
\begin{align*}
\mV^\top \vr^{(t)} &= \delta_{1:t} \mZ_t \mV^\top\vr^{(0)} +  \delta_{1:t} t \underbrace{\sum_{j=1}^{t-1} \delta_{2:j} \mH_{j,t} \left(\mI - \frac{1-\alpha}{1+\alpha} \mLambda \right) \mV^\top \vr_{ \gS_{0,j}}^{(0)} \Big/ t}_{ \vu_{0,t} } \\
&+ 2 \sum_{k=1}^{t-1} \delta_{k+1:t} (t - k) \underbrace{\sum_{j=k}^{t-1} \Bigg(\delta_{k+1:j} \mH_{j,t}  \left(\frac{1+\alpha}{1-\alpha}\mI - \mLambda \right) \mV^\top \vr_{ \gS_{k,j}}^{(k)} \Bigg) \Big/ (t - k) }_{ \vu_{k,t}}.
\end{align*}
\end{proof}

\subsection{Convergence of \textsc{LocCH} and Proof of Theorem~\ref{th:bound_chebychev}}

\begin{corollary}
Let $\beta_k$ be lower bound of residual reduction satisfies $\| \vu_{k,t} \|_2 \leq \beta_k \| \vr^{(k)}\|_2$, then the upper bound of $\|\vr^{(t)}\|_2$ can be characterized as
\begin{equation}
\|\vr^{(t)}\|_2 \leq \delta_{1:t} \prod_{j=0}^{t-1} (1+\beta_j) y_t, \text{ where } y_{t+1} - 2 y_t + \frac{y_{t-1}}{ (1+ \beta_{t-1})(1+\beta_t) }  =0,
\end{equation}
where $y_0 =  y_1 = \|\vr^{0}\|_2$. 
\label{corollary:d-10}
\end{corollary}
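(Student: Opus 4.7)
The plan is to push through Lemma~\ref{lem:locch-residual} to get a scalar recursive inequality for a rescaled residual $a_t := \|\vr^{(t)}\|_2/\delta_{1:t}$, then recognize that $c_t := \prod_{j=0}^{t-1}(1+\beta_j)\,y_t$ satisfies the same recursive inequality as an equality, and finish by strong induction.

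First, taking the $2$-norm in the decomposition of Lemma~\ref{lem:locch-residual} and using the orthogonality of $\mV$ together with the given $\|\mZ_t\|_2 \leq 1$ and the hypothesis $\|\vu_{k,t}\|_2 \leq \beta_k\,\|\vr^{(k)}\|_2$, I get
\begin{equation*}
\|\vr^{(t)}\|_2 \leq \delta_{1:t}\,\|\vr^{(0)}\|_2 + \delta_{1:t}\, t\, \beta_0\,\|\vr^{(0)}\|_2 + 2\sum_{k=1}^{t-1} \delta_{k+1:t}(t-k)\,\beta_k\, \|\vr^{(k)}\|_2.
\end{equation*}
Using the telescoping identity $\delta_{1:k}\,\delta_{k+1:t} = \delta_{1:t}$ from Lemma~\ref{lemma:chebyshev-poly-bound} and dividing by $\delta_{1:t}$, the rescaled quantity $a_t$ obeys
\begin{equation*}
a_t \leq (1 + t\beta_0)\,\|\vr^{(0)}\|_2 + 2\sum_{k=1}^{t-1}(t-k)\,\beta_k\, a_k.
\end{equation*}

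Next, I set $c_t := \prod_{j=0}^{t-1}(1+\beta_j)\,y_t$. Substituting the given recurrence for $y_t$, one checks directly that $c_t$ satisfies $c_{t+1} - 2(1+\beta_t)\,c_t + c_{t-1}=0$, or equivalently $c_{t+1} - 2 c_t + c_{t-1} = 2\beta_t\, c_t$. Treating this as a second-order difference equation in $c_t$ with ``forcing'' $2\beta_t c_t$, the $\lambda_i = 1$ case of Lemma~\ref{lemma:second-order-equ-nonhomo} (discrete variation of parameters) gives
\begin{equation*}
c_t = c_0 + t(c_1 - c_0) + 2\sum_{k=1}^{t-1}(t-k)\,\beta_k\, c_k = (1+t\beta_0)\,\|\vr^{(0)}\|_2 + 2\sum_{k=1}^{t-1}(t-k)\,\beta_k\, c_k,
\end{equation*}
where I used the initial conditions $c_0 = y_0 = \|\vr^{(0)}\|_2$ and $c_1 = (1+\beta_0)\,y_1 = (1+\beta_0)\,\|\vr^{(0)}\|_2$.

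Comparing the two displays above, the inequality satisfied by $a_t$ and the equality satisfied by $c_t$ share exactly the same constant term and the same linear kernel $2(t-k)\beta_k$. The conclusion then follows by strong induction on $t$: the base cases $a_0 = \|\vr^{(0)}\|_2 = c_0$ and $a_1 \leq \|\vr^{(0)}\|_2 \leq c_1$ are immediate since the sums in Lemma~\ref{lem:locch-residual} are empty for $t=1$; the inductive step is a termwise comparison. Undoing the rescaling gives $\|\vr^{(t)}\|_2 = \delta_{1:t}\,a_t \leq \delta_{1:t}\,c_t = \delta_{1:t}\prod_{j=0}^{t-1}(1+\beta_j)\,y_t$, which is the claim. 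The only step that requires care is the algebraic bookkeeping that lines up the integral form of $c_t$ with the kernel $2(t-k)\beta_k$ appearing in the bound on $a_t$; once those match, both the base cases and the inductive step are routine.
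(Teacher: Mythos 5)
Your proof is correct, and it reaches the same scalar inequality
\[
\|\vr^{(t)}\|_2 \le \delta_{1:t}(1+t\beta_0)\|\vr^{(0)}\|_2 + 2\sum_{k=1}^{t-1}\delta_{k+1:t}(t-k)\beta_k\|\vr^{(k)}\|_2,
\]
but handles it by a genuinely different route than the paper does. The paper keeps the inequality unnormalized, stacks the $T$ copies into a matrix system $(\mI-\mZ_L)\vw \le \vc$ with $\mZ_L$ strictly lower triangular and nonnegative, invokes $(\mI-\mZ_L)^{-1}\ge 0$ to pass to a sequence $z_t$ satisfying the summation relation with equality, and then — following the technique of \citet{golub1988convergence} — performs three successive differencing manipulations to turn that summation relation into the second-order recurrence $z_{t+1}-2(1+\beta_t)\delta_{t+1}z_t+\delta_{t:t+1}z_{t-1}=0$, finally peeling off $\delta_{1:t}$ by the substitution $z_t=\delta_{1:t}\hat z_t$. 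You normalize by $\delta_{1:t}$ at the outset via the telescoping identity $\delta_{1:k}\delta_{k+1:t}=\delta_{1:t}$ (Lemma~\ref{lemma:chebyshev-poly-bound}), define the comparison sequence $c_t=\prod_{j<t}(1+\beta_j)\,y_t$ directly from the target, and run the manipulation in the opposite direction: the $\lambda_i=1$ case of Lemma~\ref{lemma:second-order-equ-nonhomo} turns $c_{t+1}-2c_t+c_{t-1}=2\beta_t c_t$ back into the same summation form as the bound on $a_t$, and a strong induction (using $\beta_k\ge0$ and $t-k\ge0$) finishes. Your version avoids the matrix-inverse nonnegativity argument and the triple-differencing, reuses a lemma the paper already proved, and makes the comparison mechanism explicit; the paper's version follows an established reference and isolates the explicit envelope sequence $z_t$, which it refers back to in later discussion. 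One small caveat in your write-up: the recurrence $c_{t+1}-2c_t+c_{t-1}=2\beta_t c_t$ is not a genuine forced linear recurrence (the forcing depends on $c_t$ itself), so the ``variation-of-parameters'' display is better read as an algebraic identity obtained by double summation — which is exactly what Lemma~\ref{lemma:second-order-equ-nonhomo} encodes — rather than as a solution formula; the argument is sound but the phrasing should make this distinction to avoid appearing circular.
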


\begin{proof}
Since $\| \vu_{k,t} \|_2 \leq \beta_k \| \vr^{(k)}\|_2$, the final iterative updates \eqref{equ:local-Chebyshev-updates-rt} can be bounded as
\begin{align}
\mV^\top \vr^{(t)} &= \delta_{1:t} \mZ_t \mV^\top\vr^{(0)} +  \delta_{1:t} t \vu_{0,t} + 2 \sum_{k=1}^{t-1} \delta_{k+1:t} (t-k) \vu_{k,t} \nonumber\\
\|\vr^{(t)}\|_2 &\leq \delta_{1:t} \|\vr^{(0)}\|_2 +  \delta_{1:t} t \beta_0 \|\vr^{(0)}\|_2 + 2 \sum_{k=1}^{t-1} \delta_{k+1:t} (t - k)\beta_k \|\vr^{(k)}\|_2 \nonumber\\
\|\vr^{(t)}\|_2 &- 2 \sum_{k=1}^{t-1} \delta_{k+1:t} (t - k) \beta_k \|\vr^{(k)}\|_2 \leq \delta_{1:t} (1 + t \beta_0) \|\vr^{(0)}\|_2  \label{equ:final-inequality},
\end{align}
where $t=0,1,\ldots,T$. These $T+1$ (including a trivial one where $\|\vr^{(0)}\|_2 \leq \|\vr^{(0)}\|_2$) inequalities shown in Equ.~\eqref{equ:final-inequality} form a system of linear inequality matrix as 
\begin{align*}
\underbrace{\begin{pmatrix}
1 & 0 & 0 & \cdots & 0 \\[.4em]
-z_{21} & 1 & 0 & \cdots & 0 \\[.4em]
- z_{31} & - z_{32} & 1 & \cdots & 0 \\[.4em]
\vdots & \vdots & \vdots & \ddots & \vdots \\[.4em]
- z_{T 1} & - z_{T 2} & - z_{T 3} & \cdots & 1
\end{pmatrix}}_{\mI - \mZ_L}
\begin{pmatrix}
\|\vr^{(1)}\|_2 \\[.4em]
\|\vr^{(2)}\|_2 \\[.4em]
\|\vr^{(3)}\|_2 \\[.4em]
\vdots \\[.4em]
\|\vr^{(T-1)}\|_2 
\end{pmatrix} \leq  \begin{pmatrix}
\delta_{1:1}(1 + 1 \beta_0 )\| \vr^{(0)}\|_2 \\[.4em]
\delta_{1:2}(1 + 2 \beta_0 )\| \vr^{(0)}\|_2 \\[.4em]
\delta_{1:3}(1 + 3 \beta_0 )\| \vr^{(0)}\|_2 \\[.4em]
\vdots \\[.4em]
\delta_{1:T}(1 + T \beta_0 t )\| \vr^{(0)}\|_2
\end{pmatrix} := \vc,
\end{align*}
where $(\mZ_L)_{t k} = 2 \delta_{k+1:t} (t - k) \beta_k$ for $t = 2,3,\ldots, T$ and $k = 1,2,\dots, t- 1$. Assume that $\mN \in \R^{T\times T}$ is a strictly lower triangular matrix, then we know the established formula $(\mI + \mN)^{-1}= \mI+\sum_{k=1}^{T-1}(-1)^k \mN^k$. Hence, we have the following
\begin{align*}
\left( \mI - \mZ_L \right)^{-1} = \mI + \sum_{k=1}^{T-1} \mZ_L^k.
\end{align*}
Given that $\left( \mI - \mZ_L \right)^{-1} \geq \bm 0$, then we obtain an upper-bound of $\begin{pmatrix}
\|\vr^{(1)}\|_2 \\[.4em]
\|\vr^{(2)}\|_2 \\[.4em]
\vdots \\[.4em]
\|\vr^{(T)}\|_2 
\end{pmatrix} \leq \vz \triangleq \left( \mI - \mZ_L \right)^{-1} \vc$. It leads to the following new second-order difference equation
\begin{align*}
z_t - 2 \sum_{k=1}^{t-1}  \delta_{k+1:t} (t-k) \beta_k z_k &=  \delta_{1:t} (1 + t \beta_0) z_0, \qquad \text{ for } t = 1,2,\ldots, T,
\end{align*}
where the initial value of $z_0 = \| \vr^{(0)}\|_2$. Following the argument in Theorem 1 of \citet{golub1988convergence}, we construct a second-order homogeneous equation for $z_t$ as
\begin{align}
z_t &= \delta_{1:t} (1 + t \beta_0) z_0 + 2  \sum_{k=1}^{t-1}  \delta_{k+1:t} (t-k) \beta_k z_k \nonumber\\  
z_{t+1} &= \delta_{1:t+1} (1 + (t+1) \beta_0) z_0 + 2  \sum_{k=1}^{t}  \delta_{k+1:t+1} (t+1-k) \beta_k z_k \nonumber\\  
\delta_{t+1} z_t &= \delta_{1:t+1} (1 + t \beta_0) z_0 + 2  \sum_{k=1}^{t}  \delta_{k+1:t+1} (t-k) \beta_k z_k \nonumber\\  
z_{t+1} - \delta_{t+1} z_t &= \delta_{1:t+1} \beta_0 z_0 + 2  \sum_{k=1}^{t}  \delta_{k+1:t+1} \beta_k z_k, \label{equ:corollar-e8-01}
\end{align}
where Equ.~\eqref{equ:corollar-e8-01} is obtained by the difference between the second equation and the third equation. Similarly, 
\begin{align}
z_{t-1} &= \delta_{1:t-1} (1 + (t-1) \beta_0) z_0 + 2  \sum_{k=1}^{t-2}  \delta_{k+1:t-1} (t-k-1) \beta_k z_k \nonumber\\ 
\delta_{t:t+1} z_{t-1} &= \delta_{1:t+1} (1 + (t-1) \beta_0) z_0 + 2  \sum_{k=1}^{t-2}  \delta_{k+1:t+1} (t-k-1) \beta_k z_k \nonumber\\ 
\delta_{t+1} z_{t} - \delta_{t:t+1} z_{t-1} &= \delta_{1:t+1} \beta_0 z_0 + 2  \sum_{k=1}^{t-1}  \delta_{k+1:t+1} \beta_k z_k, \label{equ:corollar-e8-02}
\end{align}
where Equ.~\eqref{equ:corollar-e8-02} is obtained by the difference of the first two. Hence, Equ.~\eqref{equ:corollar-e8-01}~$-$~\eqref{equ:corollar-e8-02} gives us
\begin{align*}
z_{t+1}- 2 (1+\beta_t) \delta_{t+1} z_t + \delta_{t:t+1} z_{t-1} = 0, \qquad \text{ for } t = 1,2,\ldots,T,
\end{align*}
where two initials are $z_0 = \| \vr^{(0)}\|_0$ and $z_1 = \delta_1 (1 + \beta_0) \|\vr^{(0)}\|_2$. Let $z_{t} = \delta_{1:t} \hat{z}_t$, then
\begin{align*}
\hat{z}_{t+1}- 2 (1 + \beta_t) \hat{z}_t + \hat{z}_{t-1} = 0,
\end{align*}
where two initials are $\hat{z}_0 = z_0 = \| \vr^{(0)}\|_0$ and $\hat{z}_1 = (1+\beta_0) \| \vr^{(0)}\|_2$. We finish the proof by  setting $\prod_{j=0}^{t-1}(1+\beta_j)y_t = \hat z_t$. 
\end{proof}
\begin{remark}
Key points of the above proof strategy largely follow from \citet{golub1988convergence}. However, different from the original technique, we generalize the strategy to a parameterized version.
\end{remark}

\begin{lemma}
Given $\beta_j \geq 0$, the following second-order difference equation
\begin{align*}
x_{t+1}- 2 (1 + \beta_t) x_t + x_{t-1} = 0.
\end{align*}
has the following solution 
\[
x_t = \prod_{j=0}^{t-1} (1 + \beta_j) y_t,
\]
where $y_{t+1} - 2 y_t + \frac{y_{t-1}}{ (1+ \beta_{t-1})(1+\beta_t) }  =0$ with $y_0 = x_0$ and $y_1 = x_1 / (1+\beta_0)$.
\end{lemma}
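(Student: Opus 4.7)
The plan is to prove this by a direct substitution, which is essentially a change-of-variables/normalization argument of the kind commonly used to remove a variable-coefficient first-difference term from a linear recurrence. Define $P_t \triangleq \prod_{j=0}^{t-1}(1+\beta_j)$ with the convention $P_0 = 1$ (empty product), so that the factor multiplying $y_t$ in the claimed solution is $P_{t}$. The main identities I will rely on are the telescoping relations $P_{t+1} = (1+\beta_t)P_t$ and $P_{t-1} = P_t / (1+\beta_{t-1})$.

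The computation is a one-liner. Substituting $x_t = P_t y_t$ into the left-hand side of the original recurrence gives
\[
x_{t+1} - 2(1+\beta_t)x_t + x_{t-1} = (1+\beta_t)P_t \, y_{t+1} - 2(1+\beta_t) P_t \, y_t + \frac{P_t}{1+\beta_{t-1}}\, y_{t-1},
\]
after which I factor out $(1+\beta_t)P_t > 0$ to obtain
\[
(1+\beta_t)P_t \left[y_{t+1} - 2 y_t + \frac{y_{t-1}}{(1+\beta_{t-1})(1+\beta_t)}\right] = 0,
\]
which is exactly the stated recurrence for $\{y_t\}$. Conversely, since $(1+\beta_t)P_t \neq 0$, the bracketed quantity vanishes if and only if the original equation holds, so the correspondence $x_t \leftrightarrow y_t$ is an equivalence of solution spaces.

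For initial data, $P_0 = 1$ forces $y_0 = x_0$, and $P_1 = 1+\beta_0$ forces $y_1 = x_1/(1+\beta_0)$, matching the claim. There is essentially no obstacle here; the only thing to be careful about is the index bookkeeping in the empty product and in the factor $P_t/(1+\beta_{t-1})$, which implicitly requires $t \geq 1$ (consistent with the recurrence being stated for $t\geq 1$). I would present the proof as a short verification and note that the transformation is reversible, which is useful because Corollary~\ref{corollary:d-10} uses this lemma to pass between $x_t$ (the bound on $\|\vr^{(t)}\|_2$-type quantity) and the normalized sequence $y_t$ whose Chebyshev-like recurrence can then be analyzed by the standard techniques already developed in Lemmas~\ref{lemma:second-order-equ}--\ref{lemma:chebyshev-poly-bound}.
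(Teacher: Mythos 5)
Your proof is correct and uses the same direct-substitution argument as the paper; the only (cosmetic) difference is that the paper's ansatz $x_t = (-\tfrac12)^t \prod_{j=0}^{t-1}(-2(1+\beta_j))\,y_t$ carries redundant factors of $(-\tfrac12)^t$ and $(-2)^t$ that cancel to leave exactly your $P_t = \prod_{j=0}^{t-1}(1+\beta_j)$, so your version is a cleaner rendering of the identical computation. The remark that the transformation is a reversible equivalence of solution spaces is a nice addition consistent with how the lemma is used downstream.
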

\begin{proof}
Assume $x_t =\left(-\frac{1}{2}\right)^{t} \prod_{j=0}^{t-1} \left(-2(1+\beta_j)\right) y_t$. Then, following the equation, we have
\begin{align*}
&\left(-\frac{1}{2}\right)^{t+1} \prod_{j=0}^{t} \left(-2(1+\beta_j)\right) y_{t+1} \\
&- 2 (1 + \beta_t) \left(-\frac{1}{2}\right)^{t} \prod_{j=0}^{t-1} \left(-2(1+\beta_j)\right) y_t + \left(-\frac{1}{2}\right)^{t-1} \prod_{j=0}^{t-2} \left(-2(1+\beta_j)\right) y_{t-1} = 0.
\end{align*}
Since $\beta_j \geq 0$, the term $\prod_{j=0}^{t} \left(-2(1+\beta_j)\right) \neq 0$, we divide it on both sides to have
\[
\left(-\frac{1}{2}\right)^{t+1}  y_{t+1} + \left(-\frac{1}{2}\right)^{t} y_t + \left(-\frac{1}{2}\right)^{t-1} \frac{1}{4(1+\beta_{t-1})(1+\beta_t)} y_{t-1} = 0.
\]
Hence, it is simplified into $ y_{t+1} - 2 y_t +  \frac{y_{t-1}}{(1+\beta_{t-1})(1+\beta_t)} = 0$. To make a simplification on $x_t$, we prove the lemma.
\end{proof}

\begin{reptheorem}{th:bound_chebychev}[Runtime bound of \textsc{LocCH}]

Given the configuration $\theta=(\alpha,\eps,s,\gG)$ with $\alpha\in(0,1)$ and $\eps \leq 1/d_s$ and let $\vr^{(T)}$ and $\vx^{(T)}$ be returned by \textsc{LocCH} defined in~\eqref{equ:algo-LocCH} for solving Equ.~\eqref{equ:Qx=b}. For $t\geq 1$, the residual magnitude $\|\vr^{(t)}\|_2$ has the following convergence bound
\vspace{-1ex}
\begin{equation}
\|\vr^{(t)}\|_2 \leq \delta_{1:t} \prod_{j=0}^{t-1} (1+\beta_{j}) y_t, \nonumber
\end{equation}
where $y_t$ is a sequence of positive numbers solving $y_{t+1} - 2 y_t + y_{t-1} / ((1+ \beta_{t-1})(1+\beta_{t})) = 0$ with  $y_0 =  y_1 = \|\vr^{(0)}\|_2$. Suppose the geometric mean $\mean\beta_{t} \triangleq (\prod_{j=0}^{t-1} (1+\beta_j))^{1/t}$ of $\beta_t$ be such that $\mean\beta_{t} = 1 + \frac{c \sqrt{\alpha} }{1-\sqrt{\alpha}}$ where $c \in [0,2)$. There exists a real implementation of \eqref{equ:algo-LocGD} such that the runtime $\gT_{\textsc{LocCH}}$ is bounded by 
\[
\gT_\textsc{LocCH} \leq \Theta \left( \frac{ (1+\sqrt{\alpha})\mean{\vol}(\gS_T) }{\sqrt{\alpha}(2-c)} \ln \frac{2 y_T}{\eps} \right).
\]
\end{reptheorem}
\begin{proof}
The convergence bound of $\vr^{(t)}$ directly follows from Corollary \ref{corollary:d-10}. Since we assume that there exists $c \in [0,2)$ such that $\prod_{j=0}^{t-1} (1+\beta_j) \leq \left(1+ \frac{c\sqrt{\alpha}}{1 - \sqrt{\alpha}} \right)^t$. Then multiplying both sides by $\tilde\alpha^t$, we have
\[
\tilde{\alpha}^t \prod_{j=0}^{t-1} (1+\beta_j)  \leq \left( 1 - \frac{(2-c) \sqrt{\alpha} }{ 1+ \sqrt{\alpha} } \right)^t.
\]
Then we have
\begin{align*}
\|\vr^{(t)}\|_2 &\leq \delta_{1:t} \prod_{j=0}^{t-1} (1+\beta_j) y_t \overset{\delta_{1:t}\leq 2\tilde\alpha^t}{\leq} 
2\tilde\alpha^t\bar\beta^t_t y_t
\leq \eps \\
t \ln \left( \frac{1-\sqrt{\alpha}}{ 1+\sqrt{\alpha}} \Big(\prod_{j=0}^{t-1} (1+\beta_j)\Big)^{1/t}\right) &\leq \ln\left(\frac{\eps}{2 y_t}\right) \\
t &\geq \left\lceil \ln\left(\frac{2 y_t}{\eps}\right) \Bigg/ \ln \left( \frac{1+\sqrt{\alpha}}{ \big(1-\sqrt{\alpha}\big) \mean{\beta}_t } \right) \right\rceil \\
\end{align*}
Since $\mean{\beta}_t = \big(\prod_{j=0}^{t-1} (1+\beta_j)\big)^{1/t}$, and by using $ \frac{1+x}{x} \geq \frac{1}{\ln(1+x)}$ and letting $x = \frac{1+\sqrt{\alpha}}{ \big(1-\sqrt{\alpha}\big) \mean{\beta}_t } - 1 > 0$, then $t$ can be lower bounded further by
\[
t \geq \left\lceil \frac{1+\sqrt{\alpha}}{ 1+\sqrt{\alpha} - (1-\sqrt{\alpha}) \mean\beta_t } \ln\left( \frac{2 y_t}{ \eps } \right) \right\rceil \geq \left\lceil \ln\left(\frac{2 y_t}{\eps}\right) \Bigg/ \ln \left( \frac{1+\sqrt{\alpha}}{ \big(1-\sqrt{\alpha}\big) \mean{\beta}_t } \right) \right\rceil.
\]
Since we assumed $\beta_t = (1 + \frac{c \sqrt{\alpha} }{ 1 -\sqrt{\alpha} })$, which means $1\leq \bar \beta_t = (1 + \frac{c \sqrt{\alpha} }{ 1 -\sqrt{\alpha} })$, so  $\mean\beta_t \in \left[1, \frac{1+\sqrt{\alpha}}{ 1-\sqrt{\alpha} } \right]$. Then, we find such an upper bound of $t$ so that \textsc{LocCH} converges.
\[
t = \left\lceil \frac{1+\sqrt{\alpha}}{ 1+\sqrt{\alpha} - (1-\sqrt{\alpha}) \mean\beta_t }  \ln\left( \frac{2 y_t}{ \eps } \right) \right\rceil = \left\lceil \frac{1+\sqrt{\alpha}}{ (2-c)\sqrt{\alpha}} \ln\left( \frac{2 y_t}{ \eps } \right) \right\rceil.
\]
\end{proof}

\subsection{Implementation of \textsc{LocCH}}

We present the implementation of \textsc{LocCH} as follows: Recall the sequence $\delta_{t+1} = \big(2  \frac{1+\alpha}{1-\alpha} - \delta_t\big)^{-1}, t = 1,2,\ldots
$ with $\delta_1 = \frac{1-\alpha}{1+\alpha}$. Denote $\tilde{\vx}^{(t)} \triangleq {\vx}^{(t)} - {\vx}^{(t-1)}, \vDelta^{(t)} := (1+\delta_{t:t+1}) \vr^{(t)} + \delta_{t:t+1}\tilde{\vx}^{(t)}$, we have
\begin{align*}
\vx^{(t+1)} &= \vx^{(t)} + (1+\delta_{t:t+1}) \vr_{\gS_t}^{(t)} + \delta_{t:t+1} \tilde{\vx}_{\gS_t}^{(t)} = \vx^{(t)} + \vDelta_{\gS_t}^{(t)} \\
\vr^{(t+1)} &=  \vb - \mQ \left( \vx^{(t)} + (1 + \delta_{t:t+1}) \vr_{\gS_t}^{(t)} + \delta_{t:t+1}\tilde{\vx}_{\gS_t}^{(t)} \right) = \vr^{(t)} - \mQ \vDelta_{\gS_t}^{(t)} \\
\tilde{\vx}^{(t+1)} &= \tilde{\vx}^{(t)} + \vDelta_{\gS_t}^{(t)} - \vDelta_{\gS_{t-1}}^{(t-1)}.
\end{align*}
\begin{itemize}[leftmargin=*]
\item When $t= 0$, we have $\vx^{(0)} = \bm 0,\quad \vr^{(0)} = \vb, \quad
\tilde{\vx}^{(0)} = \bm 0, \quad \vDelta^{(0)} = \vr^{(0)}$.
\item When $t = 1$, we have $\vx^{(1)} = \vr_{\gS_0}^{(0)}, \quad
\vr^{(1)} = \frac{1-\alpha}{1+\alpha} \mW \vDelta_{\gS_0}^{(0)},\quad \tilde{\vx}^{(1)} = \vr_{\gS_0}^{(0)},\quad
 \vDelta^{(1)} = (1+\delta_{1:2})\vr^{(1)} + \delta_{1:2} \tilde{\vx}^{(1)}$.
\item When $t\geq 1$, we can recursively calculate the following vectors 
\begin{align*}
\vx^{(t+1)} &= \vx^{(t)} +  \vDelta_{\gS_t}^{(t)}\\
\vr^{(t+1)} &= \vr^{(t)} - \vDelta_{\gS_t}^{(t)} + \frac{1-\alpha}{1+\alpha} \mW \vDelta_{\gS_t}^{(t)} \\
\tilde{\vx}^{(t+1)} &= \tilde{\vx}^{(t)} + \vDelta_{\gS_t}^{(t)} - \vDelta_{\gS_{t-1}}^{(t-1)}.
\end{align*}
Therefore, at per-iteration, we only need to save sub-vectors $\vDelta_{\gS_t}$ and $\vDelta_{\gS_{t-1}}$ and update $\vx$ locally.
\end{itemize}

\section{Local Heavy-Ball Method - \textsc{LocHB}}

\subsection{Standard HB and Proof Theorem~\ref{thm:convergence-of-hb}}

\begin{lemma}[The standard HB updates]
The updates $\vx^{(t)}$ and $\vr^{(t)}$ of the HB method for solving Equ.~\eqref{equ:f(x)} can be written as
\begin{align*}
\vx^{(t+1)} &=  \vx^{(t)} + (1+\tilde\alpha^2) \vr^{(t)} + \tilde{\alpha}^2\big(\vx^{(t)}-\vx^{(t-1)}\big) \\
\vr^{(t+1)} &=  2 \tilde{\alpha} \mW \vr^{(t)} - \tilde{\alpha}^2\vr^{(t-1)}.
\end{align*}
The residual updates can be rewritten as a second-order homogeneous equation
\[
\vy^{(t+1)} - 2\mLambda \vy^{(t)} + \vy^{(t-1)} =\bm 0, \quad \forall t=1,2,3,\ldots
\]
where $\vy^{(t)}$ is such that $\vr^{(t)} = \tilde{\alpha}^t \mV\vy^{(t)}, t \geq 0$ with $\vy^{(0)} = \mV^\top \vr^{(0)} = \mV^\top \vb$. 
\label{lemma:heavy-ball-updates}
\end{lemma}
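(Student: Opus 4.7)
The plan is to verify the two parts of the lemma in sequence: first confirm that the displayed $\vx$-update is simply Polyak's Heavy-Ball iteration for the quadratic $f$ with the optimal step size and momentum, then derive the $\vr$-recursion algebraically, and finally diagonalize via the eigenbasis of $\mW$.

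For the first part I would start from the generic HB update $\vx^{(t+1)}=\vx^{(t)}-h\nabla f(\vx^{(t)})+\mu(\vx^{(t)}-\vx^{(t-1)})$, rewrite $\nabla f(\vx^{(t)})=\mQ\vx^{(t)}-\vb=-\vr^{(t)}$, and specialize $h,\mu$ to Polyak's optimal values, namely $h=4/(\sqrt{\lambda_{\max}(\mQ)}+\sqrt{\lambda_{\min}(\mQ)})^{2}$ and $\mu=\big((\sqrt\kappa-1)/(\sqrt\kappa+1)\big)^{2}$. Since $\mQ=\mI-\tfrac{1-\alpha}{1+\alpha}\mW$ has spectrum in $[\tfrac{2\alpha}{1+\alpha},\tfrac{2}{1+\alpha}]$, the condition number is $\kappa=1/\alpha$, which gives $\mu=\tilde\alpha^{2}$ immediately and, via a one-line simplification, $h=2(1+\alpha)/(1+\sqrt\alpha)^{2}=1+\tilde\alpha^{2}$. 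This matches the stated coefficients in the $\vx$-update.

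For the residual recursion I would compute $\vr^{(t+1)}=\vb-\mQ\vx^{(t+1)}$ directly by substituting the $\vx$-update, replacing $\mQ(\vx^{(t)}-\vx^{(t-1)})$ with $\vr^{(t-1)}-\vr^{(t)}$, and collecting terms to obtain
\[
\vr^{(t+1)}=(1+\tilde\alpha^{2})(\mI-\mQ)\vr^{(t)}-\tilde\alpha^{2}\vr^{(t-1)}.
\]
Using $\mI-\mQ=\tfrac{1-\alpha}{1+\alpha}\mW$, the claimed form $\vr^{(t+1)}=2\tilde\alpha\mW\vr^{(t)}-\tilde\alpha^{2}\vr^{(t-1)}$ then reduces to the single scalar identity $(1+\tilde\alpha^{2})\tfrac{1-\alpha}{1+\alpha}=2\tilde\alpha$, which I would verify by expanding $\tilde\alpha=(1-\sqrt\alpha)/(1+\sqrt\alpha)$ and cancelling a factor of $(1+\sqrt\alpha)$.

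For the diagonalized homogeneous form, I would substitute the ansatz $\vr^{(t)}=\tilde\alpha^{t}\mV\vy^{(t)}$ into the residual recursion; the common factor $\tilde\alpha^{t+1}$ cancels on every term, and premultiplying by $\mV^{\top}$ together with $\mV^{\top}\mW\mV=\mLambda$ produces the desired second-order homogeneous system $\vy^{(t+1)}-2\mLambda\vy^{(t)}+\vy^{(t-1)}=\bm 0$. The initial condition $\vy^{(0)}=\mV^{\top}\vr^{(0)}=\mV^{\top}\vb$ is immediate from $\vx^{(0)}=\bm 0$. No step poses a genuine obstacle; the only arithmetic that demands care is matching Polyak's optimal parameters to $1+\tilde\alpha^{2}$ and $\tilde\alpha^{2}$, together with the auxiliary identity $(1+\tilde\alpha^{2})\tfrac{1-\alpha}{1+\alpha}=2\tilde\alpha$, after which the remainder is direct substitution.
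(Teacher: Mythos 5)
Your proposal is correct and follows essentially the same route as the paper: identify the displayed $\vx$-update as Polyak's heavy-ball step with the optimal parameters $h=1+\tilde\alpha^2$, $\mu=\tilde\alpha^2$ (given $\kappa=1/\alpha$), derive the residual recursion from $\vr^{(t+1)}=\vb-\mQ\vx^{(t+1)}$ using $\mQ(\vx^{(t)}-\vx^{(t-1)})=\vr^{(t-1)}-\vr^{(t)}$ together with the identity $(1+\tilde\alpha^2)\tfrac{1-\alpha}{1+\alpha}=2\tilde\alpha$, and then diagonalize via $\mW=\mV\mLambda\mV^\top$ after substituting $\vr^{(t)}=\tilde\alpha^t\mV\vy^{(t)}$. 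The only cosmetic difference is that the paper first writes $\vx^{(t+1)}=2\tilde\alpha\mW\vx^{(t)}-\tilde\alpha^2\vx^{(t-1)}+(1+\tilde\alpha^2)\vb$ and invokes $\mQ\mW=\mW\mQ$, whereas you substitute directly into $\vb-\mQ\vx^{(t+1)}$; the two reorganizations are equivalent.
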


\begin{proof}
We follow the standard Polyak’s heavy-ball method \cite{polyak1987introduction} as
\[
\vx^{(t+1)} = \vx^{(t)} - \eta_\alpha \nabla f(\vx^{(t)}) + \eta_\beta ( \vx^{(t)} - \vx^{(t-1)} ),
\]
where $\nabla f(\vx^{(t)}) = \mQ \vx^{(t)} - \vb$ and $\eta_\alpha = 4/( \sqrt{2/(1+\alpha)} + \sqrt{2\alpha/(1+\alpha)} )^2 = 2(1+\alpha)/(1+\sqrt{\alpha})^2 = 1 + \tilde{\alpha}^2$ and $\eta_\beta = ( \sqrt{2/(1+\alpha)} - \sqrt{2\alpha/(1+\alpha)} )^2 / ( \sqrt{2/(1+\alpha)} + \sqrt{2\alpha/(1+\alpha)} )^2 = \tilde{\alpha}^2$. Hence, it leads to the following updates
\[
\vx^{(t+1)} = \vx^{(t)} + (1+\tilde\alpha^2) \vr^{(t)} + \tilde{\alpha}^2\big(\vx^{(t)}-\vx^{(t-1)}\big).
\]
Inserting
\[
\vr^{(t)} = \mQ(\vx^*-\vx^{(t)}) = \vb - \left(\mI - \frac{1-\alpha}{1+\alpha} \mW\right)\vx^{(t)}= \vb - \left(\mI - \frac{2\tilde\alpha}{1+\tilde\alpha^2} \mW\right)\vx^{(t)}
\]
then $\vx^{(t+1)} = 2\tilde\alpha\mW\vx^{(t)} - \tilde{\alpha}^2\vx^{(t-1)} + (1+\tilde\alpha^2) \vb$ and since 
\[
\mQ \mW = \left(\mI - \frac{1-\alpha}{1+\alpha} \mW\right)\mW = \mW \left(\mI - \frac{1-\alpha}{1+\alpha} \mW\right) = \mW \mQ
\]
So
\begin{eqnarray*}
\vr^{(t+1)} &=&  -\mQ(\vx^{(t+1)}-\vx^*) \\
&=&   - 2\tilde\alpha\mQ \mW\vx^{(t)} + \tilde{\alpha}^2\mQ \vx^{(t-1)} +(\mI- (1+\tilde\alpha^2)\mQ )\vb \\
&=&  - 2\tilde\alpha\mQ \mW\vx^{(t)} + \tilde{\alpha}^2\mQ \vx^{(t-1)} + \left(-\tilde\alpha^2+  2\tilde \alpha \mW\right)\vb \\
&=& 2\tilde\alpha  \mW\vr^{(t)} - \tilde{\alpha}^2\vr^{(t-1)} 
\end{eqnarray*}

Using $\vr^{(t)} = \tilde{\alpha}^t \mV\vy^{(t)}, t \geq 0$
\[
\tilde{\alpha}^{t+1} \mV\vy^{(t+1)} = 2\tilde{\alpha} \mW\tilde{\alpha}^t \mV\vy^{(t)} - \tilde{\alpha}^2 \tilde{\alpha}^{t-1} \mV\vy^{(t-1)} \quad \Rightarrow \quad \mV\vy^{(t+1)} = 2\mW\mV\vy^{(t)} - \mV\vy^{(t-1)}.
\]
As $\mW = \mV\mLambda\mV^\top$ and $\mV^\top = \mV^{-1}$ is orthogonal matrix, we continue to have
\[
\mV^\top\mV\vy^{(t+1)} - 2 \mV^\top \mV \mLambda\mV^\top\mV\vy^{(t)} + \mV^\top\mV\vy^{(t-1)} = \bm 0 \quad \Rightarrow \quad \vy^{(t+1)} - 2\mLambda\vy^{(t)} + \vy^{(t-1)} = \bm 0.
\]
\end{proof}

\begin{theorem}[Convergence analysis of Heavy-Ball (HB)]
To solve the minimization problem in Equ.~\eqref{equ:f(x)}, we propose the following standard HB updates as
\begin{align*}
\vx^{(t+1)} &= \vx^{(t)} + (1+\tilde{\alpha}^2) \vr^{(t)} + \tilde{\alpha}^2\big(\vx^{(t)}-\vx^{(t-1)}\big), \qquad
\vr^{(t+1)} =  2\tilde{\alpha} \mW \vr^{(t)} - \tilde{\alpha}^2\vr^{(t-1)},
\end{align*}
where the initial condition is $\vx^{(0)} = \bm 0, \vr^{(0)} = \vb$, $\vx^{(1)} = \vx^{(0)} + \mGamma \vr^{(0)}$, $\vr^{(1)} = \vb - \mQ \vx^{(1)}$. Then there exists a constant $\tau$ such that the total iteration complexity to reach the stop condition $\{u: |r_u| \leq \eps d_u, u \in \gV \} = \emptyset$ is
\[
t = \left\lceil \frac{1+\sqrt{\alpha}}{2\sqrt{\alpha}}\ln \frac{C_t \| \vr^{(0)}\|_2}{\eps} \right\rceil,
\]
where $C_t = 1$ if $\mGamma = \mQ^{-1}(\mI - \tilde{\alpha} \mW)$ (ideal case); $C_t = \max\left\{ \frac{1+\tilde{\alpha}^{-1} }{ \sqrt{1-\lambda_2^2} } , 1 + (1+\tilde{\alpha}^{-1}) t \right\}$ if $\mGamma = \bm 0$ (practical case); and $C_t = \frac{2}{ \sqrt{1-\lambda_2^2} }$ if $\mGamma = \frac{(1-\tilde{\alpha})(1+\alpha)}{2}\mI$ ($\gG$ is not bi-partite graph).
\label{thm:convergence-of-hb}
\end{theorem}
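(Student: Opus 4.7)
The natural starting point is Lemma~\ref{lemma:heavy-ball-updates}, which already reduces the standard HB recursion to the diagonalized second-order homogeneous system $\vy^{(t+1)} - 2\mLambda\vy^{(t)} + \vy^{(t-1)} = \bm 0$, together with the identity $\vr^{(t)} = \tilde\alpha^t\mV\vy^{(t)}$ and the initial condition $\vy^{(0)} = \mV^\top\vb$. Since the recursion decouples, I would apply Lemma~\ref{lemma:second-order-equ-homo} componentwise, splitting on whether $\lambda_i = 1$, $|\lambda_i|<1$, or $\lambda_i = -1$. The whole argument then hinges on what $\vy^{(1)} = \tilde\alpha^{-1}\mV^\top(\vb - \mQ\mGamma\vr^{(0)})$ looks like for each of the three choices of $\mGamma$.

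For the ideal case $\mGamma = \mQ^{-1}(\mI - \tilde\alpha\mW)$, a direct calculation gives $\vr^{(1)} = \tilde\alpha\mW\vr^{(0)}$, hence $\vy^{(1)} = \mLambda\vy^{(0)}$. This triggers the simplified branch \eqref{equ:second-order-diff-homo-simplified} of Lemma~\ref{lemma:second-order-equ-homo}, so $|y_i^{(t)}| \le |y_i^{(0)}|$ uniformly in every case of $\lambda_i$, and thus $\|\vy^{(t)}\|_2 \le \|\vy^{(0)}\|_2$, yielding $C_t = 1$. For the practical case $\mGamma = \bm 0$, one gets $\vr^{(1)} = \vr^{(0)}$, so $\vy^{(1)} = \tilde\alpha^{-1}\vy^{(0)}$. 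Plugging this into the general formulas of Lemma~\ref{lemma:second-order-equ-homo} produces two distinct growth regimes: for $\lambda_i = 1$ (and for $\lambda_i = -1$ if the graph is bipartite) the solution grows linearly as $|y_i^{(t)}| \le |y_i^{(0)}|\bigl(1 + (1+\tilde\alpha^{-1})t\bigr)$, whereas for $|\lambda_i|<1$ the triangle inequality on $y_i^{(t)} = y_i^{(0)}\bigl(\tilde\alpha^{-1}\sin(\theta_i t) - \sin(\theta_i(t-1))\bigr)/\sin\theta_i$ gives $|y_i^{(t)}| \le |y_i^{(0)}|(\tilde\alpha^{-1}+1)/\sqrt{1-\lambda_i^2}$, which is maximized at $\lambda_2$. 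Taking the worse of the two over all eigenvalues yields the stated $C_t = \max\bigl\{(1+\tilde\alpha^{-1})/\sqrt{1-\lambda_2^2},\ 1+(1+\tilde\alpha^{-1})t\bigr\}$. For the third case with the scalar $\mGamma$ under the non-bipartite assumption, the $\lambda_n = -1$ term is absent and the specific prefactor $\tfrac{(1-\tilde\alpha)(1+\alpha)}{2}$ is chosen so that the effective multiplier $\mu_i$ relating $\vy^{(1)}$ to $\vy^{(0)}$ stays bounded by $1$ in absolute value across the interior spectrum, giving the clean constant $C_t = 2/\sqrt{1-\lambda_2^2}$ after applying $|y_i^{(t)}| \le |y_i^{(0)}|(|\mu_i|+1)/\sin\theta_i$.

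Once each case delivers $\|\vr^{(t)}\|_2 = \tilde\alpha^t\|\vy^{(t)}\|_2 \le \tilde\alpha^t C_t \|\vr^{(0)}\|_2$, the stop condition $|r_u^{(t)}| \le \eps d_u$ for all $u$ is implied by $\|\vr^{(t)}\|_2 \le \eps$ (since $d_u \ge 1$). Solving $\tilde\alpha^t C_t \|\vr^{(0)}\|_2 \le \eps$ and using $\ln\bigl((1+\sqrt\alpha)/(1-\sqrt\alpha)\bigr) \ge 2\sqrt\alpha/(1+\sqrt\alpha)$ (exactly as in the proof of Theorem~\ref{thm:standard-cheby}) converts the logarithmic bound into the stated closed form $t = \lceil\tfrac{1+\sqrt\alpha}{2\sqrt\alpha}\ln(C_t\|\vr^{(0)}\|_2/\eps)\rceil$.

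The main obstacle I expect is the third case: verifying that the particular scalar $\tfrac{(1-\tilde\alpha)(1+\alpha)}{2}$ produces a componentwise bound $|\mu_i| \le 1$ (so that the $C_t$ does not pick up an extra factor depending on $\alpha$), and simultaneously controlling the $\lambda_1 = 1$ coordinate, where the generic formula gives a $1 + t(\mu_1-1)$ term that a priori grows linearly in $t$. Showing this coefficient stays uniformly bounded (or is absorbed into the $2/\sqrt{1-\lambda_2^2}$ constant) is the delicate algebraic step; the other two cases are essentially mechanical applications of Lemma~\ref{lemma:second-order-equ-homo} once the initial condition $\vy^{(1)}$ has been identified.
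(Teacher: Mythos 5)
Your proposal is essentially the same argument the paper gives: reduce via Lemma~\ref{lemma:heavy-ball-updates} to the diagonalized system $\vy^{(t+1)} - 2\mLambda\vy^{(t)} + \vy^{(t-1)} = 0$ with $\vr^{(t)} = \tilde\alpha^t\mV\vy^{(t)}$, invoke Lemma~\ref{lemma:second-order-equ-homo} componentwise, and split on the three choices of $\mGamma$ to identify $\vy^{(1)}$ and the resulting constant $C_t$, then convert the geometric decay into iteration count exactly as in Theorem~\ref{thm:standard-cheby}. Your treatment of the ideal and practical cases matches the paper's step by step. On the third case your heuristic is slightly off-target: the paper does not choose the scalar so that the multiplier $\mu_i$ is bounded by $1$ across the interior spectrum, but rather solves for $\tau$ so that at $\lambda_1 = 1$ one has $y_1^{(1)} = y_1^{(0)}$ exactly, which annihilates the linear-in-$t$ term for that coordinate and leaves the $|\lambda_i| < 1$ coordinates to be bounded by $2/\sqrt{1-\lambda_2^2}$ via the $\sin$ formula---so the ``delicate algebraic step'' you flag is resolved by a direct affine equation in $\tau$, not a uniform bound on $|\mu_i|$.
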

\begin{proof}
Recall $\mW = \mV \mLambda \mV^\top$ and then $\mV^\top \vr^{(t+1)} = 2 \tilde{\alpha} \mLambda \mV^\top \vr^{(t)} - \tilde{\alpha}^2 \mV^\top \vr^{(t-1)}$. By Lemma \ref{lemma:heavy-ball-updates}, we have
\[
\vy^{(t+1)} - 2 \mLambda \vy^{(t)} + \vy^{(t-1)} = 0,
\]
where we obtained $n$ second-order difference equations
\[
y_i^{(t+1)} - 2 \lambda_i y_i^{(t)} + y_i^{(t-1)} = 0, \quad \forall i=1,2,\ldots,n.
\]
Follow the Lemma \ref{lemma:second-order-equ-homo}, Equ. \eqref{equ:second-order-diff-homo} has the solution
\begin{equation}
y_{i}^{(t)} = \begin{cases}
\frac{ \sin(\theta_i t) y_{i}^{(1)} - \sin (\theta_i (t-1)) y_{i}^{(0)} }{ \sin(\theta_i)  } & |\lambda_i| < 1 \text{ where } \theta_i = \arccos(\lambda_i) \\[.7em]
 (y_{i}^{(0)} + (y_{i}^{(1)} - \lambda_i y_{i}^{(0)}) t) \lambda_i^t  & |\lambda_i| = 1, 
\end{cases},   \label{thm:d-15-01} 
\end{equation}
where in the case of $|\lambda_i| < 1$.  We consider the three cases of $\mGamma$
\begin{itemize}[leftmargin=*]
\item \textbf{Ideal Case:} 
We can eliminate $t$ in \eqref{thm:d-15-01}, when $\vy^{(1)} = \mLambda \vy^{(0)}$, we get $y_{i}^{(1)} = \lambda_i y_{i}^{(0)}$, and then $y_{i}^{(t)}$ can be simplified into
\begin{align*}
y_{i}^{(t)} &= \begin{cases}
\frac{ \left( \lambda_i\sin(\theta_i t) - \sin (\theta_i (t-1))  \right)y_{i}^{(0)} }{ \sin \theta_i} & |\lambda_i| < 1 \\
 y_{i}^{(0)} \lambda_i^t  & |\lambda_i| = 1
 \end{cases} = \begin{cases}
y_{i}^{(0)} \cos (\theta_i t) & |\lambda_i| < 1 \\
 y_{i}^{(0)} \lambda_i^t  & |\lambda_i| = 1
 \end{cases} \leq \begin{cases}
 |y_{i}^{(0)}| & |\lambda_i| < 1 \\
 |y_{i}^{(0)}| & |\lambda_i| = 1
 \end{cases}.
\end{align*}
In this case, $\mGamma$ needs to be $\mGamma = \mQ^{-1}(\mI - \tilde{\alpha} \mW)$. Therefore, we have
\begin{align*}
\|\mV^\top \vr^{(t)}\|_2 = \|\vr^{(t)}\|_2 = \tilde{\alpha}^t \|  \vy^{(t)}\|_2 \leq \tilde{\alpha}^t \|  \vy^{(0)}\|_2  = \tilde{\alpha}^t \|  \mV^\top \vr^{(0)}\|_2 = \tilde{\alpha}^t \| \vr^{(0)}\|_2.
\end{align*}
\item \textbf{Practical Case:} Just letting $\vx^{(1)} = \vx^{(0)} = \bm 0$, we have $\tilde\alpha \vy^{(1)} = \vy^{(0)}$, then

\begin{align*}
y_{i}^{(t)} = \begin{cases}
\frac{ \tilde{\alpha}^{-1} \sin(\theta_i t) -  \sin (\theta_i (t-1)) }{ \sin(\theta_i)  } y_{i}^{(0)} & |\lambda_i| < 1 \\[.7em]
 (1 + (\tilde{\alpha}^{-1} - \lambda_i)t ) y_{i}^{(0)}\lambda_i^t  & |\lambda_i| = 1 
\end{cases} \leq \max\left\{ \frac{1+\tilde{\alpha}^{-1} }{ \sqrt{1-\lambda_2^2} } , 1 + (1+\tilde{\alpha}^{-1}) t \right\} |y_i^{(0)}|,
\end{align*}
where $\theta_i = \arccos(\lambda_i)$.
\item \textbf{Non-bipartite graph Case}: When the graph is non-bipartite, we can eliminate $t$, as the following: We choose $\mGamma = \tau \mI$, we have
\begin{align*}
\vx^{(1)} = \vx^{(0)} + \mGamma \vr^{(0)} = \mGamma \vr^{(0)}, \qquad \vr^{(1)} = \vb - \mQ \vx^{(1)}  = \vr^{(0)} - \tau \mQ  \vr^{(0)} \\
\mV^\top\vr^{(1)} = (1-\tau) \mV^\top \vr^{(0)} + \frac{(1-\alpha)\tau}{1+\alpha} \mLambda \mV^\top \vr^{(0)}, \qquad \vv_i^\top \vr^{(1)} = (1-\tau + \frac{(1-\alpha)\tau}{1+\alpha} \lambda_i ) \vv_i^\top \vr^{(0)}
\end{align*}
We have the following relations
\begin{align*}
\vv_i^\top \vr^{(0)} &= y_{i}^{(0)} \\
\vv_i^\top \vr^{(1)} &= \tilde{\alpha} y_{i}^{(1)} = (1-\tau + \frac{(1-\alpha)\tau}{1+\alpha} \lambda_i )  \vv_i^\top \vr^{(0)} = (1-\tau + \frac{(1-\alpha)\tau\lambda_i}{1+\alpha} )  y_i^{(0)},
\end{align*}
To make $t$ disappear when $\lambda_i = 1$, we need $y_i^{(0)} = y_i^{(1)}$, or 
\[
1 - \tau + \frac{(1-\alpha)\tau}{1+\alpha} = \frac{1-\sqrt{\alpha}}{1+\sqrt{\alpha}} \iff \tau = \frac{1+\alpha}{\alpha+\sqrt{\alpha}}.
\]
In this case, we have
\begin{align*}
|y_i^{(t)}| \leq \frac{2}{ \sqrt{1-\lambda_2^2} } | y_i^{(0)} | 
\end{align*}
\end{itemize}
To make sure the algorithm stops when the stop condition is met, it is enough for
\begin{align*}
\|\vr^{(t)}\|_2 = \tilde{\alpha}^t \|  \vy^{(t)}\|_2 \leq \tilde{\alpha}^t C_t \|  \vy^{(0)}\|_2  = \tilde{\alpha}^t C_t \|  \mV^\top \vr^{(0)}\|_2 = \tilde{\alpha}^t C_t \| \vr^{(0)}\|_2 \leq \eps.
\end{align*}
This means $C_t \| \vr^{(0)}\|_2 \tilde{\alpha}^t \leq \eps $, which leads to the following 
\[
t = \left\lceil \frac{1+\sqrt{\alpha}}{2\sqrt{\alpha}}\ln \frac{C_t \| \vr^{(0)}\|_2}{\eps} \right\rceil.
\]
\end{proof}

\begin{remark}
The constant that appears in the bound involves the second largest eigenvalue $\lambda_2$ of $\mA\mD^{-1}$. It is deeply related to the mixing time of random walk \citep{boyd2004fastest} where the second largest eigenvalue determines the mixing time of the walk. A smaller absolute value of the second largest eigenvalue indicates that a random walk on the graph will mix (i.e., approach its steady-state distribution) more quickly. Our proof is partially inspired by \citet{d2021acceleration} where we directly bound $\vr^{(t)}$ instead of providing bound for $\ve^{(t)}$.
\end{remark}

\subsection{Residual Updates of \textsc{LocHB} and Proof of Theorem~\ref{thm:loc-hb-residual-updates}}

\begin{lemma}
\label{lemma:heavy-ball-d-17}
Let the local heavy-ball method be defined as 
\begin{align*}
\vx^{(t+1)} &= \vx^{(t)} + \vDelta_{\gS_t}^{(t)}, \quad \vr^{(t+1)} = \vr^{(t)} - \mQ \vDelta_{\gS_t}^{(t)}, \quad \vDelta^{(t)} = (1+\tilde{\alpha}^2) \vr^{(t)} + \tilde{\alpha}^2 \big(\vx^{(t)}-\vx^{(t-1)} \big),
\end{align*}
where $\vx^{(0)} = \bm 0, \vx^{(1)} = \mGamma\vr^{(0)}$ and $\mGamma = \diag(\Gamma_1,\Gamma_2,\ldots,\Gamma_n)$ is initial step size matrix. We have the following expanding sequence
\begin{align*}
\tilde{\alpha}^2 (\vx^{(t)} - \vx^{(t-1)})_{\comp{\gS}_t} &= (1+\tilde{\alpha}^2) \sum_{i=1}^{t-1} \tilde{\alpha}^{2(t-i)} \vr_{\gS_{i:t-1} \cap \comp{\gS}_t }^{(i)}  + \tilde{\alpha}^{2 t} \mGamma  \vr_{\gS_{0:t-1} \cap \comp{\gS}_t }^{(0)}, \qquad \forall t \geq 1 \\
\vDelta_{\overline{\gS}_t}^{(t)} &= (1+\tilde{\alpha}^2) \sum_{i=1}^{t} \tilde{\alpha}^{2(t-i)} \vr_{\gS_{i:t-1} \cap \comp{\gS}_t }^{(i)}  + \tilde{\alpha}^{2 t} \mGamma  \vr_{\gS_{0:t-1} \cap \comp{\gS}_t }^{(0)}, \qquad \forall t \geq 1.
\end{align*}
Furthermore, we have the following sequence
\begin{align*}
\tilde{\alpha}^2 &(\mV^\top - \frac{1-\alpha}{1+\alpha} \mLambda \mV^\top ) \big(\vx^{(t)}-\vx^{(t-1)} \big)_{\comp{\gS}_t} = \\
&(1+\tilde{\alpha}^2) \sum_{i=1}^{t-1} \tilde{\alpha}^{2(t-i)} (\mV^\top - \frac{1-\alpha}{1+\alpha} \mLambda \mV^\top ) \vr_{\comp{\gS}_{i,t}}^{(i)}  + \bm\Gamma \tilde{\alpha}^{2 t} (\mV^\top - \frac{1-\alpha}{1+\alpha} \mLambda \mV^\top )\vr_{\comp{\gS}_{0,t}}^{(0)}.
\end{align*}
where we denote $\comp{\gS}_{i,t} \triangleq \gS_{i:t-1} \cap \comp{\gS}_t$.
\label{lemma:HB-residual-expanding}
\end{lemma}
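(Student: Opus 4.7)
The plan is to prove all three identities simultaneously by first establishing a single \emph{unrestricted} master identity
\[
\tilde{\alpha}^{2}\big(\vx^{(t)}-\vx^{(t-1)}\big) \;=\; (1+\tilde{\alpha}^{2})\sum_{i=1}^{t-1}\tilde{\alpha}^{2(t-i)}\vr^{(i)}_{\gS_{i:t-1}} \;+\; \tilde{\alpha}^{2t}\mGamma\,\vr^{(0)}_{\gS_{0:t-1}}, \qquad t\ge 1,
\]
by induction on $t$, and then recovering the three stated identities by (a) restricting both sides to $\comp{\gS}_t$ (identity~1), (b) adding $(1+\tilde{\alpha}^{2})\vr^{(t)}_{\comp{\gS}_t}$ to form $\vDelta^{(t)}_{\comp{\gS}_t}$ and absorbing it into the sum as the $i=t$ term using the convention $\gS_{t:t-1}=\gV$ (identity~2), and (c) left-multiplying identity~1 by $\mV^{\top}\mQ=\mV^{\top}-\tfrac{1-\alpha}{1+\alpha}\mLambda\mV^{\top}$, which commutes with the coordinate restriction on the right-hand side (identity~3). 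Working with the unrestricted form during the induction is the key decision: if one instead attempts to induct on identity~1 directly, then at step $t+1$ one must extract $(\vx^{(t)}-\vx^{(t-1)})_{\gS_t\cap\comp{\gS}_{t+1}}$ from a hypothesis restricted to $\comp{\gS}_t$, and these two restrictor sets are not comparable in general.

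For the base case $t=1$, the initialization $\vx^{(0)}=\bm 0$ and $\vx^{(1)}=\mGamma\vr^{(0)}$ yields LHS $=\tilde{\alpha}^{2}\mGamma\vr^{(0)}$; the first sum on the RHS is empty, and the second term is $\tilde{\alpha}^{2}\mGamma\,\vr^{(0)}_{\gS_{0}}$, which equals $\tilde{\alpha}^{2}\mGamma\vr^{(0)}$ because $\gS_{0}=\supp(\vr^{(0)})$ by the convention in Section~\ref{sect:APPR-GS-local-iterative}.

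For the inductive step, the update rule gives $\vx^{(t+1)}-\vx^{(t)}=\vDelta^{(t)}_{\gS_t}$, and plugging in $\vDelta^{(t)}=(1+\tilde{\alpha}^{2})\vr^{(t)}+\tilde{\alpha}^{2}(\vx^{(t)}-\vx^{(t-1)})$ yields
\[
\tilde{\alpha}^{2}\big(\vx^{(t+1)}-\vx^{(t)}\big) \;=\; \tilde{\alpha}^{2}(1+\tilde{\alpha}^{2})\vr^{(t)}_{\gS_t}\;+\;\tilde{\alpha}^{2}\bigl[\tilde{\alpha}^{2}(\vx^{(t)}-\vx^{(t-1)})\bigr]_{\gS_t}.
\]
Substituting the induction hypothesis into the bracketed term, restriction to $\gS_t$ converts each support set via $\gS_{i:t-1}\cap\gS_t=\gS_{i:t}$ and $\gS_{0:t-1}\cap\gS_t=\gS_{0:t}$, and the leading term $\tilde{\alpha}^{2}(1+\tilde{\alpha}^{2})\vr^{(t)}_{\gS_t}$ merges into the resulting sum as the $i=t$ contribution (using $\gS_{t:t}=\gS_t$). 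This reproduces the master identity at step $t+1$.

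The hardest part is purely bookkeeping: keeping the intersection ranges aligned when the index shifts from $t$ to $t+1$ and recognizing that the restriction to $\comp{\gS}_t$ in the stated identities must be postponed until after the induction closes. Once that choice is made, each step is an elementary substitution of the LocHB update rule, and identities~2 and 3 follow by a one-line manipulation of identity~1 without any further induction.
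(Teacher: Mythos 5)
Your proposal is correct and follows essentially the same route as the paper's proof: both first establish the \emph{unrestricted} expansion of $\tilde{\alpha}^{2}(\vx^{(t)}-\vx^{(t-1)})$ in terms of the restricted residuals $\vr^{(i)}_{\gS_{i:t-1}}$ (the paper by explicit telescoping unrolling of the recursion down to $\vx^{(1)}-\vx^{(0)}=\mGamma\vr^{(0)}$, you by the equivalent formal induction), and then obtain the three stated identities by restriction to $\comp{\gS}_t$, by absorbing the $(1+\tilde{\alpha}^2)\vr^{(t)}_{\comp{\gS}_t}$ contribution as the $i=t$ term via $\gS_{t:t-1}=\gV$, and by left-multiplying by $\mV^\top\mQ = \mV^\top - \tfrac{1-\alpha}{1+\alpha}\mLambda\mV^\top$. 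Your remark that inducting directly on the $\comp{\gS}_t$-restricted form would founder because $\comp{\gS}_t$ and $\gS_t\cap\comp{\gS}_{t+1}$ are not nested is a useful clarification of why the unrestricted form is the right object to carry through the argument, even though the paper does not make this point explicit.
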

\begin{proof}
We assume all nonzeros in $\vb$ are active nodes at time $t = 0$ and $t=1$, i.e., $\gS_0 = \vr^{(0)} = \supp(\vb)$. The local updates can be expressed as
\begin{align*}
\vx^{(t+1)} &= \vx^{(t)} + (1+\tilde{\alpha}^2) \vr_{\gS_t}^{(t)} + \tilde{\alpha}^2 \big(\vx^{(t)}-\vx^{(t-1)} \big)_{\gS_t} \\
\vr^{(t+1)} &= \vb - \mQ \vx^{(t+1)}\\
&= \underbrace{\vr^{(t)} - (1+\tilde{\alpha}^2) \mQ\vr^{(t)} - \tilde{\alpha}^2 \mQ \big(\vx^{(t)}-\vx^{(t-1)} \big)}_{\text{original updates}} \\
&+ \underbrace{(1+\tilde{\alpha}^2) \mQ\vr_{\overline{\gS}_t}^{(t)} + \tilde{\alpha}^2 \mQ \big(\vx^{(t)}-\vx^{(t-1)} \big)_{\overline{\gS}_t}}_{\text{noisy with small magnitudes}} \\
&= 2\tilde{\alpha} \mW \vr^{(t)} - \tilde{\alpha}^2\vr^{(t-1)} + \underbrace{(1+\tilde{\alpha}^2)\mQ\vr_{\overline{\gS}_t}^{(t)} + \tilde{\alpha}^2 \mQ \big(\vx^{(t)}-\vx^{(t-1)} \big)_{\overline{\gS}_t}}_{\text{noisy with small magnitudes}} \\
&= 2\tilde{\alpha} \mW \vr^{(t)} - \tilde{\alpha}^2\vr^{(t-1)} + (1+\tilde{\alpha}^2) \big(\mI - \frac{1-\alpha}{1+\alpha}\mW \big)\vr_{\overline{\gS}_t}^{(t)} + \tilde{\alpha}^2 \mQ \big(\vx^{(t)}-\vx^{(t-1)} \big)_{\overline{\gS}_t} \\
\vr^{(t+1)} - 2\tilde{\alpha} \mW \vr^{(t)} &+ \tilde{\alpha}^2\vr^{(t-1)} = (1+\tilde{\alpha}^2) \vr_{\comp{\gS}_t}^{(t)} - 2\tilde{\alpha}\mW\vr_{\comp{\gS}_t}^{(t)} + \tilde{\alpha}^2 \mQ \big(\vx^{(t)}-\vx^{(t-1)} \big)_{\comp{\gS}_t}
\end{align*}

For $t \geq 1$, we can expand $\vx^{(t+1)}-\vx^{(t)}$ as the following
\begin{align*}
\vx^{(t+1)} &= \vx^{(t)} + (1+\tilde{\alpha}^2) \vr_{\gS_{t}}^{(t)} + \tilde{\alpha}^2 \big(\vx^{(t)}-\vx^{(t-1)} \big)_{\gS_{t}} \\
\vDelta^{(t)} = \vx^{(t+1)} - \vx^{(t)} &= (1+\tilde{\alpha}^2) \vr_{\gS_{t}}^{(t)} + \tilde{\alpha}^2 \big( (1+\tilde{\alpha}^2) \vr_{\gS_{t-1}}^{(t-1)} + \tilde{\alpha}^2 \big(\vx^{(t-1)}-\vx^{(t-2)} \big)_{\gS_{t-1}} \big)_{\gS_{t}} \\
&= (1+\tilde{\alpha}^2) \vr_{\gS_{t}}^{(t)} + \tilde{\alpha}^2(1+\tilde{\alpha}^2)\vr_{\gS_{t-1:t}}^{(t-1)} + \tilde{\alpha}^4 \big(\vx^{(t-1)}-\vx^{(t-2)} \big)_{\gS_{t-1:t}} \\
&= (1+\tilde{\alpha}^2) \vr_{\gS_{t}}^{(t)} + \tilde{\alpha}^2(1+\tilde{\alpha}^2)\vr_{\gS_{t-1:t}}^{(t-1)} + \tilde{\alpha}^4 \big( (1+\tilde{\alpha}^2) \vr_{\gS_{t-2}}^{(t-2)} \\
&+ \tilde{\alpha}^2 \big(\vx^{(t-2)}-\vx^{(t-3)} \big)_{\gS_{t-2}} \big)_{\gS_{t-1:t}} \\
&= (1+\tilde{\alpha}^2) \vr_{\gS_{t}}^{(t)} + \tilde{\alpha}^2(1+\tilde{\alpha}^2)\vr_{\gS_{t-1:t}}^{(t-1)} + \tilde{\alpha}^4 (1+\tilde{\alpha}^2) \vr_{\gS_{t-2:t}}^{(t-2)} \\
&+ \tilde{\alpha}^6 \big(\vx^{(t-2)}-\vx^{(t-3)} \big)_{\gS_{t-2:t}}\\
&= (1+\tilde{\alpha}^2) \sum_{i=t-2}^t \tilde{\alpha}^{2(t - i)} \vr_{\gS_{i:t}}^{(i)}  + \tilde{\alpha}^6 \big(\vx^{(t-2)}-\vx^{(t-3)} \big)_{\gS_{t-2:t}}\\
&= (1+\tilde{\alpha}^2) \sum_{i=1}^t \tilde{\alpha}^{2(t - i)} \vr_{\gS_{i:t}}^{(i)}  + \tilde{\alpha}^{2 t} \big(\vx^{(1)} - \vx^{(0)} \big)_{\gS_{1:t}}\\
&= (1+\tilde{\alpha}^2) \sum_{i=1}^t \tilde{\alpha}^{2(t - i)} \vr_{\gS_{i:t}}^{(i)}  + \tilde{\alpha}^{2 t} \mGamma \vr_{\gS_{1:t}}^{(0)}.
\end{align*}
Note $\gS_0 = \supp(\vr^{(0)})$, then $\vr_{\gS_{1:t}}^{(0)} = \vr_{\gS_{0:t}}^{(0)}$, we continue to have
\begin{align*}
\vx^{(t)} - \vx^{(t-1)} &= (1+\tilde{\alpha}^2) \sum_{i=1}^{t-1} \tilde{\alpha}^{2(t - i-1)} \vr_{\gS_{i:t-1}}^{(i)}  + \bm\Gamma \tilde{\alpha}^{2 (t-1)} \vr_{\gS_{0:t-1}}^{(0)}, \qquad \forall t \geq 1 \\
\tilde{\alpha}^2 (\vx^{(t)} - \vx^{(t-1)})_{\comp{\gS}_t} &= (1+\tilde{\alpha}^2) \sum_{i=1}^{t-1} \tilde{\alpha}^{2(t-i)} \vr_{\gS_{i:t-1} \cap \comp{\gS}_t }^{(i)}  + \bm\Gamma \tilde{\alpha}^{2 t} \vr_{\gS_{0:t-1} \cap \comp{\gS}_t }^{(0)}, \qquad \forall t \geq 1
\end{align*}
The rest follows readily.
\end{proof}

\begin{lemma}[The nonhomogeneous difference equation]
Given $\vy^{(1)} =\mLambda \vy^{(0)}$, equations
\begin{align*}
{\vy}^{(t+1)} - 2 \mLambda {\vy}^{(t)} +  {\vy}^{(t-1)}  &:= \vf^{(t)}.
\end{align*}
have the following solutions
\[
\vy^{(t)} = \mZ_t \vy^{(0)} + \sum_{k=1}^{t-1} \mH_{k,t} \vf^{(k)},
\]
where
\begin{align*}
\mZ_t &= \begin{cases}
\diag\left( 1, \ldots, \cos(\theta_i t), \ldots, (-1)^t \right) \qquad \text{ for bipartite graphs } \\[7pt]
\diag\left( 1, \ldots, \cos(\theta_i t), \ldots, \cos(\theta_n t) \right) \qquad \text{ for non-bipartite graphs},
\end{cases}\\[.5em]
\mH_{k,t} & = \begin{cases}
    \diag\left( t-k, \ldots, \frac{\sin (\theta_i (t-k))}{\sin \theta_i}, \ldots, (-1)^{t-k-1}(t-k) \right) \quad \text{ for bipartite graphs } \\[10pt]
\diag\left( t-k, \ldots, \frac{\sin (\theta_i (t-k))}{\sin \theta_i}, \ldots, \frac{\sin (\theta_n (t-k))}{\sin \theta_n} \right) \quad \text{ for non-bipartite graphs}.
\end{cases}
\end{align*}
\label{lemma:inhomo-equation}
\end{lemma}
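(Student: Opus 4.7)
The plan is to diagonalize the system and reduce it to $n$ independent scalar second-order nonhomogeneous difference equations, then invoke Lemma~\ref{lemma:second-order-equ-nonhomo} coordinate-by-coordinate and reassemble the result into matrix form. Since $\mLambda = \diag(\lambda_1,\ldots,\lambda_n)$ is diagonal, the vector recurrence decouples into
\[
y_i^{(t+1)} - 2\lambda_i y_i^{(t)} + y_i^{(t-1)} = f_i^{(t)}, \qquad i=1,2,\ldots,n,
\]
with initial data $y_i^{(1)} = \lambda_i y_i^{(0)}$ coming from $\vy^{(1)}=\mLambda\vy^{(0)}$. This is precisely the setting of the second half of Lemma~\ref{lemma:second-order-equ-nonhomo}.

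Next, I would invoke that lemma to read off the closed-form solution in each of the three spectral regimes: $\lambda_i = 1$, $|\lambda_i| < 1$ (where $\theta_i = \arccos \lambda_i$), and $\lambda_i = -1$. In each case the solution has the structure
\[
y_i^{(t)} \;=\; \zeta_i(t)\, y_i^{(0)} \;+\; \sum_{k=1}^{t-1} \eta_i(t,k)\, f_i^{(k)},
\]
where $\zeta_i(t)$ is $1$, $\cos(\theta_i t)$, or $(-1)^t$ respectively, and $\eta_i(t,k)$ is $(t-k)$, $\sin(\theta_i(t-k))/\sin\theta_i$, or $(-1)^{t-k-1}(t-k)$. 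These are exactly the diagonal entries of $\mZ_t$ and $\mH_{k,t}$ as defined in the lemma statement (with the bipartite case $\lambda_n=-1$ appearing only when $\gG$ is bipartite, which accounts for the two bracketed forms).

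Stacking these scalar identities into a single vector equation and using that multiplication by a diagonal matrix acts coordinatewise yields
\[
\vy^{(t)} \;=\; \mZ_t \vy^{(0)} \;+\; \sum_{k=1}^{t-1} \mH_{k,t} \vf^{(k)},
\]
which is the claimed formula. The only subtlety, and the main obstacle, is bookkeeping the bipartite versus non-bipartite split: when $\gG$ is non-bipartite all eigenvalues satisfy $|\lambda_i|<1$ except $\lambda_1=1$, so only the first two cases of Lemma~\ref{lemma:second-order-equ-nonhomo} occur, whereas bipartiteness forces $\lambda_n=-1$ and activates the third case in the last diagonal entry. Other than tracking this case distinction carefully, the proof is a direct assembly with no further analytic content.
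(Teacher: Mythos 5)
Your proposal is exactly the paper's argument: the paper's proof is a one-line citation to Lemma~\ref{lemma:second-order-equ-nonhomo}, and you have simply spelled out the reduction that the citation leaves implicit — decouple the diagonal system into $n$ scalar recurrences with initial data $y_i^{(1)}=\lambda_i y_i^{(0)}$, read off the closed forms from the ``furthermore'' clause of Lemma~\ref{lemma:second-order-equ-nonhomo} in the three spectral regimes, and stack the resulting diagonal factors into $\mZ_t$ and $\mH_{k,t}$, with the $\lambda_n=-1$ branch appearing only in the bipartite case. Correct, and same route.
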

\begin{proof}
    This directly follows from Lemma \ref{lemma:second-order-equ-nonhomo}.
\end{proof}

\begin{theorem}[Representation of $\vr^{(t)}$ for \textsc{LocHB}]
Given $t \geq 1$, $\vx^{(0)} = \bm 0$ and $\vx^{(1)} = \mGamma\vr_{\gS_0}^{(0)}$. The residual of $\vr^{(t)}$ of \textsc{LocHB} satisfies 
\begin{align*}
\mV^\top {\vr}^{(t)} = \tilde{\alpha}^t \mZ_t \mV^\top \vr^{(0)} &+ \tilde{\alpha}^{t} t \underbrace{\sum_{k=1}^{t-1} \tilde{\alpha}^{k-1} \mH_{k,t} \mV^\top\mQ \mGamma \vr_{\gS_{0,k}}^{(0)} \Big/ t }_{\vu_{0,t}}  \\
&+ 2 \sum_{k=1}^{t-1} \tilde{\alpha}^{t-k} (t-k) \underbrace{\sum_{j=k}^{t-1}  \tilde{\alpha}^{ j - k }  \mH_{j,t}\left( \frac{1+\alpha}{1-\alpha} - \mLambda \right)\mV^\top \vr_{\gS_{k,j}}^{(k)} \Big/ (t-k) }_{\vu_{k,t}}.
\end{align*}
\label{thm:loc-hb-residual-updates}
\end{theorem}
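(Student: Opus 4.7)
The plan is to mirror the proof of Lemma~\ref{lem:locch-residual}, since \textsc{LocHB} is structurally the $\delta_{t+1}\to\tilde{\alpha}$, $\delta_{t}\delta_{t+1}\to\tilde{\alpha}^2$ specialization of \textsc{LocCH}. The starting point is the residual recurrence already derived in the proof of Lemma~\ref{lemma:heavy-ball-d-17}, namely $\vr^{(t+1)} - 2\tilde{\alpha}\mW\vr^{(t)} + \tilde{\alpha}^2\vr^{(t-1)} = \vg^{(t)}$, where the right-hand side collects the ``missing'' push on $\comp{\gS}_t$. Substituting the explicit expansion of $\tilde{\alpha}^2(\vx^{(t)}-\vx^{(t-1)})_{\comp{\gS}_t}$ from that lemma and using $(1+\tilde{\alpha}^2)\vr_{\comp{\gS}_t}^{(t)} - 2\tilde{\alpha}\mW\vr_{\comp{\gS}_t}^{(t)} = (1+\tilde{\alpha}^2)\mQ\vr_{\comp{\gS}_t}^{(t)}$, the right-hand side consolidates to
\[
\vg^{(t)} = (1+\tilde{\alpha}^2)\mQ\sum_{i=1}^{t}\tilde{\alpha}^{2(t-i)}\vr_{\gS_{i,t}}^{(i)} + \tilde{\alpha}^{2t}\mQ\mGamma\vr_{\gS_{0,t}}^{(0)},
\]
which already isolates the $\vr^{(i)}$ contributions destined for $\vu_{i,t}$ and the $\mGamma$-initialization contribution destined for $\vu_{0,t}$.

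Next I would diagonalize $\mW = \mV\mLambda\mV^\top$ and change variables via $\mV^\top\vr^{(t)} = \tilde{\alpha}^t\vy^{(t)}$, yielding the scalar-decoupled nonhomogeneous recurrence $\vy^{(t+1)} - 2\mLambda\vy^{(t)} + \vy^{(t-1)} = \vf^{(t)}$ with $\vf^{(t)} := \tilde{\alpha}^{-(t+1)}\mV^\top\vg^{(t)}$. The algebraic identity $(1+\tilde{\alpha}^2)\mV^\top\mQ = 2\tilde{\alpha}\bigl(\tfrac{1+\alpha}{1-\alpha}\mI - \mLambda\bigr)\mV^\top$, which follows from $(1-\alpha)/(1+\alpha) = 2\tilde{\alpha}/(1+\tilde{\alpha}^2)$, simplifies $\vf^{(t)}$ to
\[
\vf^{(t)} = 2\sum_{i=1}^{t}\tilde{\alpha}^{t-2i}\bigl(\tfrac{1+\alpha}{1-\alpha}\mI - \mLambda\bigr)\mV^\top\vr_{\gS_{i,t}}^{(i)} + \tilde{\alpha}^{t-1}\mV^\top\mQ\mGamma\vr_{\gS_{0,t}}^{(0)}.
\]
Combined with the initial condition $\vy^{(1)} = \mLambda\vy^{(0)}$, Lemma~\ref{lemma:inhomo-equation} then gives $\vy^{(t)} = \mZ_t\vy^{(0)} + \sum_{k=1}^{t-1}\mH_{k,t}\vf^{(k)}$.

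The closing step is to multiply back by $\tilde{\alpha}^t$ and interchange the double sum. Writing $\tilde{\alpha}^{t+k-2i} = \tilde{\alpha}^{t-i}\tilde{\alpha}^{k-i}$ for the $\vr^{(i)}$ part and reindexing $\sum_{k=1}^{t-1}\sum_{i=1}^{k} = \sum_{i=1}^{t-1}\sum_{k=i}^{t-1}$, one gets $2\sum_{i=1}^{t-1}\tilde{\alpha}^{t-i}(t-i)\vu_{i,t}$ with the $\vu_{i,t}$ of the theorem statement (noting that $\mH_{k,t}$ and $\mLambda$ are both diagonal so ordering is irrelevant). The $\mGamma$-dependent part factors as $\tilde{\alpha}^{t}\cdot t\cdot \vu_{0,t}$ after pulling out the $\tilde{\alpha}^{t-1}$ prefactor, and the remaining $\tilde{\alpha}^t\mZ_t\mV^\top\vr^{(0)}$ is the leading term.

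The main obstacle is handling the initial condition $\vy^{(1)} = \mLambda\vy^{(0)}$ required by Lemma~\ref{lemma:inhomo-equation}: a direct calculation gives $\mV^\top\vr^{(1)} = \mV^\top\vr^{(0)} - \mV^\top\mQ\mGamma\vr^{(0)}$, so the identity holds exactly when $\mQ\mGamma\vr^{(0)} = (\mI - \tilde{\alpha}\mW)\vr^{(0)}$, i.e., for the ideal choice $\mGamma = \mQ^{-1}(\mI - \tilde{\alpha}\mW)$ of Theorem~\ref{thm:convergence-of-hb}. For generic $\mGamma$, invoking the general formula of Lemma~\ref{lemma:second-order-equ-nonhomo} instead picks up the extra summand $\mH_{0,t}(\vy^{(1)} - \mLambda\vy^{(0)}) = \tilde{\alpha}^{-1}\mH_{0,t}\mV^\top\bigl((\mI - \tilde{\alpha}\mW)\vr^{(0)} - \mQ\mGamma\vr^{(0)}\bigr)$, which one must either absorb into the $k=0$ term of $\vu_{0,t}$ or account for by assuming the ideal initialization is in force. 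The remaining bookkeeping -- in particular verifying that the inner support after the swap is $\gS_{i,k} = \gS_{i:k-1}\cap\comp{\gS}_{k}$ rather than $\gS_{i,t}$ -- is routine and directly parallels the corresponding rearrangement in Lemma~\ref{lem:locch-residual}.
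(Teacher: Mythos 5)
Your proposal is correct and follows essentially the same route as the paper's proof: the same expansion of $\mQ\vDelta_{\comp{\gS}_t}^{(t)}$ via Lemma~\ref{lemma:heavy-ball-d-17}, the same substitution $\mV^\top\vr^{(t)} = \tilde{\alpha}^t\vy^{(t)}$, the same use of Lemma~\ref{lemma:inhomo-equation}, and the same double-sum reindexing, with the algebra (including the identity $(1+\tilde{\alpha}^2)\mV^\top\mQ = 2\tilde{\alpha}\bigl(\tfrac{1+\alpha}{1-\alpha}\mI - \mLambda\bigr)\mV^\top$) checking out. The initial-condition caveat you raise is real and is glossed over in the paper's own proof: with $\mGamma$ restricted to be diagonal one generically has $\vy^{(1)} \ne \mLambda\vy^{(0)}$, so the simplified form of Lemma~\ref{lemma:inhomo-equation} does not literally apply, and the representation as stated needs either the extra summand $\mH_{0,t}(\vy^{(1)} - \mLambda\vy^{(0)})$ from the general case of Lemma~\ref{lemma:second-order-equ-nonhomo} that you identify, or the non-diagonal ideal initialization $\mGamma = \mQ^{-1}(\mI - \tilde{\alpha}\mW)$.
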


\begin{proof}
Follow Lemma \ref{lemma:heavy-ball-d-17}, we have
\begin{eqnarray*}
\vr^{(t+1)} - \mQ \vDelta_{\bar \gS_t}^{(t)} &=&\vr^{(t)} - \mQ \vDelta^{(t)}\\
&=& \vr^{(t)}-(1+\tilde{\alpha}^2) \mQ  \vr^{(t)} - \tilde{\alpha}^2 \mQ(\vx^{(t)}-\vx^{(t-1)} ) \\ 
&=& - \tilde{\alpha}^2   \vr^{(t)} +2\tilde{\alpha}   \mW \vr^{(t)}- \tilde{\alpha}^2 \mQ(\vx^{(t)}-\vx^{(t-1)} ) \\ 
&=& 2\tilde{\alpha} \mW \vr^{(t)} -\tilde{\alpha}^2\vr^{(t-1)}
\end{eqnarray*}
So we can write the updates of \textsc{LocHB} as
\begin{align*}
\vr^{(t+1)} - 2\tilde{\alpha} \mW \vr^{(t)} + \tilde{\alpha}^2\vr^{(t-1)} &= \mQ\vDelta_{\overline{\gS}_t}^{(t)} \\
&= (1+\tilde{\alpha}^2) \sum_{i=1}^{t} \tilde{\alpha}^{2(t-i)} \mQ\vr_{\bar\gS_{i,t}}^{(i)}  + \tilde{\alpha}^{2 t} \mQ \mGamma  \vr_{\bar\gS_{0,t}}^{(0)}.
\end{align*}
Write $\mV^\top{\vr}^{(t)}= \tilde{\alpha}^t \vy^{(t)}$, and note
\[
{\vr}^{(t)}= \tilde{\alpha}^t \mV\vy^{(t)}, \quad {\vr}_{\comp{\gS}_t}^{(t)}= \tilde{\alpha}^t \left(\mV\vy^{(t)}\right)_{\comp{\gS}_t} \quad \Rightarrow \quad \mV^\top {\vr}_{\comp{\gS}_t}^{(t)}= \tilde{\alpha}^t \mV^\top \left(\mV\vy^{(t)}\right)_{\comp{\gS}_t}.
\]
Hence,
\begin{align*}
\mV^\top\vr^{(t+1)} &- 2\tilde{\alpha} \mV^\top\mW \vr^{(t)} + \tilde{\alpha}^2\mV^\top\vr^{(t-1)} \\
&= (1+\tilde{\alpha}^2) \sum_{i=1}^{t} \tilde{\alpha}^{2(t-i)} \mV^\top\mQ\vr_{\bar\gS_{i,t}}^{(i)}  + \tilde{\alpha}^{2 t} \mV^\top\mQ \mGamma  \vr_{\bar\gS_{0,t}}^{(0)}\\
\iff \tilde\alpha^{t+1}\vy^{(t+1)} - & 2\tilde{\alpha} \tilde\alpha^t \mW \vy^{(t)} + \tilde{\alpha}^2\tilde\alpha^{t-1}\vy^{(t-1)} \\
&= (1+\tilde{\alpha}^2) \sum_{i=1}^{t} \tilde{\alpha}^{2(t-i)} \tilde\alpha^i\mQ\mV^\top(\mV^\top\vy^{(i)})_{\bar\gS_{i,t}}  + \tilde{\alpha}^{2 t} \mQ \mGamma  \mV^\top(\mV \vy^{(0)})_{\bar\gS_{0,t}}\\
\iff  \vy^{(t+1)} - & 2 \mW \vy^{(t)} + \vy^{(t-1)} \\
&= \overbrace{\sum_{i=1}^{t}\underbrace{(1+\tilde{\alpha}^2)\tilde\alpha^{t-1} \tilde{\alpha}^{-i} \mQ\mV^\top(\mV\vy^{(i)})_{\gS_{i,t}}}_{\vf_i^{(t)}}  + \underbrace{ \tilde\alpha^{t-1}\mQ \mGamma  \mV^\top(\mV \vy^{(0)})_{\gS_{0,t}}}_{\vf_0^{(t)}}}^{\vf^{(t)}}.
\end{align*}
Then, from Lemma \ref{lemma:inhomo-equation}
\begin{align*}
\vy^{(t)} = \mZ_t \vy^{(0)} + \sum_{k=1}^{t-1} \mH_{k,t} \vf^{(k)} \iff 
\mV^\top{\vr}^{(t)} = \tilde{\alpha}^t\mZ_t \mV^\top{\vr}^{(0)} + \tilde{\alpha}^t \sum_{k=1}^{t-1} \mH_{k,t} \vf^{(k)}\\
\end{align*}
so expanding the error term
\begin{align*}
 \tilde\alpha^t \mH_{k,t} \vf^{(k)} &= \sum_{i=1}^k\tilde\alpha^t \mH_{k,t} \vf_i^{(k)} + \tilde\alpha^t \mH_{k,t} \vf_0^{(k)}\\
 &= (1+\tilde\alpha^2)\tilde\alpha^{t+k-1-i} \sum_{i=1}^k \mH_{k,t} \mQ\mV^\top(\mV \vy^{(i)})_{\gS_{i,k}} \\
 &+ \tilde\alpha^{t+k-1} \mH_{k,t} \mQ\mGamma\mV^\top(\mV \vy^{(0)})_{\gS_{0,k}}\\
 &= (1+\tilde\alpha^2)\tilde\alpha^{t+k-1-2i} \sum_{i=1}^k \mH_{k,t} \mQ\mV^\top\vr^{(i)}_{\gS_{i,k}} + \tilde\alpha^{t+k-1} \mH_{k,t} \mQ\mGamma\mV^\top \vr^{(0)}_{\gS_{0,k}}\\
 &= 2   \sum_{i=1}^k \tilde\alpha^{t+k-2i} \mH_{k,t} (\frac{1+\alpha}{1-\alpha})\mQ\mV^\top\vr^{(i)}_{\gS_{i,k}} + \tilde\alpha^{t+k-1} \mH_{k,t} \mQ\mGamma\mV^\top \vr^{(0)}_{\gS_{0,k}}\\
 \sum_{k=1}^{t-1} \tilde\alpha^t \mH_{k,t} \vf^{(k)} &= \sum_{k=1}^{t-1}(2\sum_{i=1}^k \tilde\alpha^{t+k-2i} \mH_{k,t} (\frac{1+\alpha}{1-\alpha})\mQ\mV^\top\vr^{(i)}_{\gS_{i,k}} + \tilde\alpha^{t+k-1} \mH_{k,t} \mQ\mGamma\mV^\top \vr^{(0)}_{\gS_{0,k}})\\
 &= 2   \sum_{k=1}^{t-1} \sum_{j=k}^{t-1} \tilde\alpha^{t+j-2k} \mH_{j,t} (\frac{1+\alpha}{1-\alpha})\mQ\mV^\top\vr^{(k)}_{\gS_{k,j}} + \sum_{k=1}^{t-1} \tilde\alpha^{t+k-1} \mH_{k,t} \mQ\mGamma\mV^\top \vr^{(0)}_{\gS_{0,k}}
\end{align*}
which when simplified, gives the relation proposed.
\end{proof}

\subsection{Convergence of \textsc{LocHB} of Proof of Theorem~\ref{thm:loc-hb}}

\begin{corollary}
Define 
\[
\beta_{k,t} \triangleq \|\vu_{k,t}\|_2 / \| \vr^{(k)}\|_2, \qquad \beta_k\triangleq\max_t \beta_{k,t}. 
\]
Then the upper bound of $\|\vr^{(t)} \|_2$ can be characterized as
\begin{equation}
\| \vr^{(t)}\|_2 \leq \tilde{\alpha}^t \prod_{j=0}^{t-1} (1+\beta_j) y_t, 
\end{equation}
where $y_{t+1} - 2 y_t + y_{t-1} / ( (1+\beta_{t-1})(1+\beta_t) ) = 0$ where $y_0 = y_1 = \| \vr^{(0)}\|_2$. 
\label{corollary:loc-heavy-ball}
\end{corollary}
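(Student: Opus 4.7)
The plan is to mirror the proof strategy used for Corollary~\ref{corollary:d-10} (the \textsc{LocCH} case), with the Chebyshev-ratio product $\delta_{k+1:t}$ replaced by the simpler geometric factor $\tilde{\alpha}^{t-k}$, which is the natural simplification arising when $\delta_t \to \tilde{\alpha}$ as in the Heavy-Ball limit.

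First, I would invoke Theorem~\ref{thm:loc-hb-residual-updates} and take the $\ell_2$-norm on both sides of the representation of $\mV^\top \vr^{(t)}$. Since $\mV$ is orthogonal, $\|\mZ_t\|_2 \leq 1$, and by the definition of $\beta_k$ we have $\|\vu_{k,t}\|_2 \leq \beta_k \|\vr^{(k)}\|_2$, this yields for each $t=1,\ldots,T$ the triangular inequality
\[
\|\vr^{(t)}\|_2 - 2\sum_{k=1}^{t-1} \tilde{\alpha}^{t-k}(t-k)\beta_k \|\vr^{(k)}\|_2 \;\leq\; \tilde{\alpha}^t (1 + t\beta_0) \|\vr^{(0)}\|_2.
\]
Second, I would collect these $T$ inequalities in matrix form $(\mI - \mZ_L)\,\bm{\rho} \leq \vc$, where $\mZ_L$ is strictly lower triangular with $(\mZ_L)_{tk} = 2\tilde{\alpha}^{t-k}(t-k)\beta_k$ and $\bm{\rho}_t = \|\vr^{(t)}\|_2$. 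Since $(\mI - \mZ_L)^{-1} = \mI + \sum_{k\geq 1}\mZ_L^k \geq \bm 0$, one obtains the elementwise bound $\|\vr^{(t)}\|_2 \leq z_t$, where $\{z_t\}$ solves the equality version
\[
z_t - 2\sum_{k=1}^{t-1} \tilde{\alpha}^{t-k}(t-k)\beta_k z_k \;=\; \tilde{\alpha}^t (1 + t\beta_0) z_0, \qquad z_0 = \|\vr^{(0)}\|_2.
\]

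Third, I would derive a three-term recurrence for $z_t$ via the telescoping-difference trick of Corollary~\ref{corollary:d-10}: forming $z_{t+1} - \tilde{\alpha} z_t$ collapses the convolution sum so that only the boundary terms remain, and repeating with $\tilde{\alpha} z_t - \tilde{\alpha}^2 z_{t-1}$ and subtracting kills the $\beta_0 z_0$ drift, leaving
\[
z_{t+1} - 2(1+\beta_t)\tilde{\alpha}\,z_t + \tilde{\alpha}^2 z_{t-1} = 0.
\]
Fourth, the substitution $z_t = \tilde{\alpha}^t \hat{z}_t$ normalizes out the geometric factor and produces $\hat z_{t+1} - 2(1+\beta_t)\hat z_t + \hat z_{t-1} = 0$ with $\hat z_0 = \|\vr^{(0)}\|_2$ and $\hat z_1 = (1+\beta_0)\|\vr^{(0)}\|_2$. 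Finally, setting $\hat z_t = \prod_{j=0}^{t-1}(1+\beta_j)\,y_t$ (as in the concluding step of Corollary~\ref{corollary:d-10} via the accompanying lemma) rescales the recurrence to $y_{t+1} - 2 y_t + y_{t-1}/((1+\beta_{t-1})(1+\beta_t)) = 0$ with $y_0 = y_1 = \|\vr^{(0)}\|_2$, matching the claim.

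The main obstacle is the telescoping-difference step: one must verify carefully that the two shifted equations for $z_{t+1}$ and $\tilde{\alpha} z_t$ align correctly so that all intermediate $\beta_k z_k$ terms cancel after subtraction, leaving only the $\beta_t z_t$ boundary contribution. Because the kernel $\tilde{\alpha}^{t-k}$ is a pure geometric factor (rather than the delicate Chebyshev product $\delta_{k+1:t}$), the cancellation is more transparent than in the \textsc{LocCH} case, but the bookkeeping of which index survives each differencing is still the most error-prone part and warrants explicit verification.
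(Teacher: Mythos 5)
Your proposal is correct and follows essentially the same route as the paper's own proof: you invoke Theorem~\ref{thm:loc-hb-residual-updates}, bound the noise terms via $\|\vu_{k,t}\|_2 \leq \beta_k\|\vr^{(k)}\|_2$, assemble the lower-triangular system $(\mI-\mZ_L)\bm\rho \leq \vc$, use nonnegativity of $(\mI-\mZ_L)^{-1}$ to pass to the equality sequence, and then apply exactly the two-fold telescoping difference and the substitutions $z_t = \tilde\alpha^t\hat z_t$, $\hat z_t = \prod_{j=0}^{t-1}(1+\beta_j)\,y_t$ that the paper uses to arrive at the stated three-term recurrence for $y_t$. One small note: you consistently use the kernel exponent $\tilde\alpha^{t-k}$ (matching Theorem~\ref{thm:loc-hb-residual-updates}), whereas the paper's proof writes $\tilde\alpha^{t-k-1}$ in one displayed line before reverting to $\tilde\alpha^{t-k}$ in the matrix definition — your version is the internally consistent one.
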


\begin{proof}
Since $\| \vu_{k,t} \|_2 \leq  \beta_k \| \vr^{(k)}\|_2$, then given the final iterative updates \eqref{equ:local-Chebyshev-updates-rt} 
\[
\mV^\top \vr^{(t)} = \tilde{\alpha}^t \mZ_t \mV^\top\vr^{(0)} +  \tilde{\alpha}^t t \vu_{0,t} + 2 \sum_{k=1}^{t-1} \tilde{\alpha}^{t-k} (t-k) \vu_{k,t}
\]
and since $\|\mZ_t\|_2\leq 1$ we can bound
\begin{align}
\|\vr^{(t)}\|_2 &\leq \tilde{\alpha}^t \|\vr^{(0)}\|_2 +  \tilde{\alpha}^t t \beta_0 \|\vr^{(0)}\|_2 + 2 \sum_{k=1}^{t-1} \tilde{\alpha}^{t-k-1} (t - k)\beta_k \|\vr^{(k)}\|_2 \nonumber\\
\iff \|\vr^{(t)}\|_2 &- 2 \sum_{k=1}^{t-1} \tilde{\alpha}^{t-k-1} (t - k) \beta_k \|\vr^{(k)}\|_2 \leq \tilde{\alpha}^t (1 + t \beta_0) \|\vr^{(0)}\|_2  \label{equ:final-inequality-1},
\end{align}
where $t=0,1,\ldots,T$. The rest just follows a similar strategy shown in Corollary \ref{corollary:d-10}. We have the following inequalities
\begin{align*}
\mI - \mV_L :=
\begin{pmatrix}
1 & 0 & 0 & \cdots & 0 \\[.4em]
-v_{21} & 1 & 0 & \cdots & 0 \\[.4em]
- v_{31} & - v_{32} & 1 & \cdots & 0 \\[.4em]
\vdots & \vdots & \vdots & \ddots & \vdots \\[.4em]
- v_{n1} & - v_{n2} & - v_{n3} & \cdots & 1
\end{pmatrix} \begin{pmatrix}
\|\vr^{(1)}\|_2 \\[.4em]
\|\vr^{(2)}\|_2 \\[.4em]
\|\vr^{(3)}\|_2 \\[.4em]
\vdots \\[.4em]
\|\vr^{(T)}\|_2 
\end{pmatrix} \leq  \begin{pmatrix}
\tilde{\alpha}^1(1 + \beta_0 )\| \vr^{(0)}\|_2 \\[.4em]
\tilde{\alpha}^2 ( 1 + 2  \beta_0 )\| \vr^{(0)}\|_2 \\[.4em]
\tilde{\alpha}^3 (1 + 3 \beta_0 )\| \vr^{(0)}\|_2 \\[.4em]
\vdots \\[.4em]
\tilde{\alpha}^T (1 + T  \beta_0 )\| \vr^{(0)}\|_2
\end{pmatrix} := \vc,
\end{align*}
where $(\mV_L)_{t k} = 2 \tilde{\alpha}^{t-k} (t - k) \beta_k$. Denote each upper bound as $\tau_t = \|\vr^{(t)}\|_2$, we will have
\begin{align*}
\tau_t &= c_t + \sum_{k=1}^{t-1}v_{t,k}=\tilde{\alpha}^t (1+t \beta_0 )\tau_0 +  2 \sum_{k=1}^{t-1} (t-k) \tilde{\alpha}^{t-k} \beta_k \tau_{k} \\
\tau_{t+1} &= c_{t+1} + \sum_{k=1}^{t}v_{t+1,k} = \tilde{\alpha}^{t+1} (1+(t+1)\beta_0)\tau_0 + 2\sum_{k=1}^{t} (t +1-k) \tilde{\alpha}^{t-k+1} \beta_k \tau_{k} 
\\&= \tilde{\alpha}^{t+1} (1+(t+1)\beta_0)\tau_0 + 2\sum_{k=1}^{t-1} (t -k) \tilde{\alpha}^{t-k+1} \beta_k \tau_{k} + 2 \sum_{k=1}^{t} \tilde{\alpha}^{t-k+1} \beta_k \tau_{k} \\
\tilde{\alpha}\tau_t &= \tilde{\alpha}^{t+1}(1+t \beta_0 ) \tau_0 + 2 \sum_{k=1}^{t-1} (t-k) \tilde{\alpha}^{t-k+1} \beta_k \tau_{k} \\
\tau_{t+1}- \tilde{\alpha} \tau_t &= \tilde{\alpha}^{t+1} \beta_0 \tau_0 + 2 \sum_{k=1}^t \tilde{\alpha}^{t- k+1} \beta_k\tau_{k} \\
\tau_{t-1} &= \tilde{\alpha}^{t-1} (1+(t-1)\beta_0)\tau_0 + 2 \sum_{k=1}^{t-2} (t-k-1) \tilde{\alpha}^{t-k-1} \beta_k \tau_{k} \\
\tilde{\alpha}^2\tau_{t-1} &= \tilde{\alpha}^{t+1} (1+(t-1)\beta_0) \tau_0 + 2 \sum_{k=1}^{t-1} (t-k) \tilde{\alpha}^{t-k+1} \beta_k \tau_{k} - 2 \sum_{k=1}^{t-1}\tilde{\alpha}^{t-k+1} \beta_k \tau_{k} \\
\tilde{\alpha} \tau_t -\tilde{\alpha}^2 \tau_{t-1} &= \tilde{\alpha}^{t+1} \beta_0 \tau_0 + 2 \sum_{k=1}^{t-1} \tilde{\alpha}^{t-k+1} \beta_k \tau_{k}.\\
\tau_{t+1} - 2  \tilde{\alpha} \tau_t + \tilde{\alpha}^2 \tau_{t-1} &= 2\tilde\alpha \beta_t\tau_t 
\end{align*}
The above analysis finally leads to $\tau_{t+1} - 2 (1 + \beta_t ) \tilde{\alpha} \tau_t + \tilde{\alpha}^2 \tau_{t-1} = 0$.
So for  
\[
\tilde \alpha^t \prod_{j=0}^{t-1} (1+\beta_j)y_t = \tau_t
\]
we have 
\[
y_{t+1} - 2   y_t +    \frac{y_t}{(1+\beta_t )(1+\beta_{t-1})} = 0.
\]
Following the same strategy in Corollary \ref{corollary:d-10}, we obtain the upper bound. 
\end{proof}

\begin{theorem}[Convergence of \textsc{LocHB}]
Let the geometric mean of $\beta_t$ be $\mean{\beta}_t \triangleq \prod_{j=0}^{t-1} (1+\beta_j)^{1/t}$. Then the upper bound of $\|\vr^{(t)}\|_2$ can be characterized as 
\begin{equation}
\| \vr^{(t)}\|_2 \leq \tilde{\alpha}^t \prod_{j=0}^{t-1} (1+\beta_j) y_t, 
\end{equation}
where $y_{t+1} - 2 y_t + y_{t-1} / ( (1+\beta_{t-1})(1+\beta_t) ) = 0$ where $y_0 = y_1 = \| \vr^{(0)}\|_2$.  Assume that there exists a constant $c \in [0,2)$ such that $\beta_t \leq 1 + \frac{c\sqrt{\alpha}}{ 1 - \sqrt{\alpha}}$.  Then the total number of iterations can be bounded as
\[
T \leq \left\lceil \frac{1+\sqrt{\alpha}}{ (2-c)\sqrt{\alpha}}\right\rceil \ln\left( \frac{y_t}{ \eps } \right).
\]
Then the total runtime $\gT$ is bounded by
\[
\gT \leq \Theta \left( \frac{ (1+\sqrt{\alpha})\mean{\vol}(\gS_T) }{ (2-c)\sqrt{\alpha} } \ln \frac{y_t}{\eps} \right) = \widetilde\gO\left( \frac{\mean{\vol}(\gS_T) }{ (2-c)\sqrt{\alpha} }\right),
\]
where $\widetilde\gO$ hides $\ln(y_t / \eps )$.
\label{thm:loc-hb}
\end{theorem}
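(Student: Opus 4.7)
The plan is to follow closely the strategy used in the proof of Theorem \ref{th:bound_chebychev} for \textsc{LocCH}, but adapted to the heavy-ball setting where the Chebyshev product $\delta_{1:t}$ is replaced throughout by $\tilde{\alpha}^t$. First I would simply invoke Corollary \ref{corollary:loc-heavy-ball}, which already delivers the pointwise bound $\|\vr^{(t)}\|_2 \leq \tilde{\alpha}^t \prod_{j=0}^{t-1}(1+\beta_j)\, y_t$ with $y_t$ solving the auxiliary second-order difference equation $y_{t+1}-2y_t+y_{t-1}/((1+\beta_{t-1})(1+\beta_t))=0$. So the first half of the theorem statement is essentially free, and the only real work is to turn this bound into an iteration count and then into a runtime.

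Next, I would use the geometric-mean assumption to factor $\prod_{j=0}^{t-1}(1+\beta_j) = \mean{\beta}_t^{\,t}$. Under the hypothesis $\mean{\beta}_t \leq 1 + c\sqrt{\alpha}/(1-\sqrt{\alpha})$, a direct computation gives
\[
\tilde{\alpha}\,\mean{\beta}_t \;\leq\; \frac{1-\sqrt{\alpha}}{1+\sqrt{\alpha}}\cdot\frac{1-(1-c)\sqrt{\alpha}}{1-\sqrt{\alpha}} \;=\; \frac{1-(1-c)\sqrt{\alpha}}{1+\sqrt{\alpha}},
\]
which is strictly below $1$ for $c\in[0,2)$, so $(\tilde{\alpha}\mean{\beta}_t)^t$ contracts geometrically. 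To reach the stop condition $\|\vr^{(t)}\|_2\leq\eps$ it is then sufficient to ensure $(\tilde{\alpha}\mean{\beta}_t)^t y_t \leq \eps$; taking logs yields $t \geq \ln(y_t/\eps)/\ln\!\bigl((1+\sqrt{\alpha})/(1-(1-c)\sqrt{\alpha})\bigr)$.

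The elementary estimate $1/\ln(1+x) \leq (1+x)/x$ applied with $x=(2-c)\sqrt{\alpha}/(1-(1-c)\sqrt{\alpha})$ collapses this to the advertised form
\[
T \;\leq\; \left\lceil \frac{1+\sqrt{\alpha}}{(2-c)\sqrt{\alpha}} \ln\!\frac{y_T}{\eps}\right\rceil.
\]
Finally, from the definition $\gT_{\textsc{LocHB}} = \sum_{t=0}^{T-1} \vol(\gS_t) = T\cdot\mean{\vol}(\gS_T)$ I would multiply through to reach the stated runtime. The cleanest step is the geometric-mean factorisation; the step that requires the most care is controlling the auxiliary sequence $y_t$ so that the factor $\ln(y_T/\eps)$ is indeed logarithmic in $1/\eps$ and not hiding additional polynomial growth in $T$ — this is where the argument relies on the implicit boundedness of $y_t$ (its recurrence has characteristic polynomial $\lambda^2 - 2\lambda + 1/((1+\beta_{t-1})(1+\beta_t))$, giving roots close to $1$ whenever $\beta_t$ is small, which keeps $y_t$ from blowing up faster than polynomially). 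Handling the corner case $c\to 2$, where the contraction $(2-c)\sqrt{\alpha}$ degenerates, is the main potential obstacle, but for any fixed $c<2$ the bound closes without difficulty.
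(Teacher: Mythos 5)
Your proposal is correct and follows essentially the same route as the paper: the paper's own proof of Theorem~\ref{thm:loc-hb} is a two-line deferral to Corollary~\ref{corollary:loc-heavy-ball} (for the residual bound) and to the argument of Theorem~\ref{th:bound_chebychev} (for the iteration count and runtime), and your write-up supplies exactly those details with the single substitution $\delta_{1:t}\to\tilde{\alpha}^t$ (which is why the factor of $2$ present in the LocCH bound correctly drops out here). Your intermediate identity $\tilde{\alpha}\,\mean{\beta}_t \leq \frac{1-(1-c)\sqrt{\alpha}}{1+\sqrt{\alpha}}$ is algebraically the same as the paper's $1-\frac{(2-c)\sqrt{\alpha}}{1+\sqrt{\alpha}}$, and your closing remark that the $\ln(y_T/\eps)$ term is left uncontrolled mirrors the paper's own treatment, which simply absorbs it into $\widetilde{\gO}$.
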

\begin{proof}
The first part follows from Corollary of \ref{corollary:loc-heavy-ball}. 
The rest follows the same strategy of \textsc{LocCH} as in Theorem \ref{th:bound_chebychev}. (See also Theorem \ref{th:bound_chebychev}.)

\end{proof}

\subsection{Implementation of \textsc{LocHB}}

We present the implementation of \textsc{LocHB} as follows: Recall the updates of \textsc{LocHB} is
\begin{align*}
\vx^{(t+1)} &= \vx^{(t)} + (1+\tilde{\alpha}^2)\vr_{\gS_t}^{(t)} + \tilde{\alpha}^2\big(\vx^{(t)}-\vx^{(t-1)}\big)_{\gS_t}, \quad \vr^{(t+1)} =  2\tilde{\alpha} \mW \vr^{(t)} - \tilde{\alpha}^2\vr^{(t-1)}.
\end{align*}
The corresponding local updates are
\begin{align*}
\vDelta^{(t)} &= (1+\tilde{\alpha}^2)\vr^{(t)} + \tilde{\alpha}^2 \tilde{\vx}^{(t)}\\
\vx^{(t+1)} &= \vx^{(t)} + \vDelta_{\gS_t}^{(t)} \\
\vr^{(t+1)} &= \vr^{(t)} - \vDelta^{(t)} + \frac{1-\alpha}{1+\alpha}\mW \vDelta^{(t)} \\
\tilde{\vx}^{(t+1)} &= \tilde{\vx}^{(t)} + \vDelta^{(t)} - \vDelta^{(t-1)},
\end{align*}
where if we choose $\vx^{(0)} = \vx^{(1)} =\tilde{\vx}^{(1)} = \bm 0$, $\vr^{(0)} = \vr^{(1)} =\vb$ and $\vDelta^{(0)} = \bm 0$.
\section{Instances of Sparse Linear Systems}
\label{appx:A:linear-system}

\subsection{Table of Popular Graph-induced Linear Systems }
This section presents most commonly used graph-induced linear system as the following
\begin{align*}
\underbrace{\mLambda - \sigma \mD^{-1/2} \mA \mD^{-1/2}}_{\mQ}\vx = \vb,
\end{align*}
where $\mQ$ is the generalized version of the perturbed normalized graph Laplacian matrix with perturbation parameter $\sigma > 0$, and $\vb$ is a sparse vector. A typical example of $\mQ = \mI - \frac{1-\alpha}{1+\alpha} \mD^{-1/2} \mA \mD^{-1/2}$ with $\mLambda = \mI$ and $\vb = 2\alpha\mD^{-1/2}\ve_s/(1+\alpha)$

\begin{table}
\renewcommand{\arraystretch}{1.2}
\centering
\caption{Examples of sparse linear systems}
\begin{tabular}{m{4.8cm}|m{4cm}|m{2cm}|m{1cm}}
\toprule
Original Linear system & Our target $\mQ \vx = \vb$ \newline $\mQ = \mLambda - \sigma \mD^{-1/2}\mA\mD^{-1/2}$& $\left[\mu, L\right]$ & Ref. \\ \hline
$\big(\mI - \sigma \mW\big) \vx = \alpha \ve_s$ $\blue{^{1.}}$ & 
 $\mLambda = \mI, \sigma = 1-\alpha$  \newline $\vb = \alpha \ve_s$ & $\left[\alpha, 2 -\alpha\right]$ & \cite{klicpera2019predict} \newline \cite{wu2019simplifying} \\ \hline
$\big(\alpha \mI+\frac{1-\alpha}{2} \mathcal{\mL}\big) \vx = \alpha \mD^{-1/2} \ve_s$ $\blue{^{2.}}$ 
\newline
$\mathcal{\mL}=\mI-\mD^{-1/2}\mA\mD^{-1/2}$
& $\mLambda = \mI, \sigma= \frac{1-\alpha}{1+\alpha} $ \newline $\vb = 2\alpha \mD^{-1/2} \vs / (1+\alpha)$ & $\left[\frac{2\alpha}{1+\alpha},\frac{2}{1+\alpha}\right]$ & \cite{andersen2006local} \newline \cite{fountoulakis2019variational} \newline \cite{fountoulakis2022open} \newline
\cite{martinez2023accelerated} \\ \hline
$\left(\mI - (1-\alpha)\mA \mD^{-1} \right)\vy = \alpha \ve_s$ $\blue{^{3.}}$ &  $\mLambda = \mI, \sigma = 1-\alpha$ \newline $\vb = \alpha \mD^{-1/2}\ve_s, \alpha \in (0,1)$ \newline $\vx = \mD^{-1/2}\vy$ & $\left[\alpha, 2-\alpha\right]$ & \cite{bojchevski2020scaling} \newline \cite{zeng2021decoupling} \\\hline
$\left( \frac{\lambda}{n} {\mI}_n+{\mD}-{\mA}\right) \vy = 2 \lambda \ve_s$ $\blue{^{4.}}$ & $\mLambda = \frac{\lambda}{n} \mD^{-1} + \mI_n, \sigma = 1$ \newline $\vb = 2\lambda \mD^{-1/2}\ve_s, \lambda \in [1, n]$ \newline $\vx = \mD^{1/2} \vy$ & $\left[ \frac{\lambda}{n d_{\max}} , \frac{\lambda + 2n}{n} \right]$ & \cite{rakhlin2017efficient} \newline \cite{zhou2023fast} \\\bottomrule
\end{tabular}
\label{table-1:sparse-linear-system}
\end{table}
The detailed parameters are:
\begin{itemize}[leftmargin=*]
\item \blue{1.} $\mW = \tilde{\mD}^{-1/2} \tilde{\mA} \tilde{\mD}^{-1/2}$ and $\tilde{\mA} = \mI + \mA$ is the adjacency matrix defined on $\gG(\gV,\gE)$ by adding self-loops for all nodes, and $\tilde{\mD}= \mI + \mD$ is defined as the augmented degree matrix by adding self-loops. $\alpha \in (0,1)$ and usually $\alpha < 0.5$, The $\mI - (1-\alpha) \mW$ is the perturbed augmented
normalized Laplacian with perturbed parameter $\alpha$.
\item \blue{2.} $\mQ = \mD^{-1 / 2}\left(\mD-\frac{1-\alpha}{2}(\mD + \mA)\right) \mD^{-1 / 2}=\alpha \mI+\frac{1-\alpha}{2} \mathcal{\mL}>0$ and $\mQ = \frac{1-\alpha}{1+\alpha}\mD^{-1/2}\mA\mD^{-1/2}$. This is known as the lazy random-walk version of PPR vectors.
\item \blue{3.} $\vx=\alpha\left(\mI -(1-\alpha) \mA \mD^{-1}\right)^{-1} \ve_s$ This is the standard Personalized PageRank vectors widely used for graph embeddings and graph neural network designing \cite{bojchevski2020scaling}. It is also used for decoupling for large-scale GNNs \cite{zeng2021decoupling}.
\item \blue{4.} Graph kernel computation for online learning. Each computed vector serves as semi-supervised learning feature vectors \cite{johnson2008graph} or as online node labeling learning vectors \cite{rakhlin2017efficient,zhou2023fast}. Note the target linear system when $\sigma = 1$
\begin{align*}
\mD^{-1/2} \left( \frac{\lambda}{n} \mI_n + \mD - \mA  \right) \mD^{-1/2} \mD^{1/2} \vy &= 2\lambda \mD^{-1/2}\ve_s \\
\left( \frac{\lambda}{n} \mD^{-1} + \mI_n - \mD^{-1/2}\mA\mD^{-1/2}  \right) \mD^{1/2} \vy &= 2\lambda \mD^{-1/2}\ve_s.
\end{align*}
Hence, we have $\mLambda =\frac{\lambda}{n} \mD^{-1} + \mI_n,  \sigma =1, \vb = 2\lambda \mD^{-1/2}\ve_s$.
\end{itemize}
\section{Experimental Details and Missing Results}
\label{appd:sect:experiments}

\subsection{Datasets and Preprocessing}

\renewcommand{\arraystretch}{1.2}
\begin{table}
\centering
\begin{tabular}{lrrr}
\toprule
Dataset ID & Dataset Name      & n         & m            \\
\midrule
$\gG_1$ & as-skitter        & 1,694,616   & 11,094,209   \\
$\gG_2$ & cit-patent        & 3,764,117   & 16,511,740   \\
$\gG_3$ & com-dblp          & 317,080     & 1,049,866    \\
$\gG_4$ & com-lj            & 3,997,962   & 34,681,189   \\
$\gG_5$ & com-orkut         & 3,072,441   & 117,185,083  \\
$\gG_6$ & com-youtube       & 1,134,890   & 2,987,624    \\
$\gG_7$ & ogbn-arxiv        & 169,343     & 1,157,799    \\
$\gG_8$ & ogbn-mag          & 1,939,743   & 21,091,072   \\
$\gG_9$ & ogbn-products     & 2,385,902   & 61,806,303   \\
$\gG_{10}$ & ogbn-proteins     & 132,534     & 39,561,252   \\
$\gG_{11}$ & soc-lj1           & 4,843,953   & 42,845,684   \\
$\gG_{12}$ & soc-pokec         & 1,632,803   & 22,301,964   \\
$\gG_{13}$ & wiki-talk         & 2,388,953   & 4,656,682    \\
$\gG_{14}$ & ogbl-ppa          & 576,039     & 21,231,776   \\
$\gG_{15}$ & wiki-en21         & 6,216,199   & 160,823,797  \\
$\gG_{16}$ & com-friendster    & 65,608,366  & 1,806,067,135\\
$\gG_{17}$ & ogbn-papers100m   & 111,059,433 & 1,614,061,934\\
\bottomrule
\end{tabular}
\caption{Dataset Statistics}
\label{tab:datasets}
\end{table}

Following \citet{leskovec2009community}, we treat all 17 graphs as undirected with unit weights. We remove self-loops and keep the largest connected component when the graph is disconnected. After preprocessing, the graphs range from $169,343$ nodes in ogbn-proteins to $111,059,433$ in ogbn-papers100M, as presented in Table~\ref{tab:datasets}.

\subsection{Problems Settings and Baseline Methods}

For solving Equation~(\ref{equ:Qx=b}), we randomly select 50 source nodes \(s\) from each graph. The damping factor is fixed at \(0.1\), i.e., \(\alpha = 0.1\) for all experiments, and it varies within the range \(\{0.005, 0.01, 0.05, 0.1, 0.15, 0.2, 0.25, 0.3\}\) for others. The \(\epsilon\) is chosen from the range \(\left[ 2\alpha/((1+\alpha)d_s), 10^{-4}/n\right]\).

For solving the local clustering problem, we follow the greedy strategy from \citet{andersen2006local}, where a local cluster is identified by examining the top magnitudes in PPR vectors. Specifically, we denote the boundary of \(\gS\) as \(\partial(\gS)=\{(u,v) \in \gE : u \in \gS, v \notin \gS\}\). The conductance of \(\gS\) is defined as
\[
\Phi(\gS) \triangleq \frac{|\partial(\gS)|}{\min (\vol(\gS), 2 m - \vol(\gV\backslash\gS))}.
\]
The goal of local clustering is to obtain PPR vectors using these local methods and then apply clustering algorithms to find clusters with low conductance. For the sorting process, given the approximate PPR vector \(\Tilde{\vpi}\), we sort \(\mD^{-1/2}\Tilde{\vpi}\) in decreasing order of magnitudes. Let the ordered nodes be \(v_1,v_2,\ldots,v_t\); the local clustering algorithm iteratively checks the conductance reduction by \(v_1,v_2,\cdots,v_k\) where \(k=1,2,\ldots,t\), and after completing all checks, it returns a subset \(v_1,v_2,\dots,v_{k^\prime}\) that has the minimal conductance among all examined subsets.

\textbf{Parameter settings of baselines.} For the local \textsc{ISTA} method \cite{fountoulakis2019variational}, the precision parameter is set to $\hat{\epsilon}=0.5$ for all experiments. According to the algorithm's description of ISTA, the corresponding $\rho$ value is given by $\epsilon/(1+\hat{\epsilon})$. For \textsc{LocSOR}, the parameter $\omega$ is calculated as $2(1+\alpha)/ (1+\sqrt{\alpha})^2$. For the local \textsc{FISTA}, as demonstrated in \cite{hu2020local}, we adopt the same settings as for ISTA and follow its implementation guidelines. We also include preliminary results on ASPR \cite{martinez2023accelerated}. The algorithm incorporates a parameter, $\hat{\epsilon}$, to control the number of iterations in the nested Accelerated Projected Gradient Descent (APGD). We adjust $\hat{\epsilon}$ from low precision, $\hat{\epsilon} = 0.1/n$, to high precision, $\hat{\epsilon}=10^{-4}/n$, to ensure the identification of a good approximation.

For our experiment, we used a server powered by an Intel(R) Xeon(R) Gold 5218R CPU, which features 40 cores (80 threads). The system is equipped with 256 GB of RAM.

\subsection{Full results of Fig.~\ref{fig:demo-figure1}~\ref{fig:run-time-speedup}~\ref{fig:l1-err-7-methods}~\ref{fig:large-scale-graphs}}

\begin{figure}
    \centering
    \includegraphics[width=1.\textwidth]{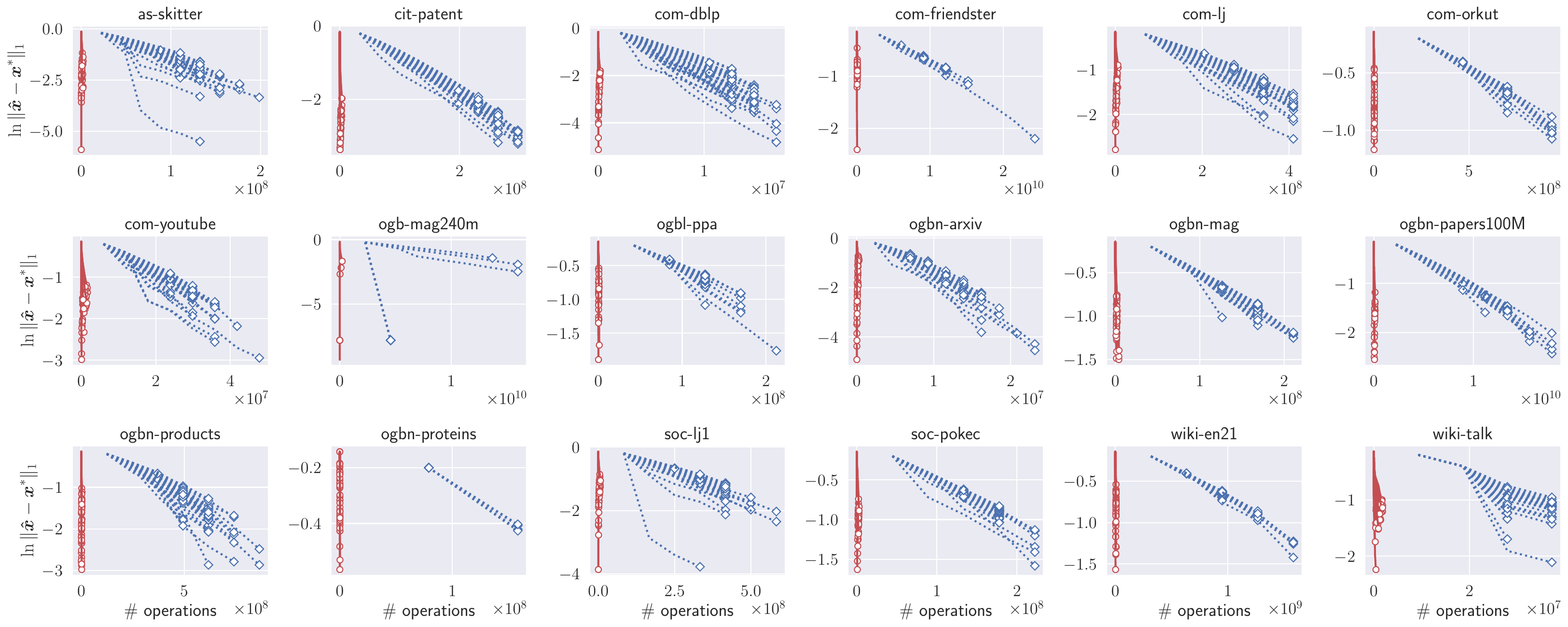}
    \caption{The \textsc{LocSOR} method compared with CGM over 18 graphs.}
    \label{fig:demo-figure1-loc-sor-cgm}
\end{figure}

In all 15 graphs, we set $\alpha = 0.1$ and $\epsilon = 0.1 / n$. For each of the testing graphs, we randomly select 50 nodes and run \textsc{LocSOR} and CGM.

\begin{figure}
    \centering
    \includegraphics[width=1.\textwidth]{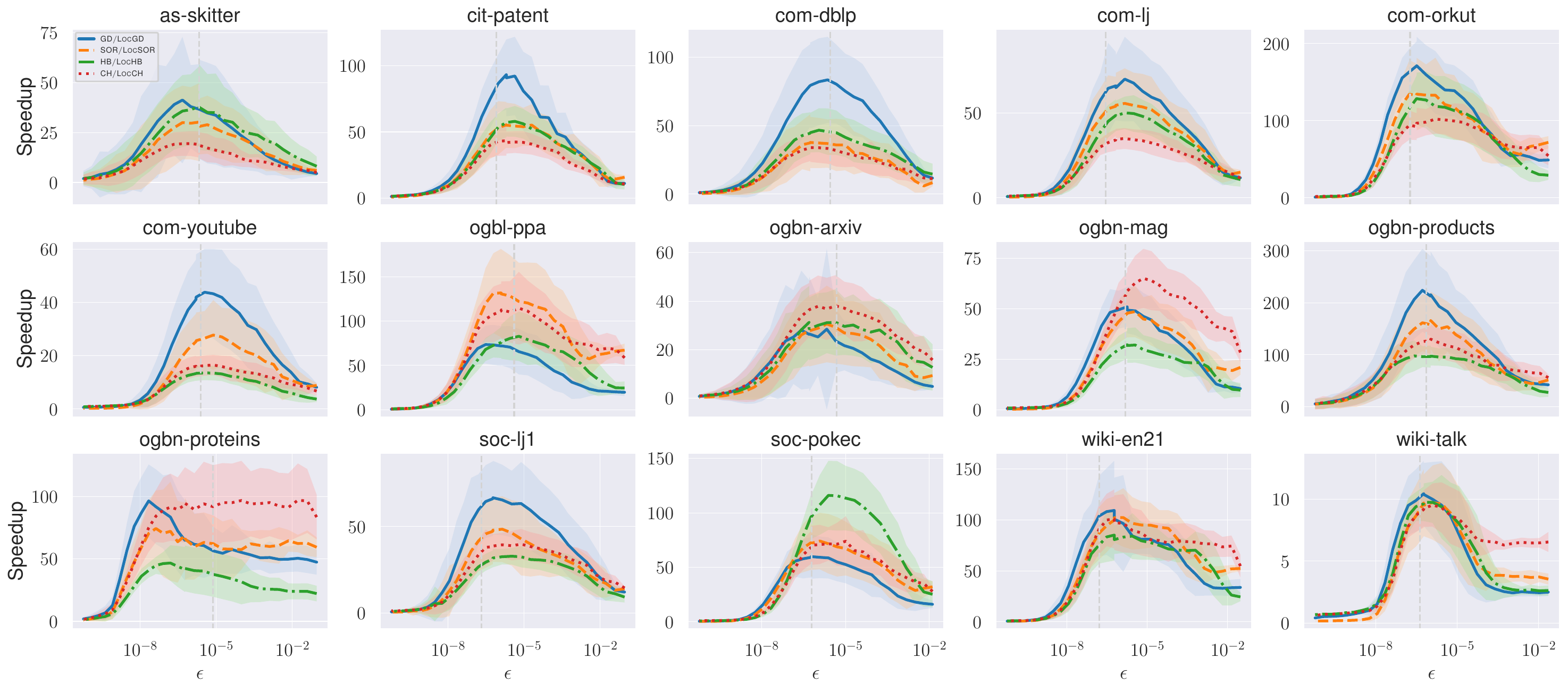}
    \caption{The speedup of local solvers compared with their standard counterparts.}
    \label{fig:full-results-speedup-15-datasets}
\end{figure}

Fig.~\ref{fig:full-results-speedup-15-datasets} presents all speedup tests on 15 datasets. It is evident that these standard linear solvers can be localized effectively.

\begin{figure}[H]
    \centering
    \includegraphics[width=1.\textwidth]{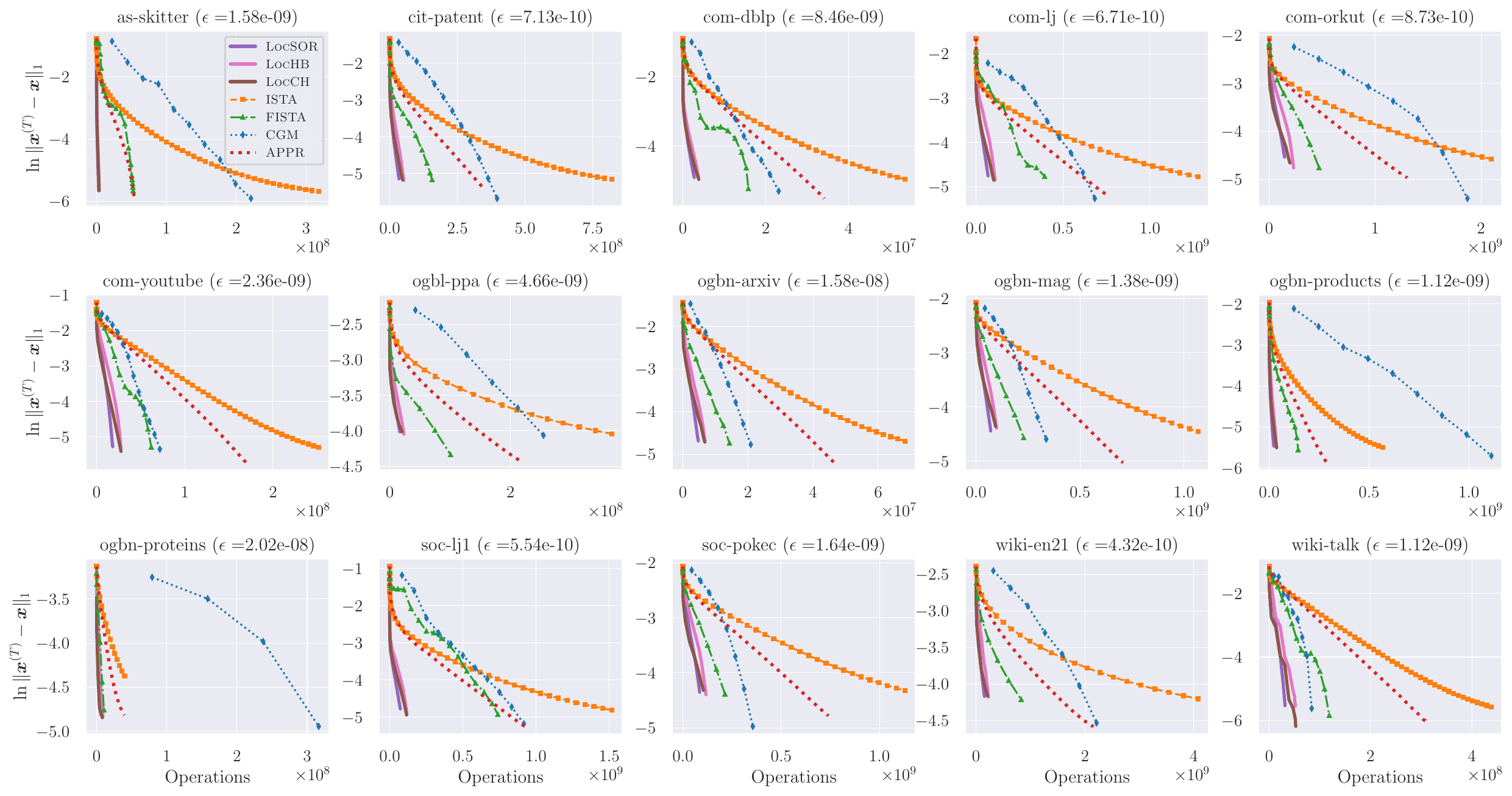}
    \caption{The estimation error reduction tests on 7 solvers including our \textsc{LocSOR}, \textsc{LocHB}, and \textsc{LocCH}. The experiments were conducted on 15 datasets.}
    \label{fig:estimation-error-reduction-7-methods}
\end{figure}

Fig.~\ref{fig:estimation-error-reduction-7-methods} presents the missing results on the estimation error reduction for 15 datasets. Compared with the global solver CGM, all local methods show significant speedup in the early stages. To compare our three local solvers, we zoom in on our results and present them in Fig. \ref{fig:zoom-in-figure}. Empirically, \textsc{LocSOR} is the fastest algorithm when the parameter $\omega$ is chosen optimally.

\begin{figure}
    \centering
    \includegraphics[width=1.\textwidth]{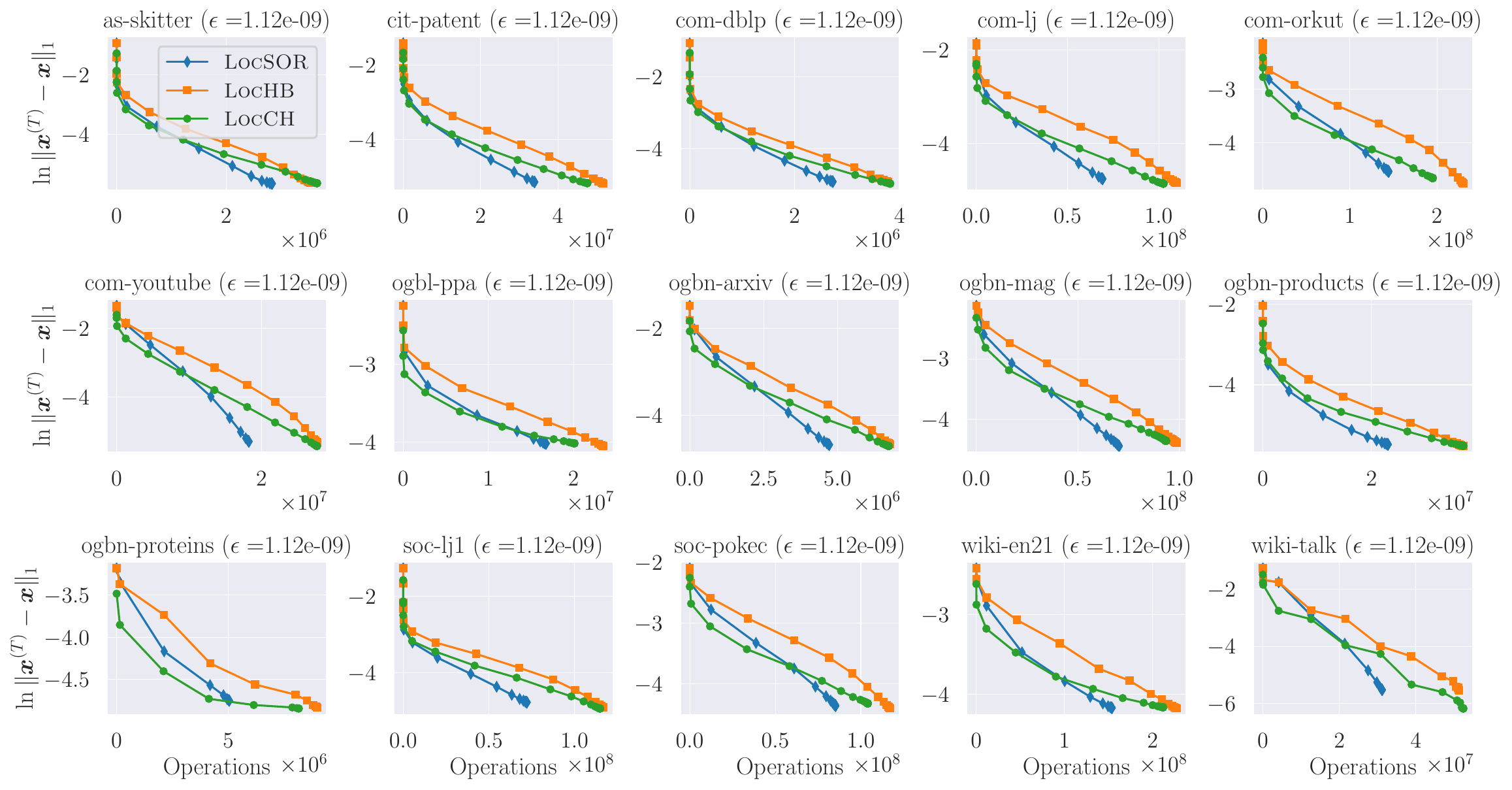}
    \caption{Comparison of three local solvers over 15 graphs.}
    \label{fig:zoom-in-figure}
\end{figure}

\begin{figure}
    \centering
    \includegraphics[width=1.\textwidth]{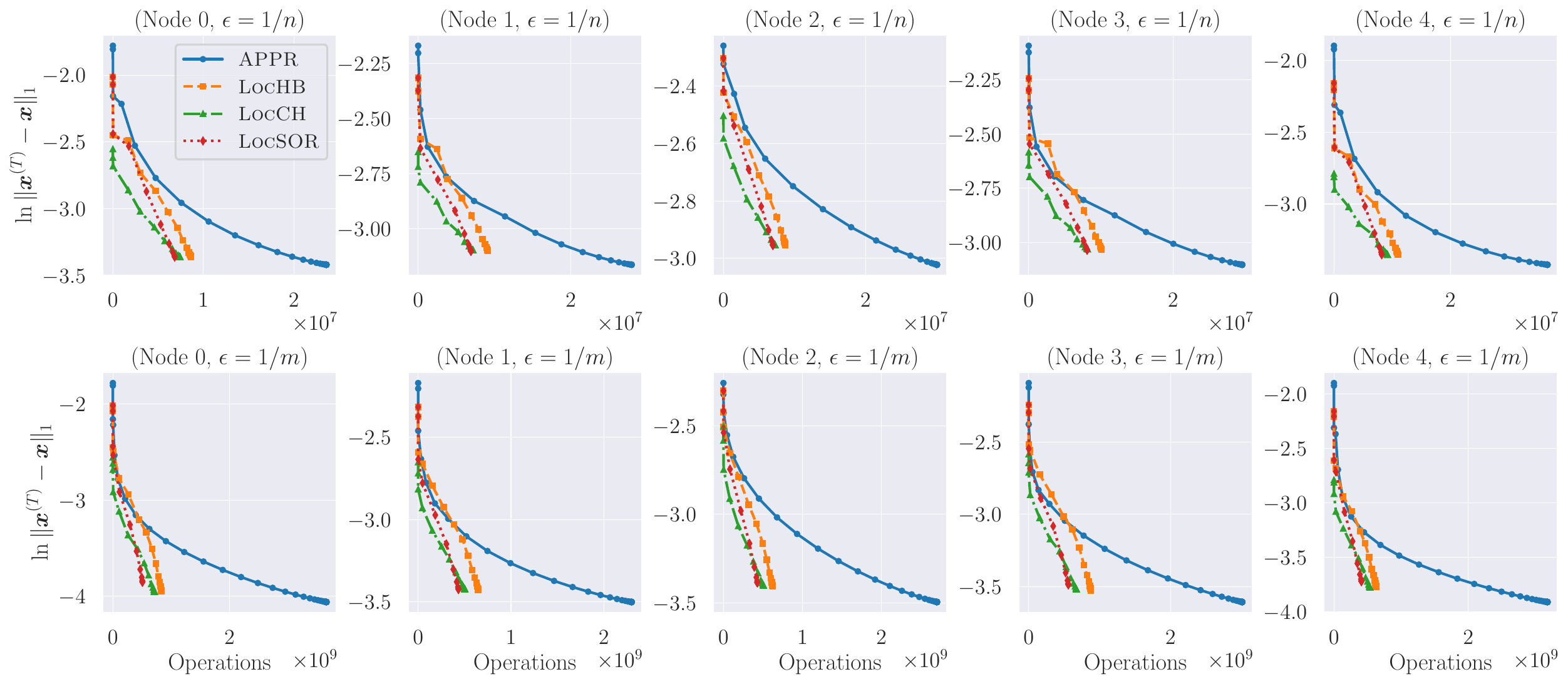}
    \caption{Estimation error as a function of the number of operations on com-friendster. We randomly select 5 different nodes and use $\epsilon=1./n$ and $\eps=1/m$.}
    \label{fig:largre-scale-com-friendset}
\end{figure}

\begin{figure}
    \centering
    \includegraphics[width=1.\textwidth]{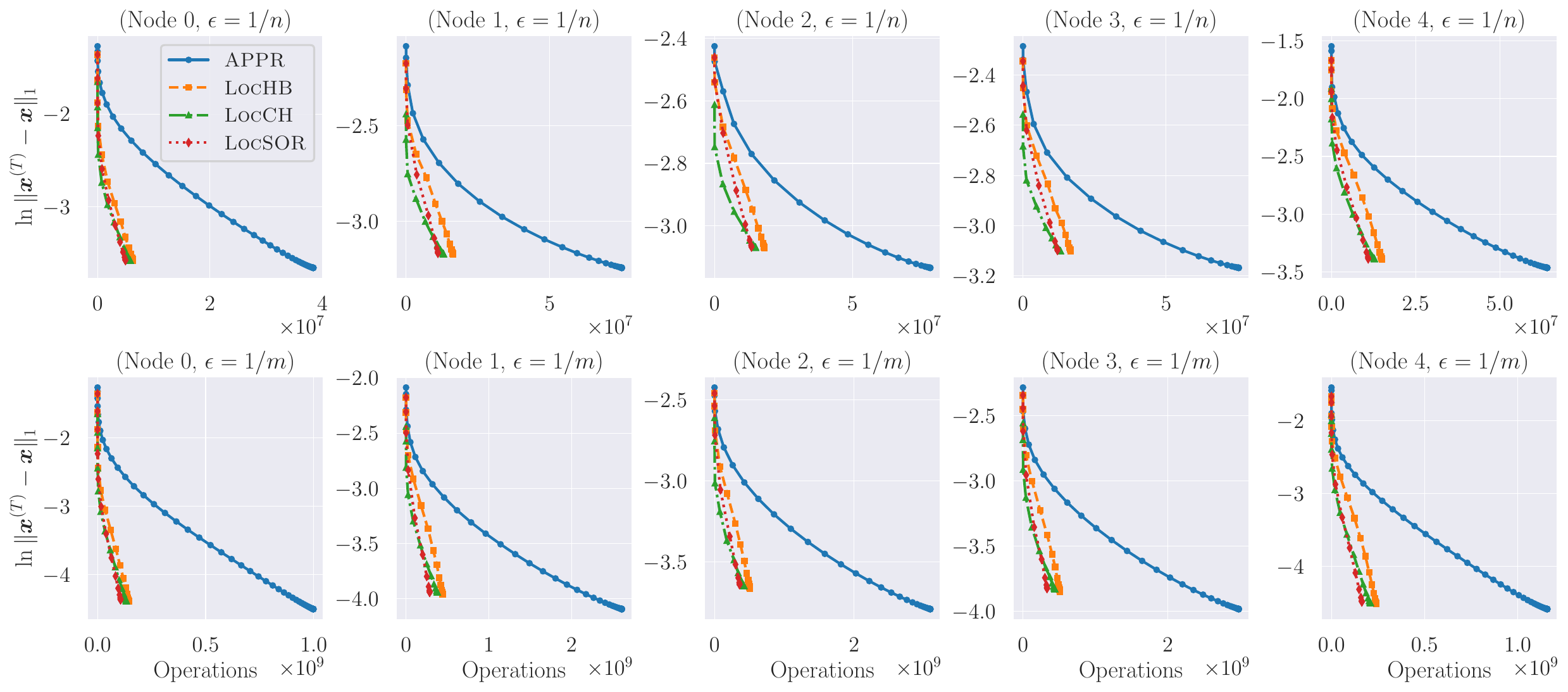}
    \caption{Estimation error as a function of the number of operations on ogbn-papers100m. We randomly select 5 different nodes and use $\epsilon=1./n$ and $\eps=1/m$.}
    \label{fig:largre-scale-papers100M}
\end{figure}

\begin{figure} 
\centering
\includegraphics[width=.95\textwidth]{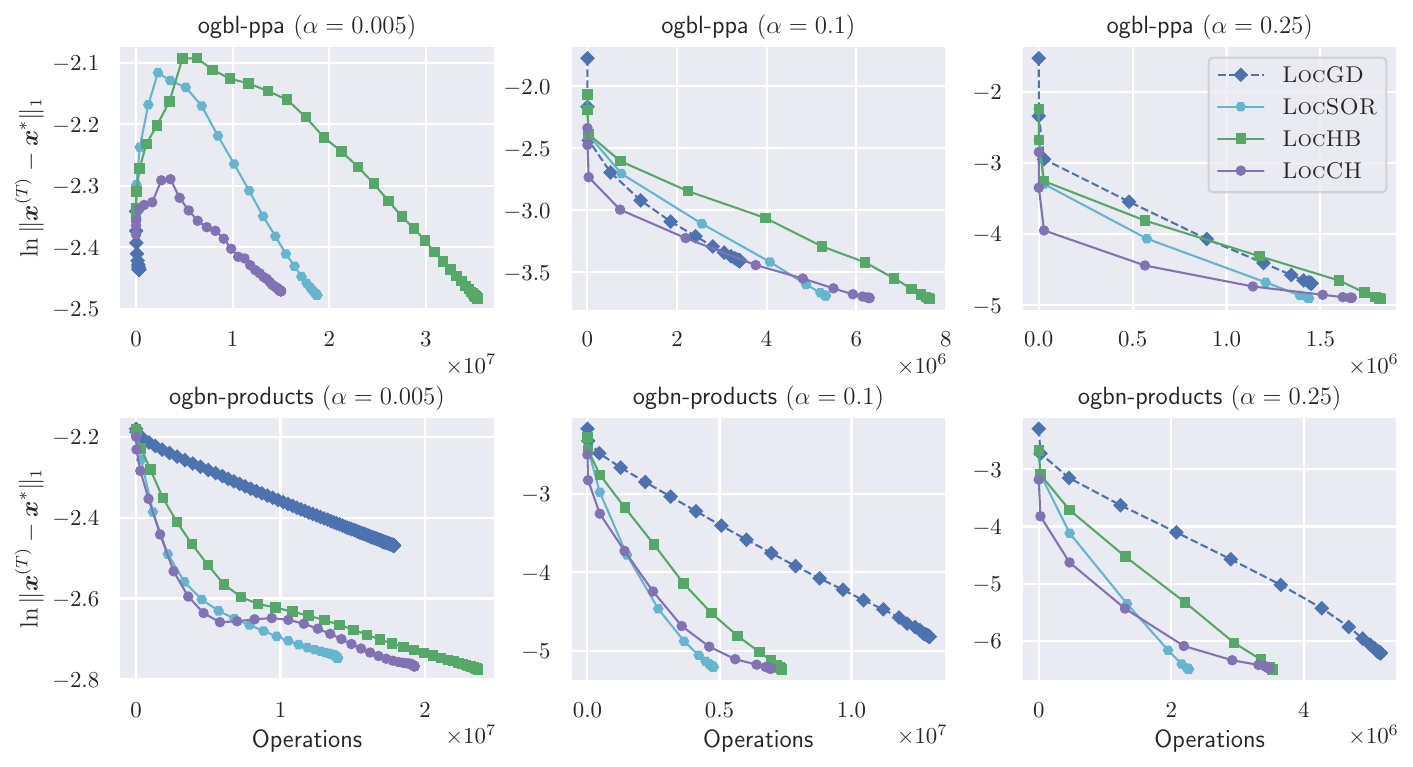}
\caption{Estimation error as a function of operations needed. For $\alpha = 0.005$, $\alpha=0.1$, and $\alpha=0.25$.}
\label{fig:different-alphas-ogbl-ppa} 
\end{figure}

\subsection{Results on local clustering}

\begin{figure}
\includegraphics[width=1.\textwidth]{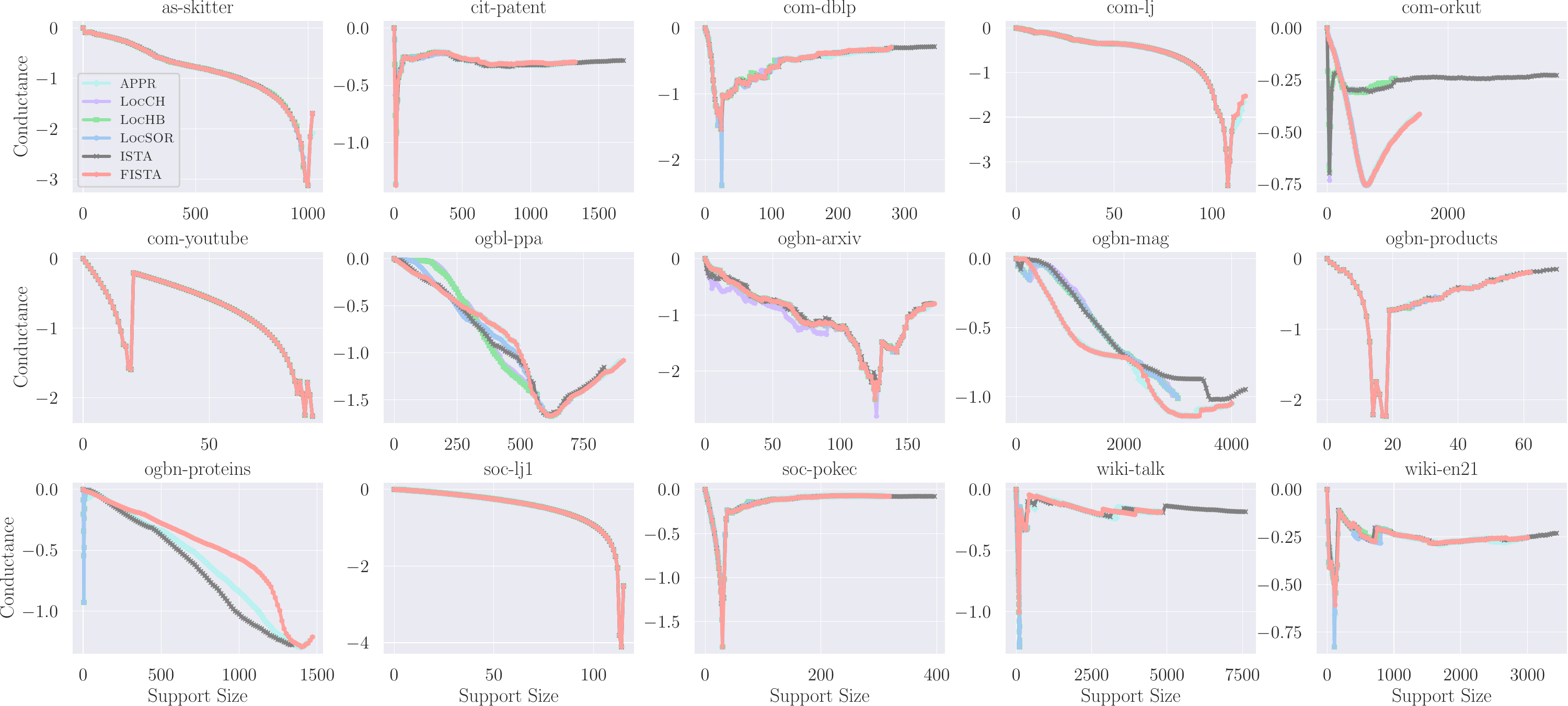}
\label{fig:spectral-clustering-conductance}
\caption{The graph conductance found by local graph clustering method using different local approximate methods. Experiments ran on 15 graphs. ($\eps=10^{-6}$)}
\end{figure}

\begin{table}
\scriptsize
\centering
\begin{tabular}{l|c|c|c|c|c|c}
\toprule
- & \multicolumn{3}{c|}{Run time (seconds)} & \multicolumn{3}{c}{Number of operations} \\
Dataset & \textsc{LocSOR} & \textsc{LocCH} & \textsc{CGM} & \textsc{LocSOR} & \textsc{LocCH} & \textsc{CGM} \\
\midrule
as-skitter & 0.350 $\pm$ 0.054 & 0.567 $\pm$ 0.095 & 2.626 $\pm$ 0.551 & 7.827e+05 & 1.026e+06 & 1.524e+08 \\
cit-patent & 0.966 $\pm$ 0.120 & 1.609 $\pm$ 0.189 & 14.660 $\pm$ 2.249 & 1.804e+06 & 2.298e+06 & 2.873e+08 \\
com-dblp & 0.068 $\pm$ 0.058 & 0.104 $\pm$ 0.059 & 0.469 $\pm$ 0.112 & 8.222e+04 & 1.166e+05 & 1.562e+07 \\
com-friendster & 15.29 $\pm$ 1.89 & 26.54 $\pm$ 3.52 & 508.50 $\pm$ 99.12 & 7.027e+07 & 8.063e+07 & 1.442e+10 \\
com-lj & 0.802 $\pm$ 0.148 & 1.410 $\pm$ 0.234 & 7.593 $\pm$ 2.361 & 2.604e+06 & 3.271e+06 & 4.122e+08 \\
com-orkut & 0.455 $\pm$ 0.158 & 0.815 $\pm$ 0.308 & 13.343 $\pm$ 7.951 & 2.965e+06 & 3.221e+06 & 9.220e+08 \\
com-youtube & 0.290 $\pm$ 0.070 & 0.501 $\pm$ 0.099 & 1.314 $\pm$ 0.257 & 7.617e+05 & 9.323e+05 & 3.561e+07 \\
ogb-mag240m & 85.14 $\pm$ 16.05 & 108.86 $\pm$ 7.994 & 549.51 $\pm$ 367.2 & 4.541e+07 & 5.820e+07 & 9.426e+09 \\
ogbl-ppa & 0.116 $\pm$ 0.019 & 0.202 $\pm$ 0.040 & 5.624 $\pm$ 1.108 & 6.117e+05 & 6.445e+05 & 1.682e+08 \\
ogbn-arxiv & 0.039 $\pm$ 0.060 & 0.058 $\pm$ 0.063 & 0.239 $\pm$ 0.104 & 1.158e+05 & 1.354e+05 & 1.473e+07 \\
ogbn-mag & 0.520 $\pm$ 0.079 & 0.921 $\pm$ 0.128 & 6.136 $\pm$ 0.974 & 1.804e+06 & 1.947e+06 & 2.050e+08 \\
ogbn-papers100M & 27.20 $\pm$ 3.94 & 41.99 $\pm$ 6.17 & 750.96$\pm$ 110.45 & 8.893e+07 & 1.095e+08 & 1.604e+10 \\
ogbn-products & 0.695 $\pm$ 0.236 & 1.059 $\pm$ 0.249 & 31.907 $\pm$ 4.961 & 1.777e+06 & 2.385e+06 & 7.071e+08 \\
ogbn-proteins & 0.021 $\pm$ 0.057 & 0.025 $\pm$ 0.057 & 0.910 $\pm$ 0.306 & 7.941e+04 & 6.610e+04 & 1.488e+08 \\
soc-lj1 & 1.040 $\pm$ 0.282 & 1.751 $\pm$ 0.487 & 7.102 $\pm$ 3.410 & 3.263e+06 & 4.045e+06 & 4.833e+08 \\
soc-pokec & 0.210 $\pm$ 0.027 & 0.368 $\pm$ 0.049 & 1.568 $\pm$ 0.207 & 2.020e+06 & 2.225e+06 & 2.160e+08 \\
wiki-en21 & 1.436 $\pm$ 2.708 & 1.794 $\pm$ 0.215 & 19.658 $\pm$ 4.576 & 5.996e+06 & 6.659e+06 & 1.329e+09 \\
wiki-talk & 0.251 $\pm$ 0.049 & 0.455 $\pm$ 0.091 & 0.642 $\pm$ 0.071 & 1.090e+06 & 1.290e+06 & 4.284e+07 \\
\bottomrule
\end{tabular}
\caption{Summary of runtime and operations for 15 datasets. ($\eps=10^{-6}$)}
\label{tab:dataset_summary}
\end{table}

\begin{table}
\centering
\begin{tabular}{l|c|c|c|c|c|c}
\toprule
Dataset & APPR & \textsc{LocCH} & \textsc{LocHB} & \textsc{LocSOR} & ISTA & FISTA \\
\midrule
as-skitter & 5.90e-04 & 5.90e-04 & 5.90e-04 & 5.90e-04 & 5.90e-04 & 5.90e-04 \\
cit-patent & 4.23e-02 & 4.23e-02 & 4.23e-02 & 4.23e-02 & 4.23e-02 & 4.23e-02 \\
com-dblp & 4.12e-03 & 4.12e-03 & 4.12e-03 & 4.12e-03 & 4.12e-03 & 4.12e-03 \\
com-lj & 2.94e-04 & 2.94e-04 & 2.94e-04 & 2.94e-04 & 2.94e-04 & 2.94e-04 \\
com-orkut & 1.75e-01 & 1.76e-01 & 1.76e-01 & 1.75e-01 & 1.76e-01 & 1.75e-01 \\
com-youtube & 5.46e-03 & 5.46e-03 & 5.46e-03 & 5.46e-03 & 5.46e-03 & 5.46e-03 \\
ogbn-arxiv & 2.11e-02 & 2.11e-02 & 2.11e-02 & 2.12e-02 & 2.14e-02 & 2.11e-02 \\
ogbn-mag & 3.18e-03 & 1.59e-03 & 3.18e-03 & 3.18e-03 & 4.74e-03 & 3.18e-03 \\
ogbn-products & 7.26e-02 & 1.02e-01 & 9.68e-02 & 9.65e-02 & 9.51e-02 & 7.24e-02 \\
ogbn-proteins & 5.92e-03 & 5.92e-03 & 5.92e-03 & 5.92e-03 & 5.92e-03 & 5.92e-03 \\
soc-lj1 & 5.07e-02 & 1.18e-01 & 1.18e-01 & 1.18e-01 & 5.25e-02 & 5.07e-02 \\
soc-pokec & 7.80e-05 & 7.80e-05 & 7.80e-05 & 7.80e-05 & 7.80e-05 & 7.80e-05 \\
wiki-talk & 1.64e-02 & 1.64e-02 & 1.64e-02 & 1.64e-02 & 1.64e-02 & 1.64e-02 \\
ogbl-ppa & 5.13e-02 & 5.13e-02 & 5.13e-02 & 5.13e-02 & 5.13e-02 & 5.13e-02 \\
wiki-en21 & 1.31e-01 & 1.31e-01 & 1.31e-01 & 1.31e-01 & 1.31e-01 & 1.31e-01 \\
\bottomrule
\end{tabular}
\caption{The local conductance for six local solvers tested on 15 graphs datasets. ($\eps=10^{-6}$)}
\label{tab:performance_metrics}
\end{table}

\begin{table}
\centering
\begin{tabular}{l|c|c|c|c|c|c}
\hline
Dataset & APPR & \textsc{LocCH} & \textsc{LocHB} & \textsc{LocSOR} & ISTA & FISTA \\
as-skitter & 0.127 & 0.147 & 0.144 & 0.043 & 0.323 & 0.093 \\
cit-patent & 0.362 & 0.516 & 0.457 & 0.125 & 0.939 & 0.308 \\
com-dblp & 0.069 & 0.033 & 0.028 & 0.014 & 0.297 & 0.042 \\
com-lj & 0.357 & 0.440 & 0.664 & 0.175 & 0.493 & 0.229 \\
com-orkut & 0.072 & 0.141 & 0.139 & 0.055 & 0.108 & 0.084 \\
com-youtube & 0.128 & 0.176 & 0.131 & 0.040 & 0.682 & 0.102 \\
ogbl-ppa & 0.102 & 0.047 & 0.091 & 0.027 & 0.146 & 0.102 \\
ogbn-arxiv & 0.042 & 0.013 & 0.014 & 0.006 & 0.237 & 0.032 \\
ogbn-mag & 0.068 & 0.137 & 0.091 & 0.035 & 0.253 & 0.090 \\
ogbn-products & 0.072 & 0.121 & 0.117 & 0.045 & 0.135 & 0.090 \\
ogbn-proteins & 0.005 & 0.005 & 0.005 & 0.002 & 0.005 & 0.004 \\
soc-lj1 & 0.239 & 0.376 & 0.350 & 0.103 & 0.512 & 0.194 \\
soc-pokec & 0.067 & 0.096 & 0.059 & 0.033 & 0.222 & 0.051 \\
wiki-talk & 0.197 & 0.314 & 0.274 & 0.109 & 0.508 & 0.176 \\
wiki-en21 & 0.121 & 0.290 & 0.279 & 0.106 & 0.197 & 0.147 \\
\hline
\end{tabular}
\caption{Runtime (seconds) for six local solvers tested on 15 graphs datasets. ($\eps=10^{-6}$)}
\label{tab:runtime}
\end{table}

\begin{table}[H]
\centering
\begin{tabular}{l|c|c|c|c|c|c}
\toprule
Dataset & APPR & \textsc{LocCH} & \textsc{LocHB} & \textsc{LocSOR} & ISTA & FISTA \\
\midrule
as-skitter & 6.9e+05 & 7.6e+04 & 8.1e+04 & 6.5e+04 & 2.9e+06 & 5.7e+05 \\
cit-patent & 6.7e+05 & 1.0e+05 & 1.1e+05 & 8.9e+04 & 2.3e+06 & 4.4e+05 \\
com-dblp & 4.3e+05 & 4.8e+04 & 5.0e+04 & 3.5e+04 & 1.9e+06 & 2.9e+05 \\
com-lj & 5.7e+05 & 8.9e+04 & 1.0e+05 & 7.6e+04 & 2.8e+06 & 4.4e+05 \\
com-orkut & 5.4e+05 & 8.9e+04 & 1.1e+05 & 9.0e+04 & 1.3e+06 & 5.0e+05 \\
com-youtube & 5.3e+05 & 6.5e+04 & 6.6e+04 & 5.7e+04 & 3.8e+06 & 4.4e+05 \\
ogbl-ppa & 6.7e+05 & 9.6e+04 & 1.1e+05 & 9.3e+04 & 1.5e+06 & 6.0e+05 \\
ogbn-arxiv & 6.8e+05 & 8.7e+04 & 9.7e+04 & 7.6e+04 & 2.7e+06 & 5.2e+05 \\
ogbn-mag & 5.2e+05 & 8.2e+04 & 9.4e+04 & 8.3e+04 & 2.4e+06 & 4.4e+05 \\
ogbn-products & 9.0e+05 & 1.1e+05 & 1.3e+05 & 9.8e+04 & 2.1e+06 & 7.6e+05 \\
ogbn-proteins & 5.3e+05 & 4.0e+04 & 5.5e+04 & 5.0e+04 & 6.8e+05 & 6.2e+05 \\
soc-lj1 & 5.6e+05 & 8.9e+04 & 1.0e+05 & 7.7e+04 & 2.8e+06 & 4.4e+05 \\
soc-pokec & 5.9e+05 & 1.1e+05 & 1.3e+05 & 1.1e+05 & 2.5e+06 & 5.0e+05 \\
wiki-talk & 4.4e+05 & 5.5e+04 & 5.2e+04 & 4.6e+04 & 1.8e+06 & 3.4e+05 \\
wiki-en21 & 5.2e+05 & 8.0e+04 & 9.7e+04 & 8.1e+04 & 1.8e+06 & 4.9e+05 \\
\bottomrule
\end{tabular}
\caption{Operations Needed for six local solvers tested on 15 graphs datasets. ($\eps=10^{-6}$)}
\label{tab:opers-15-6-methods}
\end{table}

\clearpage
\section{Related work}

\begin{wrapfigure}{r}{0.45\textwidth}
\vspace{-8mm}
  \begin{center}
    \includegraphics[width=.45\textwidth]{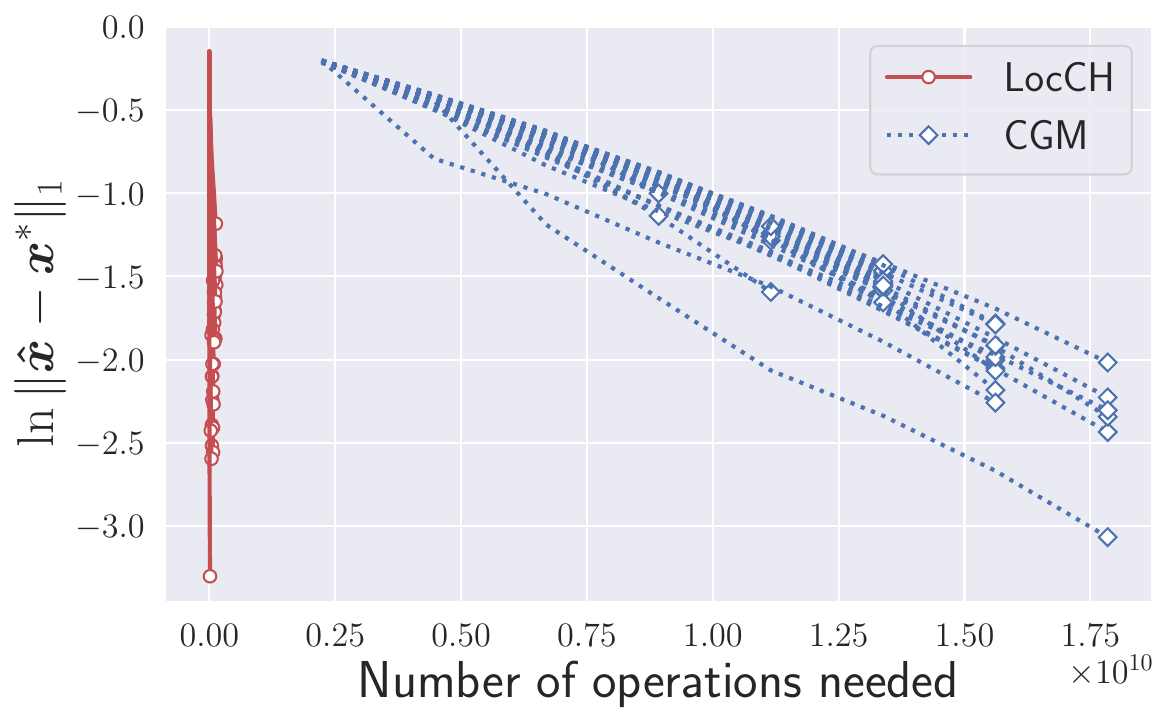}
    \vspace{-5mm}
    \caption{Comparison of the error reduction between the proposed \textsc{LocCH} and the standard CGM on the papers100M dataset \cite{hu2020open}, in terms of the number of operations required.\vspace{-4mm}}
    \label{fig:demo-figure1}
    \vspace{-4mm}
  \end{center}
\end{wrapfigure}

Many graph applications \citep{andersen2006local,bojchevski2020scaling,kloster2014heat,gasteiger2019diffusion,klicpera2019predict,mahoney2012local,kloumann2014community,schaub2020random,kapralov2021efficient,spielman2010algorithms,vishnoi2013lx,teng2016scalable} only require solving Equ.~\eqref{equ:ppr} approximately. The reasons could be either the most energies of $\vpi$ are among a small set of nodes forming small subgraphs, or one wants to study large graphs by checking them locally. Given a graph $\gG$ with $n$ nodes and $m$ edges, there are two main types of iterative solvers for Equ. \eqref{equ:ppr} as follows:

\textbf{Standard iterative methods.} Methods for solving linear systems have been well-established over the past decades (see textbooks of \citet{saad2003iterative,golub2013matrix,young2014iterative}). The fastest linear solver for solving the symmetrized version of Equ.~\eqref{equ:ppr} is the Conjugate Gradient Method (CGM) with runtime complexity $\tilde\gO(m/ \sqrt{\alpha} )$ where $m$ is the number of edges in the graph. It costs $\Theta(m)$ to access the entire graph at each iteration; hence, it is much slower than local solvers, as demonstrated in Fig.~\ref{fig:demo-figure1}. The symmetric diagonally dominant (SDD) solvers advance CGM further to have complexity $\tilde{\gO}\left(m\log^c n\log(1/\eps)\right)$ \cite{koutis2011nearly,spielman2014nearly}. \citet{anikin2020efficient} considered the PageRank problem and proposed an algorithm with runtime depending on $\gO(n)$. This paper focuses on local algorithms where the goal is to avoid the dominant factor $m$ or $n$ by avoiding the full $\mQ \vx$ operation.

\textbf{Local algorithms.} Local solvers, in contrast to standard counterparts, leverage the fact that the energy of $\vpi$ lives in a small portion of the graph and hence do not require $\gO(m)$ or $\gO(n)$ per iteration. They are advantageous for huge-scale graphs demonstrated in Fig.~\ref{fig:demo-figure1}.  \citet{andersen2006local} used APPR to obtain an approximate of $\vpi$ for local clustering. Quite similar algorithms were developed in \citet{berkhin2006bookmark} (\textit{bookmark-coloring} algorithm) and \citet{kloster2013nearly} (Gauss-Southwell procedure). 

Under the same stopping condition as APPR, \citet{fountoulakis2019variational} demonstrated that APPR is equivalent to coordinate descent via variational characterization, with a runtime of $\tilde{\gO}(1/(\alpha\eps))$ using ISTA where the monotonicity and conservation properties remain. Hence, it is nature to ask whether $\tilde{\gO}(1/(\sqrt{\alpha}\eps))$ could be achieved by FISTA \cite{beck2009fast} in \citet{fountoulakis2022open}. However, the difficulty is that FISTA violates the monotonicity property where the volume accessed of per-iteration cannot be bounded properly. To overcome this, \citet{martinez2023accelerated} proposed a nested accelerated projected gradient descent (APGD) and gradually expanding solutions so that the monotonicity property still holds. However, nested APGD requires solving subproblems accurately, which in practice may be cumbersome if the precision requirement of the inner problem is too stringent. All current local methods rely on some monotonicity property of variables to guarantee locality, which does not exist in most accelerated frameworks; thus, developing an accelerated method that is guaranteed to preserve intermediate variable sparsity remains challenging.  

It is worth mentioning that local methods are also closely related to sublinear time and local computational algorithms \citep{rubinfeld2011sublinear,alon2012space}. From the optimization perspective, the equivalence between Gauss-Seidel and coordinate descent has been considered \cite{tseng2009coordinate,leventhal2010randomized,nutini2015coordinate,tu2017breaking} but does not focus on local analysis.

\newpage
\section*{NeurIPS Paper Checklist}

\begin{enumerate}[leftmargin=*]

\item {\bf Claims}
    \item[] Question: Do the main claims made in the abstract and introduction accurately reflect the paper's contributions and scope?
    \item[] Answer: \answerYes{} 
    \item[] Justification: \blue{The abstract and introduction clearly state the main claims made by the paper, including its key contributions and scope.}
    \item[] Guidelines:
    \begin{itemize}
        \item The answer NA means that the abstract and introduction do not include the claims made in the paper.
        \item The abstract and/or introduction should clearly state the claims made, including the contributions made in the paper and important assumptions and limitations. A No or NA answer to this question will not be perceived well by the reviewers. 
        \item The claims made should match theoretical and experimental results, and reflect how much the results can be expected to generalize to other settings. 
        \item It is fine to include aspirational goals as motivation as long as it is clear that these goals are not attained by the paper. 
    \end{itemize}

\item {\bf Limitations}
    \item[] Question: Does the paper discuss the limitations of the work performed by the authors?
    \item[] Answer: \answerYes{} 
    \item[] Justification: \blue{The accelerated algorithms, such as \textsc{LocCH} and \textsc{LocHB}, may exhibit instability when $\alpha$ is small. This limitation arises due to the inherent constraints of global methods.}
    \item[] Guidelines:
    \begin{itemize}
        \item The answer NA means that the paper has no limitation while the answer No means that the paper has limitations, but those are not discussed in the paper. 
        \item The authors are encouraged to create a separate "Limitations" section in their paper.
        \item The paper should point out any strong assumptions and how robust the results are to violations of these assumptions (e.g., independence assumptions, noiseless settings, model well-specification, asymptotic approximations only holding locally). The authors should reflect on how these assumptions might be violated in practice and what the implications would be.
        \item The authors should reflect on the scope of the claims made, e.g., if the approach was only tested on a few datasets or with a few runs. In general, empirical results often depend on implicit assumptions, which should be articulated.
        \item The authors should reflect on the factors that influence the performance of the approach. For example, a facial recognition algorithm may perform poorly when image resolution is low or images are taken in low lighting. Or a speech-to-text system might not be used reliably to provide closed captions for online lectures because it fails to handle technical jargon.
        \item The authors should discuss the computational efficiency of the proposed algorithms and how they scale with dataset size.
        \item If applicable, the authors should discuss possible limitations of their approach to address problems of privacy and fairness.
        \item While the authors might fear that complete honesty about limitations might be used by reviewers as grounds for rejection, a worse outcome might be that reviewers discover limitations that aren't acknowledged in the paper. The authors should use their best judgment and recognize that individual actions in favor of transparency play an important role in developing norms that preserve the integrity of the community. Reviewers will be specifically instructed to not penalize honesty concerning limitations.
    \end{itemize}

\item {\bf Theory Assumptions and Proofs}
    \item[] Question: For each theoretical result, does the paper provide the full set of assumptions and a complete (and correct) proof?
    \item[] Answer: \answerYes{} 
    \item[] Justification: \blue{All assumptions are clearly stated.}
    \item[] Guidelines:
    \begin{itemize}
        \item The answer NA means that the paper does not include theoretical results. 
        \item All the theorems, formulas, and proofs in the paper should be numbered and cross-referenced.
        \item All assumptions should be clearly stated or referenced in the statement of any theorems.
        \item The proofs can either appear in the main paper or the supplemental material, but if they appear in the supplemental material, the authors are encouraged to provide a short proof sketch to provide intuition. 
        \item Inversely, any informal proof provided in the core of the paper should be complemented by formal proofs provided in appendix or supplemental material.
        \item Theorems and Lemmas that the proof relies upon should be properly referenced. 
    \end{itemize}

    \item {\bf Experimental Result Reproducibility}
    \item[] Question: Does the paper fully disclose all the information needed to reproduce the main experimental results of the paper to the extent that it affects the main claims and/or conclusions of the paper (regardless of whether the code and data are provided or not)?
    \item[] Answer: \answerYes{} 
    \item[] Justification: We have provided our code for the review process and will make it publicly available upon publication.
    \item[] Guidelines:
    \begin{itemize}
        \item The answer NA means that the paper does not include experiments.
        \item If the paper includes experiments, a No answer to this question will not be perceived well by the reviewers: Making the paper reproducible is important, regardless of whether the code and data are provided or not.
        \item If the contribution is a dataset and/or model, the authors should describe the steps taken to make their results reproducible or verifiable. 
        \item Depending on the contribution, reproducibility can be accomplished in various ways. For example, if the contribution is a novel architecture, describing the architecture fully might suffice, or if the contribution is a specific model and empirical evaluation, it may be necessary to either make it possible for others to replicate the model with the same dataset, or provide access to the model. In general. releasing code and data is often one good way to accomplish this, but reproducibility can also be provided via detailed instructions for how to replicate the results, access to a hosted model (e.g., in the case of a large language model), releasing of a model checkpoint, or other means that are appropriate to the research performed.
        \item While NeurIPS does not require releasing code, the conference does require all submissions to provide some reasonable avenue for reproducibility, which may depend on the nature of the contribution. For example
        \begin{enumerate}
            \item If the contribution is primarily a new algorithm, the paper should make it clear how to reproduce that algorithm.
            \item If the contribution is primarily a new model architecture, the paper should describe the architecture clearly and fully.
            \item If the contribution is a new model (e.g., a large language model), then there should either be a way to access this model for reproducing the results or a way to reproduce the model (e.g., with an open-source dataset or instructions for how to construct the dataset).
            \item We recognize that reproducibility may be tricky in some cases, in which case authors are welcome to describe the particular way they provide for reproducibility. In the case of closed-source models, it may be that access to the model is limited in some way (e.g., to registered users), but it should be possible for other researchers to have some path to reproducing or verifying the results.
        \end{enumerate}
    \end{itemize}

\item {\bf Open access to data and code}
    \item[] Question: Does the paper provide open access to the data and code, with sufficient instructions to faithfully reproduce the main experimental results, as described in supplemental material?
    \item[] Answer: \answerYes{} 
    \item[] Justification: All datasets used in this study are publicly available.
    \item[] Guidelines:
    \begin{itemize}
        \item The answer NA means that paper does not include experiments requiring code.
        \item Please see the NeurIPS code and data submission guidelines (\url{https://nips.cc/public/guides/CodeSubmissionPolicy}) for more details.
        \item While we encourage the release of code and data, we understand that this might not be possible, so “No” is an acceptable answer. Papers cannot be rejected simply for not including code, unless this is central to the contribution (e.g., for a new open-source benchmark).
        \item The instructions should contain the exact command and environment needed to run to reproduce the results. See the NeurIPS code and data submission guidelines (\url{https://nips.cc/public/guides/CodeSubmissionPolicy}) for more details.
        \item The authors should provide instructions on data access and preparation, including how to access the raw data, preprocessed data, intermediate data, and generated data, etc.
        \item The authors should provide scripts to reproduce all experimental results for the new proposed method and baselines. If only a subset of experiments are reproducible, they should state which ones are omitted from the script and why.
        \item At submission time, to preserve anonymity, the authors should release anonymized versions (if applicable).
        \item Providing as much information as possible in supplemental material (appended to the paper) is recommended, but including URLs to data and code is permitted.
    \end{itemize}

\item {\bf Experimental Setting/Details}
    \item[] Question: Does the paper specify all the training and test details (e.g., data splits, hyperparameters, how they were chosen, type of optimizer, etc.) necessary to understand the results?
    \item[] Answer: \answerYes{} 
    \item[] Justification: All parameter settings of our methods and baselines are included.
    \item[] Guidelines:
    \begin{itemize}
        \item The answer NA means that the paper does not include experiments.
        \item The experimental setting should be presented in the core of the paper to a level of detail that is necessary to appreciate the results and make sense of them.
        \item The full details can be provided either with the code, in appendix, or as supplemental material.
    \end{itemize}

\item {\bf Experiment Statistical Significance}
    \item[] Question: Does the paper report error bars suitably and correctly defined or other appropriate information about the statistical significance of the experiments?
    \item[] Answer: \answerYes{} 
    \item[] Justification: For most of our results, we report the standard error over 50 random sampling nodes.
    \item[] Guidelines:
    \begin{itemize}
        \item The answer NA means that the paper does not include experiments.
        \item The authors should answer "Yes" if the results are accompanied by error bars, confidence intervals, or statistical significance tests, at least for the experiments that support the main claims of the paper.
        \item The factors of variability that the error bars are capturing should be clearly stated (for example, train/test split, initialization, random drawing of some parameter, or overall run with given experimental conditions).
        \item The method for calculating the error bars should be explained (closed form formula, call to a library function, bootstrap, etc.)
        \item The assumptions made should be given (e.g., Normally distributed errors).
        \item It should be clear whether the error bar is the standard deviation or the standard error of the mean.
        \item It is OK to report 1-sigma error bars, but one should state it. The authors should preferably report a 2-sigma error bar than state that they have a 96\% CI, if the hypothesis of Normality of errors is not verified.
        \item For asymmetric distributions, the authors should be careful not to show in tables or figures symmetric error bars that would yield results that are out of range (e.g. negative error rates).
        \item If error bars are reported in tables or plots, The authors should explain in the text how they were calculated and reference the corresponding figures or tables in the text.
    \end{itemize}

\item {\bf Experiments Compute Resources}
    \item[] Question: For each experiment, does the paper provide sufficient information on the computer resources (type of compute workers, memory, time of execution) needed to reproduce the experiments?
    \item[] Answer: \answerYes{} 
    \item[] Justification: For our experiment, we used a server powered by an Intel(R) Xeon(R) Gold 5218R CPU, which features 40 cores (80 threads). The system is equipped with 256 GB of RAM.
    \item[] Guidelines:
    \begin{itemize}
        \item The answer NA means that the paper does not include experiments.
        \item The paper should indicate the type of compute workers CPU or GPU, internal cluster, or cloud provider, including relevant memory and storage.
        \item The paper should provide the amount of compute required for each of the individual experimental runs as well as estimate the total compute. 
        \item The paper should disclose whether the full research project required more compute than the experiments reported in the paper (e.g., preliminary or failed experiments that didn't make it into the paper). 
    \end{itemize}
    
\item {\bf Code Of Ethics}
    \item[] Question: Does the research conducted in the paper conform, in every respect, with the NeurIPS Code of Ethics \url{https://neurips.cc/public/EthicsGuidelines}?
    \item[] Answer: \answerYes{} 
    \item[] Justification: The research conducted in the paper conforms in every respect with the NeurIPS Code of Ethics. The authors have thoroughly reviewed and adhered to the guidelines, ensuring that all aspects of their work align with ethical standards.
    \item[] Guidelines:
    \begin{itemize}
        \item The answer NA means that the authors have not reviewed the NeurIPS Code of Ethics.
        \item If the authors answer No, they should explain the special circumstances that require a deviation from the Code of Ethics.
        \item The authors should make sure to preserve anonymity (e.g., if there is a special consideration due to laws or regulations in their jurisdiction).
    \end{itemize}

\item {\bf Broader Impacts}
    \item[] Question: Does the paper discuss both potential positive societal impacts and negative societal impacts of the work performed?
    \item[] Answer: \answerYes{} 
    \item[] Justification: The advancements in accelerated algorithms like \textsc{LocCH} and \textsc{LocHB} can significantly enhance computational efficiency in various applications, contributing to faster and more effective solutions in fields such as data analysis, machine learning, and optimization. 
    \item[] Guidelines:
    \begin{itemize}
        \item The answer NA means that there is no societal impact of the work performed.
        \item If the authors answer NA or No, they should explain why their work has no societal impact or why the paper does not address societal impact.
        \item Examples of negative societal impacts include potential malicious or unintended uses (e.g., disinformation, generating fake profiles, surveillance), fairness considerations (e.g., deployment of technologies that could make decisions that unfairly impact specific groups), privacy considerations, and security considerations.
        \item The conference expects that many papers will be foundational research and not tied to particular applications, let alone deployments. However, if there is a direct path to any negative applications, the authors should point it out. For example, it is legitimate to point out that an improvement in the quality of generative models could be used to generate deepfakes for disinformation. On the other hand, it is not needed to point out that a generic algorithm for optimizing neural networks could enable people to train models that generate Deepfakes faster.
        \item The authors should consider possible harms that could arise when the technology is being used as intended and functioning correctly, harms that could arise when the technology is being used as intended but gives incorrect results, and harms following from (intentional or unintentional) misuse of the technology.
        \item If there are negative societal impacts, the authors could also discuss possible mitigation strategies (e.g., gated release of models, providing defenses in addition to attacks, mechanisms for monitoring misuse, mechanisms to monitor how a system learns from feedback over time, improving the efficiency and accessibility of ML).
    \end{itemize}
    
\item {\bf Safeguards}
    \item[] Question: Does the paper describe safeguards that have been put in place for responsible release of data or models that have a high risk for misuse (e.g., pretrained language models, image generators, or scraped datasets)?
    \item[] Answer: \answerNA{} 
    \item[] Justification: None.
    \item[] Guidelines:
    \begin{itemize}
        \item The answer NA means that the paper poses no such risks.
        \item Released models that have a high risk for misuse or dual-use should be released with necessary safeguards to allow for controlled use of the model, for example by requiring that users adhere to usage guidelines or restrictions to access the model or implementing safety filters. 
        \item Datasets that have been scraped from the Internet could pose safety risks. The authors should describe how they avoided releasing unsafe images.
        \item We recognize that providing effective safeguards is challenging, and many papers do not require this, but we encourage authors to take this into account and make a best faith effort.
    \end{itemize}

\item {\bf Licenses for existing assets}
    \item[] Question: Are the creators or original owners of assets (e.g., code, data, models), used in the paper, properly credited and are the license and terms of use explicitly mentioned and properly respected?
    \item[] Answer: \answerNA{} 
    \item[] Justification: None.
    \item[] Guidelines:
    \begin{itemize}
        \item The answer NA means that the paper does not use existing assets.
        \item The authors should cite the original paper that produced the code package or dataset.
        \item The authors should state which version of the asset is used and, if possible, include a URL.
        \item The name of the license (e.g., CC-BY 4.0) should be included for each asset.
        \item For scraped data from a particular source (e.g., website), the copyright and terms of service of that source should be provided.
        \item If assets are released, the license, copyright information, and terms of use in the package should be provided. For popular datasets, \url{paperswithcode.com/datasets} has curated licenses for some datasets. Their licensing guide can help determine the license of a dataset.
        \item For existing datasets that are re-packaged, both the original license and the license of the derived asset (if it has changed) should be provided.
        \item If this information is not available online, the authors are encouraged to reach out to the asset's creators.
    \end{itemize}

\item {\bf New Assets}
    \item[] Question: Are new assets introduced in the paper well documented and is the documentation provided alongside the assets?
    \item[] Answer: \answerNA{} 
    \item[] Justification: None.
    \item[] Guidelines:
    \begin{itemize}
        \item The answer NA means that the paper does not release new assets.
        \item Researchers should communicate the details of the dataset/code/model as part of their submissions via structured templates. This includes details about training, license, limitations, etc. 
        \item The paper should discuss whether and how consent was obtained from people whose asset is used.
        \item At submission time, remember to anonymize your assets (if applicable). You can either create an anonymized URL or include an anonymized zip file.
    \end{itemize}

\item {\bf Crowdsourcing and Research with Human Subjects}
    \item[] Question: For crowdsourcing experiments and research with human subjects, does the paper include the full text of instructions given to participants and screenshots, if applicable, as well as details about compensation (if any)? 
    \item[] Answer: \answerNA{} 
    \item[] Justification:  No human subjects involved.
    \item[] Guidelines:
    \begin{itemize}
        \item The answer NA means that the paper does not involve crowdsourcing nor research with human subjects.
        \item Including this information in the supplemental material is fine, but if the main contribution of the paper involves human subjects, then as much detail as possible should be included in the main paper. 
        \item According to the NeurIPS Code of Ethics, workers involved in data collection, curation, or other labor should be paid at least the minimum wage in the country of the data collector. 
    \end{itemize}

\item {\bf Institutional Review Board (IRB) Approvals or Equivalent for Research with Human Subjects}
    \item[] Question: Does the paper describe potential risks incurred by study participants, whether such risks were disclosed to the subjects, and whether Institutional Review Board (IRB) approvals (or an equivalent approval/review based on the requirements of your country or institution) were obtained?
    \item[] Answer: \answerNA{} 
    \item[] Justification:  No human subjects involved.
    \item[] Guidelines:
    \begin{itemize}
        \item The answer NA means that the paper does not involve crowdsourcing nor research with human subjects.
        \item Depending on the country in which research is conducted, IRB approval (or equivalent) may be required for any human subjects research. If you obtained IRB approval, you should clearly state this in the paper. 
        \item We recognize that the procedures for this may vary significantly between institutions and locations, and we expect authors to adhere to the NeurIPS Code of Ethics and the guidelines for their institution. 
        \item For initial submissions, do not include any information that would break anonymity (if applicable), such as the institution conducting the review.
    \end{itemize}

\end{enumerate}

\end{document}